\documentclass[10pt]{article}

\usepackage{pgfplots}
  \pgfplotsset{compat=newest}
  \usetikzlibrary{plotmarks}
  \usetikzlibrary{arrows.meta}
  \usepgfplotslibrary{patchplots}
  \usepackage{grffile}
  \usepackage{amsmath}

\usepackage{microtype}
\usepackage{graphicx}
\usepackage{subcaption}
\usepackage{amsmath,amssymb}
\usepackage{tikz}
\usetikzlibrary{arrows.meta,positioning,calc}
\usepackage{booktabs} 
\usepackage{hyperref}

\usepackage[preprint]{icml2026}

\usepackage{amsmath}
\usepackage{amssymb}
\usepackage{mathtools}
\usepackage{amsthm}

\usepackage[capitalize,noabbrev]{cleveref}

\theoremstyle{plain}
\newtheorem{theorem}{Theorem}[section]

\newtheorem{lemma}[theorem]{Lemma}
\newtheorem{corollary}[theorem]{Corollary}
\theoremstyle{definition}
\newtheorem{definition}[theorem]{Definition}

\theoremstyle{remark}
\newtheorem{remark}[theorem]{Remark}
\DeclareMathOperator*{\argmax}{arg\,max}

\usepackage[textsize=tiny]{todonotes}

\icmltitlerunning{Locally Private Parametric Methods for Change-Point Detection}

\begin{document}

\onecolumn
  \icmltitle{Locally Private Parametric Methods for Change-Point Detection}

  \icmlsetsymbol{equal}{*}

  \begin{icmlauthorlist}
    \icmlauthor{Anuj Kumar Yadav}{*}
    \icmlauthor{Cemre Cadir}{*}
    \icmlauthor{Yanina Shkel}{*}
   \icmlauthor{Michael Gastpar}{*}
  \end{icmlauthorlist}

  \icmlaffiliation{*}{School of Computer and Communication Sciences, École Polytechnique Fédérale de Lausanne
 (EPFL), Vaud, Switzerland}

  \icmlcorrespondingauthor{Anuj Kumar Yadav}{anuj.yadav@epfl.ch}

  \vskip 0.3in

\printAffiliationsAndNotice{}

\begin{abstract}

We study parametric change-point detection, where the goal is to identify distributional changes in time series, under local differential privacy. In the non-private setting, we derive improved finite-sample accuracy guarantees for a change-point detection algorithm based on the generalized log-likelihood ratio test, via martingale methods. In the private setting, we propose two locally differentially private algorithms based on randomized response and binary mechanisms, and analyze their theoretical performance. We derive bounds on detection accuracy and validate our results through empirical evaluation. Our results characterize the statistical cost of local differential privacy in change-point detection and show how privacy degrades performance relative to a non-private benchmark. As part of this analysis, we establish a structural result for strong data processing inequalities (SDPI), proving that SDPI coefficients for Rényi divergences and their symmetric variants (Jeffreys–Rényi divergences) are achieved by binary input distributions. These results on SDPI coefficients are also of independent interest, with applications to statistical estimation, data compression, and Markov chain mixing.
\end{abstract}

\newpage
\section{Introduction}
\label{introduction}
\subsection{Motivation}

Consider a scenario in which a city health office is responsible for the detection of disease outbreaks and the enforcement of appropriate public health measures. To accomplish this task, they rely on daily patient-admission records provided by a city-wide hospital network. The task is to detect an outbreak by tracking hospital admissions for designated symptoms and identifying statistically significant changes in incidence. This process of identifying distributional shifts in the underlying data is commonly referred to as Change-Point Detection (CPD). CPD has diverse applications in many areas including but not limited to health care~\cite{health}, finance~\cite{finance}, climate and environmental systems~\cite{climate}, monitoring in communication networks ~\cite{bell} and industrial fault detection~\cite{spc}.

In our example, the hospitals are required to share the relevant data with the city health office. However, the data may incorporate sensitive patient information. This motivates the use of privacy-preserving algorithms that enable data sharing with minimal loss in utility. Ideally, these algorithms provide formal guarantees under well-defined mathematical notions of privacy.

Differential privacy (DP) is a widely adopted central notion of privacy. Standard DP assumes the presence of a trusted curator who has access to the raw data and applies a privacy-preserving algorithm before releasing the output. While this assumption is appropriate in some settings, it may be undesirable when the analyst responsible for producing or releasing the output cannot be completely trusted with sensitive data. Local differential privacy (LDP) strengthens the privacy guarantee by requiring each data owner to locally privatize their data prior to sharing, thereby protecting the data at the source and removing reliance on a trusted curator. Motivated by this stronger privacy model, we study change-point detection under LDP.



\subsection{Related Work}

Change-point detection has been an important research problem in statistics going back to early works of~\cite{shewhart}. Different algorithms~\cite{page, robert, pollak2} have been proposed in the literature to accurately estimate change-points under multiple frameworks. An early work introduced the cumulative sum (CUSUM) algorithm based on the generalized likelihood ratio test (GLRT)~\cite{page}. This algorithm has been employed in numerous studies within the non-Bayesian framework and has stood the test of time. The CUSUM algorithm was shown to be asymptotically optimal under Lorden’s criterion in~\cite{lorden}, 
while exact optimality was later established by~\cite{moustakides}. Our approach is also inspired by the CUSUM algorithm. See~\cite{regression, sequence, manifolds} for more recent work on sequential CPD algorithms.




Differential privacy is a well studied privacy notion~\cite{dwork2006our, dwork2006calibrating}.  There is extensive work on differentially private estimation~\cite{duchi_estimation,duchi_privacy,ye2018optimal}, learning~\cite{dl-dp,liu2021machine}, and hypothesis testing~\cite{kairouz2014extremal,canonne2019structure}. In this work, we are interested in a local variant of DP, namely local differential privacy (LDP)~\cite{bassily2015local, yang2023local}, to study change-point detection.

Change-point detection under differential privacy was first studied in~\cite{cummings_org} (see also the extended work~\cite{zhang_cpd}). The authors considered parametric CPD and employed the well known Laplace mechanism to design differentially private algorithms, deriving performance bounds. In the recent years, the problem of private change-point detection has attracted significant attention. In~\cite{lau2020privacy}, the authors used maximal leakage~\cite{issa2019operational} as a privacy constraint to study CPD.~\cite{cummings2020privately} investigated differentially private CPD in the non-parametric setting, while~\cite{berrett2021locally} examined multivariate CPD under local differential privacy. This line of work was further extended to CPD over networks in~\cite{li2022network}. \cite{seif2025differentially} studied private community detection on graphs. More recently, a differentially private variant of the CUSUM algorithm was introduced in~\cite{xiesequential25}.

\subsection{Our Contributions}
We study parametric change-point detection in the offline setting under the non-Bayesian framework.  We summarize our contributions below:
\begin{itemize}
    \item\textbf{Non-private accuracy guarantees.} We derive finite-sample performance bounds for an offline change  point detection algorithm
based on the generalized log-likelihood ratio test (GLRT). Using martingale concentration techniques, we establish two
accuracy bounds (Theorem~\ref{thm:npcpd}). The first bound shows that the error probability decays exponentially at a
rate determined by the ratio of the KL divergence to the order-$\infty$ Jeffreys--R\'enyi
divergence between the pre- and post-change distributions. This exponent is the same as in the bound proved by~\cite{zhang_cpd}; however our bound involves a tighter analysis and explicitly accounts for the size of dataset. The second bound decays exponentially at a
rate determined by the Chernoff information. It holds for general (including
continuous) alphabets. Both bounds significantly improve upon existing results in the literature. (Section~\ref{sec:npcpd})
\vspace*{-1mm}
\item \textbf{Characterization of SDPI coefficients.}
We present novel characterizations of strong data processing inequality (SDPI) coefficients
for Rényi divergences of any order $\rho \in [1,\infty]$ and Jeffreys--Rényi divergences of order $\rho = \infty$. We show that the SDPI coefficients are achieved by input distributions supported on at most
two atoms. This binary-support optimality implies that computing the SDPI coefficient can be
restricted to binary input distributions, reducing the optimization from the full
$(|\mathcal X|-1)$-dimensional simplex to a low-dimensional search over two-point mixtures.
Leveraging this result, we explicitly characterize the SDPI coefficients for a class of
symmetric channels. (Section~\ref{sec:sdpi})
\vspace*{-1mm}
\item \textbf{LDP-based CPD algorithms.}
In the privacy-preserving setting, we propose two change-point detection algorithms based
on randomized response and binary mechanisms. Using our characterization of SDPI coefficients, we derive finite-sample
accuracy bounds for both private CPD algorithms. Our analysis shows that the algorithm based
on binary mechanisms outperforms the algorithm based on randomized response in the high-privacy (small $\varepsilon$)
regime, while the converse holds in the low-privacy (large $\varepsilon$) regime. (Section~\ref{sec:ldpcpd})
\vspace*{-1mm}
\item \textbf{Empirical validation and cost of privacy.}
We run several Monte Carlo simulations on synthetic data to empirically validate our theoretical bounds in both non-private and private settings. Our results quantify the performance degradation of CPD algorithms under LDP. Specifically, the $\varepsilon$-LDP constraint reduces the exponential rate in the error probability approximately by a factor $\tanh^2(\varepsilon/2)$, which scales as $\Theta(\varepsilon^2)$ in the high-privacy regime (i.e., $\varepsilon \ll 1$). (Section~\ref{subsec:cop} \&~\ref{sec:exp})

\end{itemize}

\section{Background and Problem Statement}
\subsection{Notations}
We denote random variables by upper case letters (eg. $X$), the values they take by lower case letters (eg., $x$), and their alphabets by calligraphic letters (eg. $\cal{X}$).  
The set of real numbers, non-negative real numbers and real vectors (of length $n$) are denoted by $\mathbb{R}$, $\mathbb{R}_+$, and $\mathbb{R}^n$ respectively. The set of natural numbers is denoted by $\mathbb{N}$. For $a\in\mathbb{N}$, let $[a]:=\{1,2,\cdots, a\}$. For $ a,b\in\mathbb{R}$, $a\leq b$, let $[a,b]$ denote the closed interval defined between $a$ and $b$. $\langle x ,y\rangle$ denotes the inner product between two vectors $x$ and $y$, while $x \circ y$ denotes the element-wise product.

Given distributions $P_0$ and $P_1$ in the simplex $\Delta(\cal{X})$ on $\cal{X}$, $\mathrm{D}_{\mathrm{KL}}(P_0\|P_1)$ denotes the KL divergence between $P_0$ and $P_1$, while  $\mathrm{d}_{\mathrm{TV}}(P_0,P_1)=\frac{1}{2} \sum_{x\in\cal{X}}|P_0(x)-P_1(x)|  $ denotes the Total Variational (TV) distance. All logarithms are to the base $e$, unless stated otherwise.
\begin{figure*}[t]
  \begin{center}
    \begin{tikzpicture}[
scale = 0.9,
transform shape,
  font=\normalsize,
  box/.style={draw, thick, minimum width=2cm, minimum height=1cm, align=center},
  arr/.style={
    -{Latex[length=1mm, width=0.8mm]},
    thick,
    shorten <=0pt,
    shorten >=0pt
  },
  lab/.style={font=\normalsize, inner sep=1pt},
  node distance=10mm
]

\node[box] (D) at (0,0) {$(\mathcal{D},P_0,P_1)$};

\def\ldpgap{5mm}
\node[box] (m1) at (3.8, 1.53) {\textcolor{purple}{$\varepsilon$ - LDP}};
\node[box, below=\ldpgap of m1] (m2) {\textcolor{purple}{$\varepsilon$ - LDP}};
\node[box, below=7mm of m2] (m4) {\textcolor{purple}{$\varepsilon$ - LDP}};

\draw[very thick, dashed] (m2.south) -- (m4.north);

\node[box] (Dt) at (7.8,0) {$(\widetilde{\mathcal{D}},Q_0,Q_1)$};

\node[draw, very thick, minimum width=2.5cm, minimum height=1cm, align=center] (DA) at (11.3,0)
{\textcolor{red}{Data Analyst}};
\draw[arr] (DA.east) -- ++(1.5,0)
  node[midway, above] {$k^*\in[n]$};


\coordinate (s1) at ($(D.north east)!0.20!(D.south east)$);
\coordinate (s2) at ($(D.north east)!0.50!(D.south east)$);
\coordinate (s4) at ($(D.north east)!0.80!(D.south east)$);

\coordinate (t1) at ($(Dt.north west)!0.20!(Dt.south west)$);
\coordinate (t2) at ($(Dt.north west)!0.50!(Dt.south west)$);
\coordinate (t4) at ($(Dt.north west)!0.80!(Dt.south west)$);

\draw[arr] (s1) -- (m1.west) node[midway, sloped, above, lab] {$x_1$};
\draw[arr] (s2) -- (m2.west) node[midway, sloped, above, lab] {$x_2$};
\draw[arr] (s4) -- (m4.west) node[midway, sloped, above, lab] {$x_n$};

\draw[arr] (m1.east) -- (t1) node[midway, sloped, above, lab] {$y_1$};
\draw[arr] (m2.east) -- (t2) node[midway, sloped, above, lab] {$y_2$};
\draw[arr] (m4.east) -- (t4) node[midway, sloped, above, lab] {$y_n$};

\draw[arr] (Dt.east) -- (DA.west);

\end{tikzpicture}
       \vspace{3mm}
    \caption{General framework for parametric change-point detection under local differential privacy. Each data sample in the dataset $\mathcal{D}$ passes through an $\varepsilon-$LDP mechanism at the source. The data analyst performs change-point detection on the privatized dataset $\widetilde{D}$. 
    }
    \label{fig:framework}
  \end{center}
\end{figure*}
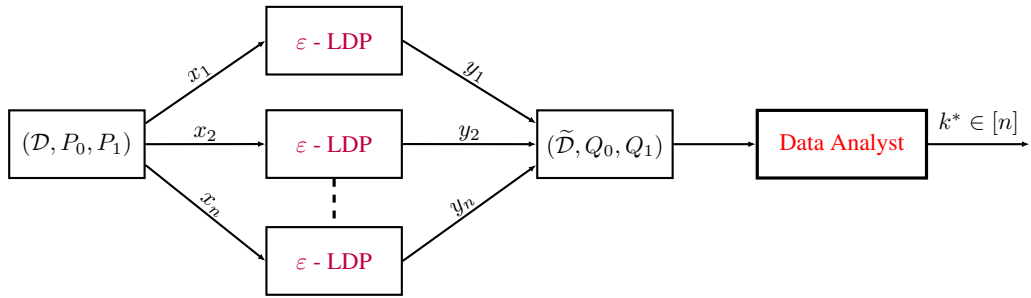    
\subsection{Preliminaries}

We use the notion of additive error as a measure of the accuracy of our CPD algorithm, which is defined as follows:
\begin{definition}[$(\alpha,\beta)-${accuracy}]
   Let $k^{\star}$ denote the true change-point of a dataset or a stream of data points. A change-point detection algorithm which outputs $\hat{k}$ as an estimate of the change-point, is said to be $(\alpha,\beta)-$accurate if $\mathbb{P}\{\hat{k}\not\in [k^{\star}- \alpha, k^{\star}+\alpha] \} = \beta$. Here, $\alpha \in \mathbb{N}$ denotes the tolerance level while $\beta$ denotes the error probability for the given tolerance level $\alpha$.  
\end{definition}
\begin{definition}[Chernoff Information]\label{def:ich}
    Let $P_0$ and $P_1$ be two probability mass functions (PMFs) on the same support set $\mathcal{X}$. The Chernoff Information $I_{ch}$ between $P_0$ and $P_1$ is defined as
    \begin{align}
        I_{ch}{(P_0,P_1)} = -\min_{\lambda \in (0,1)}\!\log\left(\sum_{x \in \mathcal{X}} P_0(x)^{\lambda}P_1(x)^{1-\lambda}\right).
    \end{align}
\end{definition}
\begin{definition}[$\rho-${Rényi Divergences}]
   Let $P_0$ and $P_1$ be two PMFs on the same support set $\mathcal{X}$. The Rényi divergence of any order $\rho \in [0,\infty]$, between $P_0$ and $P_1$ is defined as
   \begin{align}
       D_{\rho}(P_0\|P_1)= \frac{1}{\rho-1}\log\left(\sum_{x \in \mathcal{X}}P_0(x)^{\rho}P_1(x)^{1-\rho}\right).
   \end{align}

For $\rho \rightarrow 1$, the Rényi divergence reduces to KL divergence.

The {Jeffreys-Rényi divergence} of any order $\rho \in [0,\infty]$, between $P_0$ and $P_1$ is defined as
\begin{align}
    D^{J}_{\rho}(P_0,P_1)= D_{\rho}(P_0\|P_1) +  D_{\rho}(P_1\|P_0). 
\end{align}
Unlike Rényi divergences, Jeffreys-Rényi divergences are symmetric in its arguments $(P_0,P_1)$. The Jeffreys-Rényi divergence of order infinity is also commonly referred to as projective distance between $P_0$ and $P_1$.
\end{definition}
\begin{definition}[$f-$Divergences]\label{def:fd}
Let $f: (0,\infty)\rightarrow \mathbb{R}$ be a convex function such that $f(1)=0$.  Let $P_0$ and $P_1$ be two PMFs on the same support set $\mathcal{X}$. The  $f-$divergence between $P_0$ and $P_1$ is defined as
\begin{align}
     D_{f}(P_0\|P_1)= \sum_{x \in \mathcal{X}}P_1(x)f\left(\frac{P_0(x)}{P_1(x)}\right).
\end{align}
A $f-$divergence reduces to KL divergence and TV distance under the choice of the convex function $f(t)=t\log t$ and $f(t)=\frac{1}{2}|t-1|$, respectively.
\end{definition}

\subsection{Parametric offline change-point detection}


In the offline setting, the parametric change-point detection problem is defined by a dataset 
$\mathcal{D} := \{x_1, x_2, \ldots, x_n\}$ of size $n > 1$, a pre-change distribution $P_0 \in \Delta(\cal{X})$, and a post-change distribution $P_1\in \Delta(\cal{X})$, where $\mathcal{X}$ denotes the sample space. The data points in $\mathcal{D}$ are initially sampled i.i.d. from $P_0$; at some unknown change-point  $k^* \in [n]$, an event triggers and the underlying distribution becomes $P_1$.

The data analyst has access to $(\mathcal{D},P_0,P_1)$. The goal of the analyst is to output an estimate of the change-point, denoted by $\hat{k} \in [n] \cup \{\infty\}$, where $\hat{k} = \infty$ indicates that no change-point is detected. Throughout this work, we assume that the dataset  $\mathcal{D}$ contains at most one true change-point.

In the change-point detection literature, this problem is also formulated as a composite hypothesis testing problem, where the goal of the data analyst is to test a simple null hypothesis $H_0$ : ``No change occurred" i.e., $k^{\star} = \infty$ (thus $x_1, x_2, \dots, x_n \sim P_0$, i.i.d) against a composite alternate hypothesis $H_1$ : ``Change occurred" i.e., $k^{\star} \in [n]$ (thus $x_1, x_2, \dots, x_{k^{\star}-1} \sim P_0$, i.i.d and $x_{k^{\star}},\dots, x_n \sim P_1$, i.i.d). 

\subsection{CPD with Local Differential Privacy}

LDP is a well-studied privacy framework. It guarantees that a strong adversary cannot distinguish between two possible inputs by knowing the privatized output. With LDP, data owners benefit from plausible deniability. Since it is a `local' privacy notion, there is no reliance on a trusted third party. 

\begin{definition}[$\varepsilon-${Local Differential Privacy (LDP)}]
A randomized algorithm (mechanism) $W^{\varepsilon}_{Y|X} : \mathcal{X} \rightarrow \mathcal{Y}$ is $\varepsilon-$locally differentially private (or $\varepsilon-$LDP), if for any pair $(x, x') \in \mathcal{X} \times \mathcal{X}$ and any subset $\mathcal{S} \subseteq \mathcal{Y}$, we have
\begin{align}
W^{\varepsilon}_{Y|X}(\mathcal{S}|x) \leq e^\varepsilon \;W^{\varepsilon}_{Y|X}(\mathcal{S}|x'). 
\end{align}
\end{definition}


Let $(\mathcal{D},P_0,P_1)$ denote the true dataset, the pre- and the post-change distributions at the source. In the LDP based CPD framework, each data point $x_i \in \mathcal{D}$ is privatized independently at the source via an $\varepsilon-$LDP mechanism $W^{\varepsilon}$. Let $\widetilde{\mathcal{D}} $ denote the privatized dataset which contains the noisy data points $y_i$ for all $i \in [n]$. $Q_0$ and $Q_1$ denote the distributions induced through the $\varepsilon-$LDP mechanism $W^{\varepsilon}$, by the pre- and post- change distributions $P_0$ and $P_1$, respectively. The source then sends the tuple $(\widetilde{D},Q_0,Q_1)$ to the data analyst. The analyst with access to the privatized dataset and induced pre- and post- change distributions estimates the change-point of the true dataset $\mathcal{D}$. (see Figure~\ref{fig:framework}.)
 
\section{Non-Private Change-point detection}\label{sec:npcpd}
Inspired by the CUSUM approach~\cite{page} which is commonly studied for the online setting, our analysis is based on the {Generalized Log-likelihood Ratio Test (GLRT)} described in Algorithm~\ref{alg:offline_cpd} (also studied in~\cite{zhang_cpd}).
{
\setlength{\textfloatsep}{10mm}
\begin{algorithm}[H]
    \caption{Offline CPD ($\mathcal{D},\mathcal{X},P_0,P_1$)}
    \label{alg:offline_cpd}
    \begin{algorithmic}[1] 
        \STATE \textbf{Input:} Dataset $\mathcal{D}$, Alphabet $\mathcal{X}$, Distributions $P_0, P_1$
        \FOR{$k=1$ \textbf{to} $n$}
            \STATE Compute likelihood ratio $l(\mathcal{D},k) = \sum_{i=k}^{n}\log\frac{P_1(x_i)}{P_0(x_i)}$
        \ENDFOR
        \STATE \textbf{Output:} $\hat{k} = \arg\max_{k \in [n]} l(\mathcal{D},k)$
    \end{algorithmic}
\end{algorithm}
}

The sensitivity $s$ of the GLRT statistic w.r.t $P_0$ and $P_1$ is given by the maximum difference in the likelihood ratio for two neighboring databases $\mathcal{D}$ and $\mathcal{D}'$ (which differ in only one data point) i.e.
\begin{align}
    s &:= \max_{x \in \mathcal{X}}\log\frac{P_1(x)}{P_0(x)} - \min_{x \in \mathcal{X}}\log\frac{P_1(x)}{P_0(x)}\label{eq:senstivity}\\
    &= D_{\infty}(P_1\|P_0) + D_{\infty}(P_0\|P_1) =D^{J}_{\infty}(P_0,P_1).
\end{align}
 We also denote $C := \min\{D_{\text{KL}}(P_0\|P_1), D_{\text{KL}}(P_1\|P_0) \}$. Now, we state our finite-sample accuracy bounds for the offline change-point estimator in Algorithm~\ref{alg:offline_cpd}.

\begin{theorem}\label{thm:npcpd}
    Let $k^{\star} \in (1,n]$ be the true change-point of the dataset $\mathcal{D} \in \mathcal{X}^n$, with data points drawn from the distributions $P_0$, $P_1 \in \Delta(\mathcal{X})$. Then, the Offline CPD estimator in Algorithm~\ref{alg:offline_cpd} is $(\alpha,\beta)-$accurate for any $\alpha \in [n]$ and
\begin{align}
    \beta \leq \;&2\min\Bigg\{\sum^{i^{*}}_{i=1}\exp\left(\frac{-2^{i-1} \alpha C^2}{s^2}\right); \exp\left(-\alpha I_{ch}{(P_0,P_1)}\right)\Bigg\}, 
\end{align}
where $i^{*}=\lceil\log_2\left(\frac{n-1}{\alpha}\right)\rceil$.   
\end{theorem}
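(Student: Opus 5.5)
Throughout I work under the alternative hypothesis. Write $Z_i := \log\frac{P_1(x_i)}{P_0(x_i)}$, so $l(\mathcal{D},k)=\sum_{i=k}^n Z_i$. The event $\{\hat k \notin [k^\star-\alpha,k^\star+\alpha]\}$ is contained in the union of two events. The first is that some $k\le k^\star-\alpha$ has $l(\mathcal{D},k)\ge l(\mathcal{D},k^\star)$, i.e. $\sum_{i=k}^{k^\star-1} Z_i\ge 0$; these $Z_i$ are i.i.d. under $P_0$ with mean $-\mathrm{D}_{\mathrm{KL}}(P_0\|P_1)\le -C$. The second is that some $k\ge k^\star+\alpha$ has $\sum_{i=k^\star}^{k-1} Z_i\le 0$; these $Z_i$ are i.i.d. under $P_1$ with mean $\mathrm{D}_{\mathrm{KL}}(P_1\|P_0)\ge C$. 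By symmetry I bound the left event and obtain the same bound for the right one. This accounts for the factor $2$.

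For the left side, reindex backward from $k^\star-1$. Let $S_m:=\sum_{j=1}^m Y_j$ with $Y_j=Z_{k^\star-j}$. The event becomes $\{\exists\, m\in[\alpha,n-1]: S_m\ge 0\}$. Each $Y_j$ takes values in an interval of length $s=D^J_\infty(P_0,P_1)$, by the definition of the sensitivity.

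\textbf{First bound.} $M_m:=S_m+m\,\mathrm{D}_{\mathrm{KL}}(P_0\|P_1)$ is a martingale with bounded increments of range $s$. I peel the index range dyadically into blocks $B_i=[2^{i-1}\alpha,2^i\alpha)$ for $i=1,\dots,i^*$; these cover $[\alpha,n-1]$ because $2^{i^*}\alpha\ge n-1$. On block $i$, the event $S_m\ge 0$ forces $M_m\ge m C\ge 2^{i-1}\alpha C$ for some $m\le 2^i\alpha$. A maximal Azuma–Hoeffding inequality (Doob's maximal inequality applied to $\exp(\lambda M_m)$, with Hoeffding's lemma) bounds this by $\exp\left(-2(2^{i-1}\alpha C)^2/(2^i\alpha s^2)\right)=\exp(-2^{i-1}\alpha C^2/s^2)$. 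Summing over $i$ gives the first term.

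The main obstacle is getting exactly this exponent. A naive choice of block endpoints loses a factor of $2$. I will first try the deviation level $mC$ at the block's left end against variance proxy $2^i\alpha s^2/4$, which gives exactly $2^{i-1}$. If the constants are off, I will apply Hoeffding to the supermartingale $\exp(\lambda S_m - m\lambda^2 s^2/8)$ and choose $\lambda$ per block.

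\textbf{Second bound.} Use the nonnegative supermartingale $\exp(\lambda S_m)$ with $\lambda\in(0,1)$. Its one-step factor is $\mathbb{E}_{P_0}[(P_1/P_0)^\lambda]=\sum_x P_0^{1-\lambda}P_1^{\lambda}\le 1$. Hence $\exp(\lambda S_m)\,\rho(\lambda)^{-m}$ is a martingale, where $\rho(\lambda)$ denotes this sum. Ville's maximal inequality applied from time $\alpha$ onward gives
\[
\mathbb{P}\{\exists\, m\ge\alpha: S_m\ge 0\}\le \mathbb{E}\big[e^{\lambda S_\alpha}\big]=\rho(\lambda)^\alpha .
\]
This uses that $\rho(\lambda)\le 1$ makes $e^{\lambda S_m}$ a supermartingale for $m\ge\alpha$. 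Optimizing over $\lambda$ gives $\exp(-\alpha I_{ch})$. This works for general alphabets since only the moment generating function is used.

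For the right side I use $\exp(-\lambda S_m)$ under $P_1$. This gives $\sum_x P_1^{1-\lambda}P_0^{\lambda}$, and the same minimum after substituting $\lambda\to1-\lambda$. Combining both sides with the factor $2$ and taking the better of the two bounds yields the theorem.
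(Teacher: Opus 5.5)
Your proposal is correct and matches the paper's proof. For the first bound you use dyadic peeling and the maximal Azuma--Hoeffding inequality on the centered log-likelihood-ratio sums (deviation $2^{i-1}\alpha C$ over at most $2^{i}\alpha$ steps); for the Chernoff bound you use an exponential (super)martingale with a Doob/Ville maximal inequality optimized over $\lambda$, which is what the paper's $b$-adic peeling collapses to once $b$ is large enough to leave a single shell. The one slip is bookkeeping: your blocks $[2^{i-1}\alpha,2^{i}\alpha)$ miss $m=n-1$ when $(n-1)/\alpha$ is a power of two, but the error event only involves $m\ge\alpha+1$, so you can use $(2^{i-1}\alpha,2^{i}\alpha]$ as the paper does with the same per-block bound.
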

\begin{proof}
The Proof is deferred to the Appendix~\ref{app:npcpd}.
\end{proof}

\begin{corollary}\label{cor:npcpd}
The bound in Theorem~\ref{thm:npcpd} can be further upper bounded to exhibit a closed form expression as follows
    \begin{align}
         \beta \leq 2\min\left\{\frac{t(1-t^M)}{1-t};\exp\left(-\alpha I_{ch}{(P_0,P_1)}\right)\right\},
         \label{eq:cor_npcpd}
    \end{align}
where $t =\exp(-\alpha C^2/s^2)$ and $M=\lfloor\frac{n-1}{\alpha}\rfloor$.
\end{corollary}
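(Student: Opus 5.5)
The plan is to leave the Chernoff term of Theorem~\ref{thm:npcpd} untouched and to replace only the finite geometric-type sum with a closed form. Since the minimum of two quantities is monotone in each argument, it suffices to show
\begin{align}
\sum_{i=1}^{i^*}\exp\left(-\frac{2^{i-1}\alpha C^2}{s^2}\right)\le \frac{t(1-t^M)}{1-t}=\sum_{j=1}^{M}t^{j},
\end{align}
with $t=\exp(-\alpha C^2/s^2)\in(0,1)$. In this notation the left-hand side is $\sum_{i=1}^{i^*} t^{2^{i-1}}$.

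The key step is an injection argument. The exponents $2^{i-1}$ for $i=1,\dots,i^*$ are distinct positive integers. Since $0<t<1$, each term $t^{2^{i-1}}$ equals a term $t^{j}$ of the geometric series with $j=2^{i-1}$. Provided each such $j$ lies in $\{1,\dots,M\}$, the sum over the dyadic exponents is a sub-sum of $\sum_{j=1}^M t^j$. All terms are positive, so the inequality follows. The final form comes from the geometric sum identity $\sum_{j=1}^M t^j=t(1-t^M)/(1-t)$.

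The main obstacle is the range check: we need $2^{i^*-1}\le M=\lfloor (n-1)/\alpha\rfloor$. By definition of the ceiling, $i^*-1<\log_2((n-1)/\alpha)$, so $2^{i^*-1}<(n-1)/\alpha$. Since $2^{i^*-1}$ is an integer, this gives $2^{i^*-1}\le\lfloor (n-1)/\alpha\rfloor=M$. The edge case $(n-1)/\alpha\le 1$ must be handled separately. There $i^*\le 0$, so the sum in Theorem~\ref{thm:npcpd} is empty. Consistently, $M\le 1$, and the right-hand side is nonnegative, so the bound still holds. Combining the two arguments of the minimum and the factor $2$ then yields \eqref{eq:cor_npcpd}.
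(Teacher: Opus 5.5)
Your proposal is correct and follows the same route as the paper: bound the dyadic sum $\sum_{i=1}^{i^*} t^{2^{i-1}}$ by a sub-sum of the geometric series $\sum_{m=1}^{M} t^m$ using $2^{i^*-1}\le M$, then apply the geometric-sum formula. Your version is slightly more careful than the paper's. The paper writes the intermediate step $\sum_{i} r^{2^{i-1}} = \sum_{m=1}^{2^{i^*-1}} r^m$ as an equality, when it should be an inequality, and it asserts $2^{i^*-1}\le M$ without the ceiling/floor argument and the $i^*\le 0$ edge case that you supply.
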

\begin{remark}
The first term (bound A) in the minimization in Thm.~\ref{thm:npcpd} depends on standard divergence metrics, such as the KL divergence and the Jeffreys–Rényi divergence, while the second term (bound B) is characterized by the Chernoff information and involves a one-dimensional optimization. An interesting implication of bound A is that, unlike many classical statistical problems, increasing the sample size $n$ (for a fixed change-point) may only
deteriorate---rather than improve---the performance of the algorithm. Furthermore, our upper bound naturally extends to continuous alphabets through bound B. Although, Algorithm~\ref{alg:offline_cpd} was previously studied in~\cite{zhang_cpd}, our analysis yields significantly sharper bounds (see Appendix~\ref{app:npcomp} for details).
\end{remark}

Recall that an $\varepsilon-$LDP mechanism is essentially a noisy communication channel which privatizes the data points according to the conditional probability law $W^{\varepsilon}_{Y|X}$. Consequently, any statistical information relevant for CPD is attenuated by this channel. To derive accurate performance bounds for our privatized algorithms we will need to quantify the information contraction of Jeffreys-Rényi divergence of order infinity, through an $\varepsilon-$LDP mechanism. It is therefore essential to characterize the SDPI coefficients associated with $W^{\varepsilon}_{Y|X}$.

For the rest of the paper, we restrict ourselves to finite alphabets $\mathcal{X}$ and $\mathcal{Y}$.
\section{Strong Data Processing Inequalities (SDPI) coefficients for Rényi Divergences} \label{sec:sdpi}

Consider a channel $W_{Y|X} : \mathcal{X} \rightarrow \mathcal{Y}$. For any two input distributions $(P_0,P_1)$ in $\Delta(\mathcal{X})$, let $Q_0$ and $Q_1$ denote the distributions induced in $\Delta(\mathcal{Y})$ by passing $P_0$ and $P_1$ through $W_{Y|X}$, respectively, i.e.,
\begin{align}
Q_i(y) \;=\; \sum_{x\in\mathcal{X}} P_i(x)\,W_{Y|X}(y|x), \qquad i\in\{0,1\}.
\end{align}
The data processing inequality (DPI)~\cite{cover-thomas} is a fundamental result in information theory  which states that processing cannot increase the statistical distinguishability between distributions. Specifically, for any divergence $D$ that is monotone under stochastic transformations (e.g. total variation distance, Rényi divergences),
\begin{align}
    D(Q_0,Q_1) \leq D(P_0,P_1).
\end{align}
Thus, downstream processing of random variables can only bring them `closer' in the space of distributions.

Over the past decade, significant progress has been made in strengthening the data processing inequality (DPI) by quantitatively characterizing the contraction of statistical divergences through the strong data processing inequality (SDPI). This strengthening is commonly formalized via SDPI coefficients, also known as contraction coefficients (see Def.~\ref{def:sdpi}) which quantify the contraction of divergences induced by a stochastic channel. A substantial body of work has studied the characterization and properties of SDPI coefficients, with applications spanning portfolio theory
~\cite{ekrip}, information percolation~\cite{eks},  mixing of Markov chains~\cite{raginsky_markov}, minimax lower bounds in  estimation~\cite{duchi_estimation,raginsky_estimation} and privacy~\cite{duchi_privacy}.


In this section, we establish results on the SDPI coefficients for Rényi divergences and Jeffreys-Rényi divergences of any order $\rho \in [1,\infty]$ — associated with a channel $W:\mathcal{X} \rightarrow \mathcal{Y}$.

\begin{definition}[SDPI coefficient]\label{def:sdpi} The {SDPI coefficient} of a channel $W_{Y|X}$ under a divergence metric $D$ i.e., ${\eta}_{{D}}(W)$,  is defined as
\begin{align}\label{eq:cc}
    \eta_{D}(W) \triangleq \sup_{P_0,P_1 \in \Delta(\mathcal{X})} \frac{D(Q_0,Q_1)}{D(P_0,P_1)}.
\end{align}
\end{definition}
The SDPI coefficients have been studied for different divergence measures such as the KL-divergence, TV distance, Hellinger distance, etc. It was shown in~\cite{poly} that for the broad class of ${f-}$divergences, the supremum in Eq.~\eqref{eq:cc} is achieved by binary distributions (i.e., distributions supported on at most two atoms). In the next theorems, we prove a similar result for the Rényi divergences of any order $\rho \in [1,\infty]$ and Jeffreys-Rényi divergence of order $\rho=\infty$.
\begin{theorem}\label{thm:rdccbd}
    For Rényi divergence of any order $\rho \in [1,\infty]$, the SDPI coefficient of a channel $W$, $\eta_{\rho}(W)$, which is defined as
    \begin{align}\label{eq:ccrd}
    \eta_{\rho}(W) = \sup_{P_0,P_1 \in \Delta(\mathcal{X})} \frac{D_{\rho}(Q_0\|Q_1)}{D_{\rho}(P_0\|P_1)}
\end{align}
    is achieved by binary input distributions.
\end{theorem}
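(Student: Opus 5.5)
The plan is to reduce the Rényi problem to a statement about the Hellinger-type sum
\[
H_\rho(P_0,P_1):=\sum_x P_0(x)^\rho P_1(x)^{1-\rho}, \qquad \text{so that} \qquad D_\rho(P_0\|P_1)=\tfrac{1}{\rho-1}\log H_\rho(P_0,P_1).
\]
Two properties of $H_\rho$ drive the argument. For $\rho>1$, the map $(a,b)\mapsto a^\rho b^{1-\rho}$ is jointly convex and positively homogeneous of degree one, i.e.\ it is a perspective function. Also, $Q_i$ is linear in $P_i$. The limiting orders are handled separately: $\rho=1$ is the KL case, covered by the $f$-divergence result of~\cite{poly} quoted before the theorem. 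The case $\rho=\infty$ is treated by a direct argument sketched at the end, or as the limit of the finite-$\rho$ statements.

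\textbf{Step 1 (decomposition into binary pairs).} Fix an arbitrary pair $(P_0,P_1)$. Split $\mathcal X$ into $A=\{x:P_0(x)\ge P_1(x)\}$ and $B=\{x: P_0(x)<P_1(x)\}$. I would build a "transport-like" coupling that writes
\[
(P_0,P_1)=\sum_j \lambda_j \bigl(P_0^{(j)},P_1^{(j)}\bigr),
\]
where each $(P_0^{(j)},P_1^{(j)})$ is supported on at most two atoms (one from $A$, one from $B$). The key design constraint is that every atom $x$ is split among the components proportionally, so that the likelihood ratio $P_0^{(j)}(x)/P_1^{(j)}(x)$ equals $P_0(x)/P_1(x)$ in every component. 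By degree-one homogeneity, this makes the input side exactly additive:
\[
H_\rho(P_0,P_1)=\sum_j\lambda_j H_\rho\bigl(P_0^{(j)},P_1^{(j)}\bigr).
\]
On the output side, joint convexity of the perspective function together with linearity of the channel gives the inequality
\[
H_\rho(Q_0,Q_1)\le\sum_j\lambda_j H_\rho\bigl(Q_0^{(j)},Q_1^{(j)}\bigr).
\]
Verifying that such a ratio-preserving decomposition with normalized components exists is a small flow/LP feasibility argument. I expect it to go through by matching mass in $A$ against mass in $B$, since $\sum_x (P_0-P_1)=0$.

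\textbf{Step 2 (passing through the logarithm), expected to be the main obstacle.} Write $a_j=H_\rho(Q^{(j)})$ and $b_j=H_\rho(P^{(j)})$, with $1\le a_j\le b_j$. Let $g(u)=\log u$ and $\eta^\star=\max_j g(a_j)/g(b_j)$. From Step 1, $D_\rho(Q_0\|Q_1)\le \tfrac{1}{\rho-1}\,g\bigl(\sum\lambda_j a_j\bigr)$, so it suffices to show
\[
g\Bigl(\sum_j\lambda_j a_j\Bigr)\le \eta^\star\, g\Bigl(\sum_j\lambda_j b_j\Bigr).
\]
This is not automatic, because $g$ is concave and the ratio of logarithms of averages need not be bounded by the maximum of the ratios. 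The first thing I would try is to enlarge the family of competitors rather than prove this inequality directly. Since $a_j\le b_j^{\eta^\star}$, it suffices to show that $u\mapsto u^{\eta^\star}$ satisfies
\[
\sum_j\lambda_j b_j^{\eta^\star}\le\Bigl(\sum_j\lambda_j b_j\Bigr)^{\eta^\star},
\]
which is exactly Jensen's inequality for the concave map $u\mapsto u^{\eta}$ with $\eta\le 1$. If this works cleanly it closes the argument, giving $D_\rho(Q)\le\eta^\star D_\rho(P)$ with $\eta^\star$ attained by one of the binary pairs. Because it looks almost too easy, I would double-check carefully that $\eta^\star\le 1$ (which is the DPI applied to each component) and that the direction of each inequality is right.

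\textbf{Step 3 (limiting order and attainment).} For $\rho=\infty$, I would repeat the argument with $H_\infty$ replaced by $\max_x P_0/P_1$. Alternatively, I would use a linear-fractional-programming argument directly. Fix $r=\max_x P_0(x)/P_1(x)$ and an output symbol $y$. Maximizing $\sum_x P_0 W(y|x)\big/\sum_x P_1W(y|x)$ subject to $P_0\le rP_1$ is an LP after the Charnes–Cooper transformation, and its vertices have support on at most two input atoms. Finally, since binary pairs form a compact family (after handling the degenerate cases $P_0=P_1$ or disjoint supports by a limiting argument), the supremum over binary inputs is attained. Together with Steps 1 and 2, this shows it equals $\eta_\rho(W)$ as defined in \eqref{eq:ccrd}.
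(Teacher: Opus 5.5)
Your Steps 1--2 are correct and are essentially the paper's argument in different packaging. The ratio-preserving decomposition writes $P_1$ as a convex combination of the two-atom extreme points of the paper's polytope $\mathcal{S}(P_0,P_1)=\{\hat P_1:\sum_x \tfrac{P_0(x)}{P_1(x)}\hat P_1(x)=1\}$. Your perspective-convexity bound on the output side and your Jensen step for $u\mapsto u^{\eta^\star}$ on the linear input side are exactly the two halves of the paper's observation that $\hat P_1\mapsto H_\rho(\hat Q_0\|\hat Q_1)-H_\rho(\hat P_0\|\hat P_1)^{\lambda}$ is convex on $\mathcal{S}$.

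Taking $\lambda=\eta^\star$ directly is a slightly cleaner way through the logarithm than the paper's level-set reformulation $\eta=\sup\{\lambda:\sup L_\lambda\ge 0\}$. The $\rho=\infty$ version you sketch also works, mirroring the paper's quasi-convexity step: $H_\infty(P)=\max_j H_\infty(P^{(j)})$, and by the mediant inequality $H_\infty(Q)\le\max_j H_\infty(Q^{(j)})$. The same decomposition also gives $\rho=1$ directly, with no logarithm to pass through. One small fix: leave components with $P^{(j)}_0=P^{(j)}_1$ out of the maximum defining $\eta^\star$. They have $a_j=b_j=1$ and satisfy $a_j\le b_j^{\eta^\star}$ trivially.

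You should drop the attainment claim at the end of Step 3. The ratio is undefined ($0/0$) on the diagonal $P_0=P_1$, so compactness of the set of binary pairs gives nothing. In general the supremum is \emph{not} attained. By the paper's Theorem~\ref{thm:rdccbd-value}, take a $q$-ary symmetric channel with $0<u<v$ and a binary pair with $p_1<p_0$. Its order-$\infty$ ratio is a weighted average of $R(p)=\tfrac{p(v-u)}{p(v-u)+u}$ over $[p_1,p_0]$, which is strictly below $R(1)=\eta_\infty(W)$. So the value is only approached as $p_1<p_0\uparrow 1$.

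The theorem's ``achieved by binary input distributions'' must therefore be read as ``the supremum may be restricted to binary inputs,'' which your Steps 1--2 already establish. For the same reason, use your direct $\rho=\infty$ argument rather than the limit $\rho\to\infty$. That route would need an interchange of $\lim$ and $\sup$ that you have not justified.
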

\begin{proof}The proof is deferred to the Appendix~\ref{section:cc-rd-proof}.
\end{proof}
\begin{theorem}\label{thm:jrdccbd}
    For Jeffreys-Rényi divergence of  order $\rho = \infty$, the SDPI coefficient of a channel $W$, $\eta^{J}_{\infty}(W)$, which is defined as
    \begin{align}
         \eta^{J}_{\infty}(W) = \sup_{P_0,P_1 \in \Delta(\mathcal{X})} \frac{D^{J}_{\infty}(Q_0,Q_1) }{D^{J}_{\infty}(P_0,P_1)}
    \end{align}
    is achieved by binary input distributions.
\end{theorem}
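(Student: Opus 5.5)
The plan is to fix the pair of output symbols at which the output divergence is attained, and then use quasi-linearity of linear-fractional functions to push the inputs to extreme points of a polytope.

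\textbf{Setup.} Recall that $D^{J}_{\infty}(P_0,P_1)=\log\max_x \frac{P_0(x)}{P_1(x)}-\log\min_x\frac{P_0(x)}{P_1(x)}$. Similarly
\begin{align}
D^{J}_{\infty}(Q_0,Q_1)=\max_{y_1,y_2\in\mathcal Y}\log\frac{Q_0(y_1)\,Q_1(y_2)}{Q_1(y_1)\,Q_0(y_2)} .
\end{align}
Since $\mathcal Y$ is finite, it suffices to fix an arbitrary pair $(y_1,y_2)$ and show the following. For every $(P_0,P_1)$ there is a binary pair $(\tilde P_0,\tilde P_1)$ with $D^{J}_{\infty}(\tilde P_0,\tilde P_1)\le D^{J}_{\infty}(P_0,P_1)$ and $\frac{\tilde Q_0(y_1)\tilde Q_1(y_2)}{\tilde Q_1(y_1)\tilde Q_0(y_2)}\ge \frac{Q_0(y_1)Q_1(y_2)}{Q_1(y_1)Q_0(y_2)}$. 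Taking the supremum over $(y_1,y_2)$ and over $(P_0,P_1)$ then gives the theorem.

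\textbf{Step 1: two-valued likelihood ratios.} Write $m=\min_x P_0(x)/P_1(x)$ and $R=\max_x P_0(x)/P_1(x)$. Hold $P_1$ fixed and let $P_0$ range over the polytope
\begin{align}
\mathcal P=\Big\{P_0:\ mP_1\le P_0\le RP_1,\ \textstyle\sum_x P_0(x)=1\Big\}.
\end{align}
On $\mathcal P$ the input divergence is at most $\log(R/m)$. The output objective is $\frac{Q_1(y_2)}{Q_1(y_1)}\cdot\frac{Q_0(y_1)}{Q_0(y_2)}$. The first factor is constant, and the second is a ratio of two positive linear functionals of $P_0$, hence quasi-linear. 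Its maximum over $\mathcal P$ is therefore attained at a vertex. A vertex of $\mathcal P$ has $P_0(x)\in\{mP_1(x),RP_1(x)\}$ for all $x$ except at most one coordinate $x_0$, where the ratio lies strictly between $m$ and $R$. I would remove that one intermediate coordinate in one of two ways. The first option is to split $x_0$'s mass $P_1(x_0)$ into two parts carrying ratios $m$ and $R$, which preserves $Q_0$ and $Q_1$. Because the split produces two copies of the same channel row, this is harmless. The second option is to note directly that the objective restricted to the edge through $x_0$ is again linear-fractional. After this step the inputs are described by a set $A$ of symbols with ratio $R$ and a set $B$ with ratio $m$.

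\textbf{Step 2: reduction to two atoms.} Now $P_1=\lambda\mu_A+(1-\lambda)\mu_B$, where $\mu_A,\mu_B$ are distributions on $A$ and $B$, and $P_0=R\lambda\mu_A+m(1-\lambda)\mu_B$ with $R\lambda+m(1-\lambda)=1$. Hence $Q_i(y)$ is affine in the induced output rows $W\mu_A(y)$ and $W\mu_B(y)$. Fix $\mu_B$ and $\lambda$. Then $\frac{Q_0(y_1)}{Q_1(y_1)}$ is a linear-fractional function of $\mu_A$, and so is $\frac{Q_1(y_2)}{Q_0(y_2)}$. I want to argue that their product is maximized at a vertex of the simplex over $A$, i.e. at a point mass $\delta_a$. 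I expect this to be the main obstacle, because a product of two linear-fractional functions is not automatically quasi-convex.

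My first attempt would be to use the special structure that both factors share the same two coefficients $R$ and $m$. Concretely, write each factor as $\frac{R u+m v}{u+v}$ with $u=\lambda W\mu_A(y)$ and $v=(1-\lambda)W\mu_B(y)$. This makes the first factor increasing in $u_1/v_1$ and the second increasing in $v_2/u_2$. One can then reparametrize $\mu_A$ along any segment and check that the logarithm of the product has no interior strict maximum, for instance by showing the second derivative along the segment is nonnegative. If that fails, the fallback is to also free $\lambda$ (equivalently the pair $(R,m)$) and argue jointly. Once a point mass $\delta_a$ is obtained, the same argument applied to $\mu_B$ yields $\delta_b$. The resulting $(\tilde P_0,\tilde P_1)$ is supported on $\{a,b\}$, has input divergence $\log(R/m)$ or less, and has output objective no smaller than the original. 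This completes the reduction.
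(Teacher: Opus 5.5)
Your overall reduction is sound: fix $(y_1,y_2)$ and produce a binary pair with no larger input divergence and no smaller output ratio. Step 1 is also fine; the objective is homogeneous of degree zero in $P_0$, so you could even drop the normalization and avoid the intermediate coordinate.

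\textbf{The gap is in Step 2, and the route you plan cannot work.} With $\lambda$ and $\mu_B$ frozen, the maximizer over $\mu_A$ need not be a point mass. In your notation the log-objective is $\log g(t_1)-\log g(t_2)$, with $t_j=u_j/v_j$ and $g(t)=\frac{Rt+m}{t+1}$. For $R>m$, $\log g$ is strictly concave: its second derivative has the sign of $(m-R)(2Rt+R+m)$. Along a segment in $\mu_A$ both $t_j$ are affine, so you are adding a concave function to a convex one, and interior strict maxima do occur.

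For example, take $R=1.9$, $m=0.1$, $\lambda=\tfrac12$, $B=\{x_3\}$ and $A=\{x_1,x_2\}$. Let $(W(y_1|x),W(y_2|x))$ be $(0,0.3)$, $(0.5,0.4)$, $(0.01,0.3)$ for $x=x_1,x_2,x_3$, and set $\mu_A=(1-\theta)\delta_{x_1}+\theta\delta_{x_2}$. The input divergence is $\log 19$ throughout. The output ratio $\frac{Q_0(y_1)Q_1(y_2)}{Q_1(y_1)Q_0(y_2)}$ equals $0.1$, about $1.718$, and about $1.652$ at $\theta=0,\,0.4,\,1$. So no second-derivative check along segments with $\lambda$ fixed can succeed.

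\textbf{The missing idea is the right quantity to hold fixed, which your fallback only gestures at.} Write $a=W(y_1|\cdot)$ and $b=W(y_2|\cdot)$. Instead of $\lambda$, fix $t_2=\frac{\lambda\langle b,\mu_A\rangle}{(1-\lambda)\langle b,\mu_B\rangle}$. Let $\rho_A=\langle a,\mu_A\rangle/\langle b,\mu_A\rangle$ and $\rho_B=\langle a,\mu_B\rangle/\langle b,\mu_B\rangle$. Then $t_1=t_2\rho_A/\rho_B$, and the objective is $g(t_2\rho_A/\rho_B)/g(t_2)$. For fixed $t_2$ this is increasing in $\rho_A$ and decreasing in $\rho_B$.

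Each $\rho$ is a single linear-fractional functional, namely a $b\mu$-weighted average of $a(x)/b(x)$. Hence:
\begin{itemize}
\item $\rho_A$ is maximized by $\delta_{x_A}$ with $x_A\in\arg\max_{x\in A}a(x)/b(x)$;
\item $\rho_B$ is minimized by $\delta_{x_B}$ with $x_B\in\arg\min_{x\in B}a(x)/b(x)$.
\end{itemize}
Then re-solve for $\lambda$ to restore $t_2$ and divide $P_0$ by its total mass. Both $R/m$ and the output ratio are invariant under scaling $P_0$, so neither changes. In the example this gives $\delta_{x_2}$, $\lambda\approx0.46$, and output ratio about $1.76>1.718$.

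This is exactly the paper's argument. Its change of measure $\tilde p\propto W(y'|\cdot)\circ P_1$ turns the objective into $\mathbb{E}_{\tilde p}[rs]/(\mathbb{E}_{\tilde p}[r]\,\mathbb{E}_{\tilde p}[s])$ with $r=a/b$. For two-valued $s$ this depends on $\tilde p$ only through $\tilde p(A)=t_2/(1+t_2)$, which is equivalent to your fixed $t_2$, and through the conditional means of $r$ on $A$ and $A^c$. You would also need to handle symbols with $W(y_2|x)=0$; the paper perturbs $b$ for this.
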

\begin{proof}
The proof is deferred to the Appendix~\ref{section:cc-jrd-proof}. 
\end{proof}
In general, for an arbitrary channel $W$ obtaining a closed-form expression for the SDPI coefficient is computationally challenging. In the next theorems, we explicitly derive the \textit{SDPI coefficients} for the Rényi divergence and Jeffreys-Rényi divergence of order infinity for a class of symmetric channels using our results from Theorem~\ref{thm:rdccbd} and~\ref{thm:jrdccbd}.
\begin{definition}[$q-$ary Symmetric Channel]\label{def:emk} Let $\mathcal{X}$ and $\mathcal{Y}$ be finite alphabets such that $|\mathcal{X}|=|\mathcal{Y}|=q$.
A channel $W_{Y|X}:\mathcal{X} \rightarrow \mathcal{Y}$ is said to be a $q-$ary symmetric  if the transition matrix is of the form
 \[
W =
\begin{bmatrix}
v & u &  \cdots & u \\
u & v& \cdots & u \\
\vdots & \vdots & \ddots & \vdots \\
u  & \cdots & \cdots & v
\end{bmatrix}_{q\times q},
\]
where $u,v \in [0,1]$ such that $v=1-(q-1)u$.
\end{definition}
\begin{theorem}\label{thm:rdccbd-value}
    Let $W$ be a $q-$ary symmetric channel. Then, the SDPI coefficient of $W$ for Rényi divergence of order infinity is 
    \begin{align}\label{eq:ccrd-value}
   \eta_{\infty}(W) = \frac{|v-u|}{\max\{u,v\}}.
\end{align}
Furthermore, it is achieved in the limit by degenerate distributions.
\end{theorem}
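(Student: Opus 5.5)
By Theorem~\ref{thm:rdccbd} with $\rho=\infty$, it suffices to take the supremum over input pairs $(P_0,P_1)$ supported on two atoms $\{a,b\}\subseteq\mathcal{X}$. The key observation is that a $q$-ary symmetric channel acts affinely: for every $y$,
\begin{align}
Q_i(y)=u+(v-u)P_i(y),\qquad i\in\{0,1\}.
\end{align}
Write $r:=\max_x P_0(x)/P_1(x)=e^{D_\infty(P_0\|P_1)}\ge 1$. We may assume $r>1$, since otherwise $P_0=P_1$ and the ratio is undefined. The goal is then a scalar inequality: $\log\max_y \frac{Q_0(y)}{Q_1(y)}\le \frac{|v-u|}{\max\{u,v\}}\log r$, together with a family of binary pairs for which the ratio of the two sides tends to $\frac{|v-u|}{\max\{u,v\}}$.

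\emph{Case $v\ge u$.} Set $c=v-u\ge 0$. For any $y$ we have $P_0(y)\le rP_1(y)$, so $\frac{Q_0(y)}{Q_1(y)}\le\frac{u+crp}{u+cp}$ with $p=P_1(y)$. This is nondecreasing in $p$. The constraint $P_0(y)\le 1$ lets us take $p\le 1/r$, which gives the bound $\frac{vr}{ur+v-u}$.

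Set $s=\log r$ and $h(s)=s+\log v-\log(ue^s+v-u)$. Then $h(0)=0$, and $h$ is concave because $s\mapsto\log(ue^s+v-u)$ is a log-sum-exp and hence convex. Concavity makes $h(s)/s$ nonincreasing on $s>0$, so
\begin{align}
\frac{D_\infty(Q_0\|Q_1)}{D_\infty(P_0\|P_1)}\le \sup_{s>0}\frac{h(s)}{s}=h'(0)=1-\frac{u}{v}=\frac{v-u}{v}.
\end{align}
For achievability, take $P_1=\delta_a$ and $P_0=(1-\epsilon)\delta_a+\epsilon\delta_b$. The $y=a$ and $y=b$ terms are then explicit, and letting $\epsilon\to 0$ the ratio tends to $h'(0)$ (the appropriate choice of which atom is degenerate is fixed by checking both orders). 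This is the ``achieved in the limit by degenerate distributions'' clause.

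\emph{Case $v<u$.} Here $c<0$ and the monotonicity in $p$ reverses, so the bound above does not transfer directly. I would rewrite the channel using the complementary distribution
\begin{align}
\bar P(y)=\frac{1-P(y)}{q-1},\qquad Q(y)=v+(q-1)(u-v)\bar P(y),
\end{align}
which has the same affine form with the roles of $u$ and $v$ swapped. Rerunning the concavity argument then gives the constant $\frac{u-v}{u}$, provided the likelihood ratio $\bar r=\max_y \bar P_0(y)/\bar P_1(y)$ is controlled by $r$. For binary inputs this reduces to comparing $\frac{1-P_0(y)}{1-P_1(y)}$ with $\frac{P_0(y)}{P_1(y)}$ on the two atoms, plus the value $1$ off the support.

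I expect this case to be the main obstacle. The relation between $\bar r$ and $r$ is not simply $\bar r\le r$ in general, so it has to come from the binary structure. If the clean reduction fails, the fallback is direct: parametrize $P_0=(x,1-x)$ and $P_1=(z,1-z)$ on $\{a,b\}$, compute all $q$ output ratios explicitly (the $q-2$ off-support outputs have ratio $1$), and maximize $\log\max_y(\cdot)/\log r$ by hand using the same concavity of $s\mapsto s-\log(\alpha e^s+\beta)$. The degenerate limit $\epsilon\to 0$ again gives the matching lower bound $\frac{u-v}{u}$. Combining the two cases yields $\eta_\infty(W)=\frac{|v-u|}{\max\{u,v\}}$.
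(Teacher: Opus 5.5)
Your case $v\ge u$ is essentially sound, but the case $v<u$ is not proved, and that case is half of the formula $\frac{|v-u|}{\max\{u,v\}}$. The plan you sketch for it does not work as written, for two reasons.

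\textbf{The normalization is off.} With $\bar P(y)=\frac{1-P(y)}{q-1}$ the channel reads $Q(y)=v+(q-1)(u-v)\bar P(y)$. Its slope is $(q-1)(u-v)$, not $u-v$. If you rerun your concavity argument with the constraint $\bar P_0(y)\le 1$, you get the constant $\frac{(q-1)(u-v)}{1-(q-1)v}$. For $q\ge 3$ and $v>0$ this is strictly larger than $\frac{u-v}{u}=\frac{(q-1)(u-v)}{1-v}$. You need either the sharper constraint $\bar P_0(y)\le\frac{1}{q-1}$, or, more cleanly, the unnormalized $\tilde P(y):=1-P(y)\in[0,1]$. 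Then $Q(y)=v+(u-v)\tilde P(y)$ is the exact mirror of case 1.

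\textbf{The comparison you call the main obstacle is easy.} It is a one-line mediant inequality, valid for \emph{all} inputs, not just binary ones:
\[
\tilde r:=\max_y\frac{1-P_0(y)}{1-P_1(y)}=\max_y\frac{\sum_{x\neq y}P_0(x)}{\sum_{x\neq y}P_1(x)}\le\max_x\frac{P_0(x)}{P_1(x)}=r .
\]
For two-atom inputs it is an equality, since $1-P_k(a)=P_k(b)$ and off-support outputs give ratio $1$. Every output then satisfies $\log\frac{Q_0(y)}{Q_1(y)}\le\tilde h(\log\tilde r)$, where $\tilde h(s)=s+\log u-\log(ve^{s}+u-v)$. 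This $\tilde h$ is concave with $\tilde h(0)=0$. It is also increasing, which you need here and did not mention. Hence $\tilde h(\log\tilde r)\le\tilde h(\log r)\le\frac{u-v}{u}\log r$. The matching lower bound comes from the degenerate pair below, read at output $y=b$.

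Two smaller repairs are needed in case $v\ge u$.
\begin{itemize}
\item The step ``$P_0(y)\le 1$ lets us take $p\le 1/r$'' is not a valid reduction, because $P_1(y)$ may exceed $1/r$. What holds is $P_0(y)\le\min\{rp,1\}$. The function $\frac{u+c\min\{rp,1\}}{u+cp}$ increases on $[0,1/r]$ and decreases afterwards, so it is maximized at $p=1/r$.
\item Your displayed pair $P_1=\delta_a$, $P_0=(1-\epsilon)\delta_a+\epsilon\delta_b$ has $D_\infty(P_0\|P_1)=\infty$, so its ratio is $0$. The equality conditions of your own bound ($P_0(y)=1$, $P_1(y)=1/r$, $r\downarrow 1$) force $P_0=\delta_a$ and $P_1=(1-\epsilon)\delta_a+\epsilon\delta_b$. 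This is the paper's limit $p_0,p_1\uparrow 1$, and it gives $\frac{v-u}{v}$ at $y=a$.
\end{itemize}

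Once repaired, your route genuinely differs from the paper's. The paper uses Theorem~\ref{thm:rdccbd} to reduce to two atoms, identifies which output attains the maximum, and bounds $\int R(p)\,d\log p$ by $R(1)$ via monotonicity of $R$. Your pointwise bound plus concavity never uses the binary reduction, so it proves the upper bound for arbitrary input pairs directly. The paper's $g(p)=v+p(u-v)$ in the case $u>v$ is exactly your complement trick specialized to two atoms.
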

\begin{proof}
The proof is deferred to the Appendix~\ref{section:cc-rd-value-proof}.
\end{proof}
\begin{theorem}\label{thm:jrdccbd-value}
    Let $W$ be a $q-$ary symmetric channel. Then, the SDPI coefficient of $W$ for Jeffreys-Rényi divergence of order infinity is
    \begin{align}\label{eq:ccjrd-value}
    \eta^{J}_{\infty}(W) = \frac{|v-u|}{v+u}.
\end{align}
Furthermore, it is achieved in the limit by uniform binary distributions.
\end{theorem}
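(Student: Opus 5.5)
By Theorem~\ref{thm:jrdccbd}, the supremum defining $\eta^{J}_{\infty}(W)$ may be restricted to input distributions supported on two atoms $\{a,b\}\subseteq\mathcal{X}$. Because $W$ is $q$-ary symmetric, every pair of atoms is equivalent, so we may take $P_0=(p,1-p,0,\dots,0)$ and $P_1=(r,1-r,0,\dots,0)$. The proof will show two things: the ratio $D^{J}_{\infty}(Q_0,Q_1)/D^{J}_{\infty}(P_0,P_1)$ never exceeds $|v-u|/(v+u)$, and this value is approached as $p,r\to 1/2$.

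\textbf{Computing the output distributions.} For $y\in\{1,2\}$ we have $Q_0(1)=pv+(1-p)u=u+p(v-u)$ and $Q_0(2)=v-p(v-u)$. For $y\ge 3$ we have $Q_0(y)=u=Q_1(y)$, so these outputs have likelihood ratio $1$. Since the max of $\log(Q_1/Q_0)$ is $\ge 0$ and the min is $\le 0$ over any set of outputs, the $y\ge3$ terms never affect $D^{J}_{\infty}(Q_0,Q_1)=\max_y\log\frac{Q_1(y)}{Q_0(y)}-\min_y\log\frac{Q_1(y)}{Q_0(y)}$. Assume w.l.o.g.\ $r>p$ and $v>u$; the case $v<u$ is handled by relabeling the two output symbols. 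Then
\begin{align}
D^{J}_{\infty}(Q_0,Q_1)=\log\frac{u+r d}{u+p d}+\log\frac{v-p d}{v-r d}, \qquad d:=v-u,
\end{align}
and $D^{J}_{\infty}(P_0,P_1)=\log\frac{r}{p}+\log\frac{1-p}{1-r}$.

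\textbf{Rewriting in logit coordinates.} Write $g(t)=\log\frac{u+td}{v-td}$, which is increasing in $t$. Then $D^{J}_{\infty}(Q_0,Q_1)=g(r)-g(p)$. Similarly, with $h(t)=\log\frac{t}{1-t}$, we have $D^{J}_{\infty}(P_0,P_1)=h(r)-h(p)$. By Cauchy's mean value theorem, the ratio equals $g'(\xi)/h'(\xi)$ for some $\xi\in(p,r)$. Hence it suffices to show
\begin{align}
\sup_{t\in(0,1)}\frac{g'(t)}{h'(t)}=\frac{d}{v+u}, \quad\text{attained at } t=1/2.
\end{align}
This is also exactly what attainment needs: letting $p,r\to 1/2$ recovers the sup, which matches the claim that uniform binary distributions achieve it in the limit.

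\textbf{The key inequality.} Differentiating gives
\begin{align}
g'(t)=\frac{d}{u+td}+\frac{d}{v-td}=\frac{d(u+v)}{(u+td)(v-td)}, \qquad h'(t)=\frac{1}{t(1-t)}.
\end{align}
The ratio is therefore $\frac{d(u+v)\,t(1-t)}{(u+td)(v-td)}$, and we must show $(u+v)^2 t(1-t)\le (u+td)(v-td)$. Substituting $td=tv-tu$ gives $u+td=(1-t)u+tv$ and $v-td=tu+(1-t)v$. Expanding,
\begin{align}
(u+td)(v-td)=t(1-t)(u^2+v^2)+\big(t^2+(1-t)^2\big)uv.
\end{align}
Subtracting $t(1-t)(u+v)^2$ leaves $uv\big(t^2+(1-t)^2-2t(1-t)\big)=uv(1-2t)^2\ge 0$. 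Equality holds at $t=1/2$.

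I expect the main obstacle to be correctly justifying that the outputs $y\ge3$ and the maximizing and minimizing output symbols are the ones identified above, including the sign cases for $v<u$. Once Theorem~\ref{thm:jrdccbd} reduces the problem to binary inputs, the remaining steps are routine.
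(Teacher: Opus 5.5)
Your proposal is correct and follows essentially the same route as the paper: reduce to binary inputs via Theorem~\ref{thm:jrdccbd}, pass to logit coordinates, and bound the derivative ratio $\frac{(v^2-u^2)\,t(1-t)}{(u+td)(v-td)}$ by its value at $t=1/2$, with attainment as both parameters tend to $1/2$. Your Cauchy mean value step is equivalent to the paper's ratio-of-integrals argument, and your identity $(u+td)(v-td)-t(1-t)(u+v)^2=uv(1-2t)^2$ supplies the justification for the maximum at $t=1/2$, which the paper only asserts via monotonicity of $R(p)$.
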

\begin{proof}
The proof is deferred to the Appendix~\ref{section:cc-jrd-value-proof}.
\end{proof}
\section{Private Change-point Detection with Local Differential Privacy (LDP)}~\label{sec:ldpcpd}
In this section, we propose private CPD algorithms  satisfying $\varepsilon-$LDP, based on the randomized response and binary mechanisms. We analyze both algorithms and derive utility guarantees using the SDPI coefficients from Section~\ref{sec:sdpi}.

In the next theorem,  we prove that the Jeffreys-Rényi divergence of order infinity between the output distributions of an $\varepsilon-$LDP mechanism cannot be too large, irrespective of the input distributions.

\begin{theorem}\label{thm:jrdub}
   Let $(Q_0,Q_1)$ be the output distributions corresponding to the input distributions $(P_0,P_1)$, through an $\varepsilon-$LDP mechanism. Then, the Jeffreys-Rényi divergence of order infinity between $Q_0$ and $Q_1$ is upper bounded as  
\begin{align}\label{eq:cjd}
    D^{J}_{\infty}(Q_0,Q_1) \leq 2\varepsilon,
\end{align}
for any $\varepsilon>0$, irrespective of $P_0$ and $P_1$.
\end{theorem}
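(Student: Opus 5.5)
The bound follows directly from the definition of $\varepsilon$-LDP: every output probability $Q_i(y)$ is a mixture of the values $W(y|x)$, and these values differ by at most a factor $e^{\varepsilon}$ across inputs $x$. We therefore bound each likelihood ratio $Q_0(y)/Q_1(y)$ pointwise by $e^{\varepsilon}$ and then take the maximum.

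Since $\mathcal{X},\mathcal{Y}$ are finite, $D_\infty(Q_0\|Q_1)=\log\max_{y} Q_0(y)/Q_1(y)$, with the maximum over $y$ in the support of $Q_0$. First observe that the LDP condition with $\mathcal S=\{y\}$ gives $W(y|x)\le e^{\varepsilon}W(y|x')$ for all $x,x'$. In particular, $W(y|x)>0$ for some $x$ implies $W(y|x')>0$ for all $x'$. Hence $Q_0(y)>0$ implies $Q_1(y)>0$, and the divergences are finite. (If $Q_0(y)=Q_1(y)=0$, then $y$ is simply excluded.)

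Fix $y$ with $Q_1(y)>0$, and set $m(y)=\min_{x}W(y|x)$ and $M(y)=\max_x W(y|x)$. Because $Q_i(y)=\sum_x P_i(x)W(y|x)$ is a convex combination of the values $W(y|x)$, we have $m(y)\le Q_i(y)\le M(y)$ for $i\in\{0,1\}$, regardless of $P_0,P_1$. Therefore
\begin{align}
\frac{Q_0(y)}{Q_1(y)}\le \frac{M(y)}{m(y)}\le e^{\varepsilon},
\end{align}
where the last step applies the LDP inequality to the maximizing and minimizing inputs. Taking $\log\max_y$ gives $D_\infty(Q_0\|Q_1)\le\varepsilon$. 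By symmetry the same argument gives $D_\infty(Q_1\|Q_0)\le\varepsilon$, and summing the two yields $D^J_\infty(Q_0,Q_1)\le 2\varepsilon$.

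The only step needing care is the support issue: it must be ruled out that $D_\infty$ is infinite because $Q_1(y)=0<Q_0(y)$. The positivity argument from the LDP condition above handles this. No SDPI result from Section~\ref{sec:sdpi} is needed. One could alternatively bound $D^J_\infty(Q_0,Q_1)$ via the sensitivity $\max_y\log\frac{Q_1(y)}{Q_0(y)}-\min_y\log\frac{Q_1(y)}{Q_0(y)}$, but the pointwise bound is more direct.
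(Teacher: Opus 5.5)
Your proof is correct and follows essentially the same route as the paper. Both arguments sandwich $Q_i(y)$ between $\min_x W(y|x)$ and $\max_x W(y|x)$ and use the $\varepsilon$-LDP ratio bound to get $\max_y \log\frac{Q_0(y)}{Q_1(y)}\le\varepsilon$ and $\max_y \log\frac{Q_1(y)}{Q_0(y)}\le\varepsilon$, then add the two one-sided bounds. Your explicit check that $Q_0(y)>0$ forces $Q_1(y)>0$, so that both $D_\infty$ terms are finite, is a detail the paper leaves implicit.
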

\begin{proof}
    The proof is deferred to the Appendix~\ref{section:jrdub-proof}. 
\end{proof}
    In light of Theorem~\ref{thm:jrdccbd} and Theorem~\ref{thm:jrdub}, we have the following corollary,
\begin{corollary}\label{cor:ldpp}
Let $(Q_0,Q_1)$ be the output distributions induced by the input distributions $(P_0,P_1)$, through an $\varepsilon-$LDP mechanism $W^{\varepsilon}$. Then,
\begin{align}
D^{J}_{\infty}(Q_0,Q_1) \leq \min\Big\{ 2\varepsilon \hspace{1mm};\eta^{J}_{\infty}(W^{\varepsilon})D^{J}_{\infty}(P_0,P_1)\Big\},
\end{align}
where $\eta^{J}_{\infty}(W^{\varepsilon})$ is the SDPI coefficient of an $\varepsilon-$LDP mechanism $W^{\epsilon}$ for Jeffreys-Rényi divergence of order infinity.
\end{corollary}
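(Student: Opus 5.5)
The corollary is a direct combination of two facts, each giving one term of the minimum. Since $D^{J}_{\infty}(Q_0,Q_1)$ is bounded above by each term separately, it is bounded by their minimum.

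\textbf{First bound.} The term $2\varepsilon$ is exactly Theorem~\ref{thm:jrdub}. It holds for any $\varepsilon$-LDP mechanism $W^{\varepsilon}$ and any inputs $(P_0,P_1)$, so nothing more is needed there.

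\textbf{Second bound.} Apply Definition~\ref{def:sdpi} with $D=D^{J}_{\infty}$ and channel $W^{\varepsilon}$. By definition of the supremum, for every pair $(P_0,P_1)$ with $0<D^{J}_{\infty}(P_0,P_1)<\infty$ we have
\begin{align*}
\frac{D^{J}_{\infty}(Q_0,Q_1)}{D^{J}_{\infty}(P_0,P_1)} \le \eta^{J}_{\infty}(W^{\varepsilon}).
\end{align*}
Multiplying through gives $D^{J}_{\infty}(Q_0,Q_1)\le \eta^{J}_{\infty}(W^{\varepsilon})D^{J}_{\infty}(P_0,P_1)$. Theorem~\ref{thm:jrdccbd} is not needed for the inequality itself. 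It only guarantees that $\eta^{J}_{\infty}(W^{\varepsilon})$ is attained on binary inputs, which makes it computable (and for symmetric channels, explicitly by Theorem~\ref{thm:jrdccbd-value}). I will mention it only as that remark.

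\textbf{Degenerate cases.} These are the only point needing care.
\begin{itemize}
\item If $D^{J}_{\infty}(P_0,P_1)=0$, then $P_0=P_1$, so $Q_0=Q_1$ and both sides are $0$.
\item If $D^{J}_{\infty}(P_0,P_1)=\infty$ (disjoint supports somewhere), the second term is $+\infty$ or undefined. Under the convention that $\eta\cdot\infty=\infty$ when $\eta>0$, the minimum reduces to $2\varepsilon$, which is still valid by Theorem~\ref{thm:jrdub}. Note that $\eta^{J}_{\infty}(W^{\varepsilon})>0$ unless the channel is constant, and a constant channel gives $Q_0=Q_1$, so the bound is trivially true in that case too.
\end{itemize}
With these conventions stated, taking the minimum of the two bounds finishes the proof.
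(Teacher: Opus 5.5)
Your proposal is correct and takes essentially the paper's approach: the paper gives no separate proof, and simply combines Theorem~\ref{thm:jrdub} for the $2\varepsilon$ term with the SDPI coefficient for the second term. The paper cites Theorem~\ref{thm:jrdccbd} for that second term, but as you note, the inequality itself is immediate from Definition~\ref{def:sdpi}; your explicit handling of the cases $D^{J}_{\infty}(P_0,P_1)\in\{0,\infty\}$ is a careful addition that the paper leaves implicit.
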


\subsection{Chernoff information as a proxy for algorithmic performance} \label{sec:ch-proxy}
Recall that in Theorem~\ref{thm:npcpd} the error probability of the non-private CPD estimator (Algorithm~\ref{alg:offline_cpd}) admits a sharp upper bound with an explicit exponential decay rate, expressed in terms of the Chernoff information (bound~B) between the input distributions $(P_0,P_1)$. Motivated by this bound, we use Chernoff information between the induced output
distributions post privatization, $(Q_0,Q_1)$, as a proxy for algorithmic performance in the private setting. We identify  $\varepsilon$--LDP mechanisms that maximize the induced Chernoff information in low and high privacy regimes (large and small $\varepsilon$, resp.). 

\begin{theorem}\label{thm:high_eps_div}
    Let $(P_0, P_1)$ be any two discrete probability distributions. Then, there exists a positive $\varepsilon^{*}(P_0,P_1)$ such that for all $\varepsilon \geq \varepsilon^*$, the randomized response mechanism maximizes the Chernoff information $I_{ch}{(Q_0,Q_1)}$ between the induced output distributions $(Q_0,Q_1)$, among all $\varepsilon$-LDP mechanisms.
\end{theorem}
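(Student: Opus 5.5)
We reduce the problem to a finite optimization over extremal mechanisms and then compare asymptotic expansions as $\varepsilon\to\infty$. First we use the known structure of $\varepsilon$-LDP mechanisms (Kairouz--Oh--Viswanath). For finite $\mathcal{X}$ with $|\mathcal{X}|=k$, every $\varepsilon$-LDP mechanism is, up to post-processing, a mixture of \emph{staircase} mechanisms. Each output column of a staircase mechanism has entries in $\{c,\,c e^{\varepsilon}\}$, and there are only finitely many such extremal columns (at most $2^k$ patterns). The induced Chernoff information can be written as $I_{ch}(Q_0,Q_1)=\max_{\lambda\in(0,1)}\,-\log\sum_y Q_0(y)^\lambda Q_1(y)^{1-\lambda}$. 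For fixed $\lambda$, the map $W\mapsto \sum_y Q_0(y)^\lambda Q_1(y)^{1-\lambda}$ is a sum over columns of a function that is concave and $1$-homogeneous in the column. Hence $-\log(\cdot)$ is maximized by splitting columns into extremal staircase columns. A sup--max exchange then shows that $\sup_W I_{ch}$ is attained on the polytope of mechanisms whose columns are scaled staircase patterns $c_S(\mathbf 1+(e^\varepsilon-1)\mathbf 1_S)$, $S\subseteq\mathcal{X}$, with weights satisfying the row-stochasticity constraint. This is a finite-dimensional problem, namely a linear program in the weights $\{c_S\}$ for each fixed $\lambda$.

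Next we analyze the regime $\varepsilon\to\infty$. Put $\delta=e^{-\varepsilon}$. Under randomized response, $Q_i=(1-k\theta)P_i+\theta\mathbf 1$ with $\theta=1/(e^\varepsilon+k-1)=\delta+O(\delta^2)$. Since $Q_i\to P_i$, the Chernoff information satisfies $I_{ch}(Q_0,Q_1)=I_{ch}(P_0,P_1)-\delta\,g_{RR}+o(\delta)$, with an explicit first-order loss $g_{RR}$ obtained by the envelope theorem at the optimal $\lambda^*$ (assume first that $P_0,P_1$ have full support, so $\lambda^*$ is interior and unique by strict convexity). For a general staircase mechanism with column weights $c_S$, any mechanism whose output is not essentially injective on $\mathcal{X}$ loses a constant amount of Chernoff information, not $O(\delta)$. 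This holds because merging two inputs strictly decreases $I_{ch}$ whenever $P_0,P_1$ differ on them, and if they do not differ the merge is harmless and can be handled separately. So for large $\varepsilon$ only near-identity staircases compete. These are mechanisms with $k$ dominant columns $\mathbf 1_{\{x\}}$ plus residual mass of order $\delta$ spread over other patterns. We then compute the first-order loss for a general such mechanism. It is a linear functional in the residual weights, minimized over the constraint set, and the claim is that it is minimized by placing all residual mass uniformly, i.e.\ by the singleton-column structure of randomized response. Making this precise for all $\varepsilon\ge\varepsilon^*$, rather than only asymptotically, follows from the finiteness of the candidate set: each competitor yields an analytic function of $\delta$ near $0$, and for a finite family, strict first-order domination (or equality with eventual domination at higher order) implies domination on some interval $\delta\in(0,\delta^*]$.

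The main obstacle is the first-order comparison. We must show that among all near-identity staircase mechanisms, randomized response gives the smallest $O(\delta)$ loss in Chernoff information. We would attack it by writing the loss explicitly as $\delta\cdot\frac{1}{\sum_x P_0(x)^{\lambda^*}P_1(x)^{1-\lambda^*}}\sum_x(\text{perturbation of }Q_0^{\lambda^*}Q_1^{1-\lambda^*}\text{ at }x)$. Then we use the row constraint $\sum_S c_S(1+(e^\varepsilon-1)\mathbf 1_{x\in S})=1$ to bound how much mass any non-singleton pattern can absorb. We would also use concavity of $(a,b)\mapsto a^\lambda b^{1-\lambda}$ to show that spreading mass over extra output symbols costs at least what randomized response pays. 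If the inequality turns out to be tight at first order for some degenerate $(P_0,P_1)$, we would push to second order, or absorb the tie into the choice of $\varepsilon^*$ via the analyticity argument above. The non-full-support case is handled by continuity and by restricting the mechanism to the effective alphabet.
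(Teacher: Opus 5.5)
Your reduction to the staircase linear program is sound, and it is also the paper's starting point: Kairouz et al., Theorem~4, applied to the sublinear utilities behind $D_{f_\lambda}$ with $f_\lambda(t)=1-t^\lambda$, together with $I_{ch}=-\log(1-\max_\lambda D_{f_\lambda})$. The gap is the step that carries the whole theorem, namely that randomized response incurs the smallest loss among all feasible staircase weightings. You only announce it (``the claim is that\dots''), call it the main obstacle, and list ways you might attack it. The surrounding machinery is also loose: your candidate set is a polytope, not a finite family, so your finiteness step would need the extra observation that $\max_\lambda$ of functions linear in the weights is convex, so only vertices matter.

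The missing idea is a dual certificate, which is what the paper uses. Fix $\lambda$ and write $\mu^\lambda_T$ for the utility of the pattern with $e^\varepsilon$ on $T$. Complementary slackness with the $k$ singleton columns of RR determines a unique dual vector $\alpha^*$. RR is optimal iff $\sum_{x\in T}\mu^\lambda_{\{x\}}-(|T|-1)\,\mathbf{1}^T\alpha^*\ge\mu^\lambda_T$ for every $T$. For $|T|\ge 2$ the slack is $e^\varepsilon\bigl(P_0(T)^\lambda P_1(T)^{1-\lambda}-\sum_{x\in T}P_0(x)^\lambda P_1(x)^{1-\lambda}\bigr)+o(e^\varepsilon)$. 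So strict H\"older gives feasibility for large $\varepsilon$, without classifying competitors at all. You would still need the threshold to be uniform in $\lambda$, which the paper also skips. Get it on a compact neighbourhood of the Chernoff-optimal $\lambda$ of $(P_0,P_1)$. Outside that neighbourhood, use data processing, $\sum_y Q_0(y)^\lambda Q_1(y)^{1-\lambda}\ge\sum_x P_0(x)^\lambda P_1(x)^{1-\lambda}$, together with $Q^{\mathrm{RR}}_i\to P_i$.

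Second, the case you set aside (``if they do not differ the merge is harmless and can be handled separately'') is exactly where the statement fails. No second-order expansion or choice of $\varepsilon^*$ can rescue your first-order claim there. A merge of $x,x'$ is lossless iff $P_0(x)/P_1(x)=P_0(x')/P_1(x')$, not iff $P_0,P_1$ ``do not differ'' on them. Suppose this holds for some $x\neq x'$ and $P_0\neq P_1$, and let $r$ be the common ratio. Concavity of $t\mapsto(1+rt)^\lambda(1+t)^{1-\lambda}$ shows that the dual constraint for $T=\{x,x'\}$ would force $\sum_y Q_0^{\mathrm{RR}}(y)^\lambda Q_1^{\mathrm{RR}}(y)^{1-\lambda}\ge 1$, which is false. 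Since $\alpha^*$ is the only dual vector compatible with RR, RR is not optimal for any $D_{f_\lambda}$, hence not for $I_{ch}$, at \emph{every} $\varepsilon>0$.

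The same happens if some symbol lies outside both supports. So ``restricting to the effective alphabet'' changes the mechanism and would prove a different statement.

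A concrete check: take $P_0=(1/2,1/4,1/4)$, $P_1=(1/5,2/5,2/5)$ and $e^\varepsilon=9$. Merging the last two symbols and applying binary RR gives $I_{ch}\approx 0.0317$, against $\approx 0.0272$ for ternary RR. This competitor has leading column $\mathbf{1}_{\{x,x'\}}$, so it lies outside the near-identity class you compare against, yet it beats RR.

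The theorem therefore needs the extra hypothesis that the likelihood ratios $P_0(x)/P_1(x)$ are pairwise distinct. The paper's proof has the same blind spot: it asserts strict H\"older for every non-singleton $T_j$, which fails precisely in this case.
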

\begin{proof}
  The proof is deferred to the Appendix~\ref{app:high_eps_div}.  
\end{proof}
In~\cite{kairouz2014extremal} [Theorem~$5$], the authors show that for any $(P_0,P_1)$, the binary mechanism maximizes any $f-$divergence between $(Q_0,Q_1)$ for sufficiently small $\varepsilon$. Using this result, we have the following corollary.
\begin{corollary}\label{cor:small-large-eps}
Let $(P_0,P_1)$ be any two discrete probability distributions. There exists a positive threshold $\varepsilon'(P_0,P_1)$ such that for all $\varepsilon \le \varepsilon'$, the binary mechanism maximizes the Chernoff information
$I_{ch}{(Q_0,Q_1)}$ over all $\varepsilon$-LDP mechanisms.
\end{corollary}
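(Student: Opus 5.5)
The plan is to reduce Corollary~\ref{cor:small-large-eps} to the result of~\cite{kairouz2014extremal} [Theorem~5] by writing Chernoff information as a continuous, monotone function of $f$-divergences. First, for each fixed $\lambda\in(0,1)$ I will use the function $f_\lambda(t)=1-t^{\lambda}$, which is convex on $(0,\infty)$ with $f_\lambda(1)=0$. By Definition~\ref{def:fd},
\begin{align}
D_{f_\lambda}(Q_0\|Q_1)=1-\sum_{y}Q_0(y)^{\lambda}Q_1(y)^{1-\lambda}.
\end{align}
Hence
\begin{align}
I_{ch}(Q_0,Q_1)=\sup_{\lambda\in(0,1)}\Big[-\log\big(1-D_{f_\lambda}(Q_0\|Q_1)\big)\Big].
\end{align}
Since $-\log(1-\cdot)$ is increasing, a single mechanism that maximizes $D_{f_\lambda}$ for every $\lambda$ at once will also maximize $I_{ch}$.

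The catch is that Theorem~5 of~\cite{kairouz2014extremal} supplies a threshold $\varepsilon_f(P_0,P_1)$ that depends on $f$. A threshold for each $\lambda$ separately is not enough; I need one that works uniformly in $\lambda$. This uniformity is the main obstacle.

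To get it, I will reopen the structure of their argument. For small $\varepsilon$, any $\varepsilon$-LDP mechanism is well approximated by its first-order expansion in $\varepsilon$. For every twice-differentiable $f$, the leading term of $D_f(Q_0\|Q_1)$ is then
\begin{align}
\tfrac{1}{2}f''(1)\,\varepsilon^2\,\chi^2\text{-type quantity}.
\end{align}
This quantity is maximized by the binary mechanism that thresholds on the set $\{x:P_0(x)\ge P_1(x)\}$, the maximizer being fixed by the total-variation structure. For $f_\lambda$ we have $f_\lambda''(1)=\lambda(1-\lambda)>0$. The remainder is controlled by third derivatives of $f_\lambda$ on a neighbourhood $[e^{-\varepsilon},e^{\varepsilon}]$ of $1$, and these are bounded uniformly for $\lambda$ in compact subsets of $(0,1)$.

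I will handle the endpoints $\lambda\to0,1$ with a different argument. There the contribution to $D_{f_\lambda}$ degenerates proportionally to $\lambda(1-\lambda)$. After normalizing by $\lambda(1-\lambda)$, the expansion $t^\lambda=1+\lambda(t-1)+\tfrac{\lambda(\lambda-1)}{2}(t-1)^2+O(\lambda(1-\lambda)|t-1|^3)$ still has a remainder of order $\lambda(1-\lambda)\varepsilon^3$, uniformly in $\lambda$. The gap between the binary mechanism and any other mechanism at order $\varepsilon^2$ is a fixed positive multiple of $\lambda(1-\lambda)$, unless the other mechanism is equivalent. So a single $\varepsilon'(P_0,P_1)$ makes the binary mechanism optimal for all $\lambda$ simultaneously.

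Alternatively, if only optimality of the value is needed, I will argue as follows. Let $\lambda^\star$ be the optimizer of $I_{ch}$ for an arbitrary mechanism $W$. Then
\begin{align}
I_{ch}(Q^W_0,Q^W_1)=-\log\big(1-D_{f_{\lambda^\star}}(Q^W)\big)\le-\log\big(1-D_{f_{\lambda^\star}}(Q^{\mathrm{bin}})\big)\le I_{ch}(Q^{\mathrm{bin}}_0,Q^{\mathrm{bin}}_1).
\end{align}
This chain again only requires the uniform-in-$\lambda$ optimality established above, which concludes the corollary.
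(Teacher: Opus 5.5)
The gap is in the step you yourself call the main obstacle, uniformity in $\lambda$, and it starts from a misreading of the citation. The paper's proof is exactly your reduction. It writes $I_{ch}(Q_0,Q_1)=\sup_{\lambda\in(0,1)}-\log(1-D_{f_\lambda}(Q_0\|Q_1))$ with $f_\lambda(t)=1-t^\lambda$, applies \cite{kairouz2014extremal} [Theorem~5] to every $f_\lambda$, and exchanges the maximum over mechanisms with the maximum over $\lambda$; your chain at $\lambda^\star$ is that exchange written out. The paper never argues uniformity, because the cited theorem is stated with a threshold $\varepsilon^*$ that depends only on $(P_0,P_1)$ and holds for all $f$-divergences at once. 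So to match the paper you need no perturbative argument at all. The perturbative argument you offer does not establish uniformity anyway. Its decisive claim, that the order-$\varepsilon^2$ gap between the binary mechanism and any non-equivalent mechanism is a fixed positive multiple of $\lambda(1-\lambda)$, is simply asserted, and it is exactly what has to be proved.

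To make that claim precise you would first reduce, via the linear program over staircase patterns (\cite{kairouz2014extremal} [Theorem~4]), to finitely many vertex mechanisms. A non-trivial vertex with patterns $T_j$ has weights $u_j/(\sum_i u_i+e^{\varepsilon}-1)$, where $u\ge 0$ solves $\sum_j u_j\mathbf{1}_{T_j}=\mathbf{1}$. After dividing by $\lambda(1-\lambda)$, its leading term is $\tfrac{\varepsilon^2}{2}\sum_j \bar u_j\,(P_0(T_j)-P_1(T_j))^2$ with $\bar u=u/\sum_i u_i$, compared with $\tfrac{\varepsilon^2}{2}d^2_{\mathrm{TV}}(P_0,P_1)$ for the binary vertex.

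If $P_0(x)\neq P_1(x)$ for all $x$, only the binary vertex attains this value and the gap is a positive constant independent of $\lambda$. Your uniform $O(\lambda(1-\lambda)\varepsilon^3)$ remainder bound then closes the argument, which gives a legitimate, self-contained alternative to the citation.

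If some $x$ has $P_0(x)=P_1(x)$, the claim fails. Moving $x$ to the other side of the partition gives a different binary mechanism with the same $\varepsilon^2$ term, and the higher-order comparison depends on $\lambda$. For example, take $P_0=(0.5,0.2,0.3)$ and $P_1=(0.2,0.5,0.3)$. The partitions $\{1,3\}|\{2\}$ and $\{1\}|\{2,3\}$ give $D_{f_\lambda}$ for one equal to $D_{f_{1-\lambda}}$ for the other, and these differ at order $\varepsilon^4$ for every $\lambda\neq\tfrac12$. Hence no single binary mechanism maximizes every $D_{f_\lambda}$. In this case ``unless equivalent'' hides a genuine failure, and neither your argument nor a per-$\lambda$ use of the citation is enough. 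One has to compare $I_{ch}$ directly; in this example the two partitions have equal $I_{ch}$.
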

\begin{proof}
  The proofs is deferred to the Appendix~\ref{app:small-large-eps}.  
\end{proof}
 With the induced Chernoff information as a performance proxy, Theorem~\ref{thm:high_eps_div} and Corollary~\ref{cor:small-large-eps}  yield a mechanism-level ordering in terms of Chernoff information contraction. In particular, the binary mechanism incurs less
contraction of Chernoff information than randomized response in the high-privacy regime (small $\varepsilon$),
while the converse holds in the low-privacy regime (large $\varepsilon$). In Section~\ref{sec:exp}, we
demonstrate through experiments that this mechanism-level ordering is reflected in
practice. This motivates our study of CPD algorithms based on randomized response and
binary mechanisms.

These mechanisms are also known to perform well for various other statistical problems, see for e.g.,~\cite{kairouz2014extremal,ye2018optimal,gastpar,rr1}.

\subsection{Private Offline CPD with randomized response}
The randomized response ($\mathrm{RR}$) mechanism is an $\varepsilon-$ LDP mechanism which randomizes the symbols over the same alphabet. 
\begin{definition}[Randomized Response mechanism]\label{def:rr}
    Let $|\mathcal{X}|=|\mathcal{Y}|=q$. The $\mathrm{RR}$ mechanism $W^{r}_{Y|X}$ is a $q-$ary symmetric channel with the following conditional probability law:
\begin{align}
    W^{r}(y|x) & =
\begin{cases}
        \dfrac{e^\varepsilon}{e^{\varepsilon }+q-1} & ; \text{ if  }\text{ } y = x \vspace*{2mm}\\
        \dfrac{1}{e^{\varepsilon }+q-1} & ;\text{ if  }\text{ } y \neq x.
\end{cases}
\end{align}
\end{definition}

The Algorithm~\ref{alg:offline_rrcpd} below privatizes the data points independently through the $\mathrm{{RR}}$ mechanism and estimates the change-point by applying Algorithm~\ref{alg:offline_cpd} on the privatized dataset $\widetilde{D}$.

\begin{algorithm}[htb]
    \caption{Offline $\mathrm{RR-CPD}$ (\ensuremath{\mathcal{D}, \mathcal{X}, P_{0}, P_{1}})}
    \label{alg:offline_rrcpd}
    \begin{algorithmic}[1]
        \STATE \textbf{Input:} Dataset $\mathcal{D}$, Alphabet $\mathcal{X}$, Distributions $P_0, P_1$
        \STATE $(P_0,P_1)$ through the $\mathrm{RR}$ mechanism induce distributions $(Q_0,Q_1)$, respectively.
        \STATE For all $i \in [n]$, pass $x_i \in \mathcal{D}$ through the $\mathrm{RR}$ mechanism to get $y_i \in \widetilde{\mathcal{D}}$.
        \FOR{$k=1$ \textbf{to} $n$}
            \STATE Compute the ratio $l(\widetilde{\mathcal{D}},k) = \sum_{i=k}^{n}\log\dfrac{Q_1(y_i)}{Q_0(y_i)}$
        \ENDFOR
        \STATE \textbf{Output:} $\hat{k} = \arg\max_{k \in [n]} l(\widetilde{\mathcal{D}},k)$
    \end{algorithmic}
\end{algorithm}

In the next theorem, we provide an upper bound on the error probability of Algorithm~\ref{alg:offline_rrcpd}, for accurately estimating the change-point  while satisfying $\varepsilon-$LDP constraint.
\begin{theorem}\label{thm:rrcpd}
    Let $k^{\star} \in (1,n]$ be the true change-point of the dataset $\mathcal{D} \in \mathcal{X}^n$, with data points drawn from the distributions $P_0$, $P_1 \in \Delta(\mathcal{X})$. Then, the Offline $\mathrm{RR-CPD}$ estimator in Algorithm~\ref{alg:offline_rrcpd} is $\varepsilon-$ locally differentially private as well as $(\alpha,\beta_{r})-$accurate for any $\alpha \in [n]$ and
 \begin{align}
   \beta_r \leq 2\min\left\{\sum^{i^{*}}_{i=1}\exp\left(\frac{-2^{i-1} \alpha C_r^2}{s_r^2}\right);\left(1-\frac{C_r}{2}\right)^{\frac{\alpha}{2}}\right\},\label{eq:rrcpdbound} 
\end{align}
where $s_r$ and $C_r$ are defined as follows:
\begin{align*}
s_r &= \min\left\{2 \varepsilon \;;\tanh(\varepsilon/2)s \right\},\\
C_r &= 2\left(\frac{e^\varepsilon-1}{e^{\varepsilon}+q -1}\right)^2d^2_{\mathrm{TV}}(P_0,P_1).
\end{align*}
Here, $i^{*}=\lceil\log_2\left(\frac{n-1}{\alpha}\right)\rceil$ and $\tanh(\varepsilon/2) = \frac{e^{\varepsilon}-1}{e^{\varepsilon}+1}$.  
\end{theorem}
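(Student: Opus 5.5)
The plan is to reduce everything to Theorem~\ref{thm:npcpd} applied to the privatized data, and then lower-bound the relevant quantities of $(Q_0,Q_1)$ by those of $(P_0,P_1)$.

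\textbf{Privacy.} Each $y_i$ is produced from $x_i$ alone by the RR mechanism. For any $y$, the likelihood ratio satisfies $W^r(y|x)/W^r(y|x')\le e^{\varepsilon}$, so the RR mechanism is $\varepsilon$-LDP. The analyst only post-processes $\widetilde{\mathcal D}$ together with the public $(Q_0,Q_1)$, so Algorithm~\ref{alg:offline_rrcpd} is $\varepsilon$-LDP.

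\textbf{Reduction to the non-private theorem.} Since the $x_i$ are independent and are privatized independently, the $y_i$ are independent. They are distributed as $Q_0$ for $i<k^\star$ and as $Q_1$ for $i\ge k^\star$. Algorithm~\ref{alg:offline_rrcpd} is exactly Algorithm~\ref{alg:offline_cpd} run on $(\widetilde{\mathcal D},Q_0,Q_1)$. Every $Q_i(y)\ge u>0$, so all log-ratios are finite. Theorem~\ref{thm:npcpd} therefore gives
\[
\beta_r\le 2\min\Bigl\{\textstyle\sum_{i=1}^{i^*}\exp\!\bigl(-2^{i-1}\alpha C_Q^2/s_Q^2\bigr);\ \exp\!\bigl(-\alpha I_{ch}(Q_0,Q_1)\bigr)\Bigr\},
\]
where $s_Q=D^J_\infty(Q_0,Q_1)$ and $C_Q=\min\{D_{\mathrm{KL}}(Q_0\|Q_1),D_{\mathrm{KL}}(Q_1\|Q_0)\}$. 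The first term is increasing in $s_Q$ and decreasing in $C_Q$. Hence it suffices to prove three things:
\begin{itemize}
\item $s_Q\le s_r$;
\item $C_Q\ge C_r$;
\item $\exp(-\alpha I_{ch}(Q_0,Q_1))\le (1-C_r/2)^{\alpha/2}$.
\end{itemize}

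\textbf{Bounding $s_Q$.} Apply Corollary~\ref{cor:ldpp} together with Theorem~\ref{thm:jrdccbd-value}. For RR we have $v=e^\varepsilon/(e^\varepsilon+q-1)$ and $u=1/(e^\varepsilon+q-1)$. Then $(v-u)/(v+u)=(e^\varepsilon-1)/(e^\varepsilon+1)=\tanh(\varepsilon/2)$. This gives $s_Q\le\min\{2\varepsilon,\tanh(\varepsilon/2)s\}=s_r$.

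\textbf{Bounding $C_Q$.} Use the structure of the symmetric channel: $Q_i=u\mathbf 1+(v-u)P_i$. Hence $d_{\mathrm{TV}}(Q_0,Q_1)=(v-u)\,d_{\mathrm{TV}}(P_0,P_1)$ exactly, with $v-u=(e^\varepsilon-1)/(e^\varepsilon+q-1)$. Pinsker's inequality applied in both directions then gives $C_Q\ge 2d^2_{\mathrm{TV}}(Q_0,Q_1)=C_r$.

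\textbf{Bounding the Chernoff term.} Take $\lambda=1/2$ in Definition~\ref{def:ich}. This gives $I_{ch}(Q_0,Q_1)\ge-\log \mathrm{BC}$, where $\mathrm{BC}=\sum_y\sqrt{Q_0(y)Q_1(y)}$. Then use the standard inequality $\mathrm{BC}\le\sqrt{1-d^2_{\mathrm{TV}}(Q_0,Q_1)}$. One way to see it: by Cauchy–Schwarz, $d_{\mathrm{TV}}^2\le H^2(2-H^2)$, where $H^2 = 1-\mathrm{BC}$, and this rearranges to $\mathrm{BC}^2\le 1-d_{\mathrm{TV}}^2$. Combining, $\exp(-\alpha I_{ch})\le \mathrm{BC}^\alpha\le(1-d^2_{\mathrm{TV}}(Q_0,Q_1))^{\alpha/2}=(1-C_r/2)^{\alpha/2}$.

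\textbf{Main obstacle.} The delicate step is this last one: getting the exact form $(1-C_r/2)^{\alpha/2}$ via the Bhattacharyya–TV inequality, including the factor conventions $H^2=1-\mathrm{BC}$ and $C_r/2=d_{\mathrm{TV}}^2(Q_0,Q_1)$. The other steps are direct substitutions into earlier results. Substituting the three bounds into the display above completes the proof.
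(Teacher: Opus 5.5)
Your proposal is correct and follows the paper's proof essentially step for step: apply Theorem~\ref{thm:npcpd} to the privatized data, bound $D^{J}_{\infty}(Q_0,Q_1)$ via Corollary~\ref{cor:ldpp} with $\eta^{J}_{\infty}(W^{r})=\tanh(\varepsilon/2)$, lower-bound the KL term by Pinsker together with the exact contraction $d_{\mathrm{TV}}(Q_0,Q_1)=(v-u)\,d_{\mathrm{TV}}(P_0,P_1)$, and control the Chernoff term by $-\tfrac12\log\bigl(1-d^2_{\mathrm{TV}}(Q_0,Q_1)\bigr)$. The only differences are that you derive this last inequality yourself (taking $\lambda=1/2$ and using Cauchy--Schwarz to get $\mathrm{BC}^2\le 1-d^2_{\mathrm{TV}}$), where the paper cites Sason's Corollary~1, and that you spell out the $\varepsilon$-LDP and post-processing argument, which the paper only asserts.
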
    
\begin{proof}
The proof is deferred to the Appendix~\ref{app:rrcpd}.
\end{proof}
Analogous to Corollary~\ref{cor:npcpd}, we have the following result
\begin{corollary}
The bound in Theorem~\ref{thm:rrcpd} can be further upper bounded to have a closed form expression as follows
\begin{align}
         \beta_r \leq 2\min\left\{\frac{t_r(1-t_r^M)}{1-t_r};\left(1-\frac{C_r}{2}\right)^{\frac{\alpha}{2}}\right\},
    \end{align}
where $t_r =\exp(-\alpha C_r^2/s_r^2)$ and $M=\lfloor\frac{n-1}{\alpha}\rfloor$.
\end{corollary}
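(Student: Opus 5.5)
The statement to prove is the last corollary: the closed-form relaxation of Theorem~\ref{thm:rrcpd}. The plan is to leave the second term $(1-C_r/2)^{\alpha/2}$ untouched and bound only the first term of the minimum. Since $\min\{a;c\}\le \min\{b;c\}$ whenever $a\le b$, it suffices to show
\begin{align*}
\sum_{i=1}^{i^*}\exp\left(\frac{-2^{i-1}\alpha C_r^2}{s_r^2}\right)\le \frac{t_r(1-t_r^M)}{1-t_r},\qquad t_r=\exp(-\alpha C_r^2/s_r^2),\quad M=\left\lfloor\frac{n-1}{\alpha}\right\rfloor .
\end{align*}
This mirrors exactly how Corollary~\ref{cor:npcpd} follows from Theorem~\ref{thm:npcpd}, with $(C,s)$ replaced by $(C_r,s_r)$.

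First, I rewrite the summand as $t_r^{2^{i-1}}$. Since $t_r\in(0,1)$ (assuming $C_r>0$, i.e.\ $P_0\neq P_1$), powers of $t_r$ decrease in the exponent.

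Second, I compare the dyadic exponents $2^{i-1}$, $i=1,\dots,i^*$, with the integers $1,\dots,M$. The map $i\mapsto 2^{i-1}$ is injective, so the exponents are distinct positive integers. Moreover $2^{i-1}\ge i$, so the $i$-th term is at most $t_r^{i}$.

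Third, I need $i^*\le M$, so that the sum is dominated by $\sum_{j=1}^{M}t_r^j=t_r(1-t_r^M)/(1-t_r)$. Here $i^*=\lceil\log_2 x\rceil$ with $x=(n-1)/\alpha$. For $x\ge 1$ one has $\lceil\log_2 x\rceil\le\lfloor x\rfloor$: the inequality $\log_2 x\le x-1$ holds for $x\in[1,2]$, and for larger $x$ one checks it on integer intervals $[m,m+1)$. If $x<1$, then $i^*\le 0$, the sum is empty, and the bound is trivial.

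The main obstacle is this last integer inequality. It needs a short case check near $x\in[1,4)$ to handle the ceilings and floors correctly, for example $x=3$, where $i^*=2\le 3=M$. An alternative that sidesteps it is to use injectivity directly: when $2^{i^*-1}\le M$, the terms $t_r^{2^{i-1}}$ form a subfamily of $\{t_r^j\}_{j\le M}$.

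Once the comparison is established, inserting it into Theorem~\ref{thm:rrcpd}, with the factor $2$ and the second term carried along unchanged, gives the claimed bound on $\beta_r$.
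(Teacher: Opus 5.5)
Your reduction is correct and follows the paper's route. The paper also leaves the second term alone and bounds $\sum_{i=1}^{i^*} t^{2^{i-1}}$ by $\sum_{m=1}^{M} t^m$. It does this with exactly your ``alternative'' subfamily argument, combined with the claim $2^{i^*-1}\le M$. It writes the first comparison as an equality, which should be $\le$, and it never justifies $2^{i^*-1}\le M$. Your main route via $t_r^{2^{i-1}}\le t_r^{i}$ needs only the weaker inequality $i^*\le M$ and ends at the same closed form.

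The step to fix is your justification of $i^*\le M$, which you yourself flag as the main obstacle. The inequality $\log_2 x\le x-1$ is \emph{false} on $(1,2)$: by concavity, $\log_2 x\ge x-1$ on $[1,2]$, with equality only at the endpoints, and the reverse inequality holds for $x\ge 2$. So the argument as written does not establish that step, although the inequality $\lceil\log_2 x\rceil\le\lfloor x\rfloor$ is true.

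No case check is needed. Take $x=(n-1)/\alpha>1$. Then $i^*=\lceil\log_2 x\rceil\ge 1$ and $i^*-1<\log_2 x$, so $i^*\le 2^{i^*-1}<x$. Both $i^*$ and $2^{i^*-1}$ are integers below $x$, so both are at most $\lfloor x\rfloor=M$. This one line justifies your main route and the subfamily route alike, and it supplies the step the paper asserts without proof. For $x\le 1$ the sum is empty and the bound is trivial, as you note.
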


\subsection{Private Offline CPD with Binary Mechanisms}
The Binary Mechanisms ($\mathrm{BM}$) differs from the randomized response in the sense that it first partitions the space of all the data points into two bins followed by a randomization over the resulting binary alphabet.

\begin{definition}[Binary Mechanisms]\label{def:bm}
The Binary Mechanisms $W^{b}_{\tau}$ (parameterized by $\tau$) are defined by a channel $W^{b}_{\tau,{Y|X}}:\mathcal{X}\rightarrow \{0,1\}$ in the following two steps:

\noindent\textit{(i) Quantization.} The quantizer $Z_{\tau}:\mathcal{X} \rightarrow \{0,1\}$ is a deterministic mapping parameterized by $\tau>0$ such that it partitions $\mathcal{X}$ into $\mathcal{S}_{\tau}$ and $\mathcal{S}^{c}_{\tau}$ as follows: $\mathcal{S}_{\tau} :=\{x \in \mathcal{X}: P_0(x)\geq \tau P_1(x)\}\}$.
The elements in $\mathcal{S}_{\tau}$ are mapped to bit $0$, while the elements in  $\mathcal{S}^{c}_{\tau}$ are mapped to bit $1$. Let $V=Z_{\tau}(X)$ where $V \in \{0,1\}$.

\noindent\textit{(ii) Privatization (binary randomized response).} Given $V=v$, sample $Y\in\{0,1\}$ according to
\begin{align}
    W_{Y|V}(y|v)=
\begin{cases}
\dfrac{e^{\varepsilon}}{e^{\varepsilon}+1}, & \text{if } y=v \vspace*{2mm}\\
\dfrac{1}{e^{\varepsilon}+1}, & \text{if } y=1-v.
\end{cases}
\end{align}
Note that the quantizer $Z_{\tau}$ induces a deterministic channel $Z_{\tau,{V|X}} \in\{0,1\}^{|\mathcal{X}| \times2}$ with transition probabilities $Z_{\tau}(v|x):=\mathbf 1\{Z_{\tau}(x)=v\}$.
Therefore,
\begin{align}
W_{\tau,{Y|X}}^b=Z_{\tau,{V|X}}\cdot W_{Y|V}.
\end{align}

\end{definition}
The choice of $\tau$ induces different quantizers (i.e., partitions of $\mathcal{X}$ into two labeled sets).
In practice, $\tau$ is chosen to optimize a task-specific utility objective (e.g., minimizing estimation error for a statistic, or thresholding a numeric attribute at a decision boundary).


Now, we specify our $\mathrm{BM}$ based CPD algorithm based and the specific choice of the quantization procedure.
\begin{algorithm}[H]
    \caption{Offline $\mathrm{BM-CPD}$ (\ensuremath{\mathcal{D}, \mathcal{X}, P_{0}, P_{1}})}
    \label{alg:offline_bmcpd}
    \begin{algorithmic}[1]
        \STATE \textbf{Input:} Dataset $\mathcal{D}$, Alphabet $\mathcal{X}$, Distributions $P_0, P_1$
        \STATE $(P_0,P_1)$ through the $\mathrm{BM}$ $W^{b}_{\tau}$ induces distributions $(Q^{\tau}_0,Q^{\tau}_1)$, respectively.
        \STATE Compute $\tau^{*} \in \arg\max_{\tau}I_{ch}(Q^{{\tau}}_0,Q^{{{\tau}}}_1) $
        \STATE For all $i \in [n]$, pass $x_i \in \mathcal{D}$ through $W^{b}_{\tau^{*}}$ to get $y_i \in \{0,1\}$. Let $\widetilde{\mathcal{D}}=\{\tilde{y_i}\}_{i=1}^n$.
        \FOR{$k=1$ \textbf{to} $n$}
            \STATE Compute the ratio $l(\widetilde{\mathcal{D}},k) = \sum_{i=k}^{n}\log\dfrac{Q^{\tau^{*}}_1(y_i)}{Q^{\tau^{*}}_0(y_i)}$
        \ENDFOR
        \STATE \textbf{Output:} $\hat{k} = \arg\max_{k \in [n]} l(\widetilde{\mathcal{D}},k)$
    \end{algorithmic}
\end{algorithm}
\begin{remark}
In Algorithm~\ref{alg:offline_bmcpd}, $\tau^{*}$ is selected to maximize the Chernoff information $(I_{ch})$ between the Bernoulli distributions $(Q^{\tau}_0,Q^{\tau}_1)$ induced by the binary mechanism after privatization. It is motivated by empirical observations showing that maximizing the post-privatization $I_{ch}$ yields the best performance for Offline $\mathrm{BM-CPD}$ among all the other choices of $\tau>0$ (see Appendix~\ref{app:tau_comp} for details). In the high-privacy regime i.e., $\varepsilon\leq \varepsilon{''}(P_0,P_1)$, we observe that the choosing $\tau=1$ is as good as choosing $\tau=\tau^{*}$ i.e., $I_{ch}(Q^{\tau^{*}}_0,Q_1^{\tau^{*}})=I_{ch}(Q^{1}_0,Q_1^{1})$.
\end{remark}

In the next theorem, we provide an upper bound on the error probability of Algorithm~\ref{alg:offline_bmcpd} for accurately estimating the change-point  while satisfying $\varepsilon-$LDP constraint.

\begin{theorem}\label{thm:bmcpd}
    Let $k^{\star} \in (1,n]$ be the true change-point of the dataset $\mathcal{D} \in \mathcal{X}^n$, with data points drawn from the distributions $P_0$, $P_1 \in \Delta(\mathcal{X})$. Then, the Offline $\mathrm{BM-CPD}$ estimator in Algorithm~\ref{alg:offline_bmcpd} is $\varepsilon-$ locally differentially private as well as $(\alpha,\beta_{b})-$accurate for any $\alpha \in [n]$ and
     \begin{align}
\beta_b \leq 2\min\left\{\sum^{i^{*}}_{i=1}\exp\left(\frac{-2^{i-1} \alpha \tilde{C}_b^2}{s_b^2}\right);\left(1-\frac{{C}_b}{2}\right)^{\frac{\alpha}{2}}\right\},\label{eq:bmcpdbound} 
\end{align}
where $s_b$, $C_b$, and $\tilde{C}_b$ are defined as follows:
\begin{align*}
s_b &= \min\left\{2 \varepsilon \;; \tanh(\varepsilon/2)s \right\},\\
{C}_b &= 2\;\tanh^2(\varepsilon/2)\;d^2_{\mathrm{TV}}(P_0,P_1),\\
\tilde{C}_b &= 2\;\tanh^2(\varepsilon/2)\;\left(\sum_{x \in \mathcal{S}_{\tau^{*}}}\Big|P_0(x)-P_1(x)\Big|\right).
\end{align*}
Here, $i^{*}=\lceil\log_2\left(\frac{n-1}{\alpha}\right)\rceil$ and $\tanh(\varepsilon/2) = \frac{e^{\varepsilon}-1}{e^{\varepsilon}+1}$.  
\end{theorem}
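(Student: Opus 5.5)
The plan has three parts: privacy by post-processing, then one bound per term of the minimum in Theorem~\ref{thm:npcpd}, both obtained by applying that theorem to the privatized data. The privatized samples $y_i$ are independent. They are distributed as $Q^{\tau^*}_0$ before $k^\star$ and as $Q^{\tau^*}_1$ from $k^\star$ on. Algorithm~\ref{alg:offline_bmcpd} is exactly Algorithm~\ref{alg:offline_cpd} run on $(\widetilde{\mathcal D},Q^{\tau^*}_0,Q^{\tau^*}_1)$.

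\textbf{Privacy.} Each $x_i$ is mapped deterministically to $V=Z_{\tau^*}(x_i)$ and then through binary randomized response. Hence for any $x,x'$ and output $y$, the likelihood ratio $W^b_{\tau^*}(y|x)/W^b_{\tau^*}(y|x')$ is either $1$ or $e^{\pm\varepsilon}$. So each local release is $\varepsilon$-LDP. The estimator only post-processes $\widetilde{\mathcal D}$, and $\tau^*$ depends only on $(P_0,P_1)$, not on the data.

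\textbf{First term.} Applying Theorem~\ref{thm:npcpd} to the privatized data gives the first term with $s$ replaced by $D^J_\infty(Q_0,Q_1)$ and $C$ replaced by $\min\{D_{\mathrm{KL}}(Q_0\|Q_1),D_{\mathrm{KL}}(Q_1\|Q_0)\}$. The bound is monotone, so it suffices to upper bound the sensitivity and lower bound the KL terms.
\begin{itemize}
\item \emph{Sensitivity.} Corollary~\ref{cor:ldpp} gives $D^J_\infty(Q_0,Q_1)\le\min\{2\varepsilon,\eta^J_\infty(W^b)D^J_\infty(P_0,P_1)\}$. By DPI applied to the quantizer, $D^J_\infty(P_{V,0},P_{V,1})\le s$. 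Theorem~\ref{thm:jrdccbd-value} applied to binary RR ($q=2$, $v=e^\varepsilon/(e^\varepsilon+1)$, $u=1/(e^\varepsilon+1)$) gives $\eta^J_\infty=(v-u)/(v+u)=\tanh(\varepsilon/2)$. Together these yield $s_b$.
\item \emph{KL terms.} Use Pinsker, $D_{\mathrm{KL}}\ge 2d^2_{\mathrm{TV}}$. For binary outputs, $d_{\mathrm{TV}}(Q_0,Q_1)=\tanh(\varepsilon/2)\,|P_0(\mathcal S_{\tau^*})-P_1(\mathcal S_{\tau^*})|$. Since every $x\in\mathcal S_{\tau^*}$ satisfies $P_0(x)\ge\tau^*P_1(x)$, I hope to turn this into the stated sum $\sum_{x\in\mathcal S_{\tau^*}}|P_0(x)-P_1(x)|$ (this needs $\tau^*\ge1$, or else a sign argument). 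This gives $\tilde C_b$.
\end{itemize}

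\textbf{Second term.} Use the Chernoff-information term of Theorem~\ref{thm:npcpd}, $\exp(-\alpha I_{ch}(Q_0,Q_1))$.
\begin{itemize}
\item \emph{Lower bound via Hellinger.} Take $\lambda=1/2$: $I_{ch}\ge-\log\mathrm{BC}(Q_0,Q_1)=-\log(1-H^2)$, where $\mathrm{BC}$ is the Bhattacharyya coefficient and $H^2$ the squared Hellinger distance, normalized so $H^2=1-\mathrm{BC}$. Then use $H^2\ge d^2_{\mathrm{TV}}/2$ (or a comparable inequality) to get $\exp(-\alpha I_{ch})\le(1-d^2_{\mathrm{TV}}(Q)/\cdot)^{\alpha}$, which I plan to relax to the stated $(1-C_b/2)^{\alpha/2}$.
\item \emph{Optimality of $\tau^*$.} By the choice of $\tau^*$, $I_{ch}(Q^{\tau^*})\ge I_{ch}(Q^{1})$. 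With $\tau=1$, $\mathcal S_1$ is the set where $P_0\ge P_1$, so $|P_0(\mathcal S_1)-P_1(\mathcal S_1)|=d_{\mathrm{TV}}(P_0,P_1)$. Then $d_{\mathrm{TV}}(Q^1_0,Q^1_1)=\tanh(\varepsilon/2)\,d_{\mathrm{TV}}(P_0,P_1)$, giving $C_b$.
\end{itemize}

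\textbf{Main obstacle.} The delicate step is pinning down the constants: matching the Hellinger/Bhattacharyya-to-TV inequality to exactly $(1-C_b/2)^{\alpha/2}$, and justifying the $\tilde C_b$ expression over $\mathcal S_{\tau^*}$. Both hinge on the sign structure of $P_0-P_1$ on $\mathcal S_{\tau^*}$. If needed, I would first handle $\tau^*\ge1$, then argue the complementary set symmetrically.
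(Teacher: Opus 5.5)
Your route is the paper's route: privacy by post-processing, Theorem~\ref{thm:npcpd} applied to $(\widetilde{\mathcal D},Q^{\tau^*}_0,Q^{\tau^*}_1)$, Corollary~\ref{cor:ldpp} with $\eta^J_\infty=\tanh(\varepsilon/2)$ plus DPI for $s_b$, Pinsker plus the BSC Dobrushin identity for the KL term, and comparison with $\tau=1$ for the Chernoff term. The privacy and $s_b$ parts are fine; your $D^J_\infty(P_{V,0},P_{V,1})\le s$ is the correct form of the paper's claimed equality.

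\textbf{The $\tilde C_b$ step cannot be done.} Pinsker and Dobrushin give
\[
C_{\tau^*}\;\ge\;2\tanh^2(\varepsilon/2)\,\bigl(P_0(\mathcal S_{\tau^*})-P_1(\mathcal S_{\tau^*})\bigr)^2 ,
\]
which is quadratic. The stated $\tilde C_b$ is linear in $\sum_{x\in\mathcal S_{\tau^*}}|P_0(x)-P_1(x)|\le 1$. Your sign analysis only decides which sum equals $|P_0(\mathcal S_{\tau^*})-P_1(\mathcal S_{\tau^*})|$: the sum over $\mathcal S_{\tau^*}$ if $\tau^*\ge1$, and the sum over $\mathcal S^c_{\tau^*}$ (not $\mathcal S_{\tau^*}$) if $\tau^*<1$. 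It cannot restore the missing square.

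In fact the bound as stated is false. Take $\mathcal X=\{0,1\}$, $P_0=(\frac{1+\delta}{2},\frac{1-\delta}{2})$, $P_1=(\frac{1-\delta}{2},\frac{1+\delta}{2})$. Then $\mathcal S_{\tau^*}=\{0\}$, since the trivial quantizers give $I_{ch}=0$. Hence $\tilde C_b=2\tanh^2(\varepsilon/2)\,\delta$ and $s=2\log\frac{1+\delta}{1-\delta}$. As $\delta\to0$ we get $\tilde C_b/s_b\to\tanh(\varepsilon/2)/2$ and $C_b\to0$. So the right-hand side tends to $2\sum_{i=1}^{i^*}\exp(-2^{i-1}\alpha\tanh^2(\varepsilon/2)/4)$, independent of $\delta$; for $\varepsilon=10$, $\alpha=100$ this is about $2e^{-25}$.

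On the other hand, $l(\widetilde{\mathcal D},k)$ is a positive multiple of $\sum_{i\ge k}(2y_i-1)$. As $\delta\to0$ the $y_i$ become i.i.d.\ fair bits, so $\hat k$ becomes the argmax of a simple symmetric random walk. For $n=2000$, $k^\star=1000$ it misses $[900,1100]$ with probability about $0.94$.

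The paper's proof contains exactly this slip. It writes $2\eta^2_{\mathrm{TV}}(W_{Y|V})\,d^2_{\mathrm{TV}}(P_{V,0},P_{V,1})$ and then substitutes the unsquared sum. What your argument actually proves is the first term with $\tilde C_b$ replaced by $2\tanh^2(\varepsilon/2)\,d^2_{\mathrm{TV}}(P_{V,0},P_{V,1})$.

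\textbf{Your Chernoff inequality is too weak.} The Chernoff term is correct as stated, but your proposed inequality does not reach it. With $\lambda=1/2$ and $H^2\ge d_{\mathrm{TV}}^2/2$ you get $\exp(-\alpha I_{ch}(Q^1_0,Q^1_1))\le(1-d^2_{\mathrm{TV}}(Q^1_0,Q^1_1)/2)^{\alpha}=(1-C_b/4)^{\alpha}$. Since $(1-C_b/4)^2=1-C_b/2+C_b^2/16>1-C_b/2$, this is strictly larger than $(1-C_b/2)^{\alpha/2}$, so no relaxation leads to the target. The missing ingredient is $\mathrm{BC}\le\sqrt{1-d^2_{\mathrm{TV}}}$, which follows from Cauchy--Schwarz:
\[
d_{\mathrm{TV}}(Q_0,Q_1)=\tfrac12\sum_y\bigl|\sqrt{Q_0(y)}-\sqrt{Q_1(y)}\bigr|\bigl(\sqrt{Q_0(y)}+\sqrt{Q_1(y)}\bigr)\le\sqrt{(1-\mathrm{BC})(1+\mathrm{BC})}.
\]
This is the bound $I_{ch}\ge-\frac12\log(1-d^2_{\mathrm{TV}})$ that the paper takes from Sason. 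With it, your comparison $I_{ch}(Q^{\tau^*}_0,Q^{\tau^*}_1)\ge I_{ch}(Q^1_0,Q^1_1)$ and $d_{\mathrm{TV}}(Q^1_0,Q^1_1)=\tanh(\varepsilon/2)\,d_{\mathrm{TV}}(P_0,P_1)$ give exactly $(1-C_b/2)^{\alpha/2}$, as in the paper.
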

\begin{proof}
The proof is deferred to the Appendix~\ref{app:bmcpd}. 
\end{proof}
Analogous to Corollary~\ref{cor:npcpd}, we have the following result,
\begin{corollary}
The  bound in Theorem~\ref{thm:bmcpd} can be further upper bounded to exhibit a closed form expression as follows
\begin{align}
         \beta_b \leq 2\min\left\{\frac{t_b(1-t_b^M)}{1-t_b};\left(1-\frac{{C}_b}{2}\right)^{\frac{\alpha}{2}}\right\},
    \end{align}
where $t_b =\exp(-\alpha \tilde{C}_b^2/s_b^2)$ and $M=\lfloor\frac{n-1}{\alpha}\rfloor$.
\end{corollary}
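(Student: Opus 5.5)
The corollary is just the closed-form evaluation of the first term in the minimum of Theorem~\ref{thm:bmcpd}. The second term $\left(1-\frac{C_b}{2}\right)^{\alpha/2}$ stays as it is. I would copy the argument for Corollary~\ref{cor:npcpd} with $(C,s)$ replaced by $(\tilde{C}_b,s_b)$. Put $t_b=\exp(-\alpha\tilde{C}_b^2/s_b^2)\in(0,1)$. The $i$-th summand of the first term is then $t_b^{2^{i-1}}$, so
\begin{align*}
\sum_{i=1}^{i^*}\exp\left(\frac{-2^{i-1}\alpha\tilde{C}_b^2}{s_b^2}\right)=\sum_{i=1}^{i^*}t_b^{2^{i-1}}.
\end{align*}
Since $0<t_b<1$, each power satisfies $t_b^{2^{i-1}}\le t_b^{i}$, because $2^{i-1}\ge i$ for all $i\ge1$. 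This gives $\sum_{i=1}^{i^*}t_b^{2^{i-1}}\le\sum_{i=1}^{i^*}t_b^{i}$.

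The only step needing care is replacing $i^*=\lceil\log_2((n-1)/\alpha)\rceil$ by $M=\lfloor (n-1)/\alpha\rfloor$, since the two truncations differ.

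\textbf{Case $(n-1)/\alpha\ge1$.} Here we need $i^*\le M$. Set $x=(n-1)/\alpha$. For $x\ge1$ we have $\lceil\log_2 x\rceil\le\lfloor x\rfloor$:
\begin{itemize}
\item If $x\in[1,2)$, then $\lceil\log_2 x\rceil\le1=\lfloor x\rfloor$.
\item If $x\ge2$, write $m=\lfloor x\rfloor\ge2$. Then $\lceil\log_2 x\rceil\le\lceil\log_2(m+1)\rceil\le m$, since $2^m\ge m+1$.
\end{itemize}
Hence $\sum_{i=1}^{i^*}t_b^i\le\sum_{i=1}^{M}t_b^i=\frac{t_b(1-t_b^M)}{1-t_b}$.

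\textbf{Case $(n-1)/\alpha<1$.} Then $i^*\le0$, so the sum in Theorem~\ref{thm:bmcpd} is empty and equals $0$. Also $M=0$, so the closed form equals $0$ too, and the inequality holds trivially.

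Finally, $\min\{a;c\}\le\min\{a';c\}$ whenever $a\le a'$. Multiplying by $2$ gives the claimed bound on $\beta_b$.
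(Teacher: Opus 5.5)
Your proof is correct and follows essentially the paper's route: the paper gives this closed-form step for Corollary~\ref{cor:npcpd} in Appendix~\ref{app:npcpd} and transfers it here with $(C,s)\to(\tilde C_b,s_b)$, bounding the lacunary sum $\sum_{i\le i^*} t_b^{2^{i-1}}$ by the full geometric sum up to $M$. The only difference is cosmetic: the paper embeds the distinct exponents $2^{i-1}$ into $\{1,\dots,2^{i^*-1}\}$ and uses $2^{i^*-1}\le M$ (writing that step as an equality where it should be $\le$), whereas you compare term by term via $2^{i-1}\ge i$, need only $i^*\le M$, and also handle the degenerate case $(n-1)/\alpha<1$.
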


\subsection{Cost of Privacy}\label{subsec:cop}

In Algorithms~\ref{alg:offline_rrcpd} and~\ref{alg:offline_bmcpd}, the effect of imposing an $\varepsilon$-LDP constraint enters the theoretical guarantees through the contraction of the effective separation between the pre- and post-change distributions. In particular, the exponents in our error bounds obtained in Theorem~\ref{thm:rrcpd} and Theorem~\ref{thm:bmcpd} involve multiplicative factors of the form $\tanh^2(\varepsilon/2)$. Importantly, this factor does not represent a direct ratio of error probabilities; rather, it appears as a multiplicative attenuation in the exponent governing the exponential decay of the error probability i.e.,
\begin{align}
   \Lambda_{p}(\varepsilon) \asymp \tanh^2(\varepsilon/2)\;\Lambda_{np}
\end{align}
heuristically and upto algorithm- and distribution-dependent constants, where $\Lambda_{np}$ and $\Lambda_{p}$ denote the error-exponents in the non-private and private error probability, respectively. Consequently, the decay rate is typically reduced by a factor on the order of $\tanh^2(\varepsilon/2)$, which scales as $\Theta(\varepsilon^2)$ for small $\varepsilon$. Equivalently, maintaining a fixed accuracy level under privacy generally requires an inflation in  tolerance level $(\alpha)$ on the order of $1/\tanh^2(\varepsilon/2) \approx 4/\varepsilon^2$. 
(see Section~\ref{sec:exp} and Appendix~\ref{app:cop} for details).


\section{Experimental Results}\label{sec:exp}

\begin{figure*}[t]
\centering

\begin{subfigure}[t]{0.49\textwidth}
  \centering
  \input{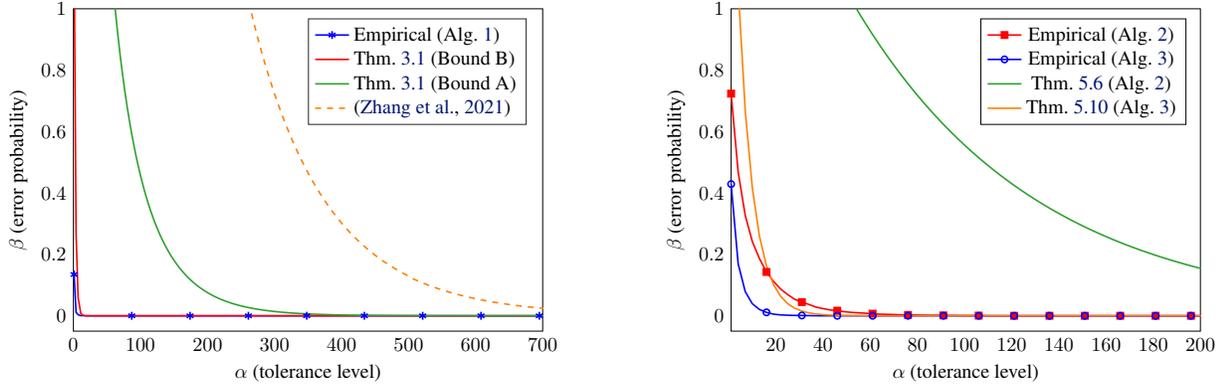}  \vspace{2mm}
  \caption{\textbf{Non-private setting:} Comparison of the empirical performance of Alg.~\ref{alg:offline_cpd} with our theoretical bounds in Thm.~\ref{thm:npcpd} (bound A and bound B) and existing results in~\cite{zhang_cpd}.}
  \label{fig:np_poisson}
\end{subfigure}\hfill
\begin{subfigure}[t]{0.49\textwidth}
  \centering
  \input{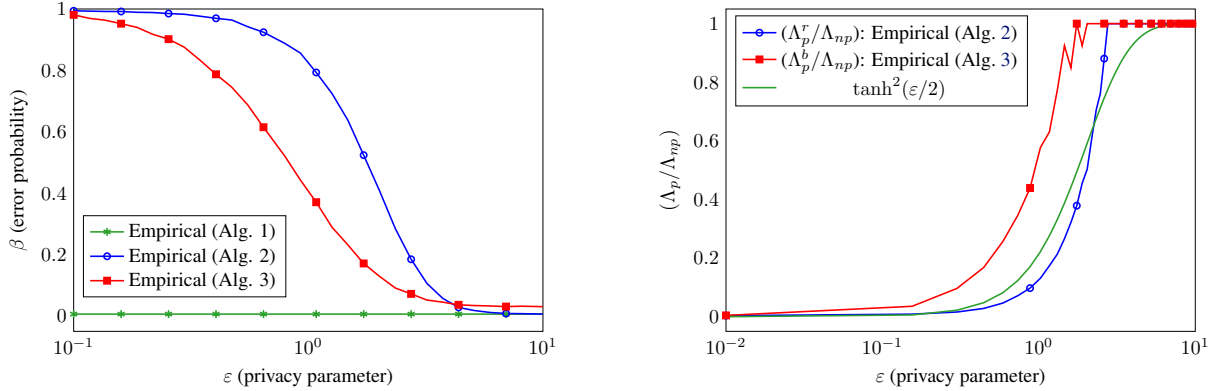}
    \vspace{2mm}
  \caption{\textbf{Private setting: }Comparison of the empirical performances of Alg.~\ref{alg:offline_rrcpd} and Alg.~\ref{alg:offline_bmcpd} with their resp. theoretical bounds in Thm.~\ref{thm:rrcpd}  and Thm.~\ref{thm:bmcpd}. The privacy parameter is fixed at $\varepsilon =2$.}
  \label{fig:priv_tpois_main}
\end{subfigure}

\vspace{6mm}

\begin{subfigure}[t]{0.49\textwidth}
  \centering
  \begin{tikzpicture}[scale=0.75, transform shape]
\begin{axis}[
  width=9.9cm,
  height=7.3cm,
  xlabel={$\varepsilon$ (privacy parameter)},
  ylabel={$\beta$ (error probability)},
  xmin=0.1, xmax=10,
  ymin=-0.05, ymax=1,
  xmode=log,
legend style={
  at={(0.02,0.36)}, anchor=north west,   
},
  grid=none,
  minor tick num=0,
  tick style={draw=none},
  minor tick style={draw=none},
]
\definecolor{tabblue}{HTML}{1F77B4}
\definecolor{taborange}{HTML}{FF7F0E}
\definecolor{tabgreen}{HTML}{2CA02C}
\definecolor{tabred}{HTML}{D62728}
\definecolor{tabpurple}{HTML}{9467BD}
\definecolor{tabgreen}{HTML}{2CA02C}

\addplot[tabgreen, thick, mark=asterisk, mark options={scale=1}, mark repeat=3] coordinates {
    (0.010000,0.006200)
    (0.021544,0.006200)
    (0.046416,0.006200)
    (0.100000,0.006200)
    (0.116793,0.006200)
    (0.136405,0.006200)
    (0.159312,0.006200)
    (0.186064,0.006200)
    (0.217309,0.006200)
    (0.253802,0.006200)
    (0.296422,0.006200)
    (0.346199,0.006200)
    (0.404335,0.006200)
    (0.472234,0.006200)
    (0.551535,0.006200)
    (0.644153,0.006200)
    (0.792276,0.006200)
    (0.925320,0.006200)
    (1.080707,0.006200)
    (1.262187,0.006200)
    (1.474142,0.006200)
    (1.721690,0.006200)
    (2.010809,0.006200)
    (2.348478,0.006200)
    (2.742851,0.006200)
    (3.203450,0.006200)
    (3.741396,0.006200)
    (4.369678,0.006200)
    (5.103465,0.006200)
    (5.960475,0.006200)
    (6.961401,0.006200)
    (8.130408,0.006200)
    (10.000000,0.006200)
};
\addlegendentry{Empirical (Alg. 1)}

\addplot[blue, thick, mark=o, mark options={scale=0.8}, mark repeat=3] coordinates {
    (0.010000,0.995900)
    (0.021544,0.995000)
    (0.046416,0.994900)
    (0.100000,0.994100)
    (0.116793,0.993100)
    (0.136405,0.992000)
    (0.159312,0.991900)
    (0.186064,0.989300)
    (0.217309,0.988500)
    (0.253802,0.985200)
    (0.296422,0.982900)
    (0.346199,0.976500)
    (0.404335,0.969700)
    (0.472234,0.964100)
    (0.551535,0.942400)
    (0.644153,0.924300)
    (0.792276,0.888500)
    (0.925320,0.856500)
    (1.080707,0.793300)
    (1.262187,0.724600)
    (1.474142,0.638000)
    (1.721690,0.524000)
    (2.010809,0.408300)
    (2.348478,0.284800)
    (2.742851,0.184800)
    (3.203450,0.106200)
    (3.741396,0.057500)
    (4.369678,0.028000)
    (5.103465,0.017800)
    (5.960475,0.011500)
    (6.961401,0.008200)
    (8.130408,0.007600)
    (10.000000,0.006200)
};
\addlegendentry{Empirical (Alg. 2)}

\addplot[red, thick, mark=square*, mark options={scale=0.8}, mark repeat=3] coordinates {
    (0.010000,0.995800)
    (0.021544,0.993000)
    (0.046416,0.989900)
    (0.100000,0.981000)
    (0.116793,0.969600)
    (0.136405,0.963400)
    (0.159312,0.952200)
    (0.186064,0.940300)
    (0.217309,0.917600)
    (0.253802,0.901800)
    (0.296422,0.875500)
    (0.346199,0.832700)
    (0.404335,0.787400)
    (0.472234,0.745000)
    (0.551535,0.687900)
    (0.644153,0.615000)
    (0.792276,0.522000)
    (0.925320,0.443000)
    (1.080707,0.370700)
    (1.262187,0.289200)
    (1.474142,0.231800)
    (1.721690,0.171200)
    (2.010809,0.129300)
    (2.348478,0.091200)
    (2.742851,0.072100)
    (3.203450,0.052000)
    (3.741396,0.045100)
    (4.369678,0.036400)
    (5.103465,0.034100)
    (5.960475,0.032700)
    (6.961401,0.030700)
    (8.130408,0.031900)
    (10.000000,0.030500)
};
\addlegendentry{Empirical (Alg. 3)}

\end{axis}
\end{tikzpicture}
  \vspace{2mm}
  \caption{ \textbf{Empirical performance of private algorithms:} Comparison of empirical performances of Alg.~\ref{alg:offline_rrcpd} and Alg.~\ref{alg:offline_bmcpd} at different values of privacy parameter $\varepsilon$, for a fixed value of $\alpha =5$. The Alg.~\ref{alg:offline_rrcpd} exhibits better performance than Alg.~\ref{alg:offline_bmcpd} in the low privacy regime and the converse holds in the high privacy regime.}
  \label{fig:priv_tpois_eps}
\end{subfigure}\hfill
\begin{subfigure}[t]{0.49\textwidth}
  \centering
  \begin{tikzpicture}[scale=0.75, transform shape]
\begin{axis}[
  width=9.9cm,
  height=7.3cm,
  xlabel={$\varepsilon$ (privacy parameter) },
  ylabel={$(\Lambda_{p}/\Lambda_{np})$},
  xmin=0.01, xmax=10.0,
  ymin=-0.05, ymax=1.05,
  xmode=log,
  ymode=normal,
  legend style={
  at={(0.02,0.98)}, anchor=north west,   
},
  grid=none,
  minor tick num=0,
  tick style={draw=none},
  minor tick style={draw=none},
]
\definecolor{tabblue}{HTML}{1F77B4}
\definecolor{taborange}{HTML}{FF7F0E}
\definecolor{tabgreen}{HTML}{2CA02C}
\definecolor{tabred}{HTML}{D62728}
\definecolor{tabpurple}{HTML}{9467BD}
\definecolor{tabgreen}{HTML}{2CA02C}

\addplot[blue, thick, mark=o, mark options={scale=0.8}, mark repeat=6] coordinates {
  (0.01,0.00364334192414)
  (0.154782608696,0.00869948029484)
  (0.299565217391,0.0159845473062)
  (0.444347826087,0.0282079385278)
  (0.589130434783,0.0466042468841)
  (0.733913043478,0.0717316397586)
  (0.878695652174,0.0976270176178)
  (1.02347826087,0.131371663942)
  (1.16826086957,0.174091115828)
  (1.31304347826,0.213776324497)
  (1.45782608696,0.266120294324)
  (1.60260869565,0.321414070529)
  (1.74739130435,0.378920926287)
  (1.89217391304,0.455250820601)
  (2.03695652174,0.503301602179)
  (2.18173913043,0.610917742185)
  (2.32652173913,0.705969520653)
  (2.47130434783,0.76143110592)
  (2.61608695652,0.880710124609)
  (2.76086956522,1)
  (2.90565217391,1)
  (3.05043478261,1)
  (3.1952173913,1)
  (3.34,1)
  (3.4847826087,1)
  (3.62956521739,1)
  (3.77434782609,1)
  (3.91913043478,1)
  (4.06391304348,1)
  (4.20869565217,1)
  (4.35347826087,1)
  (4.49826086957,1)
  (4.64304347826,1)
  (4.78782608696,1)
  (4.93260869565,1)
  (5.07739130435,1)
  (5.22217391304,1)
  (5.36695652174,1)
  (5.51173913043,1)
  (5.65652173913,1)
  (5.80130434783,1)
  (5.94608695652,1)
  (6.09086956522,1)
  (6.23565217391,1)
  (6.38043478261,1)
  (6.5252173913,1)
  (6.67,1)
  (6.8147826087,1)
  (6.95956521739,1)
  (7.10434782609,1)
  (7.24913043478,1)
  (7.39391304348,1)
  (7.53869565217,1)
  (7.68347826087,1)
  (7.82826086957,1)
  (7.97304347826,1)
  (8.11782608696,1)
  (8.26260869565,1)
  (8.40739130435,1)
  (8.55217391304,1)
  (8.69695652174,1)
  (8.84173913043,1)
  (8.98652173913,1)
  (9.13130434783,1)
  (9.27608695652,1)
  (9.42086956522,1)
  (9.56565217391,1)
  (9.71043478261,1)
  (9.8552173913,1)
  (10,1)
};
\addlegendentry{(${\Lambda^r_{p}}/{\Lambda_{np}}$): Empirical (Alg.~\ref{alg:offline_rrcpd})}
\addplot[red, thick, mark=square*, mark options={scale=0.8}, mark repeat=6] coordinates {
  (0.01,0.00442083458078)
  (0.154782608696,0.0355459195916)
  (0.299565217391,0.0965120829907)
  (0.444347826087,0.168056302967)
  (0.589130434783,0.256946500784)
  (0.733913043478,0.34627193658)
  (0.878695652174,0.439235719126)
  (1.02347826087,0.57744471078)
  (1.16826086957,0.630712838785)
  (1.31304347826,0.774219097695)
  (1.45782608696,0.924732461431)
  (1.60260869565,0.849475779563)
  (1.74739130435,0.999989143299)
  (1.89217391304,0.924732461431)
  (2.03695652174,1)
  (2.18173913043,1)
  (2.32652173913,1)
  (2.47130434783,1)
  (2.61608695652,1)
  (2.76086956522,1)
  (2.90565217391,1)
  (3.05043478261,1)
  (3.1952173913,1)
  (3.34,1)
  (3.4847826087,1)
  (3.62956521739,1)
  (3.77434782609,1)
  (3.91913043478,1)
  (4.06391304348,1)
  (4.20869565217,1)
  (4.35347826087,1)
  (4.49826086957,1)
  (4.64304347826,1)
  (4.78782608696,1)
  (4.93260869565,1)
  (5.07739130435,1)
  (5.22217391304,1)
  (5.36695652174,1)
  (5.51173913043,1)
  (5.65652173913,1)
  (5.80130434783,1)
  (5.94608695652,1)
  (6.09086956522,1)
  (6.23565217391,1)
  (6.38043478261,1)
  (6.5252173913,1)
  (6.67,1)
  (6.8147826087,1)
  (6.95956521739,1)
  (7.10434782609,1)
  (7.24913043478,1)
  (7.39391304348,1)
  (7.53869565217,1)
  (7.68347826087,1)
  (7.82826086957,1)
  (7.97304347826,1)
  (8.11782608696,1)
  (8.26260869565,1)
  (8.40739130435,1)
  (8.55217391304,1)
  (8.69695652174,1)
  (8.84173913043,1)
  (8.98652173913,1)
  (9.13130434783,1)
  (9.27608695652,1)
  (9.42086956522,1)
  (9.56565217391,1)
  (9.71043478261,1)
  (9.8552173913,1)
  (10,1)
};
\addlegendentry{(${\Lambda^b_{p}}/{\Lambda_{np}}$): Empirical (Alg.~\ref{alg:offline_bmcpd})}
\addplot[tabgreen, thick] coordinates {
  (0.01,2.49995833392e-05)
  (0.154782608696,0.00596557951835)
  (0.299565217391,0.022103498668)
  (0.444347826087,0.0477811873222)
  (0.589130434783,0.0819855586036)
  (0.733913043478,0.123430485243)
  (0.878695652174,0.170654675678)
  (1.02347826087,0.222122876635)
  (1.16826086957,0.276319520303)
  (1.31304347826,0.331826676739)
  (1.45782608696,0.387381648598)
  (1.60260869565,0.44191295887)
  (1.74739130435,0.494556250233)
  (1.89217391304,0.544653446714)
  (2.03695652174,0.591739403379)
  (2.18173913043,0.63552035082)
  (2.32652173913,0.675847976361)
  (2.47130434783,0.712692225303)
  (2.61608695652,0.746115059402)
  (2.76086956522,0.776246616885)
  (2.90565217391,0.803264556625)
  (3.05043478261,0.827376866683)
  (3.1952173913,0.848808069624)
  (3.34,0.867788541109)
  (3.4847826087,0.884546544843)
  (3.62956521739,0.899302546833)
  (3.77434782609,0.912265379908)
  (3.91913043478,0.923629865919)
  (4.06391304348,0.933575553578)
  (4.20869565217,0.942266284859)
  (4.35347826087,0.949850356253)
  (4.49826086957,0.956461089501)
  (4.64304347826,0.962217668319)
  (4.78782608696,0.967226132576)
  (4.93260869565,0.971580449831)
  (5.07739130435,0.975363606694)
  (5.22217391304,0.978648680064)
  (5.36695652174,0.981499861682)
  (5.51173913043,0.983973419478)
  (5.65652173913,0.986118586557)
  (5.80130434783,0.987978373941)
  (5.94608695652,0.9895903069)
  (6.09086956522,0.990987087146)
  (6.23565217391,0.992197184789)
  (6.38043478261,0.993245364885)
  (6.5252173913,0.994153153857)
  (6.67,0.994939251233)
  (6.8147826087,0.995619892045)
  (6.95956521739,0.996209165024)
  (7.10434782609,0.996719291392)
  (7.24913043478,0.997160868698)
  (7.39391304348,0.997543083754)
  (7.53869565217,0.997873898356)
  (7.68347826087,0.998160211077)
  (7.82826086957,0.998407998079)
  (7.97304347826,0.998622435585)
  (8.11782608696,0.998808006292)
  (8.26260869565,0.998968591813)
  (8.40739130435,0.999107552899)
  (8.55217391304,0.999227799066)
  (8.69695652174,0.999331848972)
  (8.84173913043,0.99942188277)
  (8.98652173913,0.999499787491)
  (9.13130434783,0.999567196362)
  (9.27608695652,0.999625522883)
  (9.42086956522,0.99967599034)
  (9.56565217391,0.999719657363)
  (9.71043478261,0.999757440062)
  (9.8552173913,0.999790131198)
  (10,0.999818416769)
};
\addlegendentry{$\tanh^2(\varepsilon/2)$}

\end{axis}
\end{tikzpicture}
    \vspace{2mm}
  \caption{\textbf{Cost of privacy:} Comparison of empirical error-exponent ratios $(\Lambda_{p}/\Lambda_{np})$ for Alg.~\ref{alg:offline_rrcpd} and Alg.~\ref{alg:offline_bmcpd} with our theoretical generic privacy cost, for a fixed $\alpha=50$. We observe that the empirical error-exponents for Alg.~\ref{alg:offline_rrcpd} and Alg.~\ref{alg:offline_bmcpd} scale approximately as our theoretically computed cost of $\tanh^{2}(\varepsilon/2)$.}
  \label{fig:cop_tp_0}
\end{subfigure}
    \vspace{3mm}
\caption{Experiment Plots (see Appendices~\ref{app:npcomp},~\ref{app:priv_comp},~\ref{app:exp3} and~\ref{app:cop} for detailed experiments).}
\label{fig:main_exp}
\end{figure*}

We provide empirical comparisons by plotting the bounds for several synthetic datasets in this section.
We fix the dataset size $n=2000$, the true change-point $k^{\star}=1000$ and plot theoretical bounds for different choices of pre- and post-change distributions $(P_0,P_1)$. The empirical curves are plotted via Monte Carlo simulations by repeatedly sampling data from $(P_0,P_1)$, estimating change-point using the proposed algorithms, and repeating this procedure $10,000$ times. 

We consider truncated Poisson distributions $\mathrm{TPois}(\lambda,m)$ with truncation parameter $m=10$ i.e., $P_0 \sim \mathrm{TPois}(1,10)$, $P_1 \sim \mathrm{TPois}(4,10)$ for all the experiments in this section. While representative plots are shown in Figure~\ref{fig:main_exp}, additional experimental plots are provided in the Appendices~\ref{app:npcomp},~\ref{app:priv_comp},~\ref{app:exp3} and~\ref{app:cop} based on Bernoulli, binomial, truncated geometric, and truncated Poisson distributions with different parameters. 

\section{Conclusion and Future Work}

In this work, we study parametric offline change-point detection in the non-Bayesian framework. Building on the CUSUM approach, we derive finite-sample performance bounds for a GLRT-based CPD algorithm which improve upon existing results. We then investigate CPD under local differential privacy (LDP) and propose two locally private algorithms based on randomized response and binary mechanisms. Leveraging SDPI coefficients for Jeffreys–Rényi divergences, we establish performance bounds for both algorithms and characterize the cost of privacy. Our analysis shows that the two mechanisms exhibit complementary behavior across privacy regimes, and that the $\varepsilon$-LDP constraint reduces the exponential error rate by a factor of approximately $\tanh^2(\varepsilon/2)$. Finally, we validate our theoretical findings via Monte Carlo simulations on synthetic data.

Extending our results to online CPD is a natural direction for future work. Although online algorithms often rely on offline procedures as subroutines (e.g., via the sliding window method), extending such approaches to the locally private setting would be nontrivial due to the inherent trade-offs between detection delay, accuracy, and privacy. Additional extensions include  $\varepsilon-$LDP based CPD algorithms for non-i.i.d data and with multiple change-points.

\section*{Acknowledgments}
This work was supported by the Swiss National Science Foundation (SNSF) under the grant $211337$. A. K. Yadav would like to thank Adrien Vandenbroucque (EPFL) and Sidharth Chandak (Stanford University) for useful discussions on SDPI coefficients.

\bibliography{ref}
\bibliographystyle{icml2026}

\newpage
\appendix
\onecolumn
\section{Non-Private Change-Point Detection}
\subsection{Proof of Theorem~\ref{thm:npcpd}}\label{app:npcpd}

\begin{proof}
 For the tolerance level $\alpha < n - 1$, fix a hyperparameter $b \in \left[2,\left\lceil \frac{n-1}{\alpha} \right\rceil\right] $, where $b$ is an integer. We will optimize over the choice of $b$ in the end. 
 Now, we partition the interval $[1,n]$, into different exponentially growing sub-intervals in $b$, around the true change-point $k^{\star} \in [n]$ 
\begin{figure}[H]
  \begin{center}
     \vspace{5mm}
    \begin{tikzpicture}[xscale=1.3, yscale=1]

    \coordinate (One) at (0,0);
    \coordinate (Kstar) at (6,0);
    \coordinate (N) at (12,0);
    \coordinate (KstarMinusAlpha) at (4.5,0);
    \coordinate (KstarPlusAlpha) at (7.5,0);
    \coordinate (KstarMinusBAlpha) at (2.5,0);
    \coordinate (KstarPlusBAlpha) at (9.5,0);

    \draw[red, ultra thick] (One) -- (KstarMinusAlpha);
    \draw[black, ultra thick] (KstarMinusAlpha) -- (KstarPlusAlpha);
    \draw[red, ultra thick] (KstarPlusAlpha) -- (N);

    \foreach \pos/\label in {One/1, KstarMinusAlpha/k^\star-\alpha, Kstar/k^\star, KstarPlusAlpha/k^\star+\alpha, N/n} {
        \draw[black, line width=2.5pt] (\pos) ++(0,-0.25) -- ++(0,0.5);
        \node[below=12pt, black] at (\pos) {$\label$};
    }

    \foreach \pos/\label in {KstarMinusBAlpha/k^\star-b\alpha, KstarPlusBAlpha/k^\star+b\alpha} {
        \draw[red, ultra thick] (\pos) ++(0,-0.25) -- ++(0,0.5);
        \node[below=12pt, red] at (\pos) {$\label$};
    }

    \node[below=12pt, red] at (1.25, 0) {$\dots$};
    \node[below=12pt, red] at (10.75, 0) {$\dots$};

    \end{tikzpicture}
        \vspace{5mm}
    \caption{Good and {\color{red}{Bad}} sub intervals around the true change-point $k^{\star} \in [n]$.}
    \label{fig:interval}
  \end{center}
\end{figure}
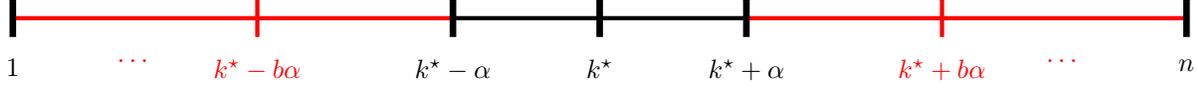
Let $[k^{\star}-\alpha,k^{\star}+\alpha]$ denote the `good interval' while we define the `bad interval' $\mathcal{R}$ in the following manner. Define $\mathcal{R}^{-}_i = [k^{\star}-b^{i}\alpha,k^{\star}-b^{(i-1)}\alpha)$ and $\mathcal{R}^{+}_i = (k^{\star}+b^{(i-1)}\alpha, k^{\star}+b^{i}\alpha]$. Let $\mathcal{R}_i = \mathcal{R}^{-}_i \cup \mathcal{R}^{+}_i $. The `bad interval' $\mathcal{R}$ is 
\begin{align*}
    \mathcal{R} = \bigcup_{i=1}^{i^{*}}\mathcal{R}_i
\end{align*}
where $i^{*}$ is the smallest integer satisfying 
\begin{align*}
    i^{*} \geq \max_{k^{\star} \in [n]}\max\left\{\log_b\left(\frac{k^{\star}-1}{\alpha}\right), \log_b\left(\frac{n-k^{\star}}{\alpha}\right) \right\} 
\end{align*}
Consequently, we have $i^{*} = \lceil\log_b\left(\frac{n-1}{\alpha}\right)\rceil \in\left[1,\lceil\log_2\left(\frac{n-1}{\alpha}\right)\rceil\right]$. 

Now our goal is to upper bound the probability of the change-point estimate $\hat{k}$ being in the bad interval $\mathcal{R}$. We have,
\begin{align}
\mathbb{P}\left[\hat{k} \not\in [k^{\star}-\alpha,k^{\star}+\alpha]\right] &= \mathbb{P}\Big[\exists \text{ } k \in \mathcal{R} \text{ } \text{  s.t. }\text{ } l(\mathcal{D},k)- l(\mathcal{D},k^{\star})>0\Big]\\
& \leq \sum_{i=1}^{i^{*}}\left\{ \mathbb{P}\left[\max_{k \in \mathcal{R}^{-}_i}\{l(\mathcal{D},k)-l(\mathcal{D},k^{\star})\}>0 \right]  + \mathbb{P}\left[\max_{k \in \mathcal{R}^{+}_i}\{l(\mathcal{D},k)-l(\mathcal{D},k^{\star})\}>0 \right]  \right\} \label{eq:proberror}
\end{align}
Note here that the likelihood ratios $l(\mathcal{D},k)$ are dependent on each other across different values of $k \in [1,n)$. However, the pairwise difference $l(\mathcal{D},k) - l(\mathcal{D},k+1) = \log\frac{P_1(x_k)}{P_0(x_k)}$ depends only on the $k^{th}$ data point (and its probability under $P_0$ and $P_1$). Thus, the pairwise differences are independent for all values of $k$, and identically distributed for $k<k^{\star}$ (with $P_0$) and $k\geq k^{\star}$ (with $P_1$).

\subsubsection{Upper Bound based on $C$ and $s$}

Under the assumption that $s <\infty$ (finite alphabet case), we will use the maximal Azuma-Hoeffding's inequality (Lemma~\ref{lemma:azuma}) to upper bound the probabilities in Eq.~\eqref{eq:proberror}. Therefore, we would like to express the pairwise differences  of likelihood ratios as a sum of zero mean i.i.d random variables. Define random variable $W_j$, for all $ j \in [n]$ as
\begin{align}
    W_j:= 
\begin{cases}
-\log\frac{P_0(x_j)}{P_1(x_j)} + D_{\text{KL}}(P_0\|P_1)  & ; j < k^{\star}\\
-\log\frac{P_1(x_j)}{P_0(x_j)} + D_{\text{KL}}(P_1\|P_0)  & ; j \geq k^{\star}.
\end{cases}
\end{align}
For $j<k^{\star} :$ $W_j$'s are i.i.d with $P_0$ and $\mathbb{E}_{P_0}(W_j) = 0$. Similarly, for $j \geq k^{\star} :$ $W_j$'s are i.i.d with $P_1$ and $\mathbb{E}_{P_1}(W_j) = 0$. Now, we express the difference $l(\mathcal{D},k)-l(\mathcal{D},k^{\star})$ as a sum of random variables $W_j$'s as follows
\begin{align}
    l(\mathcal{D},k)-l(\mathcal{D},k^{\star})= 
\begin{cases}
\sum^{k^{\star}-1}_{k}W_j - (k^{\star}-k)D_{\text{KL}}(P_0\|P_1)  & ; j < k^{\star}\\
\sum^{k-1}_{j=k^{\star}}W_j - (k-k^{\star})D_{\text{KL}}(P_1\|P_0)  & ; j \geq k^{\star} 
\end{cases}
\end{align}
Now, we can re-write the Eq.~\eqref{eq:proberror} as follows
\begin{align}
\mathbb{P}\left[\hat{k} \not\in [k^{\star}-\alpha,k^{\star}+\alpha]\right] 
\leq \sum_{i=1}^{i^{*}}   \hspace{2mm} &\mathbb{P}\left[\max_{k \in \mathcal{R}^{-}_i}\left\{\sum^{k^{\star}-1}_{j=k}W_j - (k^{\star}-k)D_{\text{KL}}(P_0\|P_1)\right\}>0\right]  \notag \\ & \hspace{5mm}+\mathbb{P}\left[\max_{k \in \mathcal{R}^{+}_i}\left\{\sum^{k^{}-1}_{j=k^{\star}}W_j - (k-k^{\star})D_{\text{KL}}(P_1\|P_0)\right\}>0 \right] \label{eq:proberror1}
\end{align}
Note that for $k \in \mathcal{R}^{-}_i$, we have that $b^{(i-1)}\alpha<k^{\star}-k\leq b^{i}\alpha$. Similarly, for $k \in \mathcal{R}^{+}_i$, we have that $b^{(i-1)}\alpha<k-k^{\star}\leq b^{i}\alpha$. 
Therefore,
\begin{align}
\mathbb{P}\left[\hat{k} \not\in [k^{\star}-\alpha,k^{\star}+\alpha]\right] 
&\leq \sum_{i=1}^{i^{*}} \hspace{2mm}\mathbb{P}\left[\max_{k \in \mathcal{R}^{-}_i}
   {\sum^{k^{\star}-1}_{j=k}W_j > b^{(i-1)}\alpha D_{\text{KL}}(P_0\|P_1)}\right] \notag \\
&\quad \hspace{10mm} + \hspace{2mm}\mathbb{P}\left[\max_{k \in \mathcal{R}^{+}_i}
   {\sum^{k-1}_{j=k^{\star}}W_j > b^{(i-1)}\alpha D_{\text{KL}}(P_1\|P_0)}\right] \label{eq:proberror11} \\
&\leq \sum_{i=1}^{i^{*}} \hspace{2mm} \left\{\mathbb{P}\left[\max_{k \in \mathcal{R}^{-}_i}
   {\sum^{k^{\star}-1}_{j=k}W_j > b^{(i-1)}\alpha C}\right]
   + \mathbb{P}\left[\max_{k \in \mathcal{R}^{+}_i}
   {\sum^{k-1}_{j=k^{\star}}W_j > b^{(i-1)}\alpha C}\right]\right\}
\end{align}
Recall that $C=\min\{D_{\text{KL}}(P_0\|P_1),D_{\text{KL}}(P_1\|P_0)\}$. For $j<k^{\star}$, let $U_1:= W_{k^{\star}-1}$, $U_2:= W_{k^{\star}-2}$,$\dots$,$U_{k^{\star}-k}:= W_{k}$. Similarly, for $j\geq k^{\star}$, let $V_1:=W_{k^{\star}}$, $V_2:= W_{k^{\star}+1}$,$\dots$,$V_{k-k^{\star}}:= W_{k-1}$. Moreover, any sub-interval $\mathcal{R}^{-}_i$ or $\mathcal{R}^{+}_i$ can contain only at most $b^{i}\alpha$ indices. Therefore, we have
\begin{align}
\mathbb{P}\left[\hat{k} \not\in [k^{\star}-\alpha,k^{\star}+\alpha]\right] 
&\leq  \sum_{i=1}^{i^{*}} \hspace{2mm} \left\{\mathbb{P}\left[\max_{k \in \mathcal{R}^{-}_i}
   {\sum^{k^{\star}-k}_{j=1}U_j > b^{(i-1)}\alpha C}\right]
   + \mathbb{P}\left[\max_{k \in \mathcal{R}^{+}_i}
   {\sum^{k-k^{\star}}_{j=1}V_j > b^{(i-1)}\alpha C}\right]\right\}\label{eq:beforeazuma}\\
   &\leq  \sum_{i=1}^{i^{*}} \hspace{2mm} \left\{\mathbb{P}\left[\max_{k^{\star}-k \in [b^{i}\alpha]}
   {\sum^{k^{\star}-k}_{j=1}U_j > b^{(i-1)}\alpha C}\right]
   + \mathbb{P}\left[\max_{k-k^{\star} \in [b^{i}\alpha]}
   {\sum^{k-k^{\star}}_{j=1}V_j > b^{(i-1)}\alpha C}\right]\right\}\label{eq:beforeazuma2}
\end{align}

For a fixed $i \leq i^{*}$, let's analyze the first term in the Eq.~\eqref{eq:beforeazuma2}. $U_j$'s are zero mean i.i.d random variables supported on a bounded interval $[a_1,a_2]$ where $a_1:= \min_{x \in \mathcal{X}}\log\frac{P_1(x)}{P_0(x)} +D_{\text{KL}}(P_0\|P_1)$ and $a_2:=\max_{x \in \mathcal{X}}\log\frac{P_1(x)}{P_0(x)} +D_{\text{KL}}(P_0\|P_1)$. Let $S_t:=\sum_{j=1}^{t}U_j$ denote the partial sum of $t$ random variables.

Let $\mathcal{F}_{k^{\star}-k}:=\sigma(U_1,\dots,U_{k^{\star}-k})$ be the $\sigma-$field generated by $(U_j, j \in [k^{\star}-k])$. Since $S_t = f(U_1,\dots,U_t)$, we have that $S_t$ is $\mathcal{F}_t$ measurable. Note that $\mathbb{E}(|S_t|)=\mathbb{E}(|U_1+\dots+U_t|)\leq \sum_{j=1}^{t}\mathbb{E}(|U_j|)=t.a_2 <\infty$. Additionally, $\mathbb{E}(S_{t+1}|\mathcal{F}_t)=\mathbb{E}(S_{t}+U_{t+1}|\mathcal{F}_t)=\mathbb{E}(S_{t}|\mathcal{F}_t)+ \mathbb{E}(U_{t+1}|\mathcal{F}_t)=S_t$. Therefore, 
$(S_t,t \in [k^{\star}-k])$ is a martingale adapted to the natural filtration of $(U_t, t \in [k^{\star}-k])$. Furthermore, for all $t$, we have that $|S_{t+1}-S_t| \leq a_2-a_1 \leq s$. Thus, $(S_t,t \in [k^{\star}-k])$ is a bounded difference martingale.

We will now use the maximal Azuma-Hoeffding inequality (stated below) to upper bound the tail probability of the bounded difference martingale $(S_t,t \in [k^{\star}-k])$.

\begin{lemma}[{Azuma-Hoeffding's inequality}, see e.g.~\cite{raginsky}]\label{lemma:azuma}
   Let $(M_k,k \leq n)$ be a real-valued bounded difference martingale sequence adapted to the filtration $(\mathcal{F}_k, k \leq n)$, such that $M_{k+1}-M_k \in [a_k,b_k]$, for every $k<n$. Then for every $n\geq 1$ and $t>0$, we have
   \begin{align}
       \mathbb{P}\left[ \max_{k \in [n]}(M_k-M_0) > t\right] &\leq \exp\left(\frac{-2t^2}{\sum_{k=1}^{n}(b_k-a_k)^2}\right)\label{eq:azuma1}\\
       \mathbb{P}\left[\max_{k \in [n]} (M_k-M_0) < -t\right] &\leq \exp\left(\frac{-2t^2}{\sum_{k=1}^{n}(b_k-a_k)^2}\right)\label{eq:azuma2}
   \end{align}
\end{lemma}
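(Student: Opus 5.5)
The statement to prove is the maximal Azuma--Hoeffding inequality (Lemma~\ref{lemma:azuma}). I will use the standard Chernoff/exponential-supermartingale argument combined with Doob's maximal inequality. Write $D_k := M_k - M_{k-1}$. Each $D_k$ satisfies $\mathbb{E}[D_k\mid\mathcal{F}_{k-1}]=0$ and $D_k\in[a_k,b_k]$, with $a_k,b_k$ deterministic or at least $\mathcal{F}_{k-1}$-measurable. The first step is the conditional Hoeffding lemma:
\begin{align*}
\mathbb{E}\left[e^{\lambda D_k}\mid\mathcal{F}_{k-1}\right]\le \exp\left(\lambda^2(b_k-a_k)^2/8\right)\quad\text{for all }\lambda\in\mathbb{R}.
\end{align*}
This follows by bounding $e^{\lambda x}$ on $[a_k,b_k]$ by its chord, using convexity, so that $\mathbb{E}[e^{\lambda D_k}\mid\mathcal{F}_{k-1}]\le \frac{b_k e^{\lambda a_k}-a_k e^{\lambda b_k}}{b_k-a_k}$. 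One then writes the logarithm of this as a function $\phi(h)$ of $h=\lambda(b_k-a_k)$ and checks that $\phi(0)=\phi'(0)=0$ and $\phi''\le 1/4$.

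Next, fix $\lambda>0$ and set $\sigma_k^2 := \sum_{j\le k}(b_j-a_j)^2$. Define
\begin{align*}
Z_k:=\exp\left(\lambda(M_k-M_0)-\lambda^2\sigma_k^2/8\right).
\end{align*}
By the tower property and the conditional Hoeffding bound, $(Z_k)$ is a nonnegative supermartingale with $Z_0=1$. Since the sum of squared ranges is increasing in $k$, on the event $\{\max_{k\le n}(M_k-M_0)>t\}$ we have $\max_k Z_k\ge \exp(\lambda t-\lambda^2\sigma_n^2/8)$. Ville's (Doob's) maximal inequality for nonnegative supermartingales then gives
\begin{align*}
\mathbb{P}\left[\max_{k\le n}(M_k-M_0)>t\right]\le \exp\left(-\lambda t+\lambda^2\sigma_n^2/8\right).
\end{align*}
If one prefers to avoid Ville's inequality, the same conclusion follows from applying Doob's submartingale inequality to $e^{\lambda(M_k-M_0)}$ with the deterministic factor $e^{\lambda^2\sigma_n^2/8}$ pulled out. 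Optimizing at $\lambda=4t/\sigma_n^2$ yields \eqref{eq:azuma1}.

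For \eqref{eq:azuma2}, I apply the same argument to the martingale $-M_k$, whose increments lie in $[-b_k,-a_k]$ and so have the same ranges. This bounds $\mathbb{P}[\max_k(M_0-M_k)>t]$, which is the intended lower-tail statement; the displayed "$\max \dots < -t$" should be read as $\min_k(M_k-M_0)<-t$. The only delicate point is the maximal step: one must handle the maximum over $k$ rather than a fixed $k$. This is exactly why the supermartingale $Z_k$, with its compensator $\lambda^2\sigma_k^2/8$ made monotone, is used instead of a union bound. Everything else is routine.
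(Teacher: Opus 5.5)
Your argument is correct. The paper gives no proof of this lemma; it cites it from Raginsky--Sason as a standard result. Your route is the standard proof of this maximal form: the conditional Hoeffding lemma for each increment, then Doob's (Ville's) maximal inequality applied to the exponential supermartingale $Z_k$ (or to the submartingale $e^{\lambda(M_k-M_0)}$), then optimization at $\lambda=4t/\sum_k(b_k-a_k)^2$.

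One small simplification: you do not need to reinterpret the second display. Since $\{\max_{k}(M_k-M_0)<-t\}\subseteq\{\min_{k}(M_k-M_0)<-t\}$, your bound for the martingale $-M_k$ already gives the inequality exactly as written.
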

Assume, $S_0 =U_0=0$. Now, using Eq.~\eqref{eq:azuma1}, we have that
\begin{align}
       \mathbb{P}\left[\max_{k^{\star}-k \in [b^{i}\alpha]}
   {\sum^{k^{\star}-k}_{j=1}U_j > b^{(i-1)}\alpha C}\right]  &\leq \exp\left(\frac{-2 b^{2(i-1)}\alpha C^2}{b^{i} s^2}\right) \\
   &= \exp\left(\frac{-2\alpha C^2}{ s^2}\right)^{b^{(i-2)}}\label{eq:afterazuma1}
\end{align}
Similarly, we can upper bound the second term in Eq.~\eqref{eq:beforeazuma2} by showing that partial sum of $V_j$'s forms a bounded difference martingale. Then, using the  Eq.~\eqref{eq:azuma2} in Lemma~\ref{lemma:azuma}  we can show that
\begin{align}
     \mathbb{P}\left[\max_{k-k^{\star} \in [b^{i}\alpha]}
   {\sum^{k-k^{\star}}_{j=1}V_j > b^{(i-1)}\alpha C}\right] &\leq \exp\left(\frac{-2\alpha C^2}{s^2}\right)^{{b^{(i-2)}}}\label{eq:afterazuma2}
\end{align}
Substituting Eq.~\eqref{eq:afterazuma1} and Eq.~\eqref{eq:afterazuma2} back into Eq.~\eqref{eq:beforeazuma2}, we have the following result on the error probability
\begin{align}
\mathbb{P}\left[\hat{k} \not\in [k^{\star}-\alpha,k^{\star}+\alpha]\right] &\leq 2\sum_{i=1}^{i^{*}} \hspace{2mm}  {\exp\left(-\frac{2\alpha C^2}{s^2}\right)^{b^{(i-2)}}}
\end{align}
Minimizing over the optimal choice of $b$ gives us the final bound
\begin{align}
\mathbb{P}\left[\hat{k} \not\in [k^{\star}-\alpha,k^{\star}+\alpha]\right] &\leq \min_{b \in \left[2,\left\lceil \frac{n-1}{\alpha} \right\rceil\right]}2\sum_{i=1}^{i^{*}} \hspace{2mm}  {\exp\left(-\frac{2\alpha C^2}{s^2}\right)^{b^{(i-2)}}}\\
&=2\sum_{i=1}^{i^{*}} \hspace{2mm}  {\exp\left(-\frac{\alpha C^2}{s^2}\right)^{2^{(i-1)}}} \label{eq:artist}
\end{align}
where $i^{*} = \lceil\log_2\left(\frac{n-1}{\alpha}\right)\rceil$.

Let $r=\exp(-\frac{\alpha C^2}{s^2})$. We can upper bound the RHS in Eq.~\eqref{eq:artist} to obtain a closed-form expression in terms of the size of the dataset $(n)$ as follows
\begin{align}
\mathbb{P}\left[\hat{k} \not\in [k^{\star}-\alpha,k^{\star}+\alpha]\right] 
\leq 2\sum_{i=1}^{i^{*}} \hspace{2mm}  {r^{2^{(i-1)}}}&= 2\sum_{m=1}^{2^{i^{*}-1}} \hspace{2mm}  {r^{m}}\\
&\leq \frac{2r(1-r^M)}{1-r}\label{eq:wer}
\end{align}
where $M:=\lfloor\frac{n-1}{\alpha}\rfloor$ and we have that $2^{i^{*}-1} \leq M$. 
Furthermore, as $n \rightarrow \infty$,
\begin{align}
\mathbb{P}\left[\hat{k} \not\in [k^{\star}-\alpha,k^{\star}+\alpha]\right] 
    \leq \frac{2r}{1-r} = \frac{2\exp(-\alpha C^2/s^2)}{1-\exp(-\alpha C^2/s^2)}
\end{align}
It is interesting to note from Eq.~\eqref{eq:wer} that for a fixed change-point, increasing the number of data points $n$, does not improve the error probability. Instead, it increases with $n$ and then saturates at a finite value. 

\subsubsection{Upper Bound based on $I_{ch}$}

Define random variable $W'_j$, for all $ j \in [n]$
\begin{align}
    W'_j:= 
\begin{cases}
-\log\frac{P_0(X_j)}{P_1(X_j)} & ; j < k^{\star}\\
-\log\frac{P_1(X_j)}{P_0(X_j)} & ; j \geq k^{\star} 
\end{cases}
\end{align}
For $j<k^{\star} :$ $W'_j$'s are i.i.d with $P_0$. Similarly, for $j \geq k^{\star} :$ $W_j$'s are i.i.d with $P_1$. Now, we express the difference $l(\mathcal{D},k)-l(\mathcal{D},k^{\star})$ as a sum of random variables $W'_j$'s as follows
\begin{align}
    l(\mathcal{D},k)-l(\mathcal{D},k^{\star})= 
\begin{cases}
\sum^{k^{\star}-1}_{k}W'_j  & ; j < k^{\star}\\
\sum^{k-1}_{j=k^{\star}}W'_j & ; j \geq k^{\star} 
\end{cases}
\end{align}
Now, we can re-write the Eq.~\eqref{eq:proberror} as follows
\begin{align}
\mathbb{P}\left[\hat{k} \not\in [k^{\star}-\alpha,k^{\star}+\alpha]\right] 
\leq \sum_{i=1}^{i^{*}}   \hspace{2mm} &\mathbb{P}\left[\max_{k \in \mathcal{R}^{-}_i}\sum^{k^{\star}-1}_{j=k}W_j>0\right] +\mathbb{P}\left[\max_{k \in \mathcal{R}^{+}_i}\sum^{k^{}-1}_{j=k^{\star}}W_j>0 \right] \label{eq:proberror111}
\end{align}
Note that for $k \in \mathcal{R}^{-}_i$, we have that $b^{(i-1)}\alpha<k^{\star}-k\leq b^{i}\alpha$. Similarly, for $k \in \mathcal{R}^{+}_i$, we have that $b^{(i-1)}\alpha<k-k^{\star}\leq b^{i}\alpha$. For $j<k^{\star}$, let $U'_1:= W'_{k^{\star}-1}$, $U'_2:= W'_{k^{\star}-2}$,$\dots$,$U'_{k^{\star}-k}:= W'_{k}$. Similarly, for $j\geq k^{\star}$, let $V'_1:=W_{k^{\star}}$, $V'_2:= W'_{k^{\star}+1}$,$\dots$,$V'_{k-k^{\star}}:= W'_{k-1}$. Therefore, we have
\begin{align}
\mathbb{P}\left[\hat{k} \not\in [k^{\star}-\alpha,k^{\star}+\alpha]\right] 
&\leq  \sum_{i=1}^{i^{*}} \hspace{2mm} \left\{\mathbb{P}\left[\max_{k \in \mathcal{R}^{-}_i}
   {\sum^{k^{\star}-k}_{j=1}U'_j > 0}\right]
   + \mathbb{P}\left[\max_{k \in \mathcal{R}^{+}_i}
   {\sum^{k-k^{\star}}_{j=1}V'_j > 0}\right]\right\}\label{eq:beforeazuma0}\\
   &\leq  \sum_{i=1}^{i^{*}} \hspace{2mm} \left\{\mathbb{P}\left[\max_{k^{\star}-k \in (b^{(i-1)} \alpha,b^{i}\alpha]}
   {\sum^{k^{\star}-k}_{j=1}U'_j > 0}\right]
   + \mathbb{P}\left[\max_{k-k^{\star} \in (b^{(i-1)}\alpha,b^{i}\alpha]}
   {\sum^{k-k^{\star}}_{j=1}V'_j >0}\right]\right\}\label{eq:beforeazuma22}
\end{align}
Let's analyze the first term in Eq.~\eqref{eq:beforeazuma22} carefully. Let $k^{\star}-k:=m$, then define $S_m = \sum_{j=1}^{m}U'_j$. Now, let $\psi(\lambda)$ be the log-MGF function of the random variables $U'_1,U'_2,\dots U'_m$ defined as
\begin{align}
    \psi(\lambda) &= \log \mathbb{E}[e^{\lambda U'_1}]\\
    &= \log \left(\sum_{x}P_0(x)\left(\frac{P_1(x)}{P_0(x)}\right)^{\lambda}\right)\\
    &= \log \left(\sum_{x}P_0(x)^{1-\lambda}P_1(x)^{\lambda} \right)
\end{align}
For $\lambda \in (0,1)$, we have that $\psi(\lambda) < 0$. From Definition~\ref{def:ich} of Chernoff information, we see that 
\begin{align}
    I_{ch}{(P_0,P_1)} =  -\min_{\lambda \in (0,1)}\psi(\lambda)
\end{align}
Now for any $\lambda \in (0,1)$, let $M_m:=\exp(\lambda S_m-m\psi(\lambda))$. Then, we have that 
\begin{align}
    \mathbb{P}\left[\max_{m \in (b^{(i-1)} \alpha,b^{i}\alpha]}
   {S_m> 0}\right]&=\mathbb{P}\left[\max_{m \in (b^{(i-1)} \alpha,b^{i}\alpha]}
   {M_m > \exp(-m\psi(\lambda)) }\right]\label{eq:abracadabra000}\\
 &\leq \inf_{\lambda \in (0,1)}\mathbb{P}\left[\max_{m \in (b^{(i-1)} \alpha,b^{i}\alpha]}
   {M_m > \exp(-b^{(i-1)}\alpha\psi(\lambda)) }\right]\label{eq:abracadabra00}
\end{align}
We will now show that the sequence $(M_m, m\geq 1)$ is a non-negative martingale (consequently, a non-negative sub-martingale). Then, we will use the Doob's maximal inequality to  upper bound the RHS in Eq.~\eqref{eq:abracadabra00}.

Let $\mathcal{F}_{m}:=\sigma(U'_1,\dots,U'_{m})$ be the $\sigma-$field generated by $(U'_j, j \in [m])$. Since $M_t = f(U'_1,\dots,U'_t)$, we have that $M_m$ is $\mathcal{F}_m$ measurable. Note that $M_m>0$ for all $m\geq 1$ and $\mathbb{E}(M_m)=\exp(-m\psi(\lambda))\mathbb{E}[e^{\lambda S_m}]=\exp(-m\psi(\lambda))\prod_{j=1}^{m}\mathbb{E}[e^{\lambda U_j'}]=1 <\infty$. Additionally, $M_m$ follows the recursion $M_{m+1}=M_me^{\lambda U_{m+1}'}\exp(-\psi(\lambda)).$
Therefore,
$\mathbb{E}(M_{m+1}|\mathcal{F}_m)=\mathbb{E}(M_{m}|\mathcal{F}_m)\mathbb{E}(e^{\lambda U_{m+1}'})\exp(-\psi(\lambda))=M_m$. Therefore, 
$(M_m,m \geq 1)$ is a martingale adapted to the natural filtration of $(U'_m, m \geq 1)$. 

\begin{lemma}[Doob's maximal inequality, see e.g., Theorem 35.3 in~\cite{billingsley1995probability}]\label{lemma:doob}
      Let $(H_k,k \leq n)$ be a real-valued non-negative submartingale sequence adapted to the filtration $(\mathcal{F}_k, k \leq n)$. Then for every $n\geq 1$ and $\lambda>0$, we have
\begin{align}
\mathbb{P}\left[\max_{0 \leq k \leq n} H_k \geq \lambda\right]
\leq
\frac{\mathbb{E}[H_n]}{\lambda}.
\end{align}

\end{lemma}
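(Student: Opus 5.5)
Doob's maximal inequality (Lemma~\ref{lemma:doob}) for a non-negative submartingale $(H_k)_{0\le k\le n}$ is cited from Billingsley, and within the paper it is only used as a black box. If I had to prove it, I would use the standard stopping-time argument. Fix $\lambda>0$ and define $\tau:=\min\{k\le n: H_k\ge\lambda\}$, with $\tau:=n$ if no such $k$ exists. The event $A:=\{\max_{0\le k\le n}H_k\ge\lambda\}$ is exactly the event that the threshold is crossed by time $n$. On $A$ we have $H_\tau\ge\lambda$, which gives the basic estimate $\lambda\,\mathbb{P}(A)\le \mathbb{E}[H_\tau\mathbf 1_A]$.

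The key step is to show $\mathbb{E}[H_\tau\mathbf 1_A]\le \mathbb{E}[H_n\mathbf 1_A]$. I would decompose $A$ into the disjoint events $A_k:=\{H_0<\lambda,\dots,H_{k-1}<\lambda,\ H_k\ge\lambda\}$. Each $A_k$ lies in $\mathcal{F}_k$ since the process is adapted. The submartingale property $\mathbb{E}[H_n\mid\mathcal{F}_k]\ge H_k$ then gives $\mathbb{E}[H_k\mathbf 1_{A_k}]\le\mathbb{E}[H_n\mathbf 1_{A_k}]$. Summing over $k$ yields $\lambda\mathbb{P}(A)\le\sum_k\mathbb{E}[H_k\mathbf 1_{A_k}]\le\mathbb{E}[H_n\mathbf 1_A]$. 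Finally, non-negativity of $H_n$ gives $\mathbb{E}[H_n\mathbf 1_A]\le\mathbb{E}[H_n]$, which is the claim.

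The only delicate point is the conditioning step. It requires that $\mathbb{E}[H_n\mid\mathcal{F}_k]\ge H_k$ hold for every $k\le n$, not merely one step ahead. I would obtain this by iterating the one-step submartingale inequality with the tower property. Integrability of each $H_k$ is part of the submartingale definition, so all expectations are finite.

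This is exactly how the lemma will be applied to the martingale $M_m=\exp(\lambda S_m-m\psi(\lambda))$, which satisfies $\mathbb{E}M_m=1$. With threshold $e^{-b^{i-1}\alpha\psi(\lambda)}$, the lemma bounds each term by $e^{b^{i-1}\alpha\psi(\lambda)}$. Optimizing over $\lambda$ turns this into $e^{-b^{i-1}\alpha I_{ch}}$. Summing these bounds over the dyadic blocks should then give bound~B of Theorem~\ref{thm:npcpd}, presumably after telescoping with a single maximal inequality over the whole range $m>\alpha$.
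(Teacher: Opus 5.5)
Your proof is correct. It decomposes $A$ into the disjoint first-passage events $A_k\in\mathcal{F}_k$, bounds $\lambda\,\mathbb{P}(A_k)\le\mathbb{E}[H_k\mathbf{1}_{A_k}]\le\mathbb{E}[H_n\mathbf{1}_{A_k}]$ via the iterated submartingale property, and uses $H_n\ge 0$ at the end; this is the standard proof of the result the paper cites from Billingsley, and the paper gives no proof of its own, only this citation (your aside on the application is also consistent with the paper, which reaches bound~B by taking $b$ large enough that $i^{*}=1$, i.e., a single maximal inequality over $m>\alpha$).
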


Therefore, fix a $\lambda \in (0,1)$, on using Doob's maximal inequality on the non-negative submartingale $(M_m,m\geq1)$, we have
\begin{align}
    \mathbb{P}\left[\max_{m \in (b^{(i-1)} \alpha,b^{i}\alpha]}
   {M_m > \exp(-b^{(i-1)}\alpha\psi(\lambda)) }\right] &\leq \frac{\mathbb{E}[M_{\lceil{b^{i}\alpha}\rceil}]}{\exp(-b^{(i-1)}\alpha\psi(\lambda))}\\
   &=\exp(b^{(i-1)}\alpha\psi(\lambda))
\end{align}
Now, Eq.~\eqref{eq:abracadabra00} reduces to
\begin{align}
     \mathbb{P}\left[\max_{m \in (b^{(i-1)} \alpha,b^{i}\alpha]}
   {S_m> 0}\right] &\leq \inf_{\lambda \in (0,1)}\exp\left(b^{(i-1)}\alpha\psi(\lambda)\right)\\
   &=\exp\left(b^{(i-1)}\alpha\inf_{\lambda \in (0,1)}\psi(\lambda))\right)\\
   &=\exp\left(-b^{(i-1)}\alpha I_{{ch}}{(P_0,P_1)}\right)\label{eq:last}
\end{align}

On substituting Eq.~\eqref{eq:last} in Eq.~\eqref{eq:beforeazuma22}, we get that
\begin{align}
    \mathbb{P}\left[\hat{k} \not\in [k^{\star}-\alpha,k^{\star}+\alpha]\right] &\leq 2\sum_{i=1}^{i^{*}} \hspace{2mm}  {\exp\left(-{b^{(i-1)}\alpha I_{ch}{(P_0,P_1)}}\right)} 
\end{align}
Now, on optimizing over the choice of the hyperparameter $b$ gives us the final bound
\begin{align}
    \mathbb{P}\left[\hat{k} \not\in [k^{\star}-\alpha,k^{\star}+\alpha]\right] &\leq \min_{b \in \left[2,\left\lceil \frac{n-1}{\alpha} \right\rceil\right]}2\sum_{i=1}^{i^{*}} \hspace{2mm}  {\exp\left(-{b^{(i-1)}\alpha I_{ch}{(P_0,P_1)}}\right)} \\
    &=2\exp\left(-\alpha I_{ch}{(P_0,P_1)}\right)\label{eq:ichh}
\end{align}

Therefore, from Eq.~\eqref{eq:ichh} and Eq.~\eqref{eq:artist}, we have 
\begin{align}
    \beta \leq 2\min\left\{\sum^{i^{*}}_{i=1}\exp(-2^{i-1} \alpha C^2/s^2);\exp\left(-\alpha I_{ch}{(P_0,P_1)}\right)\right\}
\end{align}
where $i^{*}=\lceil\log_2\left(\frac{n-1}{\alpha}\right)\rceil$. This completes our proof.
\end{proof}
\subsection{Comparison with existing accuracy bounds}\label{app:npcomp}
It was shown in~\cite{zhang_cpd} [Theorem $10$] that
\begin{align}
    \beta \leq 8 \min\left\{\sum_{i\geq 1}\exp\left(\frac{-2^{i-2}\alpha C^2}{ s^2}\right);\sum_{i\geq 1}\exp\left(\frac{-2^{i-3}\alpha C^2_M}{C_M+8} \right)\right\}\label{eq:zhanga}
\end{align}
where $C_M = \min\left\{D_{\mathrm{KL}}\left(P_0\|\frac{P_0+P_1}{2}\right); D_{\mathrm{KL}}\left(P_1\|\frac{P_0+P_1}{2}\right)\right\}$. The upper bound in Eq.~\eqref{eq:zhanga} is meaningfully defined only under the assumption that $\alpha> \min \left\{2\ln 2 \frac{s^2}{C^2};4\ln 2 \;\frac{C_M+8}{C_M^2}\right\}$ (see the transition from Eq.(10) to Eq.(11) and Eq. (16) to Eq. (17) in~\cite{zhang_cpd}). For ease of notation, we will refer to the first term inside the minimization in Eq.~\eqref{eq:zhanga} as Bound C and the second term as Bound D.

The derivation of the bound in Eq.~\eqref{eq:zhanga} in~\cite{zhang_cpd} is based on the use of Ottaviani's inequality. However, our bound does not rely on this concentration inequality, instead we take a martingale-based approach and provide a tighter bound using maximal Azuma-Hoeffding's inequality (Lemma~\ref{lemma:azuma}). 

Therefore, our analysis does not rely on any such assumption and yields a meaningful upper bound even for significantly lower values of $\alpha$. Moreover, over the range of $\alpha$ for which all bounds are well defined, both of our bounds in Theorem~\ref{thm:npcpd} provide better accuracy guarantees than those in Eq.~\eqref{eq:zhanga}. Our bound~B, expressed in terms of the Chernoff information, is derived using a different analysis based on sub-martingales and Doob’s maximal inequality (Lemma~\ref{lemma:doob}). Although bound~B appears tighter than bound~A in our Monte Carlo simulations, we do not establish that this ordering holds uniformly over all pairs $(P_0, P_1)$ in the space of discrete distributions. We further provide an empirical comparison by plotting the bounds for several synthetic datasets in Figures~\ref{fig:theory1},~\ref{fig:theory2} and~\ref{fig:theory3}.

We fix the block size $n=2000$, the true change-point $k^{\star}=1000$ and compare the upper bounds for different choices of pre- and post-change distributions $(P_0,P_1)$. The empirical curve is computed via Monte Carlo simulations by repeatedly sampling data from $(P_0,P_1)$, estimating the change-point using Algorithm~\ref{alg:offline_cpd}, and repeating this procedure $10,000$ times. 
\vspace*{-2mm}
\begin{figure}[H]
\begin{center}
        \input{figures/np_bernoulli_1}  
                \hspace*{6mm}
        \input{figures/np_bernoulli_2}  
                \caption{Comparison of the empirical error probability $\beta$ for Algorithm~\ref{alg:offline_cpd} and theoretical {upper bounds} on $\beta$ in the non-private setting, plotted as functions of $\alpha$. We consider binary (Bernoulli) distributions. \textbf{Left:} $P_0 \sim \mathrm{Ber}(0.1)$, $P_1 \sim \mathrm{Ber}(0.4)$. \textbf{Right:} $P_0 \sim \mathrm{Ber}(0.1)$, $P_1 \sim \mathrm{Ber}(0.95)$.}
        \label{fig:theory1}
        \vspace*{4mm}
        \input{figures/np_tpoisson_1}  
                \hspace*{6mm}
        \input{figures/np_tpoisson_2} 
                \caption{Comparison of the empirical error probability $\beta$ for Algorithm~\ref{alg:offline_cpd} and theoretical {upper bounds} on $\beta$ in the non-private setting, plotted as functions of $\alpha$. We consider truncated Poisson  distributions $\mathrm{TPois}(\lambda,m)$ with truncation parameter $m=10$. \textbf{Left:} $P_0 \sim \mathrm{TPois}(1,10)$, $P_1 \sim \mathrm{TPois}(4,10)$. \textbf{Right:} $P_0 \sim \mathrm{TPois}(1,10)$, $P_1 \sim \mathrm{TPois}(10,10)$.}
        \label{fig:theory2}
             \vspace*{4mm}
         \input{figures/np_gaussian_1}        \hspace*{6mm}
        \input{figures/np_gaussian_2}  
                \caption{Comparison of the empirical error probability $\beta$ for Algorithm~\ref{alg:offline_cpd} and theoretical {upper bounds} on $\beta$ in the non-private setting, plotted as functions of $\alpha$. We consider the Gaussian distributions (continuous alphabet). Bounds A and C do not hold as $s\rightarrow \infty$.  \textbf{Left:} $P_0 \sim \mathcal{N}(0,5)$, $P_1 \sim \mathcal{N}(5,5)$. \textbf{Right:} $P_0 \sim \mathcal{N}(0,5)$, $P_1 \sim \mathcal{N}(10,5)$.}
        \label{fig:theory3}
\end{center}
\end{figure}

\section{Characterization of SDPI Coefficients}

\subsection{Proof of Theorem~\ref{thm:rdccbd}}\label{section:cc-rd-proof}
We restrict the supremum to pairs $(P_0, P_1)$ such that $P_0 \ll P_1$ i.e., $P_1(x) =0 \implies P_0(x)=0$. For $\rho \ge 1$, pairs violating absolute continuity have infinite input Rényi divergence and cannot improve the SDPI ratio. Therefore this restriction is without loss of generality.

We prove our result for the case of $\rho \in [1,\infty)$ and $\rho = \infty$, separately.\\

\noindent{\textbf{Case 1: For $\rho = \infty$}},\\

Recall from the definition of SDPI coefficient that
\begin{align*}
    \eta_{\infty}(W) := \sup_{P_0,P_1 } \frac{D_{\infty}(Q_0\|Q_1)}{D_{\infty}(P_0\|P_1)}
\end{align*}
where $D_{\infty}(P_0\|P_1):=\log(H_{\infty}(P_0\|P_1))$ and $H_{\infty}(P_0\|P_1)=\max_{x\in \mathcal{X}}\frac{P_0(x)}{P_1(x)}$.

From the definition of $\eta_{\infty}(W)$, we have that for any $\lambda \in (0,1)$,
\begin{align}
\eta_{\infty}(W) &= \sup\left\{ \lambda:\sup_{P_0,P_1}\frac{D_{\infty}(Q_0\|Q_1)}{D_{\infty}(P_0\|P_1)} \geq \lambda \right\}\\
&=\sup\left\{\lambda: \sup_{P_0,P_1}\{D_{\infty}(Q_0\|Q_1) - \lambda D_{\infty}(P_0\|P_1)\}\geq 0 \right\}\\
&=\sup\left\{\lambda: \log\left(\sup_{P_0,P_1}\frac{H_{\infty}(Q_0\|Q_1)}{H_{\infty}(P_0\|P_1)^{\lambda}}\right)\geq 0 \right\}\\
&=\sup\left\{\lambda: \sup_{P_0,P_1}\{H_{\infty}(Q_0\|Q_1)-H_{\infty}(P_0\|P_1)^{\lambda}\}\geq 0 \right\}
\end{align}
Now, let $L^{\infty}_{\lambda}(P_0,P_1):=H_{\infty}(Q_0\|Q_1)-H_{\infty}(P_0\|P_1)^{\lambda}$. Therefore,
\begin{align*}
    \eta_{\infty}(W) = \sup\left\{\lambda : \sup_{P_0,P_1 } L^{\infty}_{\lambda}(P_0,P_1)\geq 0\right\}
\end{align*}
We will now show that for every $(P_0,P_1)$, there exists a $(\hat{P_0},\hat{P_1})$ supported on the binary set, such that $L^{\infty}_{\lambda}(\hat{P_0},\hat{P_1})\geq L^{\infty}_{\lambda}(P_0,P_1)$.
Given any $P_0, P_1 \in \Delta(\mathcal{X})$, define the set $\mathcal{S}(P_0,P_1)$ as follows
\begin{align}
    \mathcal{S}(P_0,P_1):=\left\{ \hat{P_1} \in \Delta({\hat{\mathcal{X}}}) : \hat{\mathcal{X}}\subseteq \mathcal{X} \text{ and }\sum_{x \in \mathcal{X}}\frac{P_0(x)}{P_1(x)}\hat{P_1}(x) =1 \right\} \label{eq:polyset}
\end{align}
$\mathcal{S}(P_0,P_1)$ is a convex set. It is also bounded (under any $\ell_{p\geq 1}-$norm) and closed, i.e., a compact set.  Furthermore, $\mathcal{S}(P_0,P_1)$ is an intersection of a probability simplex and a hyperplane, thus its extreme points are always supported on at most two atoms.

Consider a function $g:\mathcal{S}(P_0,P_1) \rightarrow \mathbb{R}$ such that $g^{\infty}(\hat{P_1}):=L^{\infty}_{\lambda}(\frac{P_0}{P_1}\hat{P_1},\hat{P_1})$. Note that $P_1 \in \mathcal{S}(P_0,P_1)$. Therefore, we have
\begin{align}
    \max_{\hat{P_1} \in \mathcal{S}} g^{\infty}(\hat{P_1}) \geq g^{\infty}(P_1) = L^{\infty}_{\lambda}(P_0,P_1)
\end{align}
Next, we will show that $\arg\max_{\hat{P_1} \in \mathcal{S}}g^{\infty}(\hat{P_1})$ is a binary distribution (not necessarily unique) i.e., an extreme point of $\mathcal{S}$, which proves our result. 

Let $\hat{P_0} = \frac{P_0}{P_1}\hat{P}_1$. Recall that $g^{\infty}(\hat{P_1})=L^{\infty}_{\lambda}(\hat{P_0},\hat{P_1})= H_{\infty}(\hat{Q_0}\|\hat{Q_1})- H_{\infty}(\hat{P_0}\|\hat{P_1})^{\lambda}$. Thus,
\begin{align}
    g^{\infty}(\hat{P_1})&=\max_{y \in \mathcal{Y}}\frac{\sum_{x \in \mathcal{\hat{X}}}\frac{P_0(x)}{P_1(x)}\hat{P_1}(x)W(y|x)  }{\sum_{x \in \hat{\mathcal{X}}}\hat{P_1}(x)W(y|x)} - \left(\max_{x \in \hat{\mathcal{X}}}\frac{\hat{P_0}(x)}{\hat{P_1}(x)}\right)^{\lambda}\\
    & = \max_{y \in \mathcal{Y}}\frac{\sum_{x \in \mathcal{\hat{X}}}\frac{P_0(x)}{P_1(x)}\hat{P_1}(x)W(y|x)  }{\sum_{x \in \hat{\mathcal{X}}}\hat{P_1}(x)W(y|x)} - \left(\max_{x \in \hat{\mathcal{X}}}\frac{{P_0}(x)}{{P_1}(x)}\right)^{\lambda}\\
    & = \max_{y \in \mathcal{Y}}\frac{\sum_{x \in \mathcal{\hat{X}}}\frac{P_0(x)}{P_1(x)}\hat{P_1}(x)W(y|x)  }{\sum_{x \in \hat{\mathcal{X}}}\hat{P_1}(x)W(y|x)} - \max_{x \in \hat{\mathcal{X}}}\left(\frac{{P_0}(x)}{{P_1}(x)}\right)^{\lambda}\label{eq:ql}\\
    &:= \max_{y \in \mathcal{Y}} f_y(\hat{P_1}) \label{eq:fyqc}
\end{align}

To complete our proof, we will now show that for every $y \in \mathcal{Y}$, $f_y(\hat{P_1})$ is a quasi-convex function of $\hat{P_1}$. Let $\hat{P_1} \in \Delta(\mathcal{\hat{X}}_1)$ and $\hat{P_2} \in \Delta(\mathcal{\hat{X}}_2)$ be any two distributions, and let $\theta \in (0,1)$. Then, we know that $\theta\hat{P_1} + (1-\theta)\hat{P_2} \in \Delta(\mathcal{\hat{X}}_1 \text{ } \cup \text{ } \mathcal{\hat{X}}_2)$. Furthermore, let 
\begin{align}
    \tau_y:=\max\left\{ \frac{\sum_{x \in \mathcal{\hat{X}}_1}\frac{P_0(x)}{P_1(x)}\hat{P_1}(x)W(y|x)  }{\sum_{x \in \hat{\mathcal{X}}_1}\hat{P_1}(x)W(y|x)}, \frac{\sum_{x \in \mathcal{\hat{X}}_2}\frac{P_0(x)}{P_1(x)}\hat{P_2}(x)W(y|x)  }{\sum_{x \in \hat{\mathcal{X}}_2}\hat{P_2}(x)W(y|x)} \right\}
\end{align}

Therefore, 
\begin{align}
    f_y(\theta\hat{P_1} + (1-\theta)\hat{P_2})&= \frac{\sum_{x \in \mathcal{\hat{X}}_1 \text{ } \cup \text{ } \mathcal{\hat{X}}_2}\frac{P_0(x)}{P_1(x)}(\theta\hat{P_1} + (1-\theta)\hat{P_2})(x)W(y|x)  }{\sum_{x \in \mathcal{\hat{X}}_1 \text{ } \cup \text{ } \mathcal{\hat{X}}_2}(\theta\hat{P_1} + (1-\theta)\hat{P_2})(x)W(y|x)}-\max_{x \in \mathcal{\hat{X}}_1 \text{ } \cup \text{ } \mathcal{\hat{X}}_2}\left(\frac{{P_0}(x)}{{P_1}(x)}\right)^{\lambda}\\
    &=\frac{\sum_{x \in \mathcal{\hat{X}}_1 \text{ } \cup \text{ } \mathcal{\hat{X}}_2}\frac{P_0(x)}{P_1(x)}(\theta\hat{P_1} + (1-\theta)\hat{P_2})(x)W(y|x)  }{\sum_{x \in \mathcal{\hat{X}}_1 \text{ } \cup \text{ } \mathcal{\hat{X}}_2}(\theta\hat{P_1} + (1-\theta)\hat{P_2})(x)W(y|x)}\\
    &\hspace{60mm}-\max\left\{\max_{x \in \mathcal{\hat{X}}_1}\left(\frac{{P_0}(x)}{{P_1}(x)}\right)^{\lambda},\max_{x \in \mathcal{\hat{X}}_2}\left(\frac{{P_0}(x)}{{P_1}(x)}\right)^{\lambda}\right\}\\
    &=\frac{\theta\sum_{x \in \mathcal{\hat{X}}_1 \text{ } \cup \text{ } \mathcal{\hat{X}}_2}\frac{P_0(x)}{P_1(x)}\hat{P_1}(x)W(y|x) +(1-\theta)\sum_{x \in \mathcal{\hat{X}}_1 \text{ } \cup \text{ } \mathcal{\hat{X}}_2}\frac{P_0(x)}{P_1(x)}\hat{P_2}(x)W(y|x) }{\theta\sum_{x \in \mathcal{\hat{X}}_1 \text{ } \cup \text{ } \mathcal{\hat{X}}_2}\hat{P_1}(x)W(y|x) +(1-\theta)\sum_{x \in \mathcal{\hat{X}}_1 \text{ } \cup \text{ } \mathcal{\hat{X}}_2}\hat{P_2}(x)W(y|x) }\notag\\
   & \hspace{60mm}-\max\left\{\max_{x \in \mathcal{\hat{X}}_1}\left(\frac{{P_0}(x)}{{P_1}(x)}\right)^{\lambda},\max_{x \in \mathcal{\hat{X}}_2}\left(\frac{{P_0}(x)}{{P_1}(x)}\right)^{\lambda}\right\}\\
   &\leq \frac{\theta \tau_y\sum_{x \in \mathcal{\hat{X}}_1 \text{ } \cup \text{ } \mathcal{\hat{X}}_2}\hat{P_1}(x)W(y|x) +(1-\theta)\tau_y\sum_{x \in \mathcal{\hat{X}}_1 \text{ } \cup \text{ } \mathcal{\hat{X}}_2}\hat{P_2}(x)W(y|x) }{\theta\sum_{x \in \mathcal{\hat{X}}_1 \text{ } \cup \text{ } \mathcal{\hat{X}}_2}\hat{P_1}(x)W(y|x) +(1-\theta)\sum_{x \in \mathcal{\hat{X}}_1 \text{ } \cup \text{ } \mathcal{\hat{X}}_2}\hat{P_2}(x)W(y|x) }\notag\\
   & \hspace{60mm}-\max\left\{\max_{x \in \mathcal{\hat{X}}_1}\left(\frac{{P_0}(x)}{{P_1}(x)}\right)^{\lambda},\max_{x \in \mathcal{\hat{X}}_2}\left(\frac{{P_0}(x)}{{P_1}(x)}\right)^{\lambda}\right\}\\
    &= \tau_y-\max\left\{\max_{x \in \mathcal{\hat{X}}_1}\left(\frac{{P_0}(x)}{{P_1}(x)}\right)^{\lambda},\max_{x \in \mathcal{\hat{X}}_2}\left(\frac{{P_0}(x)}{{P_1}(x)}\right)^{\lambda}\right\}\\
     &\leq \max\Bigg\{ \frac{\sum_{x \in \mathcal{\hat{X}}_1}\frac{P_0(x)}{P_1(x)}\hat{P_1}(x)W(y|x)  }{\sum_{x \in \hat{\mathcal{X}}_1}\hat{P_1}(x)W(y|x)}-\max_{x \in \mathcal{\hat{X}}_1}\left(\frac{{P_0}(x)}{{P_1}(x)}\right)^{\lambda},\notag \\
     &\hspace{55mm}\frac{\sum_{x \in \mathcal{\hat{X}}_2}\frac{P_0(x)}{P_1(x)}\hat{P_2}(x)W(y|x)  }{\sum_{x \in \hat{\mathcal{X}}_2}\hat{P_2}(x)W(y|x)}-\max_{x \in \mathcal{\hat{X}}_2}\left(\frac{{P_0}(x)}{{P_1}(x)}\right)^{\lambda} \Bigg\}\\
     &=\max\left\{f_y(\hat{P_1}),f_y(\hat{P_2})\right\}
\end{align}

Thus, $f_y(\hat{P_1})$ is quasi-convex in  $\hat{P_1}$, for every $y \in \mathcal{Y}$. Furthermore, the maximum of quasi-convex functions is also quasi-convex. Thus, the function $g^{\infty}(\hat{P_1}) = \max_{y}f_y(\hat{P_1})$ is a quasi-convex function of $\hat{P_1}$.

Since, a quasi-convex function defined on a compact and convex set achieves it's maximum at an extreme point (not necessarily unique)~\cite{quasi-opt}, it directly follows that $\arg\max_{\hat{P_1} \in \mathcal{S}}g^{\infty}(\hat{P_1})$ is supported on at most two atoms (extreme points of $\mathcal{S}(P_0,P_1)$). This completes the proof for $\rho= \infty$ case.\\

\noindent\textbf{Case 2: For $\rho \in [1,\infty)$},\\

Recall that,
\begin{align*}
    \eta_{\rho}(W) := \sup_{P_0,P_1 } \frac{D_{\rho}(Q_0\|Q_1)}{D_{\rho}(P_0\|P_1)}
\end{align*}
where $D_{\rho}(P_0\|P_1):=\frac{1}{\rho-1}\log(H_{\rho}(P_0\|P_1))$ and $H_{\rho}(P_0\|P_1)=\sum_{x\in \mathcal{X}} P_0(x)^{\rho}P_1(x)^{1-\rho}$.

From the definition of $\eta_{\rho}(W)$, we have that for any $\lambda \in (0,1)$,
\begin{align}
\eta_{\rho}(W) &=\sup\left\{\lambda: \sup_{P_0,P_1}\{D_{\rho}(Q_0\|Q_1) - \lambda D_{\rho}(P_0\|P_1)\}\geq 0 \right\}\\
&= \sup\left\{\lambda: \sup_{P_0,P_1}\frac{1}{\rho-1}\log\left(\frac{H_{\rho}(Q_0\|Q_1)}{H_{\rho}(P_0\|P_1)^{\lambda}}\right)\geq 0 \right\}\\
&= \sup\left\{\lambda: \log\left(\sup_{P_0,P_1}\frac{H_{\rho}(Q_0\|Q_1)}{H_{\rho}(P_0\|P_1)^{\lambda}}\right)\geq 0 \right\}\\
&= \sup\left\{\lambda: \sup_{P_0,P_1}\{H_{\rho}(Q_0\|Q_1)-H_{\rho}(P_0\|P_1)^{\lambda}\}\geq 0 \right\}
\end{align}
Now, let $L^{\rho}_{\lambda}(P_0,P_1):=H_{\rho}(Q_0\|Q_1)-H_{\rho}(P_0\|P_1)^{\lambda}$. Therefore,
\begin{align*}
    \eta_{\rho}(W) := \sup\left\{\lambda : \sup_{P_0,P_1 } L^{\rho}_{\lambda}(P_0,P_1)\geq 0\right\}
\end{align*}

Now, similar to the previous case we will now show that for every $(P_0,P_1)$, there exists a $(\hat{P_0},\hat{P_1})$ supported on the binary set, such that $L^{\rho}_{\lambda}(\hat{P_0},\hat{P_1})\geq L^{\rho}_{\lambda}(P_0,P_1)$ for every $\rho \in [1,\infty)$.  

Let $\mathcal{S}(P_0,P_1)$ be as defined in Eq.~\eqref{eq:polyset}, and $g^{\rho}:\mathcal{S}(P_0,P_1) \rightarrow \mathbb{R}$ be such that $g^{\rho}(\hat{P_1}):=L^{\rho}_{\lambda}(\hat{P_0},\hat{P_1})$, where $\hat{P_0}:=\frac{P_0}{P_1}\hat{P_1}$. Therefore, we have
\begin{align}
    g^{\rho}(\hat{P_1}) &= \sum_{y \in \mathcal{Y}}\left(\sum_{x \in \mathcal{\hat{X}}}\frac{P_0(x)}{P_1(x)}\hat{P_1}(x)W(y|x)\right)^{\rho}\left(\sum_{x \in \mathcal{\hat{X}}}\hat{P_1}(x)W(y|x)\right)^{1-\rho} - \left( \sum_{x \in \hat{\mathcal{X}}}\left(\frac{P_0(x)}{P_1(x)}\hat{P_1}(x)\right)^{\rho}\hat{P_1}(x)^{1-\rho}\right)^{\lambda}\\
    &= \sum_{y \in \mathcal{Y}}\left(\sum_{x \in \mathcal{\hat{X}}}\frac{P_0(x)}{P_1(x)}\hat{P_1}(x)W(y|x)\right)^{\rho}\left(\sum_{x \in \mathcal{\hat{X}}}\hat{P_1}(x)W(y|x)\right)^{1-\rho} -\left( \sum_{x \in \hat{\mathcal{X}}}\left(\frac{P_0(x)}{P_1(x)}\right)^{\rho}\hat{P_1}(x)\right)^{\lambda} \label{eq:ccddd}
\end{align}
In Eq.~\eqref{eq:ccddd}, the second term is a composition of a fractional power (concave) function on an affine function of $\hat{P_1}$, therefore it is a concave map. In the first term, let $A_y(\hat{P_1}):=\left(\sum_{x \in \mathcal{\hat{X}}}\frac{P_0(x)}{P_1(x)}\hat{P_1}(x)W(y|x)\right)^{}$ and $B_{y}(\hat{P_1}):=\left(\sum_{x \in \mathcal{\hat{X}}}\hat{P_1}(x)W(y|x)\right)$. Then, the map $\hat{P_1} \rightarrow (A_y(\hat{P_1}),B_y(\hat{P_1}))$ is affine, while $(A_y(\hat{P_1}),B_y(\hat{P_1})) \rightarrow A_y(\hat{P_1})^{\rho}B_y(\hat{P_1})^{1-\rho}$ is a convex map. Therefore, the first term is a sum of the compositions of convex functions over affine functions, which is consequently convex in $\hat{P_1}$. Since, the difference of a convex and an concave function is convex, it implies that $g^{\rho}(\hat{P_1})$ is convex in $\hat{P_1}$.

Note that the maximizer of a convex function over a convex set lies on its extreme points~\cite{bazara}. The $\rho=1$ case (KL divergence) follows by a limit argument. Thus, it completes the proof of our result for $\rho \in [1,\infty)$.

\subsection{Proof of Theorem~\ref{thm:jrdccbd}}\label{section:cc-jrd-proof}
Similar to the previous analysis, we restrict the supremum to pairs $(P_0, P_1)$ such that $P_0 \ll P_1$ and $P_1 \ll P_0$. The pairs violating absolute continuity have infinite input Jeffreys-Rényi divergence and cannot improve the SDPI ratio. Therefore this restriction is without loss of generality.

We know that
\begin{align}
    \eta^{J}_{\infty}(W)&:=\sup_{P_0,P_1}\frac{D^{J}_{\infty}(Q_0,Q_1)}{D^{J}_{\infty}(P_0,P_1)}\\&=\sup_{P_0,P_1}\frac{D_{\infty}(Q_0\otimes Q_1\|Q_1 \otimes Q_0)}{D^{}_{\infty}(P_0 \otimes P_1\|P_1 \otimes P_0)}\\   &=\sup_{P_0,P_1}\frac{\max_{y,y'}\left(\log\left(\dfrac{Q_0(y)Q_1(y')}{Q_1(y)Q_0(y')}\right)\right)}{D^{}_{\infty}(P_0 \otimes P_1\|P_1 \otimes P_0)}\\ 
&=\max_{y,y'}\sup_{P_0,P_1}\frac{\log\left(\dfrac{Q_0(y)Q_1(y')}{Q_1(y)Q_0(y')}\right)}{D^{}_{\infty}(P_0 \otimes P_1\|P_1 \otimes P_0)}\label{eq:suppp}
\end{align}
Fix a $y,y' \in \mathcal{Y}$, we will show that there exists binary input distributions $(\hat{P}_0,\hat{P}_1)$ which achieve the inner supremum in Eq.~\eqref{eq:suppp}.

For the channel $W$, let $a$ and $b$ be the row vectors which denote the column of $W$ corresponding to $y$ and $y'$, respectively i.e., for every $x \in \mathcal{X}$, we have $a(x)=W(y|x)$ and $b(x)=W(y'|x)$. Therefore, $Q_k(y)=\sum_{x}P_k(x)a(x):= \langle a, P_k\rangle$ and $Q_k(y')=\sum_{x}P_k(x)b(x):= \langle b, P_k\rangle$ for $k \in [2]$. Using this, we have
\begin{align}
    \log\left(\frac{Q_0(y)Q_1(y')}{Q_1(y)Q_0(y')}\right)=\log\left(\frac{\langle a,P_0\rangle \langle b,P_1\rangle}{\langle a,P_1\rangle \langle b,P_0\rangle}\right)
\end{align}
It is interesting to observe here that the multiplicative transformation is isometric with respect to Jeffreys-Rényi divergence of order $\infty$ i.e., for any $t \in \mathbb{R}^{|\mathcal{X}|\times1}$, we have $D^{J}_{\infty}(t\circ P_0 ,t \circ P_1)=D^{J}_{\infty}(P_0,P_1)$, where $\circ$ denotes the element-wise product. Note that $t \circ P_1$ may not necessarily be a valid distribution.

Now, let $\widetilde{P}_0= b \circ P_0$ and $\widetilde{P}_1= b \circ P_1$. Therefore, we have
\begin{align}
    \log\left(\frac{\langle a,P_0\rangle \langle b,P_1\rangle}{\langle a,P_1\rangle \langle b,P_0\rangle}\right)=\log\left(\frac{\langle r,\widetilde{P}_0\rangle \langle \mathbf{1},\widetilde{P}_1\rangle}{\langle r,\widetilde{P}_1\rangle \langle \mathbf{1},\widetilde{P}_0\rangle}\right)\label{eq:alo}
\end{align}
where $r$ is a row vector defined as $r(x):=\dfrac{a(x)}{b(x)}$ and $\mathbf{1}$ denotes the unit vector. Now, let $s$ denote the likelihood ratio vector i.e., $s(x)=\dfrac{P_0(x)}{P_1(x)}=\dfrac{\widetilde{P}_0(x)}{\widetilde{P}_1(x)}$, for all $x \in \mathcal{X}$. Therefore, Eq.~\eqref{eq:alo} can be further written as
\begin{align}
\log\left(\frac{\langle r,\widetilde{P}_0\rangle \langle \mathbf{1},\widetilde{P}_1\rangle}{\langle r,\widetilde{P}_1\rangle \langle \mathbf{1},\widetilde{P}_1\rangle}\right)&=\log\left(\frac{\langle r,s \circ\widetilde{P}_1\rangle \langle \mathbf{1},\widetilde{P}_1\rangle}{\langle r,\widetilde{P}_1\rangle \langle \mathbf{1},s \circ\widetilde{P}_1\rangle}\right)\\
&=\log\left(\frac{\langle r\circ s, \widetilde{P}_1\rangle \langle \mathbf{1},\widetilde{P}_1\rangle}{\langle r,\widetilde{P}_1\rangle \langle s,\widetilde{P}_1\rangle}\right)\\
&:= F_{y,y'}(r,s,\widetilde{P}_1)
\end{align}
Consequently, Eq.~\eqref{eq:suppp} can now be written as
\begin{align}
    \eta^{J}_{\infty}(W) &= \max_{y,y'}\sup_{s \neq \text{constt.},{P}_1}\frac{F_{y,y'}(r,s,\widetilde{P}_1)}{D^{}_{\infty}(P_0 \otimes P_1\|P_1 \otimes P_0)}\\
    &=\max_{y,y'}\sup_{s \neq \text{constt.},{P}_1}\frac{F_{y,y'}(r,s,\widetilde{P}_1)}{\log\left(\dfrac{\max_{x}s(x)}{\min_x s(x)}\right)}\label{eq:wert}
\end{align}
Now, we will show that the function $F_{y,y'}$ is maximized by binary vectors $P_1$ and $s$ supported on at most two atoms.

Also, it is important to note that the function $F_{y,y'}(r,s,\widetilde{P}_1)$ is zero-order homogeneous function of $r$, $s$, and $\widetilde{P}_1$ i.e., for some scalars $\alpha$, $\beta$, and $\gamma$, we have $F_{y,y'}(\alpha r,\beta s,\gamma\widetilde{P}_1)=F_{y,y'}(r,s,\widetilde{P}_1)$. Since, $F_{y,y'}$ is zero-order homogeneous in $\widetilde{P}_1$, we can re-scale $\widetilde{P}_1 \rightarrow \widetilde{p}_1$ (where $\widetilde{p} = c \widetilde{P}_1$) such that $\sum_{x \in \mathcal{X}}\widetilde{p}(x)=1$. Consequently,
\begin{align}
     F_{y,y'}(r,s,\widetilde{P}_1) = F_{y,y'}(r,s,\widetilde{p}) = \log\left(\frac{\langle r\circ s, \widetilde{p}\rangle }{\langle r,\widetilde{p}\rangle \langle s,\widetilde{p}\rangle}\right)
\end{align}

Now, given a $s$, let 
$$m:=\min_{x \in \mathcal{X}}s(x) \hspace{10mm}\text{ and }\hspace{10mm}M:=\max_{x \in \mathcal{X}}s(x)$$

Thus, the denominator in Eq.~\eqref{eq:wert} is a constant and equal to $\log(M/m)$. Given a $P_1$ (and equivalently $\widetilde{p}$), note that $F_{y,y'}(r,s,\widetilde{p})$ is the composition of the logarithmic function with a linear-fractional function in $s$, thus, a quasi-convex function i.e.,
\begin{align}
  F_{y,y'}(r,s,\widetilde{p}) &= \log\left(\frac{\left(\sum_{x \in \mathcal{X}}r(x)s(x)\widetilde{p}(x)\right)}{\left(\sum_{x \in \mathcal{X}}r(x)\widetilde{p}(x)\right)\left(\sum_{x \in \mathcal{X}}s(x)\widetilde{p}(x)\right)}\right)  \\
  &=\log\left(\frac{\mathbb{E}_{\widetilde{p}}[rs]}{\mathbb{E}_{\widetilde{p}}[r]\mathbb{E}_{\widetilde{p}}[s]} \right)
\end{align}
Thus, the maximizer $s^{*}$ of $F_{y,y'}(r,s,\widetilde{p})$ lies at the extreme point of the convex and compact set $[m,M]^{|\mathcal{X}|}$ i.e., $s^{*}(x)=\{m,M\}$, for every $x \in \mathcal{X}$. 

Therefore, w.l.o.g we can assume that $s$ is binary-valued such that every coordinate takes values from the set $\{m,M\}$. Let $\mathcal{A}:=\{x \in \mathcal{X}: s(x)=M\}$ and $\mathcal{A}^c:=\{x \in \mathcal{X}: s(x)=m\}$. Let 
$$\alpha :=\widetilde{p}(A)= \sum_{x \in \mathcal{A}}\widetilde{p}(x)$$
Furthermore, let the conditional expectations be denoted as
$$\mu_1:=\mathbb{E}_{\widetilde{p}}[r(X)|X \in \mathcal{A}]\hspace{10mm}\text{ and }\hspace{10mm}\mu_2:=\mathbb{E}_{\widetilde{p}}[r(X)|X \in \mathcal{A}^c]$$
where 
$\mu_1,\mu_2 \geq0$ since $r(x)\geq 0$, for all $x \in \mathcal{X}$. Thus,
\begin{align}
 F_{y,y'}(r,s,\widetilde{p})   &=\log\left(\frac{\alpha M\mu_1 +(1-\alpha)m\mu_2}{(\alpha \mu1+(1-\alpha)\mu_2)(\alpha M+(1-\alpha)m)} \right)
\end{align}
For $M>m$, the function $ F_{y,y'}(r,s,\widetilde{p})$ is monotonically increasing in $\mu_1$ and monotonically decreasing in $\mu_2$. This monotonicity allows us to force extreme (two-point) solution.

Since, $\mu_1 \in [\min_{x \in \mathcal{A}}r(x),\max_{x \in \mathcal{A}}r(x)]$ and $\mu_2 \in [\min_{x \in \mathcal{A}^c}r(x),\max_{x \in \mathcal{A}^c}r(x)]$, $ F_{y,y'}(r,s,\widetilde{p})$ can be upper bounded by choosing $\mu^{*}_1 =\max_{x \in \mathcal{A}}r(x)$ and $\mu^{*}_2 =\min_{x \in \mathcal{A}^c}r(x)$. Now, we will show that there exists a binary $\widetilde{p}^{*}$ such that $ F_{y,y'}(r,s,\widetilde{p}^{*}) \geq  F_{y,y'}(r,s,\widetilde{p})$.

Let $x_1 = \arg\max_{x \in \mathcal{A}}r(x)$ and  $x_2 = \arg\min_{x \in \mathcal{A}^c}r(x)$. Define $\widetilde{p}^{*}$ supported on the set $\{x_1,x_2\}$ such that $\widetilde{p}^{*}(x_1)=\alpha$ and $\widetilde{p}^{*}(x_2)=1-\alpha$. We have that $r(x_1)=\mu^{*}_1$ and $r(x_2)=\mu^{*}_2$. Consequently,  $F_{y,y'}(r,s,\widetilde{p}^{*})$ is at least $F_{y,y'}(r,s,\widetilde{p})$ for any other $\widetilde{p}$ with same $\alpha \in [0,1]$.
Thus, we have shown that for a binary-valued $s$, there exists an optimizer $\widetilde{p}^{*}$ (and therefore $\widetilde{P}^{*}_1$) supported on at most two atoms. Recall from the definition of $\widetilde{P}^{*}_1 = b \circ P_1$, thus if ${\widetilde{P}^{*}_1}$ is supported on $\{x_1,x_2\}$, then $P_1$ is also supported on $\{x_1,x_2\}$. Finally, since $P_0 = s \circ P_1$ and $s(x)=\{m,M\}$ for all $x$, we have that $P_0$ is also supported on the binary set $\{x_1,x_2\}$. 

\emph{Note:} 
If $b(x)=0$ for some $x$, define, for $\varepsilon>0$,
\[
b_\varepsilon(x) := b(x) + \varepsilon .
\]
Then $b_\varepsilon(x) > 0$ for all $x\in\mathcal{X}$, so we may define
\[
r_\varepsilon(x) := \frac{a(x)}{b_\varepsilon(x)},
\qquad
\widetilde{P}_{k,\varepsilon} := b_\varepsilon \circ P_k,\quad k\in\{0,1\},
\]
We first prove the binary-input claim for the perturbed objective obtained by replacing $b$ with $b_\varepsilon$ (equivalently, $r$ with $r_\varepsilon$), and then let $\varepsilon \downarrow 0$.
Since $\mathcal{X}$ is finite and the resulting expressions are continuous in $\varepsilon$ on the feasible set where denominators are positive, the optimal value converges as $\varepsilon \downarrow 0$, and a limiting optimizing pair remains supported on at most two points.

This completes our proof.

\subsection{Proof of Theorem~\ref{thm:rdccbd-value}}\label{section:cc-rd-value-proof}
From Theorem~\ref{thm:rdccbd}, we already know that the input distributions $P_0$ and $P_1$ supported on at most binary atoms achieve the SDPI coefficient.

Let $x_1$, $x_2 \in \mathcal{X}$ be such that $p_k(x_1), p_k(x_2)>0$ for $k \in \{0,1\}$. Thus, 
\begin{align}
    P_0 = [0, \dots,p_0(x_1),\dots,p_0(x_2),\dots,0] \notag\\
    P_1 = [0, \dots,p_1(x_1),\dots,p_1(x_2),\dots,0]
\end{align}
For a $q-$ary symmetric channel $W$, we have
 \[
W =
\begin{bmatrix}
v & u &  \cdots & u \\
u & v& \cdots & u \\
\vdots & \vdots & \ddots & \vdots \\
u  & \cdots & \cdots & v
\end{bmatrix}_{q\times q}
\]
where $v=1-(q-1)u$. Therefore, the corresponding output distributions $Q_0$ and $Q_1$ through this Markov kernel $W$ are
\begin{align}
    Q_0 = [u, \dots,p_0(x_1)v+p_0(x_2)u,\dots,p_0(x_2)v+p_0(x_1)u,\dots,u] \notag\\
    Q_1 = [u, \dots,p_1(x_1)v+p_1(x_2)u,\dots,p_1(x_2)v+p_1(x_1)u,\dots,u]
\end{align}
We know that
\begin{align}
    \eta_{\infty}(W) &= \frac{D_{\infty}(Q_0\|Q_1)}{D_{\infty}(P_0\|P_1)}
    =\frac{\log\left(\max\left(\frac{Q_0(x_1)}{Q_1(x_1)},\frac{Q_0(x_2)}{Q_1(x_2)},1\right)\right)}{\log\left(\max\left(\frac{P_0(x_1)}{P_1(x_1)},\frac{P_0(x_2)}{P_1(x_2)},1\right)\right)}
\end{align}

For the rest of the analysis, with slight abuse of notation, let us denote $p_1(x_1):=p_1$, $p_0(x_1)=p_0$, $Q_0(x_1) = p_0v+(1-p_0)u: = q_0$, and $Q_1(x_1) = p_1v+(1-p_1)u: = q_1$. Further, let $q_k = p_k(v-u)+u:=f(p_k)$, for $k \in \{0,1\}$.

\noindent \textit{Case} $1$, $v>u$ : Without loss of generality assume that $p_0> p_1$. Since $f(p)$ is increasing in $p$ for $v>u$, we have that $q_0 > q_1$. Thus,
\begin{align}
    \eta_{\infty}(W)
    =\frac{\log\left(\frac{q_0}{q_1}\right)}{\log\left(\frac{p_0}{p_1}\right)}
    &=\frac{\log(f(p_0))-\log(f(p_1))}{\log(p_0)-\log(p_1)}\\
    &=\frac{\int_{p_1}^{p_0}d\log(f(p))}{\int_{p_1}^{p_0}d\log(p)}\label{eq:aasd}
\end{align}
Let $R(p)= \dfrac{\frac{d}{dp}\log(f(p))}{\frac{d}{dp}\log(p)}$. Therefore,
\begin{align}
    R(p) = \frac{p(v-u)}{p(v-u)+u}
\end{align}
Note that for $p \in [0,1]$, $R(p)$ is increasing in $p$. Consequently, going back to Eq.~\eqref{eq:aasd}, we have that 
\begin{align}
      \eta_{\infty}(W)
    &=  \frac{\int_{p_1}^{p_0}R(p)d\log(p)}{\int_{p_1}^{p_0}d\log(p)} \\
    &\leq \frac{\int_{p_1}^{p_0}R(1)d\log(p)}{\int_{p_1}^{p_0}d\log(p)} \\
    &= R(1) \\
    &= \frac{v-u}{v}
\end{align}
The upper bound is approached as $p_0,p_1 \uparrow 1$ i.e., yielding almost degenerate probability distributions $P_0$ and $P_1$. This completes our proof for the case $v>u$.

\noindent \textit{Case} $2$, $v<u$ : Without loss of generality, assume $p_0>p_1$. 
Let $g(p):=v+p(u-v)$. Since $u>v$, $g(p)$ is increasing in $p$, and hence
$Q_0(x_2)=g(p_0)>g(p_1)=Q_1(x_2)$. Therefore,
\begin{align}
D_\infty(Q_0\|Q_1)=\log\frac{Q_0(x_2)}{Q_1(x_2)}.
\end{align}
Now, we have
\begin{align}
    \eta_{\infty}(W)
    =\frac{\log\left(\frac{Q_0(x_2)}{Q_1(x_2)}\right)}{\log\left(\frac{p_0}{p_1}\right)}
    &=\frac{\log(v+p_0(u-v))-\log(v+p_1(u-v))}{\log(p_0)-\log(p_1)}\\
    &=\frac{\int_{p_1}^{p_0}d\log(g(p))}{\int_{p_1}^{p_0}d\log(p)}\label{eq:aasd1}
\end{align}
Let $R(p)= \dfrac{\frac{d}{dp}\log(g(p))}{\frac{d}{dp}\log(p)}$. Therefore,
\begin{align}
    R(p) = \frac{p(u-v)}{p(u-v)+v}
\end{align}
Note that for $p \in [0,1]$, $R(p)$ is increasing in $p$. Consequently, going back to Eq.~\eqref{eq:aasd1}, we have that 
\begin{align}
      \eta_{\infty}(W)
    &=  \frac{\int_{p_1}^{p_0}R(p)d\log(p)}{\int_{p_1}^{p_0}d\log(p)} \\
    &\leq \frac{\int_{p_1}^{p_0}R(1)d\log(p)}{\int_{p_1}^{p_0}d\log(p)} \\
    &= R(1) \\
    &= \frac{u-v}{u}
\end{align}
The upper bound is approached as $p_0,p_1 \uparrow 1$ i.e., by almost degenerate binary input distributions. This completes the proof for the case $v<u$.

\subsection{Proof of Theorem~\ref{thm:jrdccbd-value}}\label{section:cc-jrd-value-proof}
Let $P_0$, $P_1$, $W$, $Q_0$, $Q_1$ be as in the proof of Theorem~\ref{thm:rdccbd-value} (Appendix~\ref{section:cc-rd-value-proof}). Then, we have that
\begin{align}
         \eta^{J}_{\rho}(W) &= \frac{D_{\rho}(Q_0\|Q_1) + D_{\rho}(Q_1\|Q_0)}{D_{\rho}(P_0\|P_1)+D_{\rho}(P_1\|P_0)}\\
    &=\frac{\log\left(\max\left(\frac{Q_0(x_1)}{Q_1(x_1)},\frac{Q_0(x_2)}{Q_1(x_2)},1\right)\right)+\log\left(\max\left(\frac{Q_1(x_1)}{Q_0(x_1)},\frac{Q_1(x_2)}{Q_0(x_2)},1\right)\right)}{\log\left(\max\left(\frac{P_0(x_1)}{P_1(x_1)},\frac{P_0(x_2)}{P_1(x_2)},1\right)\right)+\log\left(\max\left(\frac{P_1(x_1)}{P_0(x_1)},\frac{P_1(x_2)}{P_0(x_2)},1\right)\right)}
\end{align}
For the rest of the analysis, let $p_0:=P_0(x_1)$ and $p_1:=P_1(x_1)$, and define
\[
f(p):=u+p(v-u).
\]
Then $Q_k(x_1)=f(p_k)$ and $Q_k(x_2)=f(1-p_k)$ for $k\in\{0,1\}$. Similar to the previous case, we split the analysis in two cases, we analyze the first case $(v>u)$ in detail.

\noindent \textit{Case} $1$, $v>u$ : Without loss of generality assume that $p_0> p_1$. Since $f$ is increasing and $p_0>p_1$, we have $Q_0(x_1)=f(p_0)>f(p_1)=Q_1(x_1)$
and $Q_1(x_2)=f(1-p_1)>f(1-p_0)=Q_0(x_2)$.

 Thus,
\begin{align}
    \eta^{J}_{\infty}(W)
    &=\frac{\log\!\left(\frac{Q_0(x_1)}{Q_1(x_1)}\right)+\log\!\left(\frac{Q_1(x_2)}{Q_0(x_2)}\right)}
{\log\!\left(\frac{p_0}{p_1}\right)+\log\!\left(\frac{1-p_1}{1-p_0}\right)}\\
     &=\frac{\log\left(\frac{f(p_0)}{f(1-p_0)}\right)-\log\left(\frac{f(p_1)}{f(1-p_1)}\right)}{\log\left(\frac{p_0}{1-p_0}\right)-\log\left(\frac{p_1}{1-p_1}\right)}
\end{align}
Define, for any $p \in [0,1]$, the functions $g(p)=\log\left(\frac{p}{1-p}\right)$ and $h(p)=\log\left(\frac{f(p)}{f(1-p)}\right)$. Then, 
\begin{align}
 \eta^{J}_{\infty}(W)
    &=\frac{h(p_0)-h(p_1)}{g(p_0)-g(p_1)} \\
    &=\frac{\int_{p_1}^{p_0}h'(p)dp}{\int_{p_1}^{p_0}g'(p)dp} \label{eq:tcb}
\end{align}
Now, we will prove an upper bound on the ratio of derivatives of $h(p)$ and $g(p)$ and then substitute it back in Eq.~\eqref{eq:tcb} to prove our result. From the definition of $h(p)$ and $g(p)$, we have that
\begin{align}
    \frac{h'(p)}{g'(p)} &= \frac{\frac{v^2-u^2}{f(p)f(1-p)}}{\frac{1}{p(1-p)}}\\
    &=(v^2-u^2)\frac{p(1-p)}{f(p)f(1-p)} := (v^2-u^2)R(p)
\end{align}
Note that the function $R(p)$ is continuous in $p$, $R(0)=R(1)=0$, and symmetric around $p=1/2$. Moreover, it is monotonically increasing in $p \in [0,1/2]$.
Therefore,
\begin{align}
     \frac{h'(p)}{g'(p)} &\leq (v^2-u^2)R\left(\frac{1}{2}\right)\\
     &=(v^2-u^2)\frac{1}{(v+u)^2} = \frac{v-u}{v+u}
\end{align}
Going back to Eq.~\eqref{eq:tcb}, we have that
\begin{align}
 \eta^{J}_{\infty}(W)
    &\leq\frac{\int_{p_1}^{p_0}(\frac{v-u}{v+u})g'(p)dp}{\int_{p_1}^{p_0}g'(p)dp} = \frac{v-u}{v+u} \label{eq:awe}
\end{align}

The upper bound is approached as $p_0,p_1\to \tfrac12$ with $p_0\neq p_1$, i.e., by binary input distributions that become arbitrarily close to uniform. Similar to the above analysis, it can be shown for $u>v$ that $\eta^{J}_{\infty}(W) = \dfrac{u-v}{u+v}$. This completes our proof for Theorem~\ref{thm:jrdccbd-value}.
\section{Private Change-Point Detection}
\subsection{Proof of Theorem~\ref{thm:jrdub}}\label{section:jrdub-proof}

Recall that,

\begin{align}
    s :&= \max_{x \in \mathcal{X}}\log\frac{P_1(x)}{P_0(x)} - \min_{x \in \mathcal{X}}\log\frac{P_1(x)}{P_0(x)}\\
    &= D_{\infty}(P_1\|P_0) + D_{\infty}(P_0\|P_1)\\ 
    &= D^{J}_{\infty}(P_0,P_1).
\end{align}

For our proof, we establish upper and lower bounds on $\log\frac{Q_1(y)}{Q_0(y)}$ for any $\varepsilon$-LDP mechanism $W_{Y|X}$. 
In the following, we use that for all $x,x' \in \mathcal{X}, y \in \mathcal{Y}$,
\begin{align}
    e^{-\varepsilon} \leq \frac{W_{Y|X}(y|x)}{W_{Y|X}(y|x')} \leq e^{\varepsilon}.
\end{align}

\textit{Upper bound:} First, we note that for all $x \in \mathcal{X}, y \in \mathcal{Y}$,
\begin{align}
    \frac{W_{Y|X}(y|x)}{\min_{x \in \mathcal{X}} W_{Y|X}(y|x')} &\leq e^{\varepsilon} \\
    \implies W_{Y|X}(y|x) &\leq e^{\varepsilon} \min_{x' \in \mathcal{X}} W_{Y|X}(y|x').
\end{align}
Then, we have
\begin{align}
    \log\frac{Q_1(y)}{Q_0(y)} & = \log\frac{\sum_{x \in \mathcal{X}} P_0(x) W_{Y|X}(y|x)} {\sum_{x \in \mathcal{X}} P_1(x) W_{Y|X}(y|x)} \\
    & \leq \log\frac{\sum_{x \in \mathcal{X}} P_0(x) W_{Y|X}(y|x)}{\min_{x' \in \mathcal{X}} W_{Y|X}(y|x')} \\
    & \leq \log\frac{\sum_{x \in \mathcal{X}} e^{\varepsilon} P_0(x) \min_{x' \in \mathcal{X}} W_{Y|X}(y|x') }{\min_{x' \in \mathcal{X}} W_{Y|X}(y|x')} \\
    & = \log e^{\varepsilon} = \varepsilon.
\end{align}
This shows that
\begin{align}
    \max_{y \in \mathcal{Y}} \log\frac{Q_1(y)}{Q_0(y)} \leq \varepsilon.
\end{align}

\textit{Lower bound:} Similar to the upper bound, we first see that  for all $x \in \mathcal{X}, y \in \mathcal{Y}$,
\begin{align}
    \frac{W_{Y|X}(y|x)}{\max_{x' \in \mathcal{X}} W_{Y|X}(y|x')} &\geq e^{-\varepsilon} \\
     W_{Y|X}(y|x) &\geq e^{-\varepsilon} \max_{x' \in \mathcal{X}} W_{Y|X}(y|x').
\end{align}
Then, we have
\begin{align}
    \log\frac{Q_1(y)}{Q_0(y)} & = \log\frac{\sum_{x \in \mathcal{X}} P_0(x) W_{Y|X}(y|x)} {\sum_{x \in \mathcal{X}} P_1(x) W_{Y|X}(y|x)} \\
    & \geq \log\frac{\sum_{x \in \mathcal{X}} P_0(x) W_{Y|X}(y|x)}{\max_{x' \in \mathcal{X}} W_{Y|X}(y|x')} \\
    & \geq \log\frac{\sum_{x \in \mathcal{X}} e^{-\varepsilon} P_0(x) \max_{x' \in \mathcal{X}} W_{Y|X}(y|x') }{\max_{x' \in \mathcal{X}} W_{Y|X}(y|x')} \\
    & = \log e^{-\varepsilon} = -\varepsilon.
\end{align}
This results in
\begin{align}
    \min_{y \in \mathcal{Y}} \log\frac{Q_1(y)}{Q_0(y)} \geq -\varepsilon.
\end{align}

Therefore,
\begin{align}
    D_{\infty}(P_1\|P_0) + D_{\infty}(P_0\|P_1) &= \max_{x \in \mathcal{X}}\log\frac{P_1(x)}{P_0(x)} - \min_{x \in \mathcal{X}}\log\frac{P_1(x)}{P_0(x)} \\
    &\leq 2\varepsilon.
\end{align}

\subsection{Proof of Theorem~\ref{thm:high_eps_div}}\label{app:high_eps_div}
\begin{proof}
Our proof is inspired by the techniques used in~\cite{kairouz2014extremal}. We are interested in the $f$-divergences (see Definition~\ref{def:fd}) $D_{f_\lambda}(Q_0\|Q_1)$ with $f_\lambda(t) =  1 - t^\lambda$, for $\lambda \in (0,1)$. These can be decomposed into a sum of sublinear functions:
\begin{align}
    D_{f_\lambda}(Q_0\|Q_1) = \sum_{y \in \mathcal{Y}} \mu^\lambda(y),
\end{align}
where
\begin{align}
    \mu^\lambda(y) = \sum_{x \in \mathcal{X}} P_1(x) W_{Y|X}(y|x) - \left( \sum_{x \in \mathcal{X}} P_0(x) W_{Y|X}(y|x) \right)^\lambda \left( \sum_{x \in \mathcal{X}} P_1(x) W_{Y|X}(y|x) \right)^{1-\lambda}.
\end{align}

From \cite{kairouz2014extremal} [Theorem 4], we have that
\begin{align}
\max_{W_{Y|X}\in\mathcal{S}_{\varepsilon}} D_{f_\lambda}(Q_{0}\|Q_{1}) = \text{maximize } & {\mu^\lambda}^{T}\theta \\
\text{subject to } & S^{(k)}\theta=\mathbb{I} \\
& \theta\ge0,
\end{align}
where $k=|\mathcal{X}|$, $\mu^\lambda_{j}=\mu^\lambda(S_{j}^{(k)})$ for $j\in\{1,...,2^{k}\}$ and $S^{(k)}$ is the $k\times2^{k}$ staircase pattern matrix. 
Since there is no duality gap in this linear program, we can equivalently solve its dual:
\begin{align}
\text{minimize } & \mathbb{I}^{T}\alpha \\
\text{subject to } & {S^{(k)}}^{T}\alpha\ge\mu^\lambda.
\end{align}
We define $T_{j}=\{x_{i}:S_{ij}^{(k)}=e^{\varepsilon}\}$ for $j\in[2^{k}]$ and set $j_{i}=\{j:T_{j}=x_{i}\}$ for $i\in[k]$. We propose the following dual solution:
\begin{align}
\alpha_{i}^{*} = \frac{1}{e^\varepsilon-1} \left(\mu^\lambda_{j_i} - \frac{1}{e^\varepsilon+k-1} \sum_{i \in [k]} \mu^\lambda_{j_i} \right),
\end{align}
for $i \in [k]$. We first calculate the objective value attained by this dual solution. We note that
\begin{align}
    \mu^\lambda_{j} &= \sum_{i \in [k]} P_1(x_i) S_{ij}^{(k)} - \left( \sum_{i \in [k]} P_0(x_i) S_{ij}^{(k)} \right)^\lambda \left( \sum_{i \in [k]} P_1(x) S_{ij}^{(k)} \right)^{1-\lambda} \\
    &= \left( P_1(T_j) (e^\varepsilon-1) + 1 \right) - \left( P_0(T_j) (e^\varepsilon-1) + 1 \right)^\lambda \left( P_1(T_j) (e^\varepsilon-1) + 1 \right)^{1-\lambda} \\
    &= (P_1(T_j) (e^\varepsilon-1) + 1) ^{1-\lambda} \left( (P_1(T_j) (e^\varepsilon-1) + 1) ^{\lambda} - \left( P_0(T_j) (e^\varepsilon-1) + 1 \right)^\lambda \right) \\
    &= g^\varepsilon_1(T_j)^{1-\lambda} \left( g^\varepsilon_1(T_j)^{\lambda} - g^\varepsilon_0(T_j)^\lambda \right),
\end{align}
where $g_\nu^\varepsilon(T) = P_\nu(T) (e^\varepsilon-1) + 1 $.
Then,
\begin{align}
    \mathbb{I}^{T}\alpha & = \sum_{i \in [k]} \alpha_{i}^{*} \\
    &= \sum_{i \in [k]} \frac{1}{e^\varepsilon-1} \left(\mu^\lambda_{j_i} - \frac{1}{e^\varepsilon+k-1} \sum_{i \in [k]} \mu^\lambda_{j_i} \right) \\
    & = \frac{1}{e^\varepsilon-1} \sum_{i \in [k]} \mu^\lambda_{j_i} - \frac{1}{e^\varepsilon-1} \frac{k}{e^\varepsilon+k-1} \sum_{i \in [k]} \mu^\lambda_{j_i} \\
    & = \frac{1}{e^\varepsilon-1} \left(1 - \frac{k}{e^\varepsilon+k-1} \right) \sum_{i \in [k]} \mu^\lambda_{j_i} \\
    & = \frac{1}{e^\varepsilon-1} \frac{e^\varepsilon-1}{e^\varepsilon+k-1} \sum_{i \in [k]} \mu^\lambda_{j_i} \\
    & = \frac{1}{e^\varepsilon+k-1} \sum_{i \in [k]} \mu^\lambda_{j_i} \\
    & = \frac{1}{e^\varepsilon+k-1} \sum_{i \in [k]} ( P_1(x_i) (e^\varepsilon-1) + 1) ^{1-\lambda} \left( \left(P_1(x_i) (e^\varepsilon-1) + 1 \right) ^{\lambda} - \left( P_0(x_i) (e^\varepsilon-1) + 1 \right)^\lambda \right) \\
    & = \frac{1}{e^\varepsilon+k-1} \sum_{i \in [k]} g^\varepsilon_1(x_i)^{1-\lambda} \left( g^\varepsilon_1(x_i)^{\lambda} - g^\varepsilon_0(x_i)^\lambda \right). \label{eq:dual_obj}
\end{align}

We argue that this dual solution is feasible for high enough $\varepsilon$. We have
\begin{align}
    g_\nu^\varepsilon(T) &= P_\nu(T) e^\varepsilon + O(1) \\
    \mu_j &= g^\varepsilon_1(T_j)^{1-\lambda} \left( g^\varepsilon_1(T_j)^{\lambda} - g^\varepsilon_0(T_j)^\lambda \right) \\
    &= (P_1(T_j) e^\varepsilon + O(1))^{1-\lambda} \left( (P_1(T_j) e^\varepsilon + O(1))^{\lambda} - (P_0(T_j) e^\varepsilon + O(1))^\lambda \right) \\
    &= (P_1(T_j) e^\varepsilon)^{1-\lambda} \left( (P_1(T_j) e^\varepsilon)^{\lambda} - (P_0(T_j) e^\varepsilon)^\lambda \right) + O(1) \\
    &= e^\varepsilon \left( P_1(T_j) -  P_1(T_j)^{1- \lambda} P_0(T_j)^\lambda \right) + O(1)
\end{align}
that are valid for large $\varepsilon$. We have that
\begin{align}
    {S^{(k)}}_{j}^{T}\alpha^{*} &= \frac{1}{e^\varepsilon-1} \sum_{i \in [k]} {S_{ij}^{(k)}}^{T} \left(\mu^\lambda_{j_i} - \frac{1}{e^\varepsilon+k-1} \sum_{i \in [k]} \mu^\lambda_{j_i} \right) \\
    &= \frac{1}{e^\varepsilon-1} \sum_{i \in [k]} {S_{ij}^{(k)}}^{T} \mu^\lambda_{j_i} - \frac{1}{e^\varepsilon-1} \frac{1}{e^\varepsilon+k-1} \sum_{i \in [k]} {S_{ij}^{(k)}}^{T} \sum_{i \in [k]} \mu^\lambda_{j_i} \\
    &= \sum_{i \in [k], x_i \in T_j} \mu^\lambda_{j_i} + \frac{1}{e^\varepsilon-1} \left( 1 - \frac{(e^\varepsilon - 1)|T_j| + k}{e^\varepsilon+k-1} \right) \sum_{i \in [k]} \mu^\lambda_{j_i} \\
    &= \sum_{i \in [k], x_i \in T_j} \mu^\lambda_{j_i} + \frac{1}{e^\varepsilon-1} \frac{(e^\varepsilon - 1)(1-|T_j|)}{e^\varepsilon+k-1} \sum_{i \in [k]} \mu^\lambda_{j_i} \\
    &= e^\varepsilon \sum_{i \in [k], x_i \in T_j} \left( P_1(x_i) -  P_1(x_i)^{1- \lambda} P_0(x_i)^\lambda \right) - \frac{e^\varepsilon(|T_j|-1)}{e^\varepsilon+k-1} \sum_{i \in [k]} \left( P_1(x_i) -  P_1(x_i)^{1- \lambda} P_0(x_i)^\lambda \right) + O(1) \\
    &= e^\varepsilon \sum_{i \in [k], x_i \in T_j} \left( P_1(x_i) -  P_1(x_i)^{1- \lambda} P_0(x_i)^\lambda \right) - O(1).
\end{align}

We calculate the following to show the feasibility of the dual solution:
\begin{align}
    {S^{(k)}}_{j}^{T}\alpha^{*} - \mu_{j} &= e^\varepsilon \left( \sum_{i \in [k], x_i \in T_j} \left( P_1(x_i) -  P_1(x_i)^{1- \lambda} P_0(x_i)^\lambda \right) - \left( P_1(T_j) -  P_1(T_j)^{1- \lambda} P_0(T_j)^\lambda \right) \right) - O(1) \\
    &= e^\varepsilon \left(\left( P_1(T_j)^{1- \lambda} P_0(T_j)^\lambda \right) - \sum_{i \in [k], x_i \in T_j} \left(  P_1(x_i)^{1- \lambda} P_0(x_i)^\lambda \right) \right) - O(1).
\end{align}
For $j\ne\{j_{1},j_{2},...,j_{k}\}$, $\left( P_1(T_j)^{1- \lambda} P_0(T_j)^\lambda \right) > \sum_{i \in [k], x_i \in T_j} \left(  P_1(x_i)^{1- \lambda} P_0(x_i)^\lambda \right)$ from Hölder's inequality~\cite{holder}. For $i \in [k]$, it is easy to see that ${S^{(k)}}_{j_i}^{T}\alpha^{*} - \mu_{j_i} = 0$. This concludes that ${S^{(k)}}_{j}^{T}\alpha^{*} - \mu_{j} \geq 0$ for all $j \in [2^k]$, given $\varepsilon>\varepsilon^*$, where $\varepsilon^*$ depends on $P_0$ and $P_1$.

We finally note that the objective value attained by the randomized response mechanism in the primal linear program matches that of our dual solution calculated in Eq.~\eqref{eq:dual_obj}:
\begin{align}
    D_{f^\lambda}(Q_0\|Q_1) &= \frac{1}{e^\varepsilon+k-1} \sum_{i \in [k]} \mu^\lambda_{j_i} \\
    &= \frac{1}{e^\varepsilon+k-1} \sum_{i \in [k]} g^\varepsilon_1(x_i)^{1-\lambda} \left( g^\varepsilon_1(x_i)^{\lambda} - g^\varepsilon_0(x_i)^\lambda \right).
\end{align}
From the duality theorem, we conclude that the randomized response mechanism is optimal for $f$-divergences with $f_t = 1 - t^\lambda$ given $\varepsilon \geq \varepsilon^*$.  We know that,
\begin{align}
    I_{ch}{(Q_0, Q_1)} &=  - \min_{\lambda \in (0,1)} \log \left( \sum_{x \in \mathcal{X}} Q_0(x)^\lambda Q_1(x)^{1-\lambda}  \right) \\
    &= - \log \left( 1 - \max_{\lambda \in (0,1)} D_{f_\lambda}(Q_0\|Q_1) \right),\label{eq:ich_fd}
\end{align}
where $f_\lambda(t) = 1 - t^\lambda$. Since, $ I_{ch}{(Q_0, Q_1)}$ is increasing in $D_{f_\lambda}(Q_0\|Q_1)$, we obtain that randomized response maximizes the Chernoff information between the induced marginal distributions among all the $\varepsilon-$LDP mechanisms. This completes our proof.
\end{proof}

\subsection{Proof of Corollary~\ref{cor:small-large-eps}}\label{app:small-large-eps}
\begin{proof}
The first result on binary mechanism holds from [Theorem $5$] in~\cite{kairouz2014extremal}. Then, we use the relation between Chernoff information and $f-$divergences as in Eq.~\eqref{eq:ich_fd}.
\begin{align}
\max_{\substack{W_{Y|X} \in \mathcal{W_\varepsilon}, \\ Q_i = P_i W_{Y|X}}} I_{ch}{(Q_0, Q_1)} &= - \log \left( 1 - \max_{\substack{W_{Y|X} \in \mathcal{W^\varepsilon}, \\ Q_i = P_i W_{Y|X}}} \max_{\lambda \in (0,1)} D_{f_\lambda}(Q_0\|Q_1) \right)
\end{align}
where $f_\lambda(t) = 1 - t^\lambda$.

We denote the set of all $\varepsilon$-LDP mechanisms with $\mathcal{W^\varepsilon}$. Let $ W^*(Z_{\tau^{*}})$ be the  binary mechanism with the quantizer $Z_{\tau^{*}}$.  Thus, for every $\lambda \in (0,1)$ we have
\begin{align}
    \argmax_{\substack{W_{Y|X} \in \mathcal{W^\varepsilon}, \\ Q_i = P_i W_{Y|X}}} D_{f_\lambda}(Q_0\|Q_1) & = W^*(Z_{\tau^{*}})
    \end{align}
    Consequently,
    \begin{align}
     \argmax_{\substack{W_{Y|X} \in \mathcal{W^\varepsilon}, \\ Q_i = P_i W_{Y|X}}} \max_{\lambda \in (0,1)} D_{f_\lambda}(Q_0\|Q_1) & = W^*(Z_{\tau^{*}})  \\
    \argmax_{\substack{W_{Y|X} \in \mathcal{W^\varepsilon}, \\ Q_i = P_i W_{Y|X}}} I_{ch}{(Q_0, Q_1)} & = W^*(Z_{\tau^{*}}).
\end{align}
 This completes our proof.
\end{proof}

\subsection{Proof of Theorem~\ref{thm:rrcpd}}\label{app:rrcpd}
\begin{proof}
    Our proof relies on the fact that the Offline $\mathrm{RR}$-CPD estimator in Algorithm~\ref{alg:offline_rrcpd} first privatizes the dataset using the randomized response (which satisfies $\varepsilon-$LDP) and then applies the GLRT (Algorithm~\ref{alg:offline_cpd}) on the privatized dataset to estimate the change-point.

    Let $(D,P_0,P_1)$ denote the dataset, pre-change distribution, and the post-change distribution before privatization via randomized response s.t. sensitivity and the minimum KL divergence are
$s = D^{J}_{\infty}(P_0,P_1)$ and $C =\min\{D_{\mathrm{KL}}(P_0\|P_1),D_{\mathrm{KL}}(P_1\|P_0)\}$. After privatization via Algorithm~\ref{alg:offline_rrcpd}, let the tuple be denoted by $(\widetilde{\mathcal{D}},Q_0,Q_1)$ with $s_1 = D^{J}_{\infty}(Q_0,Q_1)$ and  $C_1 =\min\{D_{\mathrm{KL}}(Q_0\|Q_1),D_{\mathrm{KL}}(Q_1\|Q_0)\}$. Thus, from Theorem~\ref{thm:npcpd}, we have
\begin{align}
        \beta_r \leq 2\min\left\{\sum^{i^{*}}_{i=1}\exp\left(\frac{-2^{i-1} \alpha C_1^2}{s_1^2}\right);\exp\left(-\alpha I_{ch}{(Q_0,Q_1)}\right)\right\},\label{eq:npcpdrr}
\end{align}
where $i^{*}=\lceil\log_2\left(\frac{n-1}{\alpha}\right)\rceil$.   

Let's analyze the first term inside the minimization in Eq.~\eqref{eq:npcpdrr}.    
Since $W^{r}$ is a $q-$ary symmetric channel. Thus, using our result on SDPI coefficient from Theorem~\ref{thm:jrdccbd-value}, we have that 
\begin{align}
    \eta^{J}_{\infty}(W)= \frac{e^{\varepsilon}-1}{e^{\varepsilon}+1}.
\end{align}
Then, using Corollary~\ref{cor:ldpp}, we obtain that
\begin{align}
s_1 \leq \min\left\{2\varepsilon;\left(\frac{e^{\varepsilon}-1}{e^{\varepsilon}+1}\right)s\right\}:=s_r.\label{eq:sddpi112}
\end{align}
To derive a lower bound on $C_1$, we will use Pinsker's inequality (cf. Lemma~\ref{lemma:pinsker} below). 
\begin{lemma}[{Pinsker's Inequality}~\cite{pinsker}]\label{lemma:pinsker}
   Let $P$ and $Q$ be any two probability mass functions (PMFs) on a simplex $\Delta{\mathcal{X}}$. Then, we have that 
   \begin{align}
          {d_{\mathrm{TV}}(P,Q)} \leq \sqrt{\frac{1}{2}D_{\mathrm{KL}}(P\|Q)}.
   \end{align}
\end{lemma}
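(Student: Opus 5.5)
We prove this classical inequality in two steps: first reduce it to distributions on a two-point alphabet, then verify the two-point case by elementary calculus. For the reduction, let $\mathcal{A}:=\{x\in\mathcal{X}: P(x)\ge Q(x)\}$ and consider the deterministic channel $x\mapsto \mathbf{1}\{x\in\mathcal{A}\}$. Denote by $p:=P(\mathcal{A})$ and $q:=Q(\mathcal{A})$ the Bernoulli parameters of the induced output distributions. We then need two facts.

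\begin{itemize}
\item The quantizer preserves total variation. Since $\sum_x (P(x)-Q(x))=0$, we have
\begin{align}
d_{\mathrm{TV}}(P,Q)=\sum_{x\in\mathcal{A}}\big(P(x)-Q(x)\big)=p-q .
\end{align}
\item The KL divergence satisfies the data processing inequality, so
\begin{align}
D_{\mathrm{KL}}(P\|Q)\ge d_{\mathrm{KL}}(p\|q):=p\log\frac{p}{q}+(1-p)\log\frac{1-p}{1-q}.
\end{align}
This can be justified directly by the log-sum inequality applied separately on $\mathcal{A}$ and on $\mathcal{A}^c$.
\end{itemize}

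Hence it suffices to show the binary inequality $d_{\mathrm{KL}}(p\|q)\ge 2(p-q)^2$ for all $p,q\in[0,1]$. If $d_{\mathrm{KL}}(p\|q)=\infty$ there is nothing to prove, so we assume $q\in(0,1)$ whenever $p\neq q$.

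For the binary case, fix $p$ and define
\begin{align}
g(q):=d_{\mathrm{KL}}(p\|q)-2(p-q)^2, \qquad q\in(0,1).
\end{align}
Clearly $g(p)=0$. Differentiating in $q$ gives
\begin{align}
g'(q)=-\frac{p}{q}+\frac{1-p}{1-q}+4(p-q)=(q-p)\left(\frac{1}{q(1-q)}-4\right).
\end{align}
Because $q(1-q)\le 1/4$, the factor $\frac{1}{q(1-q)}-4$ is nonnegative. Therefore $g'(q)\le 0$ for $q<p$ and $g'(q)\ge 0$ for $q>p$. So $g$ attains its minimum at $q=p$, which gives $g\ge 0$. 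The endpoint values $p\in\{0,1\}$ are handled by continuity, or by noting that the same derivative computation goes through with the convention $0\log 0=0$. Combining with the reduction, $D_{\mathrm{KL}}(P\|Q)\ge 2(p-q)^2=2\,d^2_{\mathrm{TV}}(P,Q)$, and rearranging gives the claim.

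The only step needing care is the reduction. One must check that the particular quantizer $\mathcal{A}$ loses no total variation, which holds exactly because $\mathcal{A}$ attains the supremum in the variational form $d_{\mathrm{TV}}(P,Q)=\sup_{\mathcal{B}}(P(\mathcal{B})-Q(\mathcal{B}))$. One must also invoke data processing for KL; I would cite it as standard, since it is the $\rho=1$ case of the monotonicity already used in Section~\ref{sec:sdpi}. The calculus step is routine once the factorization of $g'$ is written down.
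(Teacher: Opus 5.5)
Your proof is correct. There is no proof in the paper to compare it with: the paper states Lemma~\ref{lemma:pinsker} with a citation to~\cite{pinsker} and uses it as a black box. It needs it to get $C_1\ge 2\,d^2_{\mathrm{TV}}(Q_0,Q_1)$ and $C_{\tau^{*}}\ge 2\,d^2_{\mathrm{TV}}(Q^{\tau^{*}}_0,Q^{\tau^{*}}_1)$ in the proofs of Theorems~\ref{thm:rrcpd} and~\ref{thm:bmcpd}.

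What you wrote is the standard textbook argument, essentially the one in~\cite{cover-thomas}. You reduce to a two-point alphabet through the set $\mathcal{A}=\{x:P(x)\ge Q(x)\}$, which attains $d_{\mathrm{TV}}$, using the log-sum inequality on $\mathcal{A}$ and on $\mathcal{A}^c$. You then verify $d_{\mathrm{KL}}(p\|q)\ge 2(p-q)^2$ via $g'(q)=(q-p)\bigl(\frac{1}{q(1-q)}-4\bigr)$. The factorization, the sign argument, and your treatment of the boundary cases are all fine: $q\in\{0,1\}$ with $p\neq q$ gives infinite divergence, and $p\in\{0,1\}$ follows by continuity. Since you differentiate with natural logarithms, you get exactly the constant $\tfrac12$ under the paper's base-$e$ convention. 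Because $\mathcal{A}$ forces $p\ge q$, you only need the branch $q\le p$.

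Your reduction step is precisely the fact the paper asserts without proof in Eq.~\eqref{eq:bm3}: the $\tau=1$ quantizer $\mathcal{S}_1=\{x:P_0(x)\ge P_1(x)\}$ of the binary mechanism preserves total variation. So your argument also justifies that step. The citation buys the paper brevity; your version buys self-containment.
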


Thus, we have
\begin{align}
    C_1 \geq 2\;d^2_{\mathrm{TV}} (Q_0,Q_1)
    &= 2\;\eta^2_{\mathrm{TV}}(W^{r})\;d^2_{\mathrm{TV}} (P_0,P_1) \\
         &=2\left(\frac{e^\varepsilon-1}{e^{\varepsilon}+q -1}\right)^2d^2_{\mathrm{TV}}(P_0,P_1)
         :=C_r\label{eq:dober}
\end{align}

Now, let's analyze the second term inside the minimization in Eq.~\eqref{eq:npcpdrr}. From Corollary $1$ in~\cite{sason}, we have the following lower bound on Chernoff information in terms of the total variation distance
\begin{align}
    I_{ch}{(Q_0,Q_1)} &\geq -\frac{1}{2}\log\left(1-d^2_{\mathrm{TV}}(Q_0,Q_1)\right)\\
    &=-\frac{1}{2}\log\left(1-\left(\frac{e^\varepsilon-1}{e^{\varepsilon}+q -1}\right)^2d^2_{\mathrm{TV}}(P_0,P_1)\right),
\end{align}
where the second equality holds from the fact that $W^{r}$ is a $q-$ary symmetric channel thus Lemma~\ref{lem:dobrushin} (Eq.~\eqref{eq:dob1}) can be applied. Consequently, we have that
\begin{align}
\exp\left(-\alpha I_{ch}{(Q_0,Q_1)}\right) \leq    \left(1-\left(\frac{e^\varepsilon-1}{e^{\varepsilon}+q -1}\right)^2d^2_{\mathrm{TV}}(P_0,P_1) \right)^{\alpha/2}.\label{eq:ssdr}
\end{align} 
Substituting Eq.~\eqref{eq:sddpi112}, Eq.~\eqref{eq:dober}, and Eq.~\eqref{eq:ssdr} in Eq.~\eqref{eq:npcpdrr}, we get that
 \begin{align}
      \beta_r \leq 2\min\left\{\sum^{i^{*}}_{i=1}\exp\left(\frac{-2^{i-1} \alpha C_r^2}{s_r^2}\right);\left(1-\frac{C_r}{2}\right)^{\frac{\alpha}{2}}\right\}.
 \end{align}
 This completes our proof.
\end{proof}

\subsubsection{Dobrushin Coefficient of Symmetric Channels}\label{app:lem:dob}
\begin{lemma}\label{lem:dobrushin}
Let $|\mathcal{X}|=|\mathcal{Y}|=q$. Let $W:\mathcal{X}\to \mathcal{Y}$ be a $q$-ary symmetric channel (see Definition~\ref{def:emk}).
For any two input distributions $P_0,P_1\in\Delta(\mathcal{X})$, let $Q_0$ and $Q_1$ denote the induced output distributions, respectively. Then, we have that
\begin{align}\label{eq:dob1}
d_{\mathrm{TV}}(Q_0,Q_1) =|v-u|\;d_{\mathrm{TV}}(P_0,P_1).
\end{align}
Consequently, the Dobrushin (total variation SDPI) coefficient of $W$,
\begin{align}
\eta_{\mathrm{TV}}(W):=\sup_{P_0\neq P_1}\frac{d_{\mathrm{TV}}(Q_0,Q_1)}{d_{\mathrm{TV}}(P_0,P_1)}=|v-u|\label{eq:dob2}
\end{align}
and the supremum is achieved by every pair $P_0\neq P_1$.
\end{lemma}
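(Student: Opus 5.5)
We compute $Q_0-Q_1$ directly from the structure of the symmetric channel. Write $W=u\,\mathbf{1}\mathbf{1}^{T}+(v-u)I_q$, which is valid since every off-diagonal entry equals $u$ and every diagonal entry equals $v$. For any input distribution $P$ (a row vector), the output is then
\begin{align*}
PW = u\,(P\mathbf{1})\,\mathbf{1}^{T} + (v-u)P = u\,\mathbf{1}^{T} + (v-u)P,
\end{align*}
because $P\mathbf{1}=1$. Applying this to $P_0$ and $P_1$ and subtracting, the common term $u\,\mathbf{1}^{T}$ cancels. This leaves $Q_0-Q_1=(v-u)(P_0-P_1)$ coordinatewise.

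Taking half the $\ell_1$ norm of both sides gives $d_{\mathrm{TV}}(Q_0,Q_1)=\tfrac12\sum_y|v-u|\,|P_0(y)-P_1(y)|=|v-u|\,d_{\mathrm{TV}}(P_0,P_1)$. This is exactly Eq.~\eqref{eq:dob1}. It holds with equality for every pair, and it uses only the identification $\mathcal{X}=\mathcal{Y}$ implicit in the square transition matrix.

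For the Dobrushin coefficient, take any $P_0\neq P_1$. Then $d_{\mathrm{TV}}(P_0,P_1)>0$, so the ratio $d_{\mathrm{TV}}(Q_0,Q_1)/d_{\mathrm{TV}}(P_0,P_1)$ is identically $|v-u|$. Its supremum is therefore $|v-u|$, attained by every admissible pair, which establishes Eq.~\eqref{eq:dob2}.

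No step presents a real obstacle. The only points needing care are the rank-one decomposition of $W$, which requires $v=1-(q-1)u$ only for stochasticity and not for the identity itself, and the fact that $P\mathbf{1}=1$ is what removes the dependence on the input. The claim can be sanity-checked against the classical formula $\eta_{\mathrm{TV}}(W)=\max_{x,x'}d_{\mathrm{TV}}(W(\cdot|x),W(\cdot|x'))=\tfrac12(2|v-u|)=|v-u|$. For the randomized response mechanism this gives $\frac{e^\varepsilon-1}{e^\varepsilon+q-1}$, as used in the proof of Theorem~\ref{thm:rrcpd}.
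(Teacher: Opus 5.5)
Your proof is correct and is essentially the paper's argument: the paper computes $Q_i(y)=u+(v-u)P_i(y)$ coordinatewise, which is your decomposition $W=u\,\mathbf{1}\mathbf{1}^{T}+(v-u)I_q$ applied to $P_i$. It then subtracts the two outputs, takes half the $\ell_1$ norm to get $d_{\mathrm{TV}}(Q_0,Q_1)=|v-u|\,d_{\mathrm{TV}}(P_0,P_1)$, and reads off $\eta_{\mathrm{TV}}(W)=|v-u|$ exactly as you do.
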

\begin{proof}
The result on Dobrushin coefficient (Eq.~\eqref{eq:dob2}) has also been proved in~\cite{makur}. We provide the proof for completeness.
Without loss of generality, let's assume that $\mathcal{X}=\mathcal{Y}$. Consider any $P_0$, $P_1 \in \Delta(\mathcal{X})$, then the TV distance between $P_0$ and $P_1$ is given by
    \begin{align}
        d_{\mathrm{TV}}(P_0,P_1) = \frac{1}{2}\sum_{x \in \mathcal{X}}|P_0(x)-P_1(x)|.
    \end{align}
For all $y \in \mathcal{X}$ and $i \in \{0,1\}$, we have that
\begin{align}
    Q_i(y)= \sum_{x \in \mathcal{X}}P_i(x)W(y|x)
    &=P_i(y)v +\sum_{x:x \neq y}P_i(x)u\\
    &=P_i(y)v +(1-P_i(y))u\\
    &=u + (v-u)P_i(y).
\end{align}
Thus, we have that for every $y \in \mathcal{X}$,
\begin{align}
    Q_0(y)-Q_1(y) = (v-u)(P_0(y)-P_1(y)).\label{eq:disco}
\end{align}
Now, consider the TV distance between $Q_0$ and $Q_1$, using Eq.~\eqref{eq:disco} we have that
\begin{align}
     d_{\mathrm{TV}}(Q_0,Q_1) = \frac{1}{2}\sum_{y \in \mathcal{X}}\Big|Q_0(y)-Q_1(y)\Big|
     &= \frac{1}{2}\sum_{y \in \mathcal{X}}\Big|(v-u)(P_0(y)-P_1(y))\Big|\\
     &=|v-u|\;d_{\mathrm{TV}}(P_0,P_1).\label{eq:dobrushin}
     \end{align}
Since, the result in Eq.~\eqref{eq:dobrushin} holds for every $P_0$, $P_1 \in \Delta(\mathcal{X})$. It implies that $\eta_{\mathrm{TV}}(W)= |v-u|$. This completes our proof.
\end{proof}
\subsection{Choice of $\tau$ for Algorithm~\ref{alg:offline_bmcpd} }\label{app:tau_comp}
In this subsection, we show through empirical evaluation that choosing a quantizer
$Q_{\tau,V|Y}$ with $\tau = \tau^*$, where $\tau^*$ maximizes the Chernoff information
between the output distributions, yields the best performance for Algorithm~\ref{alg:offline_bmcpd}. Specifically, we have that $$\tau^{*} \in \arg\max_{\tau}I_{ch}{(Q^{{\tau}}_0,Q^{{{\tau}}}_1)} $$  
where $Q^{{\tau}}_0$ and $Q^{{{\tau}}}_1$ denote the induced output distributions post privatization.
For small values of the privacy parameter $\varepsilon$, i.e., 
$\varepsilon \le \varepsilon^*(P_0,P_1)$, for some $\varepsilon^*$, our simulations indicate
that selecting $\tau = 1$ performs comparably to choosing $\tau = \tau^*$.

We run Monte Carlo simulations with $n = 2000$ and $k^* = 1000$. Data points are sampled
from distributions $P_0$ and $P_1$, the change-point is estimated using Algorithm~3 for
different choices of $\tau$, and the entire procedure is repeated $10{,}000$ times to
obtain the empirical $(\alpha,\beta)$ curves in Figures~\ref{fig:tau1} and~\ref{fig:tau2}.
\begin{figure}[H]
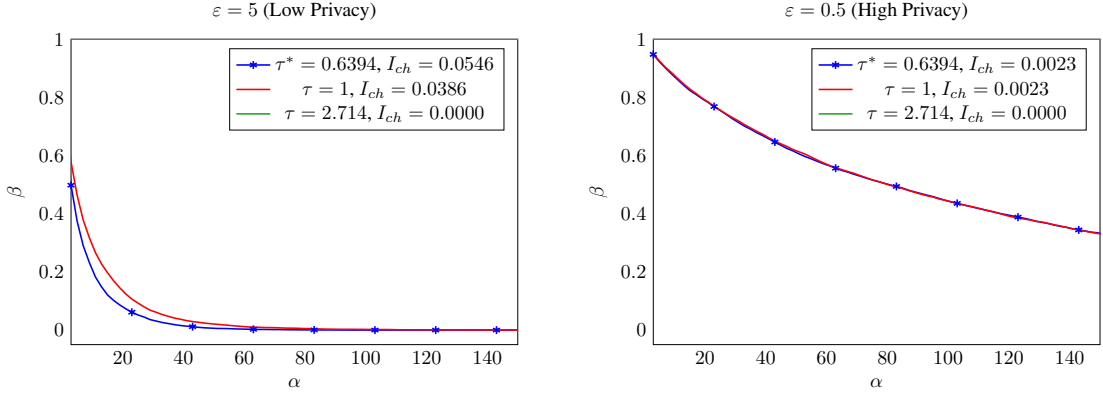

\begin{center}
        \input{figures/tau_1_he}  
        \hspace*{6mm}
        \input{figures/tau_1_le}  
\caption{Comparison of the empirical error probability $\beta$ for Algorithm~\ref{alg:offline_bmcpd} under different choices of quantizers $Q_{\tau,V|Y}$, parameterized by $\tau > 0$, plotted as functions of $\alpha$ for different values of the privacy parameter $\varepsilon > 0$. We consider the following pre- and post-change distributions: $P_0 =[0.66266061, 0.10739055, 0.22994884]$ and  $P_1 =[0.38665800, 0.38304133, 0.23030066]$. }
        \label{fig:tau1}
        \vspace{4mm}
  \end{center}
\end{figure}
\begin{figure}[H]
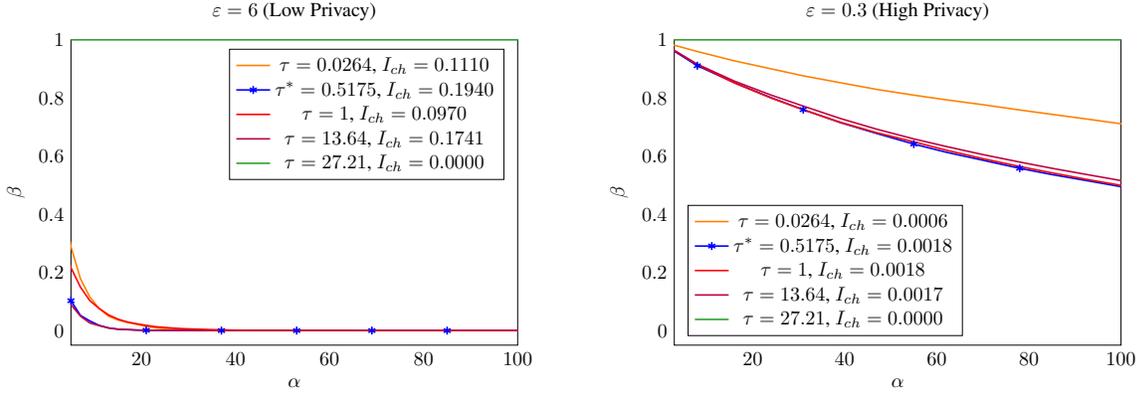

\begin{center}
        \input{figures/tau_2_he}  
          \hspace*{6mm}
        \input{figures/tau_2_le} 
               \caption{Comparison of the empirical error probability $\beta$ for Algorithm~\ref{alg:offline_bmcpd} under different choices of quantizers $Q_{\tau,V|Y}$, parameterized by $\tau > 0$, plotted as functions of $\alpha$ for different values of the privacy parameter $\varepsilon > 0$. We consider the following pre- and post-change distributions:
$P_0 = [0.00201312, 0.41095372, 0.17140827, 0.40791425, 0.00771065]$ and
$P_1 = [0.23852892, 0.01568089, 0.16016461, 0.41178628, 0.17383930]$.}
        \label{fig:tau2}
\end{center}
\end{figure}
\subsection{Proof of Theorem~\ref{thm:bmcpd}}\label{app:bmcpd}
\begin{proof}
        Our proof relies on the fact that the Offline $\mathrm{BM}$-CPD estimator in Algorithm~\ref{alg:offline_bmcpd} first partitions the alphabet $\mathcal{X}$ into $(\mathcal{S}_{\tau^{*}},\mathcal{S}_{\tau^{*}}^{c})$ via $Z_{\tau^{*}}$. Then, it privatizes the binary dataset (quantized) using the binary randomized response $W_{Y|V}$ (which satisfies $\varepsilon-$LDP) and then applies the GLRT (Algorithm~\ref{alg:offline_cpd}) on the privatized dataset to estimate the change-point.

    Let $(D,P_0,P_1)$ denote the dataset, pre-change distribution, and the post-change distribution before privatization via randomized response s.t. sensitivity and the minimum KL divergence are
$s = D^{J}_{\infty}(P_0,P_1)$ and $C =\min\{D_{\mathrm{KL}}(P_0\|P_1),D_{\mathrm{KL}}(P_1\|P_0)$. Let $(R^{\tau^{*}}_0,R^{\tau^{*}}_1)$ denote the Bernoulli distributions induced post quantization through $Z_{\tau^{*},{V|X}}$, induced by $(P_0,P_1)$, respectively. After binary randomization, let the tuple be denoted by $(\widetilde{\mathcal{D}},Q^{\tau^{*}}_0,Q^{\tau^{*}}_1)$ with $s_{\tau^{*}} = D^{J}_{\infty}(Q^{\tau^{*}}_0,Q^{\tau^{*}}_1)$ and  $C_{\tau^{*}} =\min\{D_{\mathrm{KL}}(Q^{\tau^{*}}_0\|Q^{\tau^{*}}_1),D_{\mathrm{KL}}(Q^{\tau^{*}}_1\|Q^{\tau^{*}}_0)$. Thus, from Theorem~\ref{thm:npcpd}, we have
\begin{align}
        \beta_b \leq 2\min\left\{\sum^{i^{*}}_{i=1}\exp\left(\frac{-2^{i-1} \alpha C_{\tau^{*}}^2}{s_{\tau^{*}}^2}\right);\exp\left(-\alpha I_{ch}{(Q^{\tau^{*}}_0,Q^{\tau^{*}}_1)}\right)\right\},\label{eq:npcpdbm} 
\end{align}
where $i^{*}=\lceil\log_2\left(\frac{n-1}{\alpha}\right)\rceil$.   

Let's analyze the first term inside the minimization in Eq.~\eqref{eq:npcpdbm}.    
Since $W_{Y|V}$ is a binary symmetric channel. Thus, using our result on SDPI coefficient from Theorem~\ref{thm:jrdccbd-value}, we have that 
\begin{align}
    \eta^{J}_{\infty}(W_{Y|V})= \frac{e^{\varepsilon}-1}{e^{\varepsilon}+1}=\tanh(\varepsilon/2).
\end{align}
Then, using Corollary~\ref{cor:ldpp}, we obtain that
\begin{align}
s_{\tau^{*}} \leq \min\left\{2\varepsilon;\left(\frac{e^{\varepsilon}-1}{e^{\varepsilon}+1}\right)D^{J}_{\infty}(R^{\tau^{*}}_0,R^{\tau^{*}}_1)\right\}.\label{eq:sddpi}
\end{align}
Since, the SDPI coefficient of the quantizer $Z_{\tau^{*}}$ for Jeffreys-Rényi divergence is $1$ (this holds for any other $\tau>0$ as well), we have that $D^{J}_{\infty}(R^{\tau^{*}}_0,R^{\tau^{*}}_1)=s$. Consequently, due to the composition of SDPI coefficients we have
\begin{align}
s_{\tau^{*}} \leq \min\left\{2\varepsilon\; ;\tanh(\varepsilon/2)s\right\}:=s_b.
\label{eq:sddpi22}
\end{align}
To derive a lower bound on $C_{\tau^{*}}$, we will use Pinsker's inequality (cf. Lemma~\ref{lemma:pinsker}). Thus, we have
\begin{align}
    C_{\tau^{*}} &\geq 2\;d^2_{\mathrm{TV}} (Q^{\tau^{*}}_0,Q^{\tau^{*}}_1)\\
    &= 2\;\eta^2_{\mathrm{TV}}(W_{Y|V})\;d^2_{\mathrm{TV}} (R^{\tau^{*}}_0,R^{\tau^{*}}_1) \\
         &=2\left(\frac{e^\varepsilon-1}{e^{\varepsilon} +1}\right)^2\left(\sum_{x \in \mathcal{S}_{\tau^{*}}}\Big|P_0(x)-P_1(x) \Big|\right)\\
         & = 2\;\tanh^2(\varepsilon/2)\left(\sum_{x \in \mathcal{S}_{\tau^{*}}}\Big|P_0(x)-P_1(x) \Big|\right)
         :=\tilde{C}_b\label{eq:dobe}
\end{align}

Now, let's analyze the second term inside the minimization in Eq.~\eqref{eq:npcpdbm}. From the definition of $\tau^{*}$, we have
\begin{align}
  I_{ch}{(Q^{\tau^{*}}_0,Q^{\tau^{*}}_1)} &\geq I_{ch}{(Q^{1}_0,Q^{1}_1)}\\
    &\geq -\frac{1}{2}\log\left(1-d^2_{\mathrm{TV}}(Q^1_0,Q^1_1)\right)\label{eq:bm1}\\
    &=-\frac{1}{2}\log\left(1-\left(\frac{e^\varepsilon-1}{e^{\varepsilon}+1}\right)^2d^2_{\mathrm{TV}}(R^1_0,R^1_1)\right)\label{eq:bm2}\\
    &=-\frac{1}{2}\log\left(1-\left(\frac{e^\varepsilon-1}{e^{\varepsilon}+1}\right)^2d^2_{\mathrm{TV}}(P_0,P_1)\right),\label{eq:bm3}
\end{align}
where Eq.~\eqref{eq:bm1} holds due to Corollary $1$ in~\cite{sason}, Eq.~\eqref{eq:bm2} holds from the fact that $W_{Y|V}$ is a binary symmetric channel (BSC), thus Lemma~\ref{lem:dobrushin} (Eq.~\eqref{eq:dob1}) can be applied. Finally, Eq.~\eqref{eq:bm3} holds from the fact the quantization under $\tau =1$ by creating partitions $(\mathcal{S}_{1},\mathcal{S}^{c}_{1})$ preserves the total variation distance. Consequently, we have that
\begin{align}
\exp\left(-\alpha I_{ch}{(Q^{\tau^{*}}_0,Q^{\tau^{*}}_1)}\right) &\leq\exp\left(-\alpha I_{ch}{(Q^{1}_0,Q^{1}_1)}\right)\\ &\leq \left(1-\tanh^2(\varepsilon/2)\;d^2_{\mathrm{TV}}(P_0,P_1) \right)^{\alpha/2}\\
&=\left(1-\frac{{C}_b}{2}\right)^{\alpha/2}.
\label{eq:ssd}
\end{align} 
Substituting Eq.~\eqref{eq:sddpi22}, Eq.~\eqref{eq:dobe} and Eq.~\eqref{eq:ssd} in Eq.~\eqref{eq:npcpdbm}, we get that
 \begin{align}
      \beta_b \leq 2\min\left\{\sum^{i^{*}}_{i=1}\exp\left(\frac{-2^{i-1} \alpha \tilde{C}_b^2}{s_b^2}\right);\left(1-\frac{{C}_b}{2}\right)^{\frac{\alpha}{2}}\right\}.
 \end{align}
 This completes our proof.
\end{proof}

\subsection{Comparison of Theoretical Private Bounds with Empirical Performance}\label{app:priv_comp}
We further provide an empirical comparison by plotting the bounds for several synthetic datasets in Figures~\ref{fig:priv_bern_small}-~\ref{fig:priv_tpoiss_big}.
We fix the block size $n=2000$, the true change-point $k^{\star}=1000$ and compare the upper bounds for different choices of pre- and post-change distributions $(P_0,P_1)$. The empirical curve is computed via Monte Carlo simulations by repeatedly sampling data from $(P_0,P_1)$, estimating the change-point using Algorithms ~\ref{alg:offline_rrcpd} and~\ref{alg:offline_bmcpd}, and repeating this procedure $10000$ times. 
\begin{figure}[H]
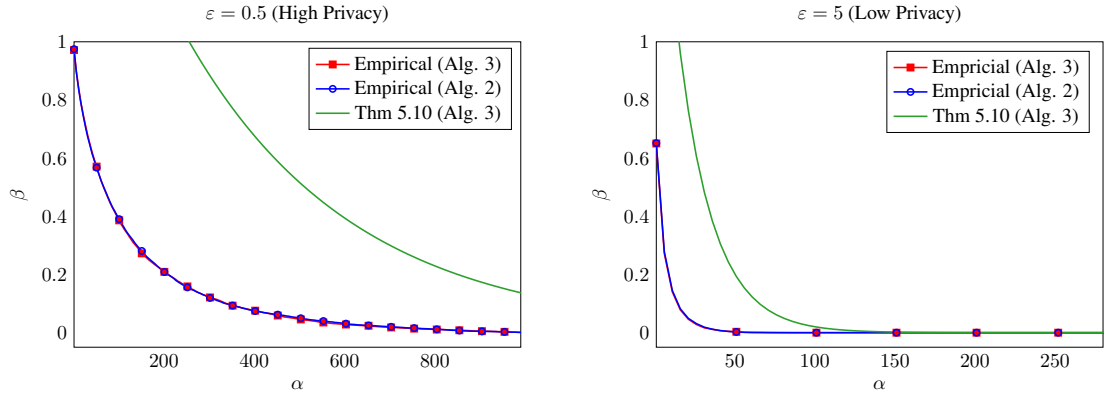

\begin{center}
        \input{figures/priv_bern_1_small}  
                \hspace*{6mm}
        \input{figures/priv_bern_2_small}  
                \caption{Comparison of the empirical error probability $\beta$ and theoretical {upper bounds} on $\beta$ in the private setting plotted as functions of $\alpha$. We consider binary (Bernoulli) distributions; $P_0 \sim \mathrm{Ber}(0.1)$, $P_1 \sim \mathrm{Ber}(0.4)$. We use Algorithms~\ref{alg:offline_rrcpd} and~\ref{alg:offline_bmcpd} with RR and Binary mechanisms. Note that Algorithms~\ref{alg:offline_rrcpd} and~\ref{alg:offline_bmcpd} reduce to the same algorithm when the pre- and post-change distributions are binary. \textbf{Left:} $\varepsilon = 0.5$. \textbf{Right:} $\varepsilon = 5$. }
        \label{fig:priv_bern_small}
\end{center}
\end{figure}
\vspace*{-4mm}
\begin{figure}[H]
\begin{center} 
        \input{figures/priv_bern_1_big}  
                \hspace*{6mm}
        \begin{tikzpicture}[scale=0.75, transform shape]
\begin{axis}[
  width=9.5cm,
  height=7cm,
  xlabel={$\alpha$},
  ylabel={$\beta$},
  xmin=1, xmax=55,
  ymin=-0.05, ymax=1,
  ymode=normal,
  legend pos=north east,
  title={$\varepsilon=5$ (Low Privacy)},
  grid=none,
  minor tick num=0,
  tick style={draw=none},
  minor tick style={draw=none},
]
\definecolor{tabblue}{HTML}{1F77B4}
\definecolor{taborange}{HTML}{FF7F0E}
\definecolor{tabgreen}{HTML}{2CA02C}
\definecolor{tabred}{HTML}{D62728}
\definecolor{tabpurple}{HTML}{9467BD}
\definecolor{tabgreen}{HTML}{2CA02C}

\addplot[red, thick, mark=square*, mark options={scale=0.8}, mark repeat=10] coordinates {
    (1.000000,0.072300)
    (2.000000,0.021300)
    (3.000000,0.010100)
    (4.000000,0.004100)
    (5.000000,0.002800)
    (6.000000,0.000900)
    (7.000000,0.000500)
    (8.000000,0.000200)
    (9.000000,0.000000)
    (10.000000,0.000000)
    (11.000000,0.000000)
    (12.000000,0.000000)
    (13.000000,0.000000)
    (14.000000,0.000000)
    (15.000000,0.000000)
    (16.000000,0.000000)
    (17.000000,0.000000)
    (18.000000,0.000000)
    (19.000000,0.000000)
    (20.000000,0.000000)
    (21.000000,0.000000)
    (22.000000,0.000000)
    (23.000000,0.000000)
    (24.000000,0.000000)
    (25.000000,0.000000)
    (26.000000,0.000000)
    (27.000000,0.000000)
    (28.000000,0.000000)
    (29.000000,0.000000)
    (30.000000,0.000000)
    (31.000000,0.000000)
    (32.000000,0.000000)
    (33.000000,0.000000)
    (34.000000,0.000000)
    (35.000000,0.000000)
    (36.000000,0.000000)
    (37.000000,0.000000)
    (38.000000,0.000000)
    (39.000000,0.000000)
    (40.000000,0.000000)
    (41.000000,0.000000)
    (42.000000,0.000000)
    (43.000000,0.000000)
    (44.000000,0.000000)
    (45.000000,0.000000)
    (46.000000,0.000000)
    (47.000000,0.000000)
    (48.000000,0.000000)
    (49.000000,0.000000)
    (50.000000,0.000000)
    (51.000000,0.000000)
    (52.000000,0.000000)
    (53.000000,0.000000)
    (54.000000,0.000000)
    (55.000000,0.000000)
    (56.000000,0.000000)
    (57.000000,0.000000)
    (58.000000,0.000000)
    (59.000000,0.000000)
    (60.000000,0.000000)
    (61.000000,0.000000)
    (62.000000,0.000000)
    (63.000000,0.000000)
    (64.000000,0.000000)
    (65.000000,0.000000)
    (66.000000,0.000000)
    (67.000000,0.000000)
    (68.000000,0.000000)
    (69.000000,0.000000)
    (70.000000,0.000000)
    (71.000000,0.000000)
    (72.000000,0.000000)
    (73.000000,0.000000)
    (74.000000,0.000000)
    (75.000000,0.000000)
    (76.000000,0.000000)
    (77.000000,0.000000)
    (78.000000,0.000000)
    (79.000000,0.000000)
    (80.000000,0.000000)
    (81.000000,0.000000)
    (82.000000,0.000000)
    (83.000000,0.000000)
    (84.000000,0.000000)
    (85.000000,0.000000)
    (86.000000,0.000000)
    (87.000000,0.000000)
    (88.000000,0.000000)
    (89.000000,0.000000)
    (90.000000,0.000000)
    (91.000000,0.000000)
    (92.000000,0.000000)
    (93.000000,0.000000)
    (94.000000,0.000000)
    (95.000000,0.000000)
    (96.000000,0.000000)
    (97.000000,0.000000)
    (98.000000,0.000000)
    (99.000000,0.000000)
    (100.000000,0.000000)
};
\addlegendentry{Empirical (Alg. 3)}

\addplot[blue, thick, mark=o, mark options={scale=0.8}, mark repeat=10] coordinates {
    (1.000000,0.081400)
    (2.000000,0.027100)
    (3.000000,0.011700)
    (4.000000,0.004600)
    (5.000000,0.001800)
    (6.000000,0.001000)
    (7.000000,0.000600)
    (8.000000,0.000400)
    (9.000000,0.000100)
    (10.000000,0.000100)
    (11.000000,0.000000)
    (12.000000,0.000000)
    (13.000000,0.000000)
    (14.000000,0.000000)
    (15.000000,0.000000)
    (16.000000,0.000000)
    (17.000000,0.000000)
    (18.000000,0.000000)
    (19.000000,0.000000)
    (20.000000,0.000000)
    (21.000000,0.000000)
    (22.000000,0.000000)
    (23.000000,0.000000)
    (24.000000,0.000000)
    (25.000000,0.000000)
    (26.000000,0.000000)
    (27.000000,0.000000)
    (28.000000,0.000000)
    (29.000000,0.000000)
    (30.000000,0.000000)
    (31.000000,0.000000)
    (32.000000,0.000000)
    (33.000000,0.000000)
    (34.000000,0.000000)
    (35.000000,0.000000)
    (36.000000,0.000000)
    (37.000000,0.000000)
    (38.000000,0.000000)
    (39.000000,0.000000)
    (40.000000,0.000000)
    (41.000000,0.000000)
    (42.000000,0.000000)
    (43.000000,0.000000)
    (44.000000,0.000000)
    (45.000000,0.000000)
    (46.000000,0.000000)
    (47.000000,0.000000)
    (48.000000,0.000000)
    (49.000000,0.000000)
    (50.000000,0.000000)
    (51.000000,0.000000)
    (52.000000,0.000000)
    (53.000000,0.000000)
    (54.000000,0.000000)
    (55.000000,0.000000)
    (56.000000,0.000000)
    (57.000000,0.000000)
    (58.000000,0.000000)
    (59.000000,0.000000)
    (60.000000,0.000000)
    (61.000000,0.000000)
    (62.000000,0.000000)
    (63.000000,0.000000)
    (64.000000,0.000000)
    (65.000000,0.000000)
    (66.000000,0.000000)
    (67.000000,0.000000)
    (68.000000,0.000000)
    (69.000000,0.000000)
    (70.000000,0.000000)
    (71.000000,0.000000)
    (72.000000,0.000000)
    (73.000000,0.000000)
    (74.000000,0.000000)
    (75.000000,0.000000)
    (76.000000,0.000000)
    (77.000000,0.000000)
    (78.000000,0.000000)
    (79.000000,0.000000)
    (80.000000,0.000000)
    (81.000000,0.000000)
    (82.000000,0.000000)
    (83.000000,0.000000)
    (84.000000,0.000000)
    (85.000000,0.000000)
    (86.000000,0.000000)
    (87.000000,0.000000)
    (88.000000,0.000000)
    (89.000000,0.000000)
    (90.000000,0.000000)
    (91.000000,0.000000)
    (92.000000,0.000000)
    (93.000000,0.000000)
    (94.000000,0.000000)
    (95.000000,0.000000)
    (96.000000,0.000000)
    (97.000000,0.000000)
    (98.000000,0.000000)
    (99.000000,0.000000)
    (100.000000,0.000000)
};
\addlegendentry{Empirical (Alg. 2)}

\addplot[tabgreen, thick] coordinates {
    (1.000000,1.089427)
    (2.000000,0.593426)
    (3.000000,0.323247)
    (4.000000,0.176077)
    (5.000000,0.095912)
    (6.000000,0.052244)
    (7.000000,0.028458)
    (8.000000,0.015502)
    (9.000000,0.008444)
    (10.000000,0.004600)
    (11.000000,0.002505)
    (12.000000,0.001365)
    (13.000000,0.000743)
    (14.000000,0.000405)
    (15.000000,0.000221)
    (16.000000,0.000120)
    (17.000000,0.000065)
    (18.000000,0.000036)
    (19.000000,0.000019)
    (20.000000,0.000011)
    (21.000000,0.000006)
    (22.000000,0.000003)
    (23.000000,0.000002)
    (24.000000,0.000001)
    (25.000000,0.000001)
    (26.000000,0.000000)
    (27.000000,0.000000)
    (28.000000,0.000000)
    (29.000000,0.000000)
    (30.000000,0.000000)
    (31.000000,0.000000)
    (32.000000,0.000000)
    (33.000000,0.000000)
    (34.000000,0.000000)
    (35.000000,0.000000)
    (36.000000,0.000000)
    (37.000000,0.000000)
    (38.000000,0.000000)
    (39.000000,0.000000)
    (40.000000,0.000000)
    (41.000000,0.000000)
    (42.000000,0.000000)
    (43.000000,0.000000)
    (44.000000,0.000000)
    (45.000000,0.000000)
    (46.000000,0.000000)
    (47.000000,0.000000)
    (48.000000,0.000000)
    (49.000000,0.000000)
    (50.000000,0.000000)
    (51.000000,0.000000)
    (52.000000,0.000000)
    (53.000000,0.000000)
    (54.000000,0.000000)
    (55.000000,0.000000)
    (56.000000,0.000000)
    (57.000000,0.000000)
    (58.000000,0.000000)
    (59.000000,0.000000)
    (60.000000,0.000000)
    (61.000000,0.000000)
    (62.000000,0.000000)
    (63.000000,0.000000)
    (64.000000,0.000000)
    (65.000000,0.000000)
    (66.000000,0.000000)
    (67.000000,0.000000)
    (68.000000,0.000000)
    (69.000000,0.000000)
    (70.000000,0.000000)
    (71.000000,0.000000)
    (72.000000,0.000000)
    (73.000000,0.000000)
    (74.000000,0.000000)
    (75.000000,0.000000)
    (76.000000,0.000000)
    (77.000000,0.000000)
    (78.000000,0.000000)
    (79.000000,0.000000)
    (80.000000,0.000000)
    (81.000000,0.000000)
    (82.000000,0.000000)
    (83.000000,0.000000)
    (84.000000,0.000000)
    (85.000000,0.000000)
    (86.000000,0.000000)
    (87.000000,0.000000)
    (88.000000,0.000000)
    (89.000000,0.000000)
    (90.000000,0.000000)
    (91.000000,0.000000)
    (92.000000,0.000000)
    (93.000000,0.000000)
    (94.000000,0.000000)
    (95.000000,0.000000)
    (96.000000,0.000000)
    (97.000000,0.000000)
    (98.000000,0.000000)
    (99.000000,0.000000)
    (100.000000,0.000000)
};
\addlegendentry{Thm 5.10 (Alg. 3)}

\end{axis}
\end{tikzpicture}  
                \caption{Comparison of the empirical error probability $\beta$ and theoretical {upper bounds} on $\beta$ in the private setting plotted as functions of $\alpha$. We consider binary (Bernoulli) distributions; $P_0 \sim \mathrm{Ber}(0.1)$, $P_1 \sim \mathrm{Ber}(0.95)$. We use Algorithms~\ref{alg:offline_rrcpd} and~\ref{alg:offline_bmcpd} with RR and Binary mechanisms. Note that Algorithms~\ref{alg:offline_rrcpd} and~\ref{alg:offline_bmcpd} reduce to the same algorithm when the pre- and post-change distributions are binary. \textbf{Left:} $\varepsilon = 0.5$. \textbf{Right:} $\varepsilon = 5$.}
        \label{fig:priv_bern_big}
\end{center}
\end{figure}
\vspace*{-4mm}
\begin{figure}[H]
\begin{center} 
        \input{figures/priv_tpoiss_1}
                \hspace*{6mm}
        \input{figures/priv_tpoiss_5}
                \caption{Comparison of the empirical error probability $\beta$ and theoretical {upper bounds} on $\beta$ in the private setting plotted as functions of $\alpha$ for Algorithm~\ref{alg:offline_bmcpd} (Offline $\mathrm{BM-}$CPD). We consider truncated Poisson distributions $\mathrm{TPois}(\lambda,m)$ with truncation parameter $m=10$. $P_0 \sim \mathrm{TPois}(1,10)$, $P_1 \sim \mathrm{TPois}(4,10)$. \textbf{Left:} $\varepsilon = 1$. \textbf{Right:} $\varepsilon = 5$.}
        \label{fig:priv_tpoiss_small_bm}
\end{center}
\end{figure}
\begin{figure}[H]
\begin{center} 
        \input{figures/priv_tpoiss_1_rr}
                \hspace*{6mm}
        \input{figures/priv_tpoiss_5_rr}
                \caption{Comparison of the empirical error probability $\beta$ and theoretical {upper bounds} on $\beta$ in the private setting plotted as functions of $\alpha$ for Algorithm~\ref{alg:offline_rrcpd} (Offline $\mathrm{RR-}$CPD). We consider truncated Poisson distributions $\mathrm{TPois}(\lambda,m)$ with truncation paramater $m=10$. $P_0 \sim \mathrm{TPois}(1,10)$, $P_1 \sim \mathrm{TPois}(4,10)$. \textbf{Left:} $\varepsilon = 1$. \textbf{Right:} $\varepsilon = 5$.}
        \label{fig:priv_tpoiss_small}
\end{center}
\end{figure}
\begin{figure}[H]
\begin{center}
        \begin{tikzpicture}[scale=0.75, transform shape]
\begin{axis}[
  width=9.9cm,
  height=7cm,
  xlabel={$\alpha$},
  ylabel={$\beta$},
  xmin=1, xmax=110,
  ymin=-0.05, ymax=1,
  ymode=normal,
  legend pos=north east,
  title={$\varepsilon=1$ (High Privacy)},
  grid=none,
  minor tick num=0,
  tick style={draw=none},
  minor tick style={draw=none},
]
\definecolor{tabblue}{HTML}{1F77B4}
\definecolor{taborange}{HTML}{FF7F0E}
\definecolor{tabgreen}{HTML}{2CA02C}
\definecolor{tabred}{HTML}{D62728}
\definecolor{tabpurple}{HTML}{9467BD}
\definecolor{tabgreen}{HTML}{2CA02C}

\addplot[red, thick, mark=square*, mark options={scale=0.8}, mark repeat=6] coordinates {
    (1.000000,0.642600)
    (2.000000,0.528100)
    (3.000000,0.443800)
    (4.000000,0.378100)
    (5.000000,0.326800)
    (6.000000,0.283500)
    (7.000000,0.246900)
    (8.000000,0.217200)
    (9.000000,0.194500)
    (10.000000,0.169000)
    (11.000000,0.149400)
    (12.000000,0.134400)
    (13.000000,0.120400)
    (14.000000,0.108700)
    (15.000000,0.098900)
    (16.000000,0.088500)
    (17.000000,0.080200)
    (18.000000,0.073500)
    (19.000000,0.065900)
    (20.000000,0.057800)
    (21.000000,0.051600)
    (22.000000,0.046600)
    (23.000000,0.042400)
    (24.000000,0.039200)
    (25.000000,0.035000)
    (26.000000,0.031700)
    (27.000000,0.029500)
    (28.000000,0.026500)
    (29.000000,0.023900)
    (30.000000,0.021500)
    (31.000000,0.019200)
    (32.000000,0.017100)
    (33.000000,0.015600)
    (34.000000,0.014100)
    (35.000000,0.013200)
    (36.000000,0.012400)
    (37.000000,0.011400)
    (38.000000,0.010000)
    (39.000000,0.009100)
    (40.000000,0.008100)
    (41.000000,0.007500)
    (42.000000,0.006500)
    (43.000000,0.006100)
    (44.000000,0.005600)
    (45.000000,0.005000)
    (46.000000,0.004700)
    (47.000000,0.004100)
    (48.000000,0.003600)
    (49.000000,0.002900)
    (50.000000,0.002500)
    (51.000000,0.002500)
    (52.000000,0.002200)
    (53.000000,0.002200)
    (54.000000,0.002000)
    (55.000000,0.001800)
    (56.000000,0.001400)
    (57.000000,0.001300)
    (58.000000,0.001100)
    (59.000000,0.001100)
    (60.000000,0.000900)
    (61.000000,0.000900)
    (62.000000,0.000700)
    (63.000000,0.000600)
    (64.000000,0.000500)
    (65.000000,0.000500)
    (66.000000,0.000500)
    (67.000000,0.000500)
    (68.000000,0.000400)
    (69.000000,0.000400)
    (70.000000,0.000400)
    (71.000000,0.000400)
    (72.000000,0.000400)
    (73.000000,0.000400)
    (74.000000,0.000400)
    (75.000000,0.000400)
    (76.000000,0.000300)
    (77.000000,0.000300)
    (78.000000,0.000300)
    (79.000000,0.000200)
    (80.000000,0.000100)
    (81.000000,0.000100)
    (82.000000,0.000100)
    (83.000000,0.000100)
    (84.000000,0.000100)
    (85.000000,0.000100)
    (86.000000,0.000100)
    (87.000000,0.000100)
    (88.000000,0.000100)
    (89.000000,0.000100)
    (90.000000,0.000100)
    (91.000000,0.000100)
    (92.000000,0.000100)
    (93.000000,0.000100)
    (94.000000,0.000100)
    (95.000000,0.000100)
    (96.000000,0.000100)
    (97.000000,0.000100)
    (98.000000,0.000100)
    (99.000000,0.000000)
    (100.000000,0.000000)
    (101.000000,0.000000)
    (102.000000,0.000000)
    (103.000000,0.000000)
    (104.000000,0.000000)
    (105.000000,0.000000)
    (106.000000,0.000000)
    (107.000000,0.000000)
    (108.000000,0.000000)
    (109.000000,0.000000)
    (110.000000,0.000000)
    (111.000000,0.000000)
    (112.000000,0.000000)
    (113.000000,0.000000)
    (114.000000,0.000000)
    (115.000000,0.000000)
    (116.000000,0.000000)
    (117.000000,0.000000)
    (118.000000,0.000000)
    (119.000000,0.000000)
    (120.000000,0.000000)
    (121.000000,0.000000)
    (122.000000,0.000000)
    (123.000000,0.000000)
    (124.000000,0.000000)
    (125.000000,0.000000)
    (126.000000,0.000000)
    (127.000000,0.000000)
    (128.000000,0.000000)
    (129.000000,0.000000)
    (130.000000,0.000000)
    (131.000000,0.000000)
    (132.000000,0.000000)
    (133.000000,0.000000)
    (134.000000,0.000000)
    (135.000000,0.000000)
    (136.000000,0.000000)
    (137.000000,0.000000)
    (138.000000,0.000000)
    (139.000000,0.000000)
    (140.000000,0.000000)
    (141.000000,0.000000)
    (142.000000,0.000000)
    (143.000000,0.000000)
    (144.000000,0.000000)
    (145.000000,0.000000)
    (146.000000,0.000000)
    (147.000000,0.000000)
    (148.000000,0.000000)
    (149.000000,0.000000)
    (150.000000,0.000000)
};
\addlegendentry{Empirical (Alg.~\ref{alg:offline_bmcpd})}

\addplot[tabgreen, thick] coordinates {
    (1.000000,1.787507)
    (2.000000,1.597591)
    (3.000000,1.427852)
    (4.000000,1.276148)
    (5.000000,1.140562)
    (6.000000,1.019381)
    (7.000000,0.911076)
    (8.000000,0.814277)
    (9.000000,0.727763)
    (10.000000,0.650441)
    (11.000000,0.581334)
    (12.000000,0.519569)
    (13.000000,0.464367)
    (14.000000,0.415029)
    (15.000000,0.370934)
    (16.000000,0.331524)
    (17.000000,0.296300)
    (18.000000,0.264820)
    (19.000000,0.236683)
    (20.000000,0.211537)
    (21.000000,0.189062)
    (22.000000,0.168975)
    (23.000000,0.151022)
    (24.000000,0.134976)
    (25.000000,0.120635)
    (26.000000,0.107818)
    (27.000000,0.096363)
    (28.000000,0.086125)
    (29.000000,0.076974)
    (30.000000,0.068796)
    (31.000000,0.061487)
    (32.000000,0.054954)
    (33.000000,0.049115)
    (34.000000,0.043897)
    (35.000000,0.039233)
    (36.000000,0.035065)
    (37.000000,0.031339)
    (38.000000,0.028010)
    (39.000000,0.025034)
    (40.000000,0.022374)
    (41.000000,0.019997)
    (42.000000,0.017872)
    (43.000000,0.015973)
    (44.000000,0.014276)
    (45.000000,0.012759)
    (46.000000,0.011404)
    (47.000000,0.010192)
    (48.000000,0.009109)
    (49.000000,0.008141)
    (50.000000,0.007276)
    (51.000000,0.006503)
    (52.000000,0.005812)
    (53.000000,0.005195)
    (54.000000,0.004643)
    (55.000000,0.004150)
    (56.000000,0.003709)
    (57.000000,0.003315)
    (58.000000,0.002963)
    (59.000000,0.002648)
    (60.000000,0.002366)
    (61.000000,0.002115)
    (62.000000,0.001890)
    (63.000000,0.001689)
    (64.000000,0.001510)
    (65.000000,0.001350)
    (66.000000,0.001206)
    (67.000000,0.001078)
    (68.000000,0.000963)
    (69.000000,0.000861)
    (70.000000,0.000770)
    (71.000000,0.000688)
    (72.000000,0.000615)
    (73.000000,0.000549)
    (74.000000,0.000491)
    (75.000000,0.000439)
    (76.000000,0.000392)
    (77.000000,0.000351)
    (78.000000,0.000313)
    (79.000000,0.000280)
    (80.000000,0.000250)
    (81.000000,0.000224)
    (82.000000,0.000200)
    (83.000000,0.000179)
    (84.000000,0.000160)
    (85.000000,0.000143)
    (86.000000,0.000128)
    (87.000000,0.000114)
    (88.000000,0.000102)
    (89.000000,0.000091)
    (90.000000,0.000081)
    (91.000000,0.000073)
    (92.000000,0.000065)
    (93.000000,0.000058)
    (94.000000,0.000052)
    (95.000000,0.000046)
    (96.000000,0.000041)
    (97.000000,0.000037)
    (98.000000,0.000033)
    (99.000000,0.000030)
    (100.000000,0.000026)
    (101.000000,0.000024)
    (102.000000,0.000021)
    (103.000000,0.000019)
    (104.000000,0.000017)
    (105.000000,0.000015)
    (106.000000,0.000013)
    (107.000000,0.000012)
    (108.000000,0.000011)
    (109.000000,0.000010)
    (110.000000,0.000009)
    (111.000000,0.000008)
    (112.000000,0.000007)
    (113.000000,0.000006)
    (114.000000,0.000005)
    (115.000000,0.000005)
    (116.000000,0.000004)
    (117.000000,0.000004)
    (118.000000,0.000004)
    (119.000000,0.000003)
    (120.000000,0.000003)
    (121.000000,0.000003)
    (122.000000,0.000002)
    (123.000000,0.000002)
    (124.000000,0.000002)
    (125.000000,0.000002)
    (126.000000,0.000001)
    (127.000000,0.000001)
    (128.000000,0.000001)
    (129.000000,0.000001)
    (130.000000,0.000001)
    (131.000000,0.000001)
    (132.000000,0.000001)
    (133.000000,0.000001)
    (134.000000,0.000001)
    (135.000000,0.000001)
    (136.000000,0.000000)
    (137.000000,0.000000)
    (138.000000,0.000000)
    (139.000000,0.000000)
    (140.000000,0.000000)
    (141.000000,0.000000)
    (142.000000,0.000000)
    (143.000000,0.000000)
    (144.000000,0.000000)
    (145.000000,0.000000)
    (146.000000,0.000000)
    (147.000000,0.000000)
    (148.000000,0.000000)
    (149.000000,0.000000)
    (150.000000,0.000000)
};
\addlegendentry{Thm~\ref{thm:bmcpd}  (Alg.~\ref{alg:offline_bmcpd})}

\end{axis}
\end{tikzpicture}
                \hspace*{6mm}
        \input{figures/priv_tpoiss_5-2}
                \caption{Comparison of the empirical error probability $\beta$ and theoretical {upper bounds} on $\beta$ in the private setting plotted as functions of $\alpha$ for Algorithm~\ref{alg:offline_bmcpd} (Offline $\mathrm{BM-}$CPD). We consider truncated Poisson distributions $\mathrm{TPois}(\lambda,m)$ with truncation parameter $m=10$. $P_0 \sim \mathrm{TPois}(1,10)$, $P_1 \sim \mathrm{TPois}(10,10)$.  \textbf{Left:} $\varepsilon = 1$. \textbf{Right:} $\varepsilon = 5$.}
        \label{fig:priv_tpoiss_big_bm}
        \end{center}
\end{figure}
\begin{figure}[H]
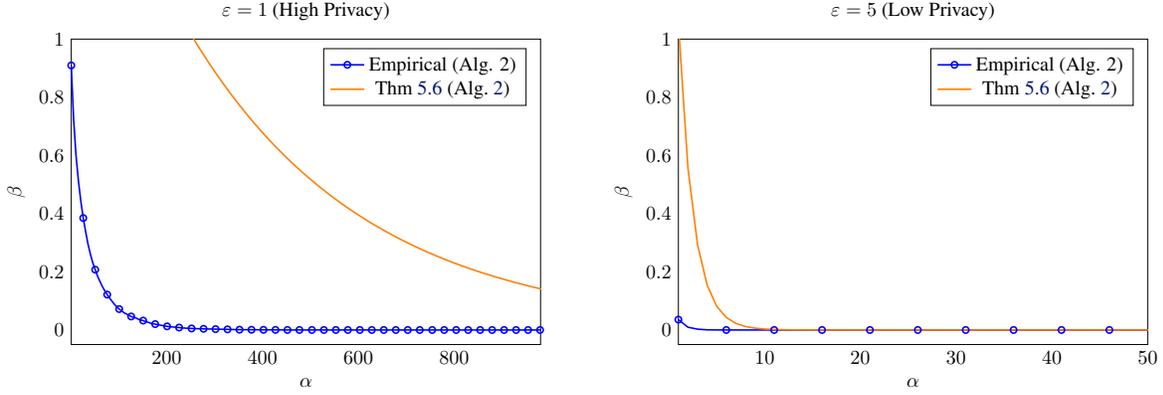

\begin{center}
            \input{figures/priv_tpoiss_1-2_rr}
                \hspace*{6mm}
        \input{figures/priv_tpoiss_5-2_rr}
                \caption{Comparison of the empirical error probability $\beta$ and theoretical {upper bounds} on $\beta$ in the private setting plotted as functions of $\alpha$ for Algorithm~\ref{alg:offline_rrcpd} (Offline $\mathrm{RR-}$CPD). We consider truncated Poisson distributions $\mathrm{TPois}(\lambda,m)$ with truncation parameter $m=10$ i.e., $P_0 \sim \mathrm{TPois}(1,10)$, $P_1 \sim \mathrm{TPois}(10,10)$. \textbf{Left:} $\varepsilon = 1$. \textbf{Right:} $\varepsilon = 5$.}
        \label{fig:priv_tpoiss_big}
\end{center}
\end{figure}
\subsection{Comparison of Empirical Performances of Algorithm~\ref{alg:offline_rrcpd} and Algorithm~\ref{alg:offline_bmcpd}}\label{app:exp3}
\begin{figure}[H]
\begin{center}        
        \begin{tikzpicture}[scale=0.75, transform shape]
\begin{axis}[
  width=9.9cm,
  height=7cm,
  xlabel={$\varepsilon$ },
  ylabel={$\beta$},
  xmin=0.5, xmax=10.0,
  ymin=0.35, ymax=1.05,
  xmode=log,
  ymode=normal,
  legend style={
  at={(0.52,0.97)}, anchor=north west,   
},
  grid=none,
  minor tick num=0,
  tick style={draw=none},
  minor tick style={draw=none},
]
\definecolor{tabblue}{HTML}{1F77B4}
\definecolor{taborange}{HTML}{FF7F0E}
\definecolor{tabgreen}{HTML}{2CA02C}
\definecolor{tabred}{HTML}{D62728}
\definecolor{tabpurple}{HTML}{9467BD}
\definecolor{tabgreen}{HTML}{2CA02C}

\addplot[tabgreen, thick, mark=asterisk, mark options={scale=1}, mark repeat=3] coordinates {
  (0.5,0.4106)
  (0.69387755102,0.4106)
  (0.887755102041,0.4106)
  (1.08163265306,0.4106)
  (1.27551020408,0.4106)
  (1.4693877551,0.4106)
  (1.66326530612,0.4106)
  (1.85714285714,0.4106)
  (2.05102040816,0.4106)
  (2.24489795918,0.4106)
  (2.4387755102,0.4106)
  (2.63265306122,0.4106)
  (2.82653061224,0.4106)
  (3.02040816327,0.4106)
  (3.21428571429,0.4106)
  (3.40816326531,0.4106)
  (3.60204081633,0.4106)
  (3.79591836735,0.4106)
  (3.98979591837,0.4106)
  (4.18367346939,0.4106)
  (4.37755102041,0.4106)
  (4.57142857143,0.4106)
  (4.76530612245,0.4106)
  (4.95918367347,0.4106)
  (5.15306122449,0.4106)
  (5.34693877551,0.4106)
  (5.54081632653,0.4106)
  (5.73469387755,0.4106)
  (5.92857142857,0.4106)
  (6.12244897959,0.4106)
  (6.31632653061,0.4106)
  (6.51020408163,0.4106)
  (6.70408163265,0.4106)
  (6.89795918367,0.4106)
  (7.09183673469,0.4106)
  (7.28571428571,0.4106)
  (7.47959183673,0.4106)
  (7.67346938776,0.4106)
  (7.86734693878,0.4106)
  (8.0612244898,0.4106)
  (8.25510204082,0.4106)
  (8.44897959184,0.4106)
  (8.64285714286,0.4106)
  (8.83673469388,0.4106)
  (9.0306122449,0.4106)
  (9.22448979592,0.4106)
  (9.41836734694,0.4106)
  (9.61224489796,0.4106)
  (9.80612244898,0.4106)
  (10,0.4106)
};
\addlegendentry{Alg.~\ref{alg:offline_cpd} (CPD)}

\addplot[blue, thick, mark=o, mark options={scale=0.8}, mark repeat=3] coordinates {
  (0.5,0.9852)
  (0.69387755102,0.9805)
  (0.887755102041,0.9712)
  (1.08163265306,0.9559)
  (1.27551020408,0.9358)
  (1.4693877551,0.913)
  (1.66326530612,0.8942)
  (1.85714285714,0.8637)
  (2.05102040816,0.833)
  (2.24489795918,0.8061)
  (2.4387755102,0.7732)
  (2.63265306122,0.7299)
  (2.82653061224,0.7016)
  (3.02040816327,0.6701)
  (3.21428571429,0.6446)
  (3.40816326531,0.6205)
  (3.60204081633,0.5921)
  (3.79591836735,0.5742)
  (3.98979591837,0.5507)
  (4.18367346939,0.5353)
  (4.37755102041,0.5131)
  (4.57142857143,0.4913)
  (4.76530612245,0.4877)
  (4.95918367347,0.4743)
  (5.15306122449,0.4644)
  (5.34693877551,0.4578)
  (5.54081632653,0.4517)
  (5.73469387755,0.4351)
  (5.92857142857,0.4338)
  (6.12244897959,0.4272)
  (6.31632653061,0.4252)
  (6.51020408163,0.4211)
  (6.70408163265,0.4176)
  (6.89795918367,0.4162)
  (7.09183673469,0.4167)
  (7.28571428571,0.4121)
  (7.47959183673,0.4126)
  (7.67346938776,0.4132)
  (7.86734693878,0.4112)
  (8.0612244898,0.4110)
  (8.25510204082,0.4105)
  (8.44897959184,0.4103)
  (8.64285714286,0.4098)
  (8.83673469388,0.4092)
  (9.0306122449,0.4093)
  (9.22448979592,0.4093)
  (9.41836734694,0.4091)
  (9.61224489796,0.4097)
  (9.80612244898,0.4093)
  (10,0.4093)
};
\addlegendentry{Alg.~\ref{alg:offline_rrcpd} ($\mathrm{RR-}$ CPD)}

\addplot[red, thick, mark=square*, mark options={scale=0.8}, mark repeat=3] coordinates {
  (0.5,0.9395)
  (0.69387755102,0.8941)
  (0.887755102041,0.857)
  (1.08163265306,0.811)
  (1.27551020408,0.7708)
  (1.4693877551,0.7361)
  (1.66326530612,0.6995)
  (1.85714285714,0.6685)
  (2.05102040816,0.6364)
  (2.24489795918,0.6156)
  (2.4387755102,0.5949)
  (2.63265306122,0.581)
  (2.82653061224,0.5576)
  (3.02040816327,0.5557)
  (3.21428571429,0.5432)
  (3.40816326531,0.5391)
  (3.60204081633,0.5345)
  (3.79591836735,0.5214)
  (3.98979591837,0.5169)
  (4.18367346939,0.5162)
  (4.37755102041,0.5103)
  (4.57142857143,0.5065)
  (4.76530612245,0.5021)
  (4.95918367347,0.5015)
  (5.15306122449,0.4983)
  (5.34693877551,0.4986)
  (5.54081632653,0.4984)
  (5.73469387755,0.4959)
  (5.92857142857,0.4957)
  (6.12244897959,0.4948)
  (6.31632653061,0.4937)
  (6.51020408163,0.4943)
  (6.70408163265,0.4943)
  (6.89795918367,0.4935)
  (7.09183673469,0.4935)
  (7.28571428571,0.4934)
  (7.47959183673,0.4932)
  (7.67346938776,0.4936)
  (7.86734693878,0.4935)
  (8.0612244898,0.4932)
  (8.25510204082,0.4927)
  (8.44897959184,0.4927)
  (8.64285714286,0.4921)
  (8.83673469388,0.4927)
  (9.0306122449,0.4923)
  (9.22448979592,0.4925)
  (9.41836734694,0.4921)
  (9.61224489796,0.4925)
  (9.80612244898,0.4923)
  (10,0.4922)
};
\addlegendentry{Alg.~\ref{alg:offline_bmcpd} ($\mathrm{BM-}$ CPD)}
\end{axis}
\end{tikzpicture}
        \hspace*{4mm}
        \begin{tikzpicture}[scale=0.75, transform shape]
\begin{axis}[
  width=9.9cm,
  height=7cm,
  xlabel={$\varepsilon$ },
  ylabel={$\beta$},
  xmin=0.5, xmax=10.0,
  ymin=-0.05, ymax=1.05,
  xmode=log,
  ymode=normal,
  legend style={
  at={(0.52,0.97)}, anchor=north west,   
},
  grid=none,
  minor tick num=0,
  tick style={draw=none},
  minor tick style={draw=none},
]
\definecolor{tabblue}{HTML}{1F77B4}
\definecolor{taborange}{HTML}{FF7F0E}
\definecolor{tabgreen}{HTML}{2CA02C}
\definecolor{tabred}{HTML}{D62728}
\definecolor{tabpurple}{HTML}{9467BD}
\definecolor{tabgreen}{HTML}{2CA02C}

\addplot[tabgreen, thick, mark=asterisk, mark options={scale=1}, mark repeat=3] coordinates {
  (0.5,0.0464)
  (0.69387755102,0.0464)
  (0.887755102041,0.0464)
  (1.08163265306,0.0464)
  (1.27551020408,0.0464)
  (1.4693877551,0.0464)
  (1.66326530612,0.0464)
  (1.85714285714,0.0464)
  (2.05102040816,0.0464)
  (2.24489795918,0.0464)
  (2.4387755102,0.0464)
  (2.63265306122,0.0464)
  (2.82653061224,0.0464)
  (3.02040816327,0.0464)
  (3.21428571429,0.0464)
  (3.40816326531,0.0464)
  (3.60204081633,0.0464)
  (3.79591836735,0.0464)
  (3.98979591837,0.0464)
  (4.18367346939,0.0464)
  (4.37755102041,0.0464)
  (4.57142857143,0.0464)
  (4.76530612245,0.0464)
  (4.95918367347,0.0464)
  (5.15306122449,0.0464)
  (5.34693877551,0.0464)
  (5.54081632653,0.0464)
  (5.73469387755,0.0464)
  (5.92857142857,0.0464)
  (6.12244897959,0.0464)
  (6.31632653061,0.0464)
  (6.51020408163,0.0464)
  (6.70408163265,0.0464)
  (6.89795918367,0.0464)
  (7.09183673469,0.0464)
  (7.28571428571,0.0464)
  (7.47959183673,0.0464)
  (7.67346938776,0.0464)
  (7.86734693878,0.0464)
  (8.0612244898,0.0464)
  (8.25510204082,0.0464)
  (8.44897959184,0.0464)
  (8.64285714286,0.0464)
  (8.83673469388,0.0464)
  (9.0306122449,0.0464)
  (9.22448979592,0.0464)
  (9.41836734694,0.0464)
  (9.61224489796,0.0464)
  (9.80612244898,0.0464)
  (10,0.0464)
};
\addlegendentry{Alg.~\ref{alg:offline_cpd} (CPD)}

\addplot[blue, thick, mark=o, mark options={scale=0.8}, mark repeat=3] coordinates {
  (0.5,0.961)
  (0.69387755102,0.929)
  (0.887755102041,0.8872)
  (1.08163265306,0.8328)
  (1.27551020408,0.7887)
  (1.4693877551,0.7107)
  (1.66326530612,0.6554)
  (1.85714285714,0.5966)
  (2.05102040816,0.5333)
  (2.24489795918,0.4781)
  (2.4387755102,0.4207)
  (2.63265306122,0.3723)
  (2.82653061224,0.3279)
  (3.02040816327,0.2985)
  (3.21428571429,0.2586)
  (3.40816326531,0.2281)
  (3.60204081633,0.1951)
  (3.79591836735,0.176)
  (3.98979591837,0.1586)
  (4.18367346939,0.1415)
  (4.37755102041,0.1316)
  (4.57142857143,0.1161)
  (4.76530612245,0.1033)
  (4.95918367347,0.0975)
  (5.15306122449,0.0935)
  (5.34693877551,0.088)
  (5.54081632653,0.0782)
  (5.73469387755,0.0767)
  (5.92857142857,0.0681)
  (6.12244897959,0.069)
  (6.31632653061,0.0623)
  (6.51020408163,0.0613)
  (6.70408163265,0.0591)
  (6.89795918367,0.0568)
  (7.09183673469,0.0557)
  (7.28571428571,0.0537)
  (7.47959183673,0.0521)
  (7.67346938776,0.0531)
  (7.86734693878,0.0517)
  (8.0612244898,0.05)
  (8.25510204082,0.0504)
  (8.44897959184,0.049)
  (8.64285714286,0.0483)
  (8.83673469388,0.0487)
  (9.0306122449,0.0487)
  (9.22448979592,0.0478)
  (9.41836734694,0.047)
  (9.61224489796,0.0468)
  (9.80612244898,0.0466)
  (10,0.0466)
};
\addlegendentry{Alg.~\ref{alg:offline_rrcpd} ($\mathrm{RR-}$ CPD)}

\addplot[red, thick, mark=square*, mark options={scale=0.8}, mark repeat=3] coordinates {
  (0.5,0.8023)
  (0.69387755102,0.7057)
  (0.887755102041,0.5866)
  (1.08163265306,0.4942)
  (1.27551020408,0.4192)
  (1.4693877551,0.3573)
  (1.66326530612,0.3104)
  (1.85714285714,0.2672)
  (2.05102040816,0.2325)
  (2.24489795918,0.2095)
  (2.4387755102,0.1852)
  (2.63265306122,0.1738)
  (2.82653061224,0.1637)
  (3.02040816327,0.1374)
  (3.21428571429,0.1297)
  (3.40816326531,0.1226)
  (3.60204081633,0.1164)
  (3.79591836735,0.111)
  (3.98979591837,0.108)
  (4.18367346939,0.105)
  (4.37755102041,0.1028)
  (4.57142857143,0.0989)
  (4.76530612245,0.0981)
  (4.95918367347,0.0973)
  (5.15306122449,0.0948)
  (5.34693877551,0.0943)
  (5.54081632653,0.0924)
  (5.73469387755,0.0925)
  (5.92857142857,0.0914)
  (6.12244897959,0.0919)
  (6.31632653061,0.0908)
  (6.51020408163,0.0948)
  (6.70408163265,0.0943)
  (6.89795918367,0.094)
  (7.09183673469,0.0941)
  (7.28571428571,0.094)
  (7.47959183673,0.0934)
  (7.67346938776,0.0936)
  (7.86734693878,0.0933)
  (8.0612244898,0.0937)
  (8.25510204082,0.0934)
  (8.44897959184,0.0932)
  (8.64285714286,0.0935)
  (8.83673469388,0.0935)
  (9.0306122449,0.0936)
  (9.22448979592,0.0933)
  (9.41836734694,0.0933)
  (9.61224489796,0.0932)
  (9.80612244898,0.0934)
  (10,0.0932)
};
\addlegendentry{Alg.~\ref{alg:offline_bmcpd} ($\mathrm{BM-}$ CPD)}
\end{axis}
\end{tikzpicture}
           \caption{Comparison of empirical error  probabilities as functions of privacy parameter $\varepsilon$, for truncated geometric distributions $\mathrm{TGeom}(p,m)$ with truncation parameter $m=10$, and for fixed $\alpha =5$. \textbf{Left:} $P_0 \sim \mathrm{TGeom}(0.2,10)$ and $P_1 \sim \mathrm{TGeom}(0.4,10)$. \textbf{Right:} $P_0 \sim \mathrm{TGeom}(0.2,10)$ and $P_1 \sim \mathrm{TGeom}(0.8,10)$.} 
        \label{fig:cop2}
\end{center}
\end{figure}
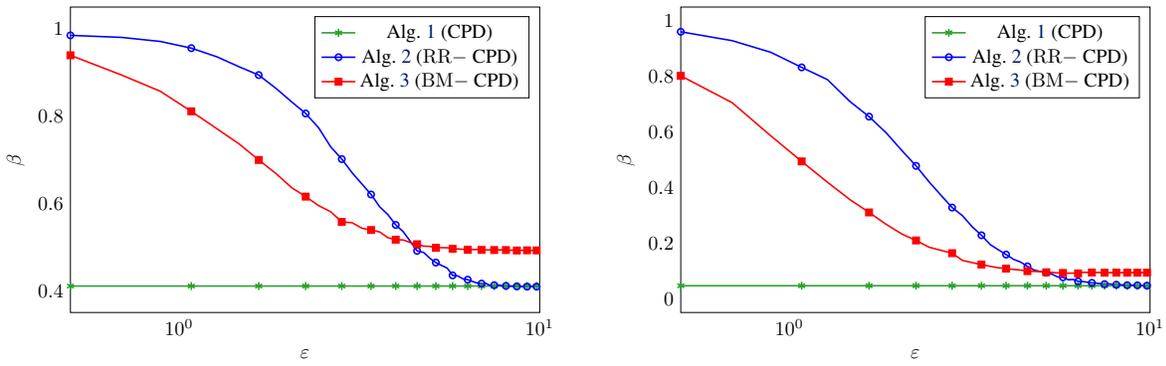
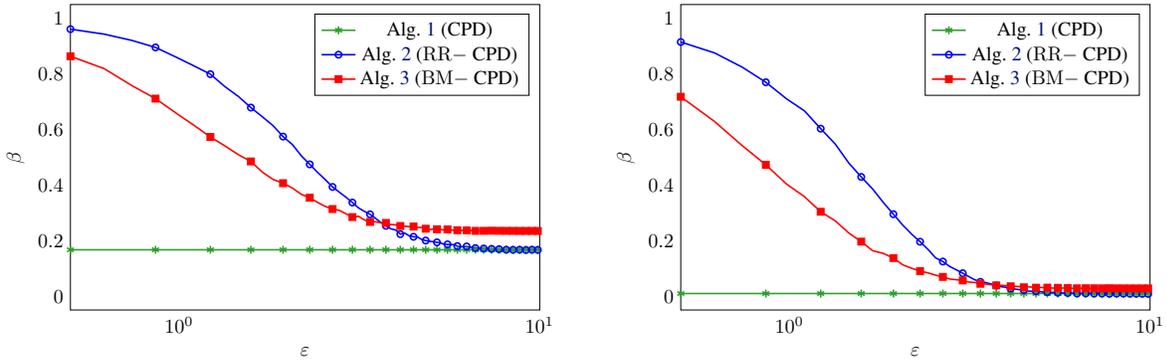
\begin{figure}[H]
\begin{center}        
        \begin{tikzpicture}[scale=0.75, transform shape]
\begin{axis}[
  width=9.9cm,
  height=7cm,
  xlabel={$\varepsilon$ },
  ylabel={$\beta$},
  xmin=0.5, xmax=10.0,
  ymin=-0.05, ymax=1.05,
  xmode=log,
  ymode=normal,
  legend style={
  at={(0.52,0.97)}, anchor=north west,   
},
  grid=none,
  minor tick num=0,
  tick style={draw=none},
  minor tick style={draw=none},
]
\definecolor{tabblue}{HTML}{1F77B4}
\definecolor{taborange}{HTML}{FF7F0E}
\definecolor{tabgreen}{HTML}{2CA02C}
\definecolor{tabred}{HTML}{D62728}
\definecolor{tabpurple}{HTML}{9467BD}
\definecolor{tabgreen}{HTML}{2CA02C}

\addplot[tabgreen, thick, mark=asterisk, mark options={scale=1}, mark repeat=3] coordinates {
  (0.5,0.1669)
  (0.620253164557,0.1669)
  (0.740506329114,0.1669)
  (0.860759493671,0.1669)
  (0.981012658228,0.1669)
  (1.10126582278,0.1669)
  (1.22151898734,0.1669)
  (1.3417721519,0.1669)
  (1.46202531646,0.1669)
  (1.58227848101,0.1669)
  (1.70253164557,0.1669)
  (1.82278481013,0.1669)
  (1.94303797468,0.1669)
  (2.06329113924,0.1669)
  (2.1835443038,0.1669)
  (2.30379746835,0.1669)
  (2.42405063291,0.1669)
  (2.54430379747,0.1669)
  (2.66455696203,0.1669)
  (2.78481012658,0.1669)
  (2.90506329114,0.1669)
  (3.0253164557,0.1669)
  (3.14556962025,0.1669)
  (3.26582278481,0.1669)
  (3.38607594937,0.1669)
  (3.50632911392,0.1669)
  (3.62658227848,0.1669)
  (3.74683544304,0.1669)
  (3.86708860759,0.1669)
  (3.98734177215,0.1669)
  (4.10759493671,0.1669)
  (4.22784810127,0.1669)
  (4.34810126582,0.1669)
  (4.46835443038,0.1669)
  (4.58860759494,0.1669)
  (4.70886075949,0.1669)
  (4.82911392405,0.1669)
  (4.94936708861,0.1669)
  (5.06962025316,0.1669)
  (5.18987341772,0.1669)
  (5.31012658228,0.1669)
  (5.43037974684,0.1669)
  (5.55063291139,0.1669)
  (5.67088607595,0.1669)
  (5.79113924051,0.1669)
  (5.91139240506,0.1669)
  (6.03164556962,0.1669)
  (6.15189873418,0.1669)
  (6.27215189873,0.1669)
  (6.39240506329,0.1669)
  (6.51265822785,0.1669)
  (6.63291139241,0.1669)
  (6.75316455696,0.1669)
  (6.87341772152,0.1669)
  (6.99367088608,0.1669)
  (7.11392405063,0.1669)
  (7.23417721519,0.1669)
  (7.35443037975,0.1669)
  (7.4746835443,0.1669)
  (7.59493670886,0.1669)
  (7.71518987342,0.1669)
  (7.83544303797,0.1669)
  (7.95569620253,0.1669)
  (8.07594936709,0.1669)
  (8.19620253165,0.1669)
  (8.3164556962,0.1669)
  (8.43670886076,0.1669)
  (8.55696202532,0.1669)
  (8.67721518987,0.1669)
  (8.79746835443,0.1669)
  (8.91772151899,0.1669)
  (9.03797468354,0.1669)
  (9.1582278481,0.1669)
  (9.27848101266,0.1669)
  (9.39873417722,0.1669)
  (9.51898734177,0.1669)
  (9.63924050633,0.1669)
  (9.75949367089,0.1669)
  (9.87974683544,0.1669)
  (10,0.1669)
};
\addlegendentry{Alg.~\ref{alg:offline_cpd} (CPD)}

\addplot[blue, thick, mark=o, mark options={scale=0.8}, mark repeat=3] coordinates {
  (0.5,0.9614)
  (0.620253164557,0.9434)
  (0.740506329114,0.9208)
  (0.860759493671,0.8956)
  (0.981012658228,0.8608)
  (1.10126582278,0.8289)
  (1.22151898734,0.7993)
  (1.3417721519,0.754)
  (1.46202531646,0.7187)
  (1.58227848101,0.679)
  (1.70253164557,0.6458)
  (1.82278481013,0.614)
  (1.94303797468,0.5745)
  (2.06329113924,0.5455)
  (2.1835443038,0.5052)
  (2.30379746835,0.4744)
  (2.42405063291,0.4451)
  (2.54430379747,0.4195)
  (2.66455696203,0.3931)
  (2.78481012658,0.374)
  (2.90506329114,0.3559)
  (3.0253164557,0.3369)
  (3.14556962025,0.3177)
  (3.26582278481,0.3051)
  (3.38607594937,0.2945)
  (3.50632911392,0.28)
  (3.62658227848,0.2652)
  (3.74683544304,0.2531)
  (3.86708860759,0.2423)
  (3.98734177215,0.2382)
  (4.10759493671,0.2232)
  (4.22784810127,0.2268)
  (4.34810126582,0.2169)
  (4.46835443038,0.2136)
  (4.58860759494,0.2129)
  (4.70886075949,0.2037)
  (4.82911392405,0.2001)
  (4.94936708861,0.1975)
  (5.06962025316,0.1958)
  (5.18987341772,0.1936)
  (5.31012658228,0.1917)
  (5.43037974684,0.1868)
  (5.55063291139,0.1864)
  (5.67088607595,0.1852)
  (5.79113924051,0.1853)
  (5.91139240506,0.1801)
  (6.03164556962,0.1775)
  (6.15189873418,0.1779)
  (6.27215189873,0.1789)
  (6.39240506329,0.1757)
  (6.51265822785,0.1753)
  (6.63291139241,0.1729)
  (6.75316455696,0.1736)
  (6.87341772152,0.1727)
  (6.99367088608,0.1714)
  (7.11392405063,0.1711)
  (7.23417721519,0.1694)
  (7.35443037975,0.1702)
  (7.4746835443,0.1696)
  (7.59493670886,0.169)
  (7.71518987342,0.1689)
  (7.83544303797,0.168)
  (7.95569620253,0.1683)
  (8.07594936709,0.1671)
  (8.19620253165,0.1679)
  (8.3164556962,0.1676)
  (8.43670886076,0.1667)
  (8.55696202532,0.1668)
  (8.67721518987,0.1671)
  (8.79746835443,0.167)
  (8.91772151899,0.1668)
  (9.03797468354,0.1665)
  (9.1582278481,0.166)
  (9.27848101266,0.1669)
  (9.39873417722,0.1662)
  (9.51898734177,0.167)
  (9.63924050633,0.1664)
  (9.75949367089,0.1666)
  (9.87974683544,0.1663)
  (10,0.1665)
};
\addlegendentry{Alg.~\ref{alg:offline_rrcpd} ($\mathrm{RR-}$ CPD)}

\addplot[red, thick, mark=square*, mark options={scale=0.8}, mark repeat=3] coordinates {
  (0.5,0.8638)
  (0.620253164557,0.8192)
  (0.740506329114,0.7581)
  (0.860759493671,0.7112)
  (0.981012658228,0.6593)
  (1.10126582278,0.6147)
  (1.22151898734,0.5732)
  (1.3417721519,0.5406)
  (1.46202531646,0.5091)
  (1.58227848101,0.485)
  (1.70253164557,0.4465)
  (1.82278481013,0.4202)
  (1.94303797468,0.4067)
  (2.06329113924,0.3889)
  (2.1835443038,0.3664)
  (2.30379746835,0.3544)
  (2.42405063291,0.3371)
  (2.54430379747,0.3229)
  (2.66455696203,0.3136)
  (2.78481012658,0.3087)
  (2.90506329114,0.2962)
  (3.0253164557,0.2846)
  (3.14556962025,0.2875)
  (3.26582278481,0.2744)
  (3.38607594937,0.2672)
  (3.50632911392,0.2677)
  (3.62658227848,0.264)
  (3.74683544304,0.2635)
  (3.86708860759,0.2577)
  (3.98734177215,0.2567)
  (4.10759493671,0.2525)
  (4.22784810127,0.2519)
  (4.34810126582,0.2501)
  (4.46835443038,0.2506)
  (4.58860759494,0.2475)
  (4.70886075949,0.2471)
  (4.82911392405,0.2427)
  (4.94936708861,0.239)
  (5.06962025316,0.2419)
  (5.18987341772,0.2407)
  (5.31012658228,0.2403)
  (5.43037974684,0.2378)
  (5.55063291139,0.2402)
  (5.67088607595,0.2382)
  (5.79113924051,0.2368)
  (5.91139240506,0.2373)
  (6.03164556962,0.2373)
  (6.15189873418,0.2358)
  (6.27215189873,0.2367)
  (6.39240506329,0.2366)
  (6.51265822785,0.236)
  (6.63291139241,0.2349)
  (6.75316455696,0.2361)
  (6.87341772152,0.2357)
  (6.99367088608,0.235)
  (7.11392405063,0.2352)
  (7.23417721519,0.2351)
  (7.35443037975,0.2356)
  (7.4746835443,0.2349)
  (7.59493670886,0.2346)
  (7.71518987342,0.2351)
  (7.83544303797,0.235)
  (7.95569620253,0.2345)
  (8.07594936709,0.2349)
  (8.19620253165,0.2352)
  (8.3164556962,0.2348)
  (8.43670886076,0.2349)
  (8.55696202532,0.2344)
  (8.67721518987,0.235)
  (8.79746835443,0.2347)
  (8.91772151899,0.2347)
  (9.03797468354,0.2346)
  (9.1582278481,0.2347)
  (9.27848101266,0.2349)
  (9.39873417722,0.2348)
  (9.51898734177,0.2348)
  (9.63924050633,0.2346)
  (9.75949367089,0.2345)
  (9.87974683544,0.2345)
  (10,0.2346)
};
\addlegendentry{Alg.~\ref{alg:offline_bmcpd} ($\mathrm{BM-}$ CPD)}
\end{axis}
\end{tikzpicture}
        \hspace*{4mm}
        \begin{tikzpicture}[scale=0.75, transform shape]
\begin{axis}[
  width=9.9cm,
  height=7cm,
  xlabel={$\varepsilon$ },
  ylabel={$\beta$},
  xmin=0.5, xmax=10.0,
  ymin=-0.05, ymax=1.05,
  xmode=log,
  ymode=normal,
  legend style={
  at={(0.52,0.97)}, anchor=north west,   
},
  grid=none,
  minor tick num=0,
  tick style={draw=none},
  minor tick style={draw=none},
]
\definecolor{tabblue}{HTML}{1F77B4}
\definecolor{taborange}{HTML}{FF7F0E}
\definecolor{tabgreen}{HTML}{2CA02C}
\definecolor{tabred}{HTML}{D62728}
\definecolor{tabpurple}{HTML}{9467BD}
\definecolor{tabgreen}{HTML}{2CA02C}

\addplot[tabgreen, thick, mark=asterisk, mark options={scale=1}, mark repeat=3] coordinates {
  (0.5,0.0093)
  (0.620253164557,0.0093)
  (0.740506329114,0.0093)
  (0.860759493671,0.0093)
  (0.981012658228,0.0093)
  (1.10126582278,0.0093)
  (1.22151898734,0.0093)
  (1.3417721519,0.0093)
  (1.46202531646,0.0093)
  (1.58227848101,0.0093)
  (1.70253164557,0.0093)
  (1.82278481013,0.0093)
  (1.94303797468,0.0093)
  (2.06329113924,0.0093)
  (2.1835443038,0.0093)
  (2.30379746835,0.0093)
  (2.42405063291,0.0093)
  (2.54430379747,0.0093)
  (2.66455696203,0.0093)
  (2.78481012658,0.0093)
  (2.90506329114,0.0093)
  (3.0253164557,0.0093)
  (3.14556962025,0.0093)
  (3.26582278481,0.0093)
  (3.38607594937,0.0093)
  (3.50632911392,0.0093)
  (3.62658227848,0.0093)
  (3.74683544304,0.0093)
  (3.86708860759,0.0093)
  (3.98734177215,0.0093)
  (4.10759493671,0.0093)
  (4.22784810127,0.0093)
  (4.34810126582,0.0093)
  (4.46835443038,0.0093)
  (4.58860759494,0.0093)
  (4.70886075949,0.0093)
  (4.82911392405,0.0093)
  (4.94936708861,0.0093)
  (5.06962025316,0.0093)
  (5.18987341772,0.0093)
  (5.31012658228,0.0093)
  (5.43037974684,0.0093)
  (5.55063291139,0.0093)
  (5.67088607595,0.0093)
  (5.79113924051,0.0093)
  (5.91139240506,0.0093)
  (6.03164556962,0.0093)
  (6.15189873418,0.0093)
  (6.27215189873,0.0093)
  (6.39240506329,0.0093)
  (6.51265822785,0.0093)
  (6.63291139241,0.0093)
  (6.75316455696,0.0093)
  (6.87341772152,0.0093)
  (6.99367088608,0.0093)
  (7.11392405063,0.0093)
  (7.23417721519,0.0093)
  (7.35443037975,0.0093)
  (7.4746835443,0.0093)
  (7.59493670886,0.0093)
  (7.71518987342,0.0093)
  (7.83544303797,0.0093)
  (7.95569620253,0.0093)
  (8.07594936709,0.0093)
  (8.19620253165,0.0093)
  (8.3164556962,0.0093)
  (8.43670886076,0.0093)
  (8.55696202532,0.0093)
  (8.67721518987,0.0093)
  (8.79746835443,0.0093)
  (8.91772151899,0.0093)
  (9.03797468354,0.0093)
  (9.1582278481,0.0093)
  (9.27848101266,0.0093)
  (9.39873417722,0.0093)
  (9.51898734177,0.0093)
  (9.63924050633,0.0093)
  (9.75949367089,0.0093)
  (9.87974683544,0.0093)
  (10,0.0093)
};
\addlegendentry{Alg.~\ref{alg:offline_cpd} (CPD)}

\addplot[blue, thick, mark=o, mark options={scale=0.8}, mark repeat=3] coordinates {
  (0.5,0.9151)
  (0.620253164557,0.8759)
  (0.740506329114,0.8244)
  (0.860759493671,0.7702)
  (0.981012658228,0.711)
  (1.10126582278,0.6672)
  (1.22151898734,0.603)
  (1.3417721519,0.547)
  (1.46202531646,0.479)
  (1.58227848101,0.4293)
  (1.70253164557,0.3855)
  (1.82278481013,0.3369)
  (1.94303797468,0.295)
  (2.06329113924,0.2555)
  (2.1835443038,0.2231)
  (2.30379746835,0.1959)
  (2.42405063291,0.168)
  (2.54430379747,0.1377)
  (2.66455696203,0.1245)
  (2.78481012658,0.1066)
  (2.90506329114,0.0945)
  (3.0253164557,0.0825)
  (3.14556962025,0.0703)
  (3.26582278481,0.059)
  (3.38607594937,0.0513)
  (3.50632911392,0.0487)
  (3.62658227848,0.0438)
  (3.74683544304,0.0378)
  (3.86708860759,0.0337)
  (3.98734177215,0.0318)
  (4.10759493671,0.0271)
  (4.22784810127,0.0253)
  (4.34810126582,0.0232)
  (4.46835443038,0.0214)
  (4.58860759494,0.0181)
  (4.70886075949,0.0177)
  (4.82911392405,0.0172)
  (4.94936708861,0.0162)
  (5.06962025316,0.0141)
  (5.18987341772,0.0142)
  (5.31012658228,0.0127)
  (5.43037974684,0.0134)
  (5.55063291139,0.0121)
  (5.67088607595,0.0114)
  (5.79113924051,0.0109)
  (5.91139240506,0.0111)
  (6.03164556962,0.0111)
  (6.15189873418,0.0107)
  (6.27215189873,0.0092)
  (6.39240506329,0.0089)
  (6.51265822785,0.0094)
  (6.63291139241,0.0096)
  (6.75316455696,0.0094)
  (6.87341772152,0.0093)
  (6.99367088608,0.0088)
  (7.11392405063,0.0089)
  (7.23417721519,0.0088)
  (7.35443037975,0.009)
  (7.4746835443,0.0088)
  (7.59493670886,0.0087)
  (7.71518987342,0.0083)
  (7.83544303797,0.0086)
  (7.95569620253,0.0085)
  (8.07594936709,0.0088)
  (8.19620253165,0.0084)
  (8.3164556962,0.0083)
  (8.43670886076,0.0088)
  (8.55696202532,0.0085)
  (8.67721518987,0.0084)
  (8.79746835443,0.0084)
  (8.91772151899,0.0081)
  (9.03797468354,0.0084)
  (9.1582278481,0.0085)
  (9.27848101266,0.0082)
  (9.39873417722,0.0081)
  (9.51898734177,0.0082)
  (9.63924050633,0.0082)
  (9.75949367089,0.0082)
  (9.87974683544,0.0081)
  (10,0.0084)
};
\addlegendentry{Alg.~\ref{alg:offline_rrcpd} ($\mathrm{RR-}$ CPD)}

\addplot[red, thick, mark=square*, mark options={scale=0.8}, mark repeat=3] coordinates {
  (0.5,0.7179)
  (0.620253164557,0.629)
  (0.740506329114,0.5417)
  (0.860759493671,0.4727)
  (0.981012658228,0.404)
  (1.10126582278,0.358)
  (1.22151898734,0.3038)
  (1.3417721519,0.2706)
  (1.46202531646,0.2271)
  (1.58227848101,0.1962)
  (1.70253164557,0.1635)
  (1.82278481013,0.1537)
  (1.94303797468,0.1369)
  (2.06329113924,0.1126)
  (2.1835443038,0.1003)
  (2.30379746835,0.09)
  (2.42405063291,0.0836)
  (2.54430379747,0.0754)
  (2.66455696203,0.0691)
  (2.78481012658,0.0615)
  (2.90506329114,0.0591)
  (3.0253164557,0.0565)
  (3.14556962025,0.0505)
  (3.26582278481,0.0492)
  (3.38607594937,0.0443)
  (3.50632911392,0.0432)
  (3.62658227848,0.0409)
  (3.74683544304,0.0386)
  (3.86708860759,0.0389)
  (3.98734177215,0.0374)
  (4.10759493671,0.0354)
  (4.22784810127,0.0354)
  (4.34810126582,0.0341)
  (4.46835443038,0.0317)
  (4.58860759494,0.0336)
  (4.70886075949,0.0321)
  (4.82911392405,0.0303)
  (4.94936708861,0.0314)
  (5.06962025316,0.0304)
  (5.18987341772,0.0304)
  (5.31012658228,0.0297)
  (5.43037974684,0.0283)
  (5.55063291139,0.0297)
  (5.67088607595,0.0285)
  (5.79113924051,0.0295)
  (5.91139240506,0.0288)
  (6.03164556962,0.0285)
  (6.15189873418,0.0282)
  (6.27215189873,0.0281)
  (6.39240506329,0.0287)
  (6.51265822785,0.028)
  (6.63291139241,0.0283)
  (6.75316455696,0.0281)
  (6.87341772152,0.0281)
  (6.99367088608,0.0281)
  (7.11392405063,0.0282)
  (7.23417721519,0.0276)
  (7.35443037975,0.0279)
  (7.4746835443,0.028)
  (7.59493670886,0.0277)
  (7.71518987342,0.0284)
  (7.83544303797,0.0281)
  (7.95569620253,0.0278)
  (8.07594936709,0.0279)
  (8.19620253165,0.0278)
  (8.3164556962,0.0278)
  (8.43670886076,0.0278)
  (8.55696202532,0.0278)
  (8.67721518987,0.0277)
  (8.79746835443,0.0277)
  (8.91772151899,0.0278)
  (9.03797468354,0.0277)
  (9.1582278481,0.0277)
  (9.27848101266,0.0276)
  (9.39873417722,0.0278)
  (9.51898734177,0.0277)
  (9.63924050633,0.0278)
  (9.75949367089,0.0277)
  (9.87974683544,0.0278)
  (10,0.0277)
};
\addlegendentry{Alg.~\ref{alg:offline_bmcpd} ($\mathrm{BM-}$ CPD)}
\end{axis}
\end{tikzpicture}
           \caption{Comparison of empirical error  probabilities as functions of privacy parameter $\varepsilon$, for binomial distributions $\mathrm{Bin}(n,p)$, and for fixed $\alpha =5$. \textbf{Left:} $P_0 \sim \mathrm{Bin}(5,0.2)$ and $P_1 \sim \mathrm{Bin}(5,0.4)$. \textbf{Right:} $P_0 \sim \mathrm{Bin}(5,0.2)$ and $P_1 \sim \mathrm{Bin}(5,0.6)$.} 
        \label{fig:cop3}
\end{center}
\end{figure}
\begin{figure}[H]
\begin{center}        
        \begin{tikzpicture}[scale=0.75, transform shape]
\begin{axis}[
  width=9.9cm,
  height=7cm,
  xlabel={$\varepsilon$ },
  ylabel={$\beta$},
  xmin=0.5, xmax=10.0,
  ymin=-0.05, ymax=1.05,
  xmode=log,
  ymode=normal,
  legend style={
  at={(0.51,0.97)}, anchor=north west,   
},
  grid=none,
  minor tick num=0,
  tick style={draw=none},
  minor tick style={draw=none},
]
\definecolor{tabblue}{HTML}{1F77B4}
\definecolor{taborange}{HTML}{FF7F0E}
\definecolor{tabgreen}{HTML}{2CA02C}
\definecolor{tabred}{HTML}{D62728}
\definecolor{tabpurple}{HTML}{9467BD}
\definecolor{tabgreen}{HTML}{2CA02C}

\addplot[tabgreen, thick, mark=asterisk, mark options={scale=1}, mark repeat=3] coordinates {
  (0.5,0.0265)
  (0.620253164557,0.0265)
  (0.740506329114,0.0265)
  (0.860759493671,0.0265)
  (0.981012658228,0.0265)
  (1.10126582278,0.0265)
  (1.22151898734,0.0265)
  (1.3417721519,0.0265)
  (1.46202531646,0.0265)
  (1.58227848101,0.0265)
  (1.70253164557,0.0265)
  (1.82278481013,0.0265)
  (1.94303797468,0.0265)
  (2.06329113924,0.0265)
  (2.1835443038,0.0265)
  (2.30379746835,0.0265)
  (2.42405063291,0.0265)
  (2.54430379747,0.0265)
  (2.66455696203,0.0265)
  (2.78481012658,0.0265)
  (2.90506329114,0.0265)
  (3.0253164557,0.0265)
  (3.14556962025,0.0265)
  (3.26582278481,0.0265)
  (3.38607594937,0.0265)
  (3.50632911392,0.0265)
  (3.62658227848,0.0265)
  (3.74683544304,0.0265)
  (3.86708860759,0.0265)
  (3.98734177215,0.0265)
  (4.10759493671,0.0265)
  (4.22784810127,0.0265)
  (4.34810126582,0.0265)
  (4.46835443038,0.0265)
  (4.58860759494,0.0265)
  (4.70886075949,0.0265)
  (4.82911392405,0.0265)
  (4.94936708861,0.0265)
  (5.06962025316,0.0265)
  (5.18987341772,0.0265)
  (5.31012658228,0.0265)
  (5.43037974684,0.0265)
  (5.55063291139,0.0265)
  (5.67088607595,0.0265)
  (5.79113924051,0.0265)
  (5.91139240506,0.0265)
  (6.03164556962,0.0265)
  (6.15189873418,0.0265)
  (6.27215189873,0.0265)
  (6.39240506329,0.0265)
  (6.51265822785,0.0265)
  (6.63291139241,0.0265)
  (6.75316455696,0.0265)
  (6.87341772152,0.0265)
  (6.99367088608,0.0265)
  (7.11392405063,0.0265)
  (7.23417721519,0.0265)
  (7.35443037975,0.0265)
  (7.4746835443,0.0265)
  (7.59493670886,0.0265)
  (7.71518987342,0.0265)
  (7.83544303797,0.0265)
  (7.95569620253,0.0265)
  (8.07594936709,0.0265)
  (8.19620253165,0.0265)
  (8.3164556962,0.0265)
  (8.43670886076,0.0265)
  (8.55696202532,0.0265)
  (8.67721518987,0.0265)
  (8.79746835443,0.0265)
  (8.91772151899,0.0265)
  (9.03797468354,0.0265)
  (9.1582278481,0.0265)
  (9.27848101266,0.0265)
  (9.39873417722,0.0265)
  (9.51898734177,0.0265)
  (9.63924050633,0.0265)
  (9.75949367089,0.0265)
  (9.87974683544,0.0265)
  (10,0.0265)
};
\addlegendentry{Alg.~\ref{alg:offline_cpd} (CPD)}

\addplot[blue, thick, mark=o, mark options={scale=0.8}, mark repeat=3] coordinates {
  (0.5,0.87)
  (0.620253164557,0.8042)
  (0.740506329114,0.7436)
  (0.860759493671,0.6797)
  (0.981012658228,0.6043)
  (1.10126582278,0.5506)
  (1.22151898734,0.485)
  (1.3417721519,0.4288)
  (1.46202531646,0.3786)
  (1.58227848101,0.3381)
  (1.70253164557,0.2842)
  (1.82278481013,0.2541)
  (1.94303797468,0.2167)
  (2.06329113924,0.1909)
  (2.1835443038,0.1655)
  (2.30379746835,0.1438)
  (2.42405063291,0.1326)
  (2.54430379747,0.1142)
  (2.66455696203,0.1001)
  (2.78481012658,0.0887)
  (2.90506329114,0.0863)
  (3.0253164557,0.0752)
  (3.14556962025,0.0648)
  (3.26582278481,0.0603)
  (3.38607594937,0.0597)
  (3.50632911392,0.0538)
  (3.62658227848,0.0499)
  (3.74683544304,0.0437)
  (3.86708860759,0.0418)
  (3.98734177215,0.0404)
  (4.10759493671,0.0385)
  (4.22784810127,0.0361)
  (4.34810126582,0.0322)
  (4.46835443038,0.0344)
  (4.58860759494,0.0311)
  (4.70886075949,0.0325)
  (4.82911392405,0.0315)
  (4.94936708861,0.0319)
  (5.06962025316,0.0291)
  (5.18987341772,0.0299)
  (5.31012658228,0.0276)
  (5.43037974684,0.0274)
  (5.55063291139,0.0271)
  (5.67088607595,0.0277)
  (5.79113924051,0.0263)
  (5.91139240506,0.0261)
  (6.03164556962,0.0256)
  (6.15189873418,0.0258)
  (6.27215189873,0.0258)
  (6.39240506329,0.0254)
  (6.51265822785,0.0249)
  (6.63291139241,0.0259)
  (6.75316455696,0.0246)
  (6.87341772152,0.0248)
  (6.99367088608,0.0258)
  (7.11392405063,0.026)
  (7.23417721519,0.0247)
  (7.35443037975,0.025)
  (7.4746835443,0.0251)
  (7.59493670886,0.0248)
  (7.71518987342,0.025)
  (7.83544303797,0.025)
  (7.95569620253,0.025)
  (8.07594936709,0.0248)
  (8.19620253165,0.0251)
  (8.3164556962,0.0247)
  (8.43670886076,0.0247)
  (8.55696202532,0.0247)
  (8.67721518987,0.0247)
  (8.79746835443,0.0251)
  (8.91772151899,0.0248)
  (9.03797468354,0.0248)
  (9.1582278481,0.025)
  (9.27848101266,0.0248)
  (9.39873417722,0.0249)
  (9.51898734177,0.0245)
  (9.63924050633,0.0247)
  (9.75949367089,0.0248)
  (9.87974683544,0.0247)
  (10,0.0248)
};
\addlegendentry{Alg.~\ref{alg:offline_rrcpd} ($\mathrm{RR-}$ CPD)}

\addplot[red, thick, mark=square*, mark options={scale=0.8}, mark repeat=3] coordinates {
  (0.5,0.7514)
  (0.620253164557,0.6788)
  (0.740506329114,0.6027)
  (0.860759493671,0.5265)
  (0.981012658228,0.4583)
  (1.10126582278,0.4134)
  (1.22151898734,0.3645)
  (1.3417721519,0.3263)
  (1.46202531646,0.2898)
  (1.58227848101,0.2564)
  (1.70253164557,0.2276)
  (1.82278481013,0.2106)
  (1.94303797468,0.1847)
  (2.06329113924,0.1709)
  (2.1835443038,0.1545)
  (2.30379746835,0.14)
  (2.42405063291,0.137)
  (2.54430379747,0.1261)
  (2.66455696203,0.1177)
  (2.78481012658,0.1106)
  (2.90506329114,0.1074)
  (3.0253164557,0.0976)
  (3.14556962025,0.0967)
  (3.26582278481,0.0929)
  (3.38607594937,0.0877)
  (3.50632911392,0.0859)
  (3.62658227848,0.0838)
  (3.74683544304,0.081)
  (3.86708860759,0.077)
  (3.98734177215,0.076)
  (4.10759493671,0.0755)
  (4.22784810127,0.0721)
  (4.34810126582,0.0728)
  (4.46835443038,0.0697)
  (4.58860759494,0.0686)
  (4.70886075949,0.0691)
  (4.82911392405,0.069)
  (4.94936708861,0.0685)
  (5.06962025316,0.0679)
  (5.18987341772,0.0676)
  (5.31012658228,0.0662)
  (5.43037974684,0.0665)
  (5.55063291139,0.065)
  (5.67088607595,0.0661)
  (5.79113924051,0.0653)
  (5.91139240506,0.0657)
  (6.03164556962,0.0648)
  (6.15189873418,0.0651)
  (6.27215189873,0.0639)
  (6.39240506329,0.064)
  (6.51265822785,0.0637)
  (6.63291139241,0.063)
  (6.75316455696,0.0631)
  (6.87341772152,0.0638)
  (6.99367088608,0.0631)
  (7.11392405063,0.0633)
  (7.23417721519,0.0633)
  (7.35443037975,0.0632)
  (7.4746835443,0.0629)
  (7.59493670886,0.0629)
  (7.71518987342,0.0631)
  (7.83544303797,0.0633)
  (7.95569620253,0.0631)
  (8.07594936709,0.063)
  (8.19620253165,0.0634)
  (8.3164556962,0.0628)
  (8.43670886076,0.063)
  (8.55696202532,0.063)
  (8.67721518987,0.0629)
  (8.79746835443,0.0629)
  (8.91772151899,0.0629)
  (9.03797468354,0.063)
  (9.1582278481,0.0632)
  (9.27848101266,0.063)
  (9.39873417722,0.0627)
  (9.51898734177,0.0627)
  (9.63924050633,0.0627)
  (9.75949367089,0.0628)
  (9.87974683544,0.0629)
  (10,0.0628)
};
\addlegendentry{Alg.~\ref{alg:offline_bmcpd} ($\mathrm{BM-}$ CPD)}\end{axis}
\end{tikzpicture}
                \hspace*{4mm}
            \begin{tikzpicture}[scale=0.75, transform shape]
\begin{axis}[
  width=9.9cm,
  height=7cm,
  xlabel={$\varepsilon$ },
  ylabel={$\beta$},
  xmin=0.5, xmax=10.0,
  ymin=0.2, ymax=1.05,
  xmode=log,
  ymode=normal,
  legend style={
  at={(0.52,0.97)}, anchor=north west,   
},
  grid=none,
  minor tick num=0,
  tick style={draw=none},
  minor tick style={draw=none},
]
\definecolor{tabblue}{HTML}{1F77B4}
\definecolor{taborange}{HTML}{FF7F0E}
\definecolor{tabgreen}{HTML}{2CA02C}
\definecolor{tabred}{HTML}{D62728}
\definecolor{tabpurple}{HTML}{9467BD}
\definecolor{tabgreen}{HTML}{2CA02C}

\addplot[tabgreen, thick, mark=asterisk, mark options={scale=1}, mark repeat=3] coordinates {
  (0.5,0.3093)
  (0.620253164557,0.3093)
  (0.740506329114,0.3093)
  (0.860759493671,0.3093)
  (0.981012658228,0.3093)
  (1.10126582278,0.3093)
  (1.22151898734,0.3093)
  (1.3417721519,0.3093)
  (1.46202531646,0.3093)
  (1.58227848101,0.3093)
  (1.70253164557,0.3093)
  (1.82278481013,0.3093)
  (1.94303797468,0.3093)
  (2.06329113924,0.3093)
  (2.1835443038,0.3093)
  (2.30379746835,0.3093)
  (2.42405063291,0.3093)
  (2.54430379747,0.3093)
  (2.66455696203,0.3093)
  (2.78481012658,0.3093)
  (2.90506329114,0.3093)
  (3.0253164557,0.3093)
  (3.14556962025,0.3093)
  (3.26582278481,0.3093)
  (3.38607594937,0.3093)
  (3.50632911392,0.3093)
  (3.62658227848,0.3093)
  (3.74683544304,0.3093)
  (3.86708860759,0.3093)
  (3.98734177215,0.3093)
  (4.10759493671,0.3093)
  (4.22784810127,0.3093)
  (4.34810126582,0.3093)
  (4.46835443038,0.3093)
  (4.58860759494,0.3093)
  (4.70886075949,0.3093)
  (4.82911392405,0.3093)
  (4.94936708861,0.3093)
  (5.06962025316,0.3093)
  (5.18987341772,0.3093)
  (5.31012658228,0.3093)
  (5.43037974684,0.3093)
  (5.55063291139,0.3093)
  (5.67088607595,0.3093)
  (5.79113924051,0.3093)
  (5.91139240506,0.3093)
  (6.03164556962,0.3093)
  (6.15189873418,0.3093)
  (6.27215189873,0.3093)
  (6.39240506329,0.3093)
  (6.51265822785,0.3093)
  (6.63291139241,0.3093)
  (6.75316455696,0.3093)
  (6.87341772152,0.3093)
  (6.99367088608,0.3093)
  (7.11392405063,0.3093)
  (7.23417721519,0.3093)
  (7.35443037975,0.3093)
  (7.4746835443,0.3093)
  (7.59493670886,0.3093)
  (7.71518987342,0.3093)
  (7.83544303797,0.3093)
  (7.95569620253,0.3093)
  (8.07594936709,0.3093)
  (8.19620253165,0.3093)
  (8.3164556962,0.3093)
  (8.43670886076,0.3093)
  (8.55696202532,0.3093)
  (8.67721518987,0.3093)
  (8.79746835443,0.3093)
  (8.91772151899,0.3093)
  (9.03797468354,0.3093)
  (9.1582278481,0.3093)
  (9.27848101266,0.3093)
  (9.39873417722,0.3093)
  (9.51898734177,0.3093)
  (9.63924050633,0.3093)
  (9.75949367089,0.3093)
  (9.87974683544,0.3093)
  (10,0.3093)
};
\addlegendentry{Alg.~\ref{alg:offline_cpd} (CPD)}

\addplot[blue, thick, mark=o, mark options={scale=0.8}, mark repeat=3] coordinates {
  (0.5,0.9126)
  (0.620253164557,0.8763)
  (0.740506329114,0.8429)
  (0.860759493671,0.8077)
  (0.981012658228,0.767)
  (1.10126582278,0.7305)
  (1.22151898734,0.6919)
  (1.3417721519,0.6599)
  (1.46202531646,0.6291)
  (1.58227848101,0.5966)
  (1.70253164557,0.577)
  (1.82278481013,0.5411)
  (1.94303797468,0.5205)
  (2.06329113924,0.505)
  (2.1835443038,0.484)
  (2.30379746835,0.4701)
  (2.42405063291,0.4427)
  (2.54430379747,0.4314)
  (2.66455696203,0.42)
  (2.78481012658,0.411)
  (2.90506329114,0.4035)
  (3.0253164557,0.3908)
  (3.14556962025,0.3844)
  (3.26582278481,0.3822)
  (3.38607594937,0.3655)
  (3.50632911392,0.3571)
  (3.62658227848,0.352)
  (3.74683544304,0.3461)
  (3.86708860759,0.3431)
  (3.98734177215,0.3437)
  (4.10759493671,0.3372)
  (4.22784810127,0.3362)
  (4.34810126582,0.3297)
  (4.46835443038,0.3331)
  (4.58860759494,0.3277)
  (4.70886075949,0.3224)
  (4.82911392405,0.323)
  (4.94936708861,0.3215)
  (5.06962025316,0.3228)
  (5.18987341772,0.319)
  (5.31012658228,0.3192)
  (5.43037974684,0.3177)
  (5.55063291139,0.3171)
  (5.67088607595,0.3166)
  (5.79113924051,0.3162)
  (5.91139240506,0.3165)
  (6.03164556962,0.3133)
  (6.15189873418,0.3142)
  (6.27215189873,0.315)
  (6.39240506329,0.3129)
  (6.51265822785,0.3131)
  (6.63291139241,0.3133)
  (6.75316455696,0.3134)
  (6.87341772152,0.3144)
  (6.99367088608,0.3125)
  (7.11392405063,0.3142)
  (7.23417721519,0.3146)
  (7.35443037975,0.3129)
  (7.4746835443,0.3129)
  (7.59493670886,0.3118)
  (7.71518987342,0.3118)
  (7.83544303797,0.3128)
  (7.95569620253,0.3122)
  (8.07594936709,0.3128)
  (8.19620253165,0.3126)
  (8.3164556962,0.3125)
  (8.43670886076,0.312)
  (8.55696202532,0.3128)
  (8.67721518987,0.3117)
  (8.79746835443,0.3118)
  (8.91772151899,0.3118)
  (9.03797468354,0.3126)
  (9.1582278481,0.3117)
  (9.27848101266,0.3117)
  (9.39873417722,0.3121)
  (9.51898734177,0.312)
  (9.63924050633,0.3122)
  (9.75949367089,0.3118)
  (9.87974683544,0.3117)
  (10,0.3122)
};
\addlegendentry{Alg.~\ref{alg:offline_rrcpd} ($\mathrm{RR-}$ CPD)}

\addplot[red, thick, mark=square*, mark options={scale=0.8}, mark repeat=3] coordinates {
  (0.5,0.9147)
  (0.620253164557,0.8793)
  (0.740506329114,0.8382)
  (0.860759493671,0.8044)
  (0.981012658228,0.758)
  (1.10126582278,0.7249)
  (1.22151898734,0.6948)
  (1.3417721519,0.6549)
  (1.46202531646,0.6343)
  (1.58227848101,0.5989)
  (1.70253164557,0.5696)
  (1.82278481013,0.5417)
  (1.94303797468,0.5192)
  (2.06329113924,0.4977)
  (2.1835443038,0.4841)
  (2.30379746835,0.4621)
  (2.42405063291,0.4451)
  (2.54430379747,0.4369)
  (2.66455696203,0.4213)
  (2.78481012658,0.4094)
  (2.90506329114,0.3982)
  (3.0253164557,0.3851)
  (3.14556962025,0.3817)
  (3.26582278481,0.3724)
  (3.38607594937,0.3632)
  (3.50632911392,0.3569)
  (3.62658227848,0.357)
  (3.74683544304,0.3506)
  (3.86708860759,0.3403)
  (3.98734177215,0.3366)
  (4.10759493671,0.3374)
  (4.22784810127,0.3305)
  (4.34810126582,0.3334)
  (4.46835443038,0.3247)
  (4.58860759494,0.3245)
  (4.70886075949,0.3272)
  (4.82911392405,0.3215)
  (4.94936708861,0.3203)
  (5.06962025316,0.3215)
  (5.18987341772,0.3219)
  (5.31012658228,0.3179)
  (5.43037974684,0.3161)
  (5.55063291139,0.3177)
  (5.67088607595,0.3169)
  (5.79113924051,0.314)
  (5.91139240506,0.3137)
  (6.03164556962,0.316)
  (6.15189873418,0.3136)
  (6.27215189873,0.3137)
  (6.39240506329,0.3149)
  (6.51265822785,0.3141)
  (6.63291139241,0.3153)
  (6.75316455696,0.3132)
  (6.87341772152,0.3117)
  (6.99367088608,0.3144)
  (7.11392405063,0.3127)
  (7.23417721519,0.3129)
  (7.35443037975,0.3124)
  (7.4746835443,0.3132)
  (7.59493670886,0.3126)
  (7.71518987342,0.3122)
  (7.83544303797,0.3121)
  (7.95569620253,0.3123)
  (8.07594936709,0.312)
  (8.19620253165,0.3116)
  (8.3164556962,0.3127)
  (8.43670886076,0.3118)
  (8.55696202532,0.3115)
  (8.67721518987,0.3122)
  (8.79746835443,0.312)
  (8.91772151899,0.312)
  (9.03797468354,0.3121)
  (9.1582278481,0.3121)
  (9.27848101266,0.3121)
  (9.39873417722,0.3124)
  (9.51898734177,0.3122)
  (9.63924050633,0.3118)
  (9.75949367089,0.3118)
  (9.87974683544,0.3123)
  (10,0.312)
};
\addlegendentry{Alg.~\ref{alg:offline_bmcpd} ($\mathrm{BM-}$ CPD)}
\end{axis}
\end{tikzpicture}
           \caption{Comparison of empirical error  probabilities as functions of privacy parameter $\varepsilon$, for fixed $\alpha =5$. \textbf{Left:}  Quaternary distributions $P_0=[0.55, 0.25, 0.15, 0.05]$ and $P_1=[0.05, 0.15, 0.25, 0.55]$. \textbf{Right:} Binary distributions $P_0 \sim \mathrm{Ber}(0.1)$ and $P_1 \sim \mathrm{Ber}(0.4)$. Algorithm~\ref{alg:offline_rrcpd} and Algorithm~\ref{alg:offline_bmcpd} have exhibit similar performance over binary alphabets.} 
        \label{fig:cop33}
\end{center}
\end{figure}
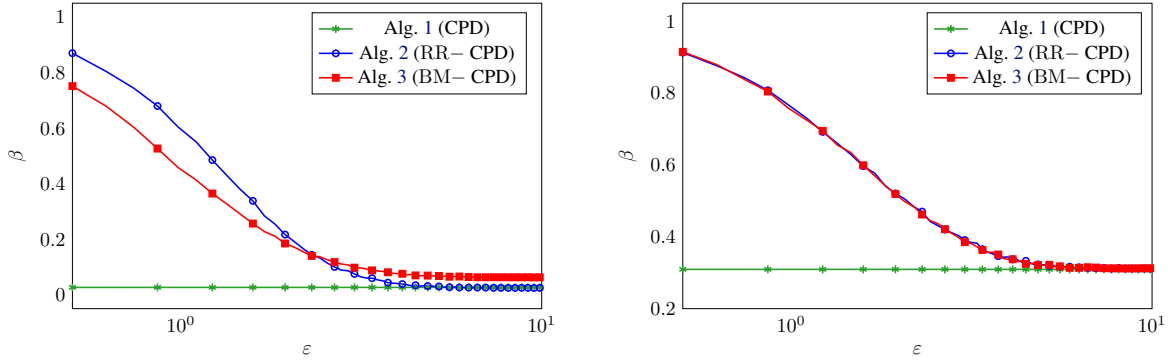
\subsection{Cost of Privacy}~\label{app:cop}
\vspace*{-9mm}
\subsubsection{Heuristic privacy-cost analysis via error exponents}
The Bound~A in our Theorem~\ref{thm:npcpd} is as follows
\begin{align}
        \beta \leq 2\sum^{i^{*}}_{i=1}\exp\left(\frac{-2^{i-1} \alpha C^2}{s^2}\right),\notag\label{eq:npcpd_app} 
\end{align}
where $i^{*}=\lceil\log_2\left(\frac{n-1}{\alpha}\right)\rceil$. 
Since the coefficient $2^{i-1}$ is increasing in $i$, the series is dominated by its largest summand, which is the first term (i.e., $i=1$). Thus, at the level of dominant exponential scaling, we have
\begin{align}
 \beta \asymp 2\exp\left(\frac{-\alpha C^2}{s^2}\right)
\qquad\Longrightarrow\qquad
\Lambda_{np}\;:=\;-\frac{1}{\alpha}\log\Big(\frac{\beta}{2}\Big)\;\asymp\;\frac{C^2}{s^2},
\end{align}
where $\Lambda_{np}$ denotes the (non-private) error exponent. Now, consider the accuracy bounds in Algorithms~\ref{alg:offline_rrcpd} and Algorithm~\ref{alg:offline_bmcpd}. In both cases, the error bound contains a term of the same exponential form
\begin{align}
\sum_{i=1}^{i^*}\exp\left(- \frac{2^{i-1}\alpha C_{{p}}^2}{s_{{p}}^2}\right)\label{eq:abcv}
\end{align}
with algorithm-specific quantities $C_{p}$ and $s_{p}$ (cf. Theorems~\ref{thm:rrcpd} and~\ref{thm:bmcpd}). The first term in Eq.~\eqref{eq:abcv} yields the private exponent
\begin{align}
\beta_{p}\;\asymp\;2\exp\!\left(-\frac{\alpha C_{p}^2}{s_{p}^2}\right)
\qquad\Longrightarrow\qquad
\Lambda_p(\varepsilon):=\;-\frac{1}{\alpha}\log\Big(\frac{\beta_p}{2}\Big)\;\asymp\;\frac{C_{p}^2}{s_{p}^2}.
\end{align}
where $\Lambda_{p}$ is the (private) error exponent.
Note that both in Algorithm~\ref{alg:offline_rrcpd} and Algorithm~\ref{alg:offline_bmcpd}, the LDP parameter enters the bound through $s_p$ and $C_p$. For simplicity, we focus on Algorithm~\ref{alg:offline_bmcpd} (the bound~A in Theorem~\ref{thm:bmcpd}, in terms of $\tilde{C}_b$ and $s_b$) where
\begin{align}
s_b=\min\{2\varepsilon; \tanh(\varepsilon/2)\,s\},
\qquad
\tilde{C}_b \propto \tanh^2(\varepsilon/2)\cdot ({D}(P_0,P_1)),
\end{align}
where $\frac{e^\varepsilon-1}{e^\varepsilon+1}=\tanh(\varepsilon/2)$ and $D(P_0,P_1)$ is some divergence measure (specifically, total variation in this case) between $P_0$ and $P_1$. The proportionality hides algorithm- and distribution-dependent constants (see Theorem~\ref{thm:rrcpd} and~\ref{thm:bmcpd}). In the common (non-clipped) regime, we have $s_b=\tanh(\varepsilon/2)s$. Thus,
\begin{align}
\Lambda^b_p(\varepsilon)
\;\asymp \;
\frac{\tilde{C}_b^2}{s_b^2}
\;\propto\;
\frac{\tanh^4(\varepsilon/2)}{\tanh^2(\varepsilon/2)}\cdot({D}(P_0,P_1))
\;=\;
\tanh^2(\varepsilon/2)\cdot({D}(P_0,P_1))
\end{align}
Therefore, up to algorithm- and distribution-dependent constants, the private error exponent is attenuated by a factor on the order of $\tanh^2(\varepsilon/2)$ in the privacy parameter $\epsilon$, relative to the corresponding non-private exponent,
\begin{align}
\Lambda_p(\varepsilon)\;\asymp\;\tanh^2(\varepsilon/2)\,\Lambda_{np} \label{eq:cop}.
\end{align}

This characterization via our upper bound correctly captures the limiting regimes: as $\varepsilon\to 0$, $\tanh^2(\varepsilon/2)\to 0$ and the exponent vanishes, yielding a trivial (near-constant) error bound; as $\varepsilon\to\infty$, $\tanh^2(\varepsilon/2)\to 1$ and the exponent approaches its non-private counterpart (no privacy constraint). Moreover, since $\tanh(\varepsilon/2)\sim \varepsilon/2$ for small $\varepsilon$, the attenuation scales as $\Theta(\varepsilon^2)$, implying that maintaining a fixed accuracy level under privacy typically requires inflating the tolerance level $(\alpha)$ by a factor on the order of $1/\tanh^2(\varepsilon/2)\approx 4/\varepsilon^2$, up to algorithm- and distribution-dependent constants.
\subsubsection{Experimental Plots}


We fix the block size to $n=2000$ and the true change-point to $k^*=1000$. 
We run Monte Carlo simulations by repeatedly sampling data from $(P_0,P_1)$ and estimating the change-point using Algorithm~\ref{alg:offline_cpd}, Algorithm~\ref{alg:offline_rrcpd} and Algorithm~\ref{alg:offline_bmcpd}. Each experiment is repeated $10,000$ times. We then compute the empirical error exponents for the private algorithms and plot their ratios relative to the non-private algorithm, comparing the results with our approximate, algorithm-independent, generic privacy cost in Eq.~\eqref{eq:cop}.

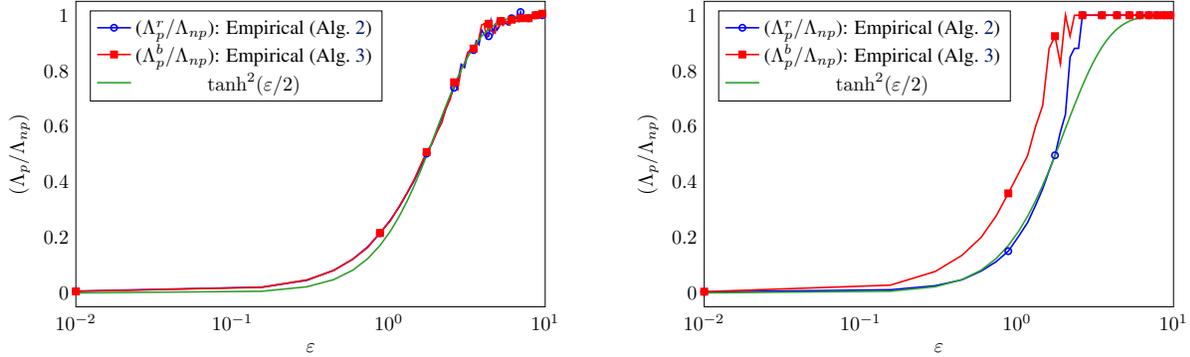
\begin{figure}[H]
\begin{center}      
        \begin{tikzpicture}[scale=0.75, transform shape]
\begin{axis}[
  width=9.9cm,
  height=7cm,
  xlabel={$\varepsilon$ },
  ylabel={$(\Lambda_{p}/\Lambda_{np})$},
  xmin=0.01, xmax=10.0,
  ymin=-0.05, ymax=1.05,
  xmode=log,
  ymode=normal,
  legend pos=north west,
  grid=none,
  minor tick num=0,
  tick style={draw=none},
  minor tick style={draw=none},
]
\definecolor{tabblue}{HTML}{1F77B4}
\definecolor{taborange}{HTML}{FF7F0E}
\definecolor{tabgreen}{HTML}{2CA02C}
\definecolor{tabred}{HTML}{D62728}
\definecolor{tabpurple}{HTML}{9467BD}
\definecolor{tabgreen}{HTML}{2CA02C}

\addplot[blue, thick, mark=o, mark options={scale=0.8}, mark repeat=6] coordinates {
  (0.01,0.0064191033973)
  (0.154782608696,0.0208759678456)
  (0.299565217391,0.0454048187346)
  (0.444347826087,0.0814499617389)
  (0.589130434783,0.121492636203)
  (0.733913043478,0.166056022731)
  (0.878695652174,0.214838427287)
  (1.02347826087,0.262361433875)
  (1.16826086957,0.310154077716)
  (1.31304347826,0.36040809243)
  (1.45782608696,0.40550947892)
  (1.60260869565,0.45501290466)
  (1.74739130435,0.501805460777)
  (1.89217391304,0.538017965185)
  (2.03695652174,0.587434513373)
  (2.18173913043,0.62852987702)
  (2.32652173913,0.664727767246)
  (2.47130434783,0.70335829153)
  (2.61608695652,0.738995318602)
  (2.76086956522,0.762820277687)
  (2.90565217391,0.823355660192)
  (3.05043478261,0.811680252051)
  (3.1952173913,0.861483659861)
  (3.34,0.877232658505)
  (3.4847826087,0.87450630911)
  (3.62956521739,0.907155138629)
  (3.77434782609,0.894559067687)
  (3.91913043478,0.947426619358)
  (4.06391304348,0.931677620713)
  (4.20869565217,0.920740113987)
  (4.35347826087,0.924309716919)
  (4.49826086957,0.978990055925)
  (4.64304347826,0.935482600444)
  (4.78782608696,0.964753028539)
  (4.93260869565,0.947426619358)
  (5.07739130435,0.960256162498)
  (5.22217391304,0.978990055925)
  (5.36695652174,0.964753028539)
  (5.51173913043,0.974113124729)
  (5.65652173913,0.969369825039)
  (5.80130434783,0.964753028539)
  (5.94608695652,0.974113124729)
  (6.09086956522,0.989176500264)
  (6.23565217391,0.984008366966)
  (6.38043478261,0.984008366966)
  (6.5252173913,0.989176500264)
  (6.67,1.0056765613)
  (6.8147826087,0.984008366966)
  (6.95956521739,1.01154558466)
  (7.10434782609,0.994503677771)
  (7.24913043478,0.994503677771)
  (7.39391304348,0.994503677771)
  (7.53869565217,0.989176500264)
  (7.68347826087,0.994503677771)
  (7.82826086957,0.989176500264)
  (7.97304347826,0.994503677771)
  (8.11782608696,0.994503677771)
  (8.26260869565,1)
  (8.40739130435,0.994503677771)
  (8.55217391304,0.994503677771)
  (8.69695652174,1)
  (8.84173913043,1)
  (8.98652173913,1)
  (9.13130434783,1)
  (9.27608695652,1)
  (9.42086956522,1)
  (9.56565217391,1)
  (9.71043478261,1.0056765613)
  (9.8552173913,1)
  (10,1)
};
\addlegendentry{(${\Lambda^r_{p}}/{\Lambda_{np}}$): Empirical (Alg.~\ref{alg:offline_rrcpd})}
\addplot[red, thick, mark=square*, mark options={scale=0.8}, mark repeat=6] coordinates {
  (0.01,0.00545165196418)
  (0.154782608696,0.0201742886692)
  (0.299565217391,0.0470554198149)
  (0.444347826087,0.0804551775312)
  (0.589130434783,0.121562498838)
  (0.733913043478,0.163722597137)
  (0.878695652174,0.215919677491)
  (1.02347826087,0.259392703577)
  (1.16826086957,0.312559281611)
  (1.31304347826,0.361382632406)
  (1.45782608696,0.409518743778)
  (1.60260869565,0.461860335425)
  (1.74739130435,0.507555348989)
  (1.89217391304,0.538794287601)
  (2.03695652174,0.57987350347)
  (2.18173913043,0.612327017107)
  (2.32652173913,0.66553485313)
  (2.47130434783,0.697422180548)
  (2.61608695652,0.758614736255)
  (2.76086956522,0.732920323048)
  (2.90565217391,0.809808670486)
  (3.05043478261,0.825380474019)
  (3.1952173913,0.866575780158)
  (3.34,0.871822230404)
  (3.4847826087,0.880002631338)
  (3.62956521739,0.891548215994)
  (3.77434782609,0.917242629202)
  (3.91913043478,0.964753028539)
  (4.06391304348,0.969369825039)
  (4.20869565217,0.924309716919)
  (4.35347826087,0.969369825039)
  (4.49826086957,0.943353028854)
  (4.64304347826,0.939373092455)
  (4.78782608696,0.935482600444)
  (4.93260869565,0.984008366966)
  (5.07739130435,0.960256162498)
  (5.22217391304,0.978990055925)
  (5.36695652174,0.964753028539)
  (5.51173913043,0.974113124729)
  (5.65652173913,0.978990055925)
  (5.80130434783,0.969369825039)
  (5.94608695652,0.964753028539)
  (6.09086956522,0.984008366966)
  (6.23565217391,0.984008366966)
  (6.38043478261,0.974113124729)
  (6.5252173913,0.978990055925)
  (6.67,0.984008366966)
  (6.8147826087,0.989176500264)
  (6.95956521739,0.989176500264)
  (7.10434782609,0.989176500264)
  (7.24913043478,0.989176500264)
  (7.39391304348,0.994503677771)
  (7.53869565217,1)
  (7.68347826087,0.989176500264)
  (7.82826086957,0.989176500264)
  (7.97304347826,0.994503677771)
  (8.11782608696,0.994503677771)
  (8.26260869565,0.994503677771)
  (8.40739130435,0.994503677771)
  (8.55217391304,0.994503677771)
  (8.69695652174,1)
  (8.84173913043,1)
  (8.98652173913,0.994503677771)
  (9.13130434783,0.994503677771)
  (9.27608695652,1)
  (9.42086956522,1)
  (9.56565217391,1.0056765613)
  (9.71043478261,1)
  (9.8552173913,1)
  (10,1)
};
\addlegendentry{(${\Lambda^b_{p}}/{\Lambda_{np}}$): Empirical (Alg.~\ref{alg:offline_bmcpd})}
\addplot[tabgreen, thick] coordinates {
  (0.01,2.49995833392e-05)
  (0.154782608696,0.00596557951835)
  (0.299565217391,0.022103498668)
  (0.444347826087,0.0477811873222)
  (0.589130434783,0.0819855586036)
  (0.733913043478,0.123430485243)
  (0.878695652174,0.170654675678)
  (1.02347826087,0.222122876635)
  (1.16826086957,0.276319520303)
  (1.31304347826,0.331826676739)
  (1.45782608696,0.387381648598)
  (1.60260869565,0.44191295887)
  (1.74739130435,0.494556250233)
  (1.89217391304,0.544653446714)
  (2.03695652174,0.591739403379)
  (2.18173913043,0.63552035082)
  (2.32652173913,0.675847976361)
  (2.47130434783,0.712692225303)
  (2.61608695652,0.746115059402)
  (2.76086956522,0.776246616885)
  (2.90565217391,0.803264556625)
  (3.05043478261,0.827376866683)
  (3.1952173913,0.848808069624)
  (3.34,0.867788541109)
  (3.4847826087,0.884546544843)
  (3.62956521739,0.899302546833)
  (3.77434782609,0.912265379908)
  (3.91913043478,0.923629865919)
  (4.06391304348,0.933575553578)
  (4.20869565217,0.942266284859)
  (4.35347826087,0.949850356253)
  (4.49826086957,0.956461089501)
  (4.64304347826,0.962217668319)
  (4.78782608696,0.967226132576)
  (4.93260869565,0.971580449831)
  (5.07739130435,0.975363606694)
  (5.22217391304,0.978648680064)
  (5.36695652174,0.981499861682)
  (5.51173913043,0.983973419478)
  (5.65652173913,0.986118586557)
  (5.80130434783,0.987978373941)
  (5.94608695652,0.9895903069)
  (6.09086956522,0.990987087146)
  (6.23565217391,0.992197184789)
  (6.38043478261,0.993245364885)
  (6.5252173913,0.994153153857)
  (6.67,0.994939251233)
  (6.8147826087,0.995619892045)
  (6.95956521739,0.996209165024)
  (7.10434782609,0.996719291392)
  (7.24913043478,0.997160868698)
  (7.39391304348,0.997543083754)
  (7.53869565217,0.997873898356)
  (7.68347826087,0.998160211077)
  (7.82826086957,0.998407998079)
  (7.97304347826,0.998622435585)
  (8.11782608696,0.998808006292)
  (8.26260869565,0.998968591813)
  (8.40739130435,0.999107552899)
  (8.55217391304,0.999227799066)
  (8.69695652174,0.999331848972)
  (8.84173913043,0.99942188277)
  (8.98652173913,0.999499787491)
  (9.13130434783,0.999567196362)
  (9.27608695652,0.999625522883)
  (9.42086956522,0.99967599034)
  (9.56565217391,0.999719657363)
  (9.71043478261,0.999757440062)
  (9.8552173913,0.999790131198)
  (10,0.999818416769)
};
\addlegendentry{$\tanh^2(\varepsilon/2)$}

\end{axis}
\end{tikzpicture}
                \hspace*{6mm}
        \begin{tikzpicture}[scale=0.75, transform shape]
\begin{axis}[
  width=9.9cm,
  height=7cm,
  xlabel={$\varepsilon$ },
  ylabel={$(\Lambda_{p}/\Lambda_{np})$},
  xmin=0.01, xmax=10.0,
  ymin=-0.05, ymax=1.05,
  xmode=log,
  ymode=normal,
  legend pos=north west,
  grid=none,
  minor tick num=0,
  tick style={draw=none},
  minor tick style={draw=none},
]
\definecolor{tabblue}{HTML}{1F77B4}
\definecolor{taborange}{HTML}{FF7F0E}
\definecolor{tabgreen}{HTML}{2CA02C}
\definecolor{tabred}{HTML}{D62728}
\definecolor{tabpurple}{HTML}{9467BD}
\definecolor{tabgreen}{HTML}{2CA02C}

\addplot[blue, thick, mark=o, mark options={scale=0.8}, mark repeat=6] coordinates {
  (0.01,0.00363211474439)
  (0.154782608696,0.011366890935)
  (0.299565217391,0.025826837962)
  (0.444347826087,0.0472398599743)
  (0.589130434783,0.0786308913879)
  (0.733913043478,0.112197295957)
  (0.878695652174,0.150426540507)
  (1.02347826087,0.203966947008)
  (1.16826086957,0.253752963446)
  (1.31304347826,0.315499743093)
  (1.45782608696,0.374731116058)
  (1.60260869565,0.439857912472)
  (1.74739130435,0.495736283056)
  (1.89217391304,0.579683397138)
  (2.03695652174,0.642152087231)
  (2.18173913043,0.849475779563)
  (2.32652173913,0.880710124609)
  (2.47130434783,0.880710124609)
  (2.61608695652,0.999989143299)
  (2.76086956522,0.999989143299)
  (2.90565217391,1)
  (3.05043478261,1)
  (3.1952173913,1)
  (3.34,1)
  (3.4847826087,1)
  (3.62956521739,1)
  (3.77434782609,1)
  (3.91913043478,1)
  (4.06391304348,1)
  (4.20869565217,1)
  (4.35347826087,1)
  (4.49826086957,1)
  (4.64304347826,1)
  (4.78782608696,1)
  (4.93260869565,1)
  (5.07739130435,1)
  (5.22217391304,1)
  (5.36695652174,1)
  (5.51173913043,1)
  (5.65652173913,1)
  (5.80130434783,1)
  (5.94608695652,1)
  (6.09086956522,1)
  (6.23565217391,1)
  (6.38043478261,1)
  (6.5252173913,1)
  (6.67,1)
  (6.8147826087,1)
  (6.95956521739,1)
  (7.10434782609,1)
  (7.24913043478,1)
  (7.39391304348,1)
  (7.53869565217,1)
  (7.68347826087,1)
  (7.82826086957,1)
  (7.97304347826,1)
  (8.11782608696,1)
  (8.26260869565,1)
  (8.40739130435,1)
  (8.55217391304,1)
  (8.69695652174,1)
  (8.84173913043,1)
  (8.98652173913,1)
  (9.13130434783,1)
  (9.27608695652,1)
  (9.42086956522,1)
  (9.56565217391,1)
  (9.71043478261,1)
  (9.8552173913,1)
  (10,1)
};
\addlegendentry{(${\Lambda^r_{p}}/{\Lambda_{np}}$): Empirical (Alg.~\ref{alg:offline_rrcpd})}
\addplot[red, thick, mark=square*, mark options={scale=0.8}, mark repeat=6] coordinates {
  (0.01,0.00452265837562)
  (0.154782608696,0.02795482302)
  (0.299565217391,0.0771182821987)
  (0.444347826087,0.135263565995)
  (0.589130434783,0.200059033682)
  (0.733913043478,0.276556556961)
  (0.878695652174,0.358534152791)
  (1.02347826087,0.430879870972)
  (1.16826086957,0.494697304393)
  (1.31304347826,0.599478493738)
  (1.45782608696,0.674735175606)
  (1.60260869565,0.880710124609)
  (1.74739130435,0.924732461431)
  (1.89217391304,0.825248539342)
  (2.03695652174,0.999989143299)
  (2.18173913043,0.924732461431)
  (2.32652173913,1)
  (2.47130434783,1)
  (2.61608695652,1)
  (2.76086956522,1)
  (2.90565217391,1)
  (3.05043478261,1)
  (3.1952173913,1)
  (3.34,1)
  (3.4847826087,1)
  (3.62956521739,1)
  (3.77434782609,1)
  (3.91913043478,1)
  (4.06391304348,1)
  (4.20869565217,1)
  (4.35347826087,1)
  (4.49826086957,1)
  (4.64304347826,1)
  (4.78782608696,1)
  (4.93260869565,1)
  (5.07739130435,1)
  (5.22217391304,1)
  (5.36695652174,1)
  (5.51173913043,1)
  (5.65652173913,1)
  (5.80130434783,1)
  (5.94608695652,1)
  (6.09086956522,1)
  (6.23565217391,1)
  (6.38043478261,1)
  (6.5252173913,1)
  (6.67,1)
  (6.8147826087,1)
  (6.95956521739,1)
  (7.10434782609,1)
  (7.24913043478,1)
  (7.39391304348,1)
  (7.53869565217,1)
  (7.68347826087,1)
  (7.82826086957,1)
  (7.97304347826,1)
  (8.11782608696,1)
  (8.26260869565,1)
  (8.40739130435,1)
  (8.55217391304,1)
  (8.69695652174,1)
  (8.84173913043,1)
  (8.98652173913,1)
  (9.13130434783,1)
  (9.27608695652,1)
  (9.42086956522,1)
  (9.56565217391,1)
  (9.71043478261,1)
  (9.8552173913,1)
  (10,1)
};
\addlegendentry{(${\Lambda^b_{p}}/{\Lambda_{np}}$): Empirical (Alg.~\ref{alg:offline_bmcpd})}
\addplot[tabgreen, thick] coordinates {
  (0.01,2.49995833392e-05)
  (0.154782608696,0.00596557951835)
  (0.299565217391,0.022103498668)
  (0.444347826087,0.0477811873222)
  (0.589130434783,0.0819855586036)
  (0.733913043478,0.123430485243)
  (0.878695652174,0.170654675678)
  (1.02347826087,0.222122876635)
  (1.16826086957,0.276319520303)
  (1.31304347826,0.331826676739)
  (1.45782608696,0.387381648598)
  (1.60260869565,0.44191295887)
  (1.74739130435,0.494556250233)
  (1.89217391304,0.544653446714)
  (2.03695652174,0.591739403379)
  (2.18173913043,0.63552035082)
  (2.32652173913,0.675847976361)
  (2.47130434783,0.712692225303)
  (2.61608695652,0.746115059402)
  (2.76086956522,0.776246616885)
  (2.90565217391,0.803264556625)
  (3.05043478261,0.827376866683)
  (3.1952173913,0.848808069624)
  (3.34,0.867788541109)
  (3.4847826087,0.884546544843)
  (3.62956521739,0.899302546833)
  (3.77434782609,0.912265379908)
  (3.91913043478,0.923629865919)
  (4.06391304348,0.933575553578)
  (4.20869565217,0.942266284859)
  (4.35347826087,0.949850356253)
  (4.49826086957,0.956461089501)
  (4.64304347826,0.962217668319)
  (4.78782608696,0.967226132576)
  (4.93260869565,0.971580449831)
  (5.07739130435,0.975363606694)
  (5.22217391304,0.978648680064)
  (5.36695652174,0.981499861682)
  (5.51173913043,0.983973419478)
  (5.65652173913,0.986118586557)
  (5.80130434783,0.987978373941)
  (5.94608695652,0.9895903069)
  (6.09086956522,0.990987087146)
  (6.23565217391,0.992197184789)
  (6.38043478261,0.993245364885)
  (6.5252173913,0.994153153857)
  (6.67,0.994939251233)
  (6.8147826087,0.995619892045)
  (6.95956521739,0.996209165024)
  (7.10434782609,0.996719291392)
  (7.24913043478,0.997160868698)
  (7.39391304348,0.997543083754)
  (7.53869565217,0.997873898356)
  (7.68347826087,0.998160211077)
  (7.82826086957,0.998407998079)
  (7.97304347826,0.998622435585)
  (8.11782608696,0.998808006292)
  (8.26260869565,0.998968591813)
  (8.40739130435,0.999107552899)
  (8.55217391304,0.999227799066)
  (8.69695652174,0.999331848972)
  (8.84173913043,0.99942188277)
  (8.98652173913,0.999499787491)
  (9.13130434783,0.999567196362)
  (9.27608695652,0.999625522883)
  (9.42086956522,0.99967599034)
  (9.56565217391,0.999719657363)
  (9.71043478261,0.999757440062)
  (9.8552173913,0.999790131198)
  (10,0.999818416769)
};
\addlegendentry{$\tanh^2(\varepsilon/2)$}

\end{axis}
\end{tikzpicture}
                \caption{Comparison of empirical error exponents ratio with the theoretical cost of privacy, for fixed $\alpha =50$. \textbf{Left:} Bernoulli distributions i.e., $P_0 \sim \mathrm{Ber}(0.1)$ and $P_1 \sim \mathrm{Ber}(0.4)$. \textbf{Right:} Binomial distributions i.e., $P_0 \sim \mathrm{Bin}(5, 0.1)$ and $P_1 \sim \mathrm{Bin}(5, 0.4)$.}
        \label{fig:cop_2}
\end{center}
\end{figure}
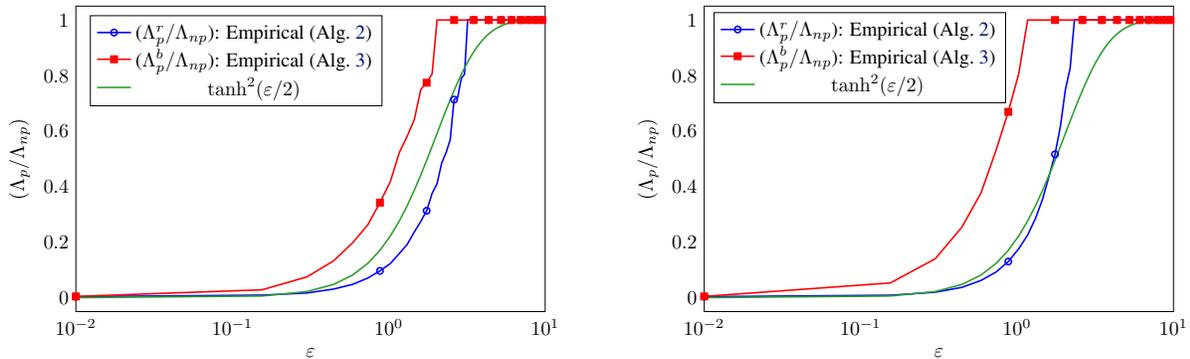
\begin{figure}[H]
\begin{center}        
        \begin{tikzpicture}[scale=0.75, transform shape]
\begin{axis}[
  width=9.9cm,
  height=7cm,
  xlabel={$\varepsilon$ },
  ylabel={$(\Lambda_{p}/\Lambda_{np})$},
  xmin=0.01, xmax=10.0,
  ymin=-0.05, ymax=1.05,
  xmode=log,
  ymode=normal,
  legend pos=north west,
  grid=none,
  minor tick num=0,
  tick style={draw=none},
  minor tick style={draw=none},
]
\definecolor{tabblue}{HTML}{1F77B4}
\definecolor{taborange}{HTML}{FF7F0E}
\definecolor{tabgreen}{HTML}{2CA02C}
\definecolor{tabred}{HTML}{D62728}
\definecolor{tabpurple}{HTML}{9467BD}
\definecolor{tabgreen}{HTML}{2CA02C}

\addplot[blue, thick, mark=o, mark options={scale=0.8}, mark repeat=6] coordinates {
  (0.01,0.00372196470646)
  (0.154782608696,0.00913558578742)
  (0.299565217391,0.016944812564)
  (0.444347826087,0.0310752221269)
  (0.589130434783,0.0482172376161)
  (0.733913043478,0.0706232145655)
  (0.878695652174,0.0965649162967)
  (1.02347826087,0.122866236321)
  (1.16826086957,0.159424784663)
  (1.31304347826,0.192130516466)
  (1.45782608696,0.23711282808)
  (1.60260869565,0.274768749951)
  (1.74739130435,0.313143648284)
  (1.89217391304,0.375418285704)
  (2.03695652174,0.410989073668)
  (2.18173913043,0.485768513265)
  (2.32652173913,0.522873068542)
  (2.47130434783,0.568924855885)
  (2.61608695652,0.713460245172)
  (2.76086956522,0.72150632866)
  (2.90565217391,0.788716927039)
  (3.05043478261,0.805453442742)
  (3.1952173913,0.999989143299)
  (3.34,0.999989143299)
  (3.4847826087,1)
  (3.62956521739,1)
  (3.77434782609,1)
  (3.91913043478,1)
  (4.06391304348,1)
  (4.20869565217,1)
  (4.35347826087,1)
  (4.49826086957,1)
  (4.64304347826,1)
  (4.78782608696,1)
  (4.93260869565,1)
  (5.07739130435,1)
  (5.22217391304,1)
  (5.36695652174,1)
  (5.51173913043,1)
  (5.65652173913,1)
  (5.80130434783,1)
  (5.94608695652,1)
  (6.09086956522,1)
  (6.23565217391,1)
  (6.38043478261,1)
  (6.5252173913,1)
  (6.67,1)
  (6.8147826087,1)
  (6.95956521739,1)
  (7.10434782609,1)
  (7.24913043478,1)
  (7.39391304348,1)
  (7.53869565217,1)
  (7.68347826087,1)
  (7.82826086957,1)
  (7.97304347826,1)
  (8.11782608696,1)
  (8.26260869565,1)
  (8.40739130435,1)
  (8.55217391304,1)
  (8.69695652174,1)
  (8.84173913043,1)
  (8.98652173913,1)
  (9.13130434783,1)
  (9.27608695652,1)
  (9.42086956522,1)
  (9.56565217391,1)
  (9.71043478261,1)
  (9.8552173913,1)
  (10,1)
};
\addlegendentry{(${\Lambda^r_{p}}/{\Lambda_{np}}$): Empirical (Alg.~\ref{alg:offline_rrcpd})}
\addplot[red, thick, mark=square*, mark options={scale=0.8}, mark repeat=6] coordinates {
  (0.01,0.00446607781166)
  (0.154782608696,0.0283629112212)
  (0.299565217391,0.074866522943)
  (0.444347826087,0.132764107188)
  (0.589130434783,0.198426334618)
  (0.733913043478,0.262162774047)
  (0.878695652174,0.342389313985)
  (1.02347826087,0.417900562069)
  (1.16826086957,0.522873068542)
  (1.31304347826,0.581969217181)
  (1.45782608696,0.642152087231)
  (1.60260869565,0.749991857474)
  (1.74739130435,0.774219097695)
  (1.89217391304,0.805453442742)
  (2.03695652174,1)
  (2.18173913043,1)
  (2.32652173913,0.999989143299)
  (2.47130434783,0.999989143299)
  (2.61608695652,1)
  (2.76086956522,1)
  (2.90565217391,1)
  (3.05043478261,1)
  (3.1952173913,1)
  (3.34,1)
  (3.4847826087,1)
  (3.62956521739,1)
  (3.77434782609,1)
  (3.91913043478,1)
  (4.06391304348,1)
  (4.20869565217,1)
  (4.35347826087,1)
  (4.49826086957,1)
  (4.64304347826,1)
  (4.78782608696,1)
  (4.93260869565,1)
  (5.07739130435,1)
  (5.22217391304,1)
  (5.36695652174,1)
  (5.51173913043,1)
  (5.65652173913,1)
  (5.80130434783,1)
  (5.94608695652,1)
  (6.09086956522,1)
  (6.23565217391,1)
  (6.38043478261,1)
  (6.5252173913,1)
  (6.67,1)
  (6.8147826087,1)
  (6.95956521739,1)
  (7.10434782609,1)
  (7.24913043478,1)
  (7.39391304348,1)
  (7.53869565217,1)
  (7.68347826087,1)
  (7.82826086957,1)
  (7.97304347826,1)
  (8.11782608696,1)
  (8.26260869565,1)
  (8.40739130435,1)
  (8.55217391304,1)
  (8.69695652174,1)
  (8.84173913043,1)
  (8.98652173913,1)
  (9.13130434783,1)
  (9.27608695652,1)
  (9.42086956522,1)
  (9.56565217391,1)
  (9.71043478261,1)
  (9.8552173913,1)
  (10,1)
};
\addlegendentry{(${\Lambda^b_{p}}/{\Lambda_{np}}$): Empirical (Alg.~\ref{alg:offline_bmcpd})}
\addplot[tabgreen, thick] coordinates {
  (0.01,2.49995833392e-05)
  (0.154782608696,0.00596557951835)
  (0.299565217391,0.022103498668)
  (0.444347826087,0.0477811873222)
  (0.589130434783,0.0819855586036)
  (0.733913043478,0.123430485243)
  (0.878695652174,0.170654675678)
  (1.02347826087,0.222122876635)
  (1.16826086957,0.276319520303)
  (1.31304347826,0.331826676739)
  (1.45782608696,0.387381648598)
  (1.60260869565,0.44191295887)
  (1.74739130435,0.494556250233)
  (1.89217391304,0.544653446714)
  (2.03695652174,0.591739403379)
  (2.18173913043,0.63552035082)
  (2.32652173913,0.675847976361)
  (2.47130434783,0.712692225303)
  (2.61608695652,0.746115059402)
  (2.76086956522,0.776246616885)
  (2.90565217391,0.803264556625)
  (3.05043478261,0.827376866683)
  (3.1952173913,0.848808069624)
  (3.34,0.867788541109)
  (3.4847826087,0.884546544843)
  (3.62956521739,0.899302546833)
  (3.77434782609,0.912265379908)
  (3.91913043478,0.923629865919)
  (4.06391304348,0.933575553578)
  (4.20869565217,0.942266284859)
  (4.35347826087,0.949850356253)
  (4.49826086957,0.956461089501)
  (4.64304347826,0.962217668319)
  (4.78782608696,0.967226132576)
  (4.93260869565,0.971580449831)
  (5.07739130435,0.975363606694)
  (5.22217391304,0.978648680064)
  (5.36695652174,0.981499861682)
  (5.51173913043,0.983973419478)
  (5.65652173913,0.986118586557)
  (5.80130434783,0.987978373941)
  (5.94608695652,0.9895903069)
  (6.09086956522,0.990987087146)
  (6.23565217391,0.992197184789)
  (6.38043478261,0.993245364885)
  (6.5252173913,0.994153153857)
  (6.67,0.994939251233)
  (6.8147826087,0.995619892045)
  (6.95956521739,0.996209165024)
  (7.10434782609,0.996719291392)
  (7.24913043478,0.997160868698)
  (7.39391304348,0.997543083754)
  (7.53869565217,0.997873898356)
  (7.68347826087,0.998160211077)
  (7.82826086957,0.998407998079)
  (7.97304347826,0.998622435585)
  (8.11782608696,0.998808006292)
  (8.26260869565,0.998968591813)
  (8.40739130435,0.999107552899)
  (8.55217391304,0.999227799066)
  (8.69695652174,0.999331848972)
  (8.84173913043,0.99942188277)
  (8.98652173913,0.999499787491)
  (9.13130434783,0.999567196362)
  (9.27608695652,0.999625522883)
  (9.42086956522,0.99967599034)
  (9.56565217391,0.999719657363)
  (9.71043478261,0.999757440062)
  (9.8552173913,0.999790131198)
  (10,0.999818416769)
};
\addlegendentry{$\tanh^2(\varepsilon/2)$}

\end{axis}
\end{tikzpicture}
        \hspace*{6mm}
        \begin{tikzpicture}[scale=0.75, transform shape]
\begin{axis}[
  width=9.9cm,
  height=7cm,
  xlabel={$\varepsilon$ },
  ylabel={$(\Lambda_{p}/\Lambda_{np})$},
  xmin=0.01, xmax=10.0,
  ymin=-0.05, ymax=1.05,
  xmode=log,
  ymode=normal,
  legend style={
  at={(0.02,0.98)}, anchor=north west,   
},
  grid=none,
  minor tick num=0,
  tick style={draw=none},
  minor tick style={draw=none},
]
\definecolor{tabblue}{HTML}{1F77B4}
\definecolor{taborange}{HTML}{FF7F0E}
\definecolor{tabgreen}{HTML}{2CA02C}
\definecolor{tabred}{HTML}{D62728}
\definecolor{tabpurple}{HTML}{9467BD}
\definecolor{tabgreen}{HTML}{2CA02C}

\addplot[blue, thick, mark=o, mark options={scale=0.8}, mark repeat=6] coordinates {
  (0.01,0.00363211474439)
  (0.154782608696,0.00878185249757)
  (0.299565217391,0.0197343130506)
  (0.444347826087,0.0372003020795)
  (0.589130434783,0.0624300934193)
  (0.733913043478,0.0923920281602)
  (0.878695652174,0.130284843639)
  (1.02347826087,0.175831793461)
  (1.16826086957,0.225509783933)
  (1.31304347826,0.286420379935)
  (1.45782608696,0.356199171578)
  (1.60260869565,0.43861707106)
  (1.74739130435,0.516369781056)
  (1.89217391304,0.61712357074)
  (2.03695652174,0.749991857474)
  (2.18173913043,0.825248539342)
  (2.32652173913,1)
  (2.47130434783,1)
  (2.61608695652,1)
  (2.76086956522,1)
  (2.90565217391,1)
  (3.05043478261,1)
  (3.1952173913,1)
  (3.34,1)
  (3.4847826087,1)
  (3.62956521739,1)
  (3.77434782609,1)
  (3.91913043478,1)
  (4.06391304348,1)
  (4.20869565217,1)
  (4.35347826087,1)
  (4.49826086957,1)
  (4.64304347826,1)
  (4.78782608696,1)
  (4.93260869565,1)
  (5.07739130435,1)
  (5.22217391304,1)
  (5.36695652174,1)
  (5.51173913043,1)
  (5.65652173913,1)
  (5.80130434783,1)
  (5.94608695652,1)
  (6.09086956522,1)
  (6.23565217391,1)
  (6.38043478261,1)
  (6.5252173913,1)
  (6.67,1)
  (6.8147826087,1)
  (6.95956521739,1)
  (7.10434782609,1)
  (7.24913043478,1)
  (7.39391304348,1)
  (7.53869565217,1)
  (7.68347826087,1)
  (7.82826086957,1)
  (7.97304347826,1)
  (8.11782608696,1)
  (8.26260869565,1)
  (8.40739130435,1)
  (8.55217391304,1)
  (8.69695652174,1)
  (8.84173913043,1)
  (8.98652173913,1)
  (9.13130434783,1)
  (9.27608695652,1)
  (9.42086956522,1)
  (9.56565217391,1)
  (9.71043478261,1)
  (9.8552173913,1)
  (10,1)
};
\addlegendentry{(${\Lambda^r_{p}}/{\Lambda_{np}}$): Empirical (Alg.~\ref{alg:offline_rrcpd})}
\addplot[red, thick, mark=square*, mark options={scale=0.8}, mark repeat=6] coordinates {
  (0.01,0.00453397802758)
  (0.154782608696,0.0531697335897)
  (0.299565217391,0.139889289721)
  (0.444347826087,0.254429429446)
  (0.589130434783,0.378211299959)
  (0.733913043478,0.531228916696)
  (0.878695652174,0.66943790835)
  (1.02347826087,0.805453442742)
  (1.16826086957,1)
  (1.31304347826,1)
  (1.45782608696,1)
  (1.60260869565,1)
  (1.74739130435,1)
  (1.89217391304,1)
  (2.03695652174,1)
  (2.18173913043,1)
  (2.32652173913,1)
  (2.47130434783,1)
  (2.61608695652,1)
  (2.76086956522,1)
  (2.90565217391,1)
  (3.05043478261,1)
  (3.1952173913,1)
  (3.34,1)
  (3.4847826087,1)
  (3.62956521739,1)
  (3.77434782609,1)
  (3.91913043478,1)
  (4.06391304348,1)
  (4.20869565217,1)
  (4.35347826087,1)
  (4.49826086957,1)
  (4.64304347826,1)
  (4.78782608696,1)
  (4.93260869565,1)
  (5.07739130435,1)
  (5.22217391304,1)
  (5.36695652174,1)
  (5.51173913043,1)
  (5.65652173913,1)
  (5.80130434783,1)
  (5.94608695652,1)
  (6.09086956522,1)
  (6.23565217391,1)
  (6.38043478261,1)
  (6.5252173913,1)
  (6.67,1)
  (6.8147826087,1)
  (6.95956521739,1)
  (7.10434782609,1)
  (7.24913043478,1)
  (7.39391304348,1)
  (7.53869565217,1)
  (7.68347826087,1)
  (7.82826086957,1)
  (7.97304347826,1)
  (8.11782608696,1)
  (8.26260869565,1)
  (8.40739130435,1)
  (8.55217391304,1)
  (8.69695652174,1)
  (8.84173913043,1)
  (8.98652173913,1)
  (9.13130434783,1)
  (9.27608695652,1)
  (9.42086956522,1)
  (9.56565217391,1)
  (9.71043478261,1)
  (9.8552173913,1)
  (10,1)
};
\addlegendentry{(${\Lambda^b_{p}}/{\Lambda_{np}}$): Empirical (Alg.~\ref{alg:offline_bmcpd})}
\addplot[tabgreen, thick] coordinates {
  (0.01,2.49995833392e-05)
  (0.154782608696,0.00596557951835)
  (0.299565217391,0.022103498668)
  (0.444347826087,0.0477811873222)
  (0.589130434783,0.0819855586036)
  (0.733913043478,0.123430485243)
  (0.878695652174,0.170654675678)
  (1.02347826087,0.222122876635)
  (1.16826086957,0.276319520303)
  (1.31304347826,0.331826676739)
  (1.45782608696,0.387381648598)
  (1.60260869565,0.44191295887)
  (1.74739130435,0.494556250233)
  (1.89217391304,0.544653446714)
  (2.03695652174,0.591739403379)
  (2.18173913043,0.63552035082)
  (2.32652173913,0.675847976361)
  (2.47130434783,0.712692225303)
  (2.61608695652,0.746115059402)
  (2.76086956522,0.776246616885)
  (2.90565217391,0.803264556625)
  (3.05043478261,0.827376866683)
  (3.1952173913,0.848808069624)
  (3.34,0.867788541109)
  (3.4847826087,0.884546544843)
  (3.62956521739,0.899302546833)
  (3.77434782609,0.912265379908)
  (3.91913043478,0.923629865919)
  (4.06391304348,0.933575553578)
  (4.20869565217,0.942266284859)
  (4.35347826087,0.949850356253)
  (4.49826086957,0.956461089501)
  (4.64304347826,0.962217668319)
  (4.78782608696,0.967226132576)
  (4.93260869565,0.971580449831)
  (5.07739130435,0.975363606694)
  (5.22217391304,0.978648680064)
  (5.36695652174,0.981499861682)
  (5.51173913043,0.983973419478)
  (5.65652173913,0.986118586557)
  (5.80130434783,0.987978373941)
  (5.94608695652,0.9895903069)
  (6.09086956522,0.990987087146)
  (6.23565217391,0.992197184789)
  (6.38043478261,0.993245364885)
  (6.5252173913,0.994153153857)
  (6.67,0.994939251233)
  (6.8147826087,0.995619892045)
  (6.95956521739,0.996209165024)
  (7.10434782609,0.996719291392)
  (7.24913043478,0.997160868698)
  (7.39391304348,0.997543083754)
  (7.53869565217,0.997873898356)
  (7.68347826087,0.998160211077)
  (7.82826086957,0.998407998079)
  (7.97304347826,0.998622435585)
  (8.11782608696,0.998808006292)
  (8.26260869565,0.998968591813)
  (8.40739130435,0.999107552899)
  (8.55217391304,0.999227799066)
  (8.69695652174,0.999331848972)
  (8.84173913043,0.99942188277)
  (8.98652173913,0.999499787491)
  (9.13130434783,0.999567196362)
  (9.27608695652,0.999625522883)
  (9.42086956522,0.99967599034)
  (9.56565217391,0.999719657363)
  (9.71043478261,0.999757440062)
  (9.8552173913,0.999790131198)
  (10,0.999818416769)
};
\addlegendentry{$\tanh^2(\varepsilon/2)$}

\end{axis}
\end{tikzpicture}
           \caption{Comparison of empirical error exponents ratio with the theoretical cost of privacy, for fixed $\alpha =50$. \textbf{Left:} truncated geometric distributions $\mathrm{TGeom}(p,m)$ with truncation parameter $m=10$ i.e., $P_0 \sim \mathrm{TGeom}(0.2,10)$ and $P_1 \sim \mathrm{TGeom}(0.8,10)$. \textbf{Right:} truncated Poisson distributions $\mathrm{TPois}(\lambda,m)$ with truncation parameter $m=10$ i.e., $P_0 \sim \mathrm{TPois}(1, 10)$ and $P_1 \sim \mathrm{TPois}(7, 10)$.}
        \label{fig:cop22}
\end{center}
\end{figure}
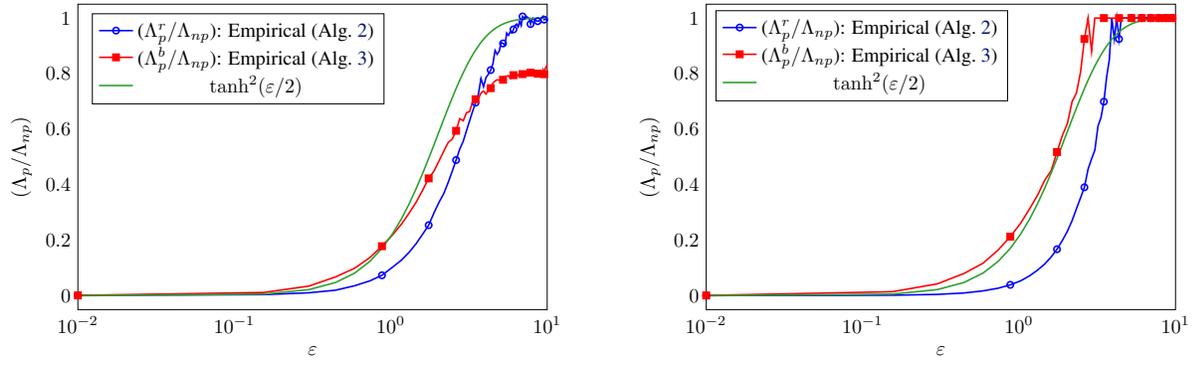
\begin{figure}[H]
\begin{center}        
        \begin{tikzpicture}[scale=0.75, transform shape]
\begin{axis}[
  width=9.9cm,
  height=7cm,
  xlabel={$\varepsilon$ },
  ylabel={$(\Lambda_{p}/\Lambda_{np})$},
  xmin=0.01, xmax=10.0,
  ymin=-0.05, ymax=1.05,
  xmode=log,
  ymode=normal,
  legend pos=north west,
  grid=none,
  minor tick num=0,
  tick style={draw=none},
  minor tick style={draw=none},
]
\definecolor{tabblue}{HTML}{1F77B4}
\definecolor{taborange}{HTML}{FF7F0E}
\definecolor{tabgreen}{HTML}{2CA02C}
\definecolor{tabred}{HTML}{D62728}
\definecolor{tabpurple}{HTML}{9467BD}
\definecolor{tabgreen}{HTML}{2CA02C}

\addplot[blue, thick, mark=o, mark options={scale=0.8}, mark repeat=6] coordinates {
  (0.01,0.00120923429647)
  (0.154782608696,0.00412889866724)
  (0.299565217391,0.0107063302964)
  (0.444347826087,0.02002173515)
  (0.589130434783,0.0362315316673)
  (0.733913043478,0.0534456048457)
  (0.878695652174,0.0737063877336)
  (1.02347826087,0.102319465974)
  (1.16826086957,0.127896296714)
  (1.31304347826,0.157088187899)
  (1.45782608696,0.188773849746)
  (1.60260869565,0.222421050409)
  (1.74739130435,0.254195292432)
  (1.89217391304,0.299351778169)
  (2.03695652174,0.337104186175)
  (2.18173913043,0.368026484286)
  (2.32652173913,0.407390630066)
  (2.47130434783,0.448205559773)
  (2.61608695652,0.488665630968)
  (2.76086956522,0.528185133591)
  (2.90565217391,0.556270453742)
  (3.05043478261,0.594682927266)
  (3.1952173913,0.630930058635)
  (3.34,0.665026295221)
  (3.4847826087,0.695430700135)
  (3.62956521739,0.724098902892)
  (3.77434782609,0.782714642323)
  (3.91913043478,0.752961060232)
  (4.06391304348,0.785993644923)
  (4.20869565217,0.797988515897)
  (4.35347826087,0.812805608352)
  (4.49826086957,0.835515936409)
  (4.64304347826,0.883573177081)
  (4.78782608696,0.869570140904)
  (4.93260869565,0.880679958866)
  (5.07739130435,0.912065234989)
  (5.22217391304,0.908656362574)
  (5.36695652174,0.902034683457)
  (5.51173913043,0.946227062139)
  (5.65652173913,0.938028253973)
  (5.80130434783,0.950477708514)
  (5.94608695652,0.946227062139)
  (6.09086956522,0.959307460936)
  (6.23565217391,0.973464127138)
  (6.38043478261,0.978454112222)
  (6.5252173913,0.968614723876)
  (6.67,0.988890182038)
  (6.8147826087,0.973464127138)
  (6.95956521739,1.00583589554)
  (7.10434782609,1)
  (7.24913043478,1.00583589554)
  (7.39391304348,1.00583589554)
  (7.53869565217,1)
  (7.68347826087,1)
  (7.82826086957,0.978454112222)
  (7.97304347826,0.994355480167)
  (8.11782608696,1)
  (8.26260869565,0.994355480167)
  (8.40739130435,0.994355480167)
  (8.55217391304,0.988890182038)
  (8.69695652174,0.988890182038)
  (8.84173913043,0.988890182038)
  (8.98652173913,0.994355480167)
  (9.13130434783,1)
  (9.27608695652,1)
  (9.42086956522,1.00583589554)
  (9.56565217391,0.994355480167)
  (9.71043478261,1)
  (9.8552173913,0.994355480167)
  (10,1)
};
\addlegendentry{(${\Lambda^r_{p}}/{\Lambda_{np}}$): Empirical (Alg.~\ref{alg:offline_rrcpd})}
\addplot[red, thick, mark=square*, mark options={scale=0.8}, mark repeat=6] coordinates {
  (0.01,0.00131327928702)
  (0.154782608696,0.0119196416507)
  (0.299565217391,0.0347717026295)
  (0.444347826087,0.0679656101886)
  (0.589130434783,0.099565623402)
  (0.733913043478,0.137151297213)
  (0.878695652174,0.178471228384)
  (1.02347826087,0.218211676579)
  (1.16826086957,0.257313486679)
  (1.31304347826,0.298277410569)
  (1.45782608696,0.338205705671)
  (1.60260869565,0.381222190496)
  (1.74739130435,0.422736413372)
  (1.89217391304,0.449606998436)
  (2.03695652174,0.486037451191)
  (2.18173913043,0.520581392663)
  (2.32652173913,0.551542149628)
  (2.47130434783,0.564299276157)
  (2.61608695652,0.594139032875)
  (2.76086956522,0.637789230222)
  (2.90565217391,0.630258937194)
  (3.05043478261,0.659381775388)
  (3.1952173913,0.663394601188)
  (3.34,0.700391920505)
  (3.4847826087,0.70758697987)
  (3.62956521739,0.708639842062)
  (3.77434782609,0.729974641607)
  (3.91913043478,0.737300957122)
  (4.06391304348,0.72642518737)
  (4.20869565217,0.739814029953)
  (4.35347826087,0.747581171399)
  (4.49826086957,0.765718186761)
  (4.64304347826,0.773236518394)
  (4.78782608696,0.777910355672)
  (4.93260869565,0.776338267745)
  (5.07739130435,0.777910355672)
  (5.22217391304,0.777910355672)
  (5.36695652174,0.78933632144)
  (5.51173913043,0.791032319707)
  (5.65652173913,0.792745193855)
  (5.80130434783,0.782714642323)
  (5.94608695652,0.784346336356)
  (6.09086956522,0.794475283107)
  (6.23565217391,0.78933632144)
  (6.38043478261,0.785993644923)
  (6.5252173913,0.796222937018)
  (6.67,0.796222937018)
  (6.8147826087,0.797988515897)
  (6.95956521739,0.798222937018)
  (7.10434782609,0.798988515897)
  (7.24913043478,0.801574946277)
  (7.39391304348,0.799772391255)
  (7.53869565217,0.797988515897)
  (7.68347826087,0.799772391255)
  (7.82826086957,0.803396576309)
  (7.97304347826,0.799772391255)
  (8.11782608696,0.797988515897)
  (8.26260869565,0.797988515897)
  (8.40739130435,0.808396576309)
  (8.55217391304,0.809396576309)
  (8.69695652174,0.799772391255)
  (8.84173913043,0.796222937018)
  (8.98652173913,0.803396576309)
  (9.13130434783,0.799772391255)
  (9.27608695652,0.799772391255)
  (9.42086956522,0.813396576309)
  (9.56565217391,0.797988515897)
  (9.71043478261,0.818396576309)
  (9.8552173913, 0.823396576309)
  (10,0.833396576309)
};
\addlegendentry{(${\Lambda^b_{p}}/{\Lambda_{np}}$): Empirical (Alg.~\ref{alg:offline_bmcpd})}
\addplot[tabgreen, thick] coordinates {
  (0.01,2.49995833392e-05)
  (0.154782608696,0.00596557951835)
  (0.299565217391,0.022103498668)
  (0.444347826087,0.0477811873222)
  (0.589130434783,0.0819855586036)
  (0.733913043478,0.123430485243)
  (0.878695652174,0.170654675678)
  (1.02347826087,0.222122876635)
  (1.16826086957,0.276319520303)
  (1.31304347826,0.331826676739)
  (1.45782608696,0.387381648598)
  (1.60260869565,0.44191295887)
  (1.74739130435,0.494556250233)
  (1.89217391304,0.544653446714)
  (2.03695652174,0.591739403379)
  (2.18173913043,0.63552035082)
  (2.32652173913,0.675847976361)
  (2.47130434783,0.712692225303)
  (2.61608695652,0.746115059402)
  (2.76086956522,0.776246616885)
  (2.90565217391,0.803264556625)
  (3.05043478261,0.827376866683)
  (3.1952173913,0.848808069624)
  (3.34,0.867788541109)
  (3.4847826087,0.884546544843)
  (3.62956521739,0.899302546833)
  (3.77434782609,0.912265379908)
  (3.91913043478,0.923629865919)
  (4.06391304348,0.933575553578)
  (4.20869565217,0.942266284859)
  (4.35347826087,0.949850356253)
  (4.49826086957,0.956461089501)
  (4.64304347826,0.962217668319)
  (4.78782608696,0.967226132576)
  (4.93260869565,0.971580449831)
  (5.07739130435,0.975363606694)
  (5.22217391304,0.978648680064)
  (5.36695652174,0.981499861682)
  (5.51173913043,0.983973419478)
  (5.65652173913,0.986118586557)
  (5.80130434783,0.987978373941)
  (5.94608695652,0.9895903069)
  (6.09086956522,0.990987087146)
  (6.23565217391,0.992197184789)
  (6.38043478261,0.993245364885)
  (6.5252173913,0.994153153857)
  (6.67,0.994939251233)
  (6.8147826087,0.995619892045)
  (6.95956521739,0.996209165024)
  (7.10434782609,0.996719291392)
  (7.24913043478,0.997160868698)
  (7.39391304348,0.997543083754)
  (7.53869565217,0.997873898356)
  (7.68347826087,0.998160211077)
  (7.82826086957,0.998407998079)
  (7.97304347826,0.998622435585)
  (8.11782608696,0.998808006292)
  (8.26260869565,0.998968591813)
  (8.40739130435,0.999107552899)
  (8.55217391304,0.999227799066)
  (8.69695652174,0.999331848972)
  (8.84173913043,0.99942188277)
  (8.98652173913,0.999499787491)
  (9.13130434783,0.999567196362)
  (9.27608695652,0.999625522883)
  (9.42086956522,0.99967599034)
  (9.56565217391,0.999719657363)
  (9.71043478261,0.999757440062)
  (9.8552173913,0.999790131198)
  (10,0.999818416769)
};
\addlegendentry{$\tanh^2(\varepsilon/2)$}

\end{axis}
\end{tikzpicture}
        \hspace*{6mm}
        \begin{tikzpicture}[scale=0.75, transform shape]
\begin{axis}[
  width=9.9cm,
  height=7cm,
  xlabel={$\varepsilon$ },
  ylabel={$(\Lambda_{p}/\Lambda_{np})$},
  xmin=0.01, xmax=10.0,
  ymin=-0.05, ymax=1.05,
  xmode=log,
  ymode=normal,
  legend style={
  at={(0.02,0.98)}, anchor=north west,   
},
  grid=none,
  minor tick num=0,
  tick style={draw=none},
  minor tick style={draw=none},
]
\definecolor{tabblue}{HTML}{1F77B4}
\definecolor{taborange}{HTML}{FF7F0E}
\definecolor{tabgreen}{HTML}{2CA02C}
\definecolor{tabred}{HTML}{D62728}
\definecolor{tabpurple}{HTML}{9467BD}
\definecolor{tabgreen}{HTML}{2CA02C}

\addplot[blue, thick, mark=o, mark options={scale=0.8}, mark repeat=6] coordinates {
  (0.01,0.000642474575402)
  (0.154782608696,0.00191683942735)
  (0.299565217391,0.00486276236593)
  (0.444347826087,0.00992977608506)
  (0.589130434783,0.0176067640774)
  (0.733913043478,0.027143219599)
  (0.878695652174,0.0393472629112)
  (1.02347826087,0.0538629217064)
  (1.16826086957,0.0719631151557)
  (1.31304347826,0.0924428936761)
  (1.45782608696,0.114323516448)
  (1.60260869565,0.139535441146)
  (1.74739130435,0.167547050431)
  (1.89217391304,0.199443890723)
  (2.03695652174,0.232488009268)
  (2.18173913043,0.27035843213)
  (2.32652173913,0.313532797238)
  (2.47130434783,0.352788238443)
  (2.61608695652,0.390558119076)
  (2.76086956522,0.455972234828)
  (2.90565217391,0.492648706981)
  (3.05043478261,0.524221811871)
  (3.1952173913,0.610917742185)
  (3.34,0.642152087231)
  (3.4847826087,0.698962415827)
  (3.62956521739,0.788716927039)
  (3.77434782609,0.849475779563)
  (3.91913043478,0.999989143299)
  (4.06391304348,0.924732461431)
  (4.20869565217,0.999989143299)
  (4.35347826087,0.924732461431)
  (4.49826086957,1)
  (4.64304347826,1)
  (4.78782608696,1)
  (4.93260869565,1)
  (5.07739130435,1)
  (5.22217391304,1)
  (5.36695652174,1)
  (5.51173913043,1)
  (5.65652173913,1)
  (5.80130434783,1)
  (5.94608695652,1)
  (6.09086956522,1)
  (6.23565217391,1)
  (6.38043478261,1)
  (6.5252173913,1)
  (6.67,1)
  (6.8147826087,1)
  (6.95956521739,1)
  (7.10434782609,1)
  (7.24913043478,1)
  (7.39391304348,1)
  (7.53869565217,1)
  (7.68347826087,1)
  (7.82826086957,1)
  (7.97304347826,1)
  (8.11782608696,1)
  (8.26260869565,1)
  (8.40739130435,1)
  (8.55217391304,1)
  (8.69695652174,1)
  (8.84173913043,1)
  (8.98652173913,1)
  (9.13130434783,1)
  (9.27608695652,1)
  (9.42086956522,1)
  (9.56565217391,1)
  (9.71043478261,1)
  (9.8552173913,1)
  (10,1)
};
\addlegendentry{(${\Lambda^r_{p}}/{\Lambda_{np}}$): Empirical (Alg.~\ref{alg:offline_rrcpd})}
\addplot[red, thick, mark=square*, mark options={scale=0.8}, mark repeat=6] coordinates {
  (0.01,0.00112409522965)
  (0.154782608696,0.0145847220592)
  (0.299565217391,0.0427057678719)
  (0.444347826087,0.080233049394)
  (0.589130434783,0.117352761011)
  (0.733913043478,0.164639581393)
  (0.878695652174,0.212924163962)
  (1.02347826087,0.261436534271)
  (1.16826086957,0.313338048411)
  (1.31304347826,0.362438976802)
  (1.45782608696,0.415880551398)
  (1.60260869565,0.450330839101)
  (1.74739130435,0.517639648651)
  (1.89217391304,0.57744471078)
  (2.03695652174,0.620364779842)
  (2.18173913043,0.698962415827)
  (2.32652173913,0.730196760874)
  (2.47130434783,0.805453442742)
  (2.61608695652,0.924732461431)
  (2.76086956522,0.999989143299)
  (2.90565217391,0.880710124609)
  (3.05043478261,1)
  (3.1952173913,0.999989143299)
  (3.34,1)
  (3.4847826087,1)
  (3.62956521739,1)
  (3.77434782609,1)
  (3.91913043478,1)
  (4.06391304348,1)
  (4.20869565217,1)
  (4.35347826087,1)
  (4.49826086957,1)
  (4.64304347826,1)
  (4.78782608696,0.999989143299)
  (4.93260869565,1)
  (5.07739130435,1)
  (5.22217391304,1)
  (5.36695652174,1)
  (5.51173913043,1)
  (5.65652173913,1)
  (5.80130434783,1)
  (5.94608695652,1)
  (6.09086956522,1)
  (6.23565217391,1)
  (6.38043478261,1)
  (6.5252173913,1)
  (6.67,1)
  (6.8147826087,1)
  (6.95956521739,1)
  (7.10434782609,1)
  (7.24913043478,1)
  (7.39391304348,1)
  (7.53869565217,1)
  (7.68347826087,1)
  (7.82826086957,1)
  (7.97304347826,1)
  (8.11782608696,1)
  (8.26260869565,1)
  (8.40739130435,1)
  (8.55217391304,1)
  (8.69695652174,1)
  (8.84173913043,1)
  (8.98652173913,1)
  (9.13130434783,1)
  (9.27608695652,1)
  (9.42086956522,1)
  (9.56565217391,1)
  (9.71043478261,1)
  (9.8552173913,1)
  (10,1)
};
\addlegendentry{(${\Lambda^b_{p}}/{\Lambda_{np}}$): Empirical (Alg.~\ref{alg:offline_bmcpd})}
\addplot[tabgreen, thick] coordinates {
  (0.01,2.49995833392e-05)
  (0.154782608696,0.00596557951835)
  (0.299565217391,0.022103498668)
  (0.444347826087,0.0477811873222)
  (0.589130434783,0.0819855586036)
  (0.733913043478,0.123430485243)
  (0.878695652174,0.170654675678)
  (1.02347826087,0.222122876635)
  (1.16826086957,0.276319520303)
  (1.31304347826,0.331826676739)
  (1.45782608696,0.387381648598)
  (1.60260869565,0.44191295887)
  (1.74739130435,0.494556250233)
  (1.89217391304,0.544653446714)
  (2.03695652174,0.591739403379)
  (2.18173913043,0.63552035082)
  (2.32652173913,0.675847976361)
  (2.47130434783,0.712692225303)
  (2.61608695652,0.746115059402)
  (2.76086956522,0.776246616885)
  (2.90565217391,0.803264556625)
  (3.05043478261,0.827376866683)
  (3.1952173913,0.848808069624)
  (3.34,0.867788541109)
  (3.4847826087,0.884546544843)
  (3.62956521739,0.899302546833)
  (3.77434782609,0.912265379908)
  (3.91913043478,0.923629865919)
  (4.06391304348,0.933575553578)
  (4.20869565217,0.942266284859)
  (4.35347826087,0.949850356253)
  (4.49826086957,0.956461089501)
  (4.64304347826,0.962217668319)
  (4.78782608696,0.967226132576)
  (4.93260869565,0.971580449831)
  (5.07739130435,0.975363606694)
  (5.22217391304,0.978648680064)
  (5.36695652174,0.981499861682)
  (5.51173913043,0.983973419478)
  (5.65652173913,0.986118586557)
  (5.80130434783,0.987978373941)
  (5.94608695652,0.9895903069)
  (6.09086956522,0.990987087146)
  (6.23565217391,0.992197184789)
  (6.38043478261,0.993245364885)
  (6.5252173913,0.994153153857)
  (6.67,0.994939251233)
  (6.8147826087,0.995619892045)
  (6.95956521739,0.996209165024)
  (7.10434782609,0.996719291392)
  (7.24913043478,0.997160868698)
  (7.39391304348,0.997543083754)
  (7.53869565217,0.997873898356)
  (7.68347826087,0.998160211077)
  (7.82826086957,0.998407998079)
  (7.97304347826,0.998622435585)
  (8.11782608696,0.998808006292)
  (8.26260869565,0.998968591813)
  (8.40739130435,0.999107552899)
  (8.55217391304,0.999227799066)
  (8.69695652174,0.999331848972)
  (8.84173913043,0.99942188277)
  (8.98652173913,0.999499787491)
  (9.13130434783,0.999567196362)
  (9.27608695652,0.999625522883)
  (9.42086956522,0.99967599034)
  (9.56565217391,0.999719657363)
  (9.71043478261,0.999757440062)
  (9.8552173913,0.999790131198)
  (10,0.999818416769)
};
\addlegendentry{$\tanh^2(\varepsilon/2)$}

\end{axis}
\end{tikzpicture}
           \caption{Comparison of empirical error exponents ratio with the theoretical cost of privacy, for (small) fixed $\alpha =10$. \textbf{Left:} binomial distributions i.e., $P_0 \sim \mathrm{Bin}(5,0.1)$ and $P_1 \sim \mathrm{Bin}(5,0.4)$. \textbf{Right:} truncated Poisson distributions $\mathrm{TPois}(\lambda,m)$ with truncation parameter $m=10$ i.e., $P_0 \sim \mathrm{TPois}(1, 10)$ and $P_1 \sim \mathrm{TPois}(7, 10)$.}
        \label{fig:cop223}
\end{center}
\end{figure}

\end{document}